\renewcommand{\appendix}{\par
  \setcounter{section}{0}
  \setcounter{subsection}{0}
  \gdef\thesection{\Alph{section}}
}
\algrenewcommand{\algorithmiccomment}[1]{\texttt{\# #1}} 
\def\mom{{\rm{MOB}}}
\def\ealpha{{e(\widehat{\mu}_1)}}
\newcommand{\NE}{\texttt{NE}}
\newcommand{\STBweight}{\texttt{STB weight}}
\newcommand{\STBopt}{\texttt{STB opt}}
\newcommand{\STBorth}{\texttt{STB orth}}
\newcommand{\STBegd}{\texttt{STB egd}}
\newcommand{\RKMSE}{\texttt{R-KMSE}}
\newcommand{\MTAconst}{\texttt{MTA const}}
\newcommand{\AGGo}{\texttt{AGG orth}}
\newcommand{\AGGe}{\texttt{AGG egd}}
\newcommand{\deff}{d^{\mathrm{e}}}
\newcommand{\detest}{d^*}
\newcommand{\deamm}{d^\bullet}
\newcommand{\estmu}{\wh{\mu}}
\newcommand{\muNE}{\estmu^\NE}
\newcommand{\Spx}{\cS}
\def\ealpha{{e(\muNE_1)}}
\newcommand{\Deltaq}{{\Delta q}}
\newcommand{\GS}{{\bf GS}\xspace}
\newcommand{\BS}{{\bf BS}\xspace}
\newcommand{\HT}{{\bf HT}\xspace}
\newcommand{\ECSS}{{\bf ECSS}\xspace}
\newcommand{\KC}{{\bf KC}\xspace}
\newcommand{\JS}{{\bf JS}\xspace}
\newcommand{\TSC}{{\bf TSC}\xspace}
\newcommand{\omvect}{{\bm{\om}}}
\newcommand{\muvect}{{\bm{\mu}}}
\newcommand{\rhovect}{{\bm{\rho}}}
\newcommand{\zetavect}{{\bm{\zeta}}}
\newcommand{\Cvect}{{\bm{\cC}}}
\newcommand{\diam}{\textup{\texttt{diam}}}
\newcommand{\singlemodel}{\cP_{\mathrm{single}}}
\newcommand{\multimodel}{\cP_{\mathrm{mult}}}
\newcommand{\vref}{V^*}
\newcommand{\rnrisk}{s}
\newcommand{\nrisk}{\rnrisk^2}
\newcommand{\invnrisk}{\rnrisk^{-2}}
\newcommand{\whnrisk}{\wh{\rnrisk}^2}
\newcommand{\taumink}{\tau_{\min}^k}
\newcommand{\taumino}{\tau_{\min}^\circ}
\newcommand{\cteW}{\varsigma}
\newcommand{\ratiobs}{\upphi}
\newcommand{\taubb}{\bar{\tau}_*}
\newcommand{\ip}[2]{\langle #1 , #2 \rangle}
\newcommand{\dwt}[1]{{%
  \mathpalette\double@widetilde{#1}%
}}
\newcommand{\double@widetilde}[2]{%
  \sbox\z@{$\m@th#1\widetilde{#2}$}%
  \ht\z@=.85\ht\z@
  \widetilde{\box\z@}%
}
\theoremstyle{definition}
\newtheorem*{assumption}{Assumption}
\newtheorem*{setting}{Setting}
\author{G. Blanchard$^{*,\dagger}$, J-B. Fermanian$^{*,\ddagger}$, H. Marienwald$^{*,\S}$}
\date{}
\thanks{\scriptsize
\noindent $^*$Authors contributed equally.}
\thanks{$^\dagger$Universit\'{e} Paris Saclay, Institut Math\'{e}matique d'Orsay, France. gilles.blanchard@universite-paris-saclay.fr}
\thanks{$^\ddagger$Universit\'{e} Paris Saclay, Institut Math\'{e}matique d'Orsay, France. jean-baptiste.fermanian@universite-paris-saclay.fr}
\thanks{$^\S$BIFOLD, Technische Universit{\"a}t Berlin, Germany. hannah.marienwald@campus.tu-berlin.de}
\begin{document}
\title{Estimation of multiple mean vectors in high dimension}

\begin{abstract}
We endeavour to estimate numerous multi-dimensional means of various probability distributions on a common space based on independent samples. Our approach involves forming estimators through convex combinations of empirical means derived from these samples. We introduce two strategies to find appropriate
data-dependent convex combination weights: a first one employing a testing procedure to identify neighbouring means with low variance, which results in a closed-form plug-in formula for the weights, and a second
one determining weights via minimization of an upper confidence bound on the quadratic risk.
Through theoretical analysis, we evaluate the improvement in quadratic risk offered by our methods compared to the empirical means. Our analysis focuses on a dimensional asymptotics perspective, showing that our methods asymptotically approach an oracle (minimax) improvement as the effective dimension of the data increases.
We demonstrate the efficacy of our methods in estimating multiple kernel mean embeddings through experiments on both simulated and real-world datasets.
\\ \\
{\tiny KEYWORDS.} aggregation estimator, effective dimension, high dimension, kernel mean embedding, minimax rate, multiple means estimation.
\end{abstract}

\maketitle
\markleft{G. BLANCHARD, J-B. FERMANIAN, H. MARIENWALD}
\section{Introduction}\label{sec:intro}
We study the problem of jointly estimating multiple vector means of large dimension $d$
of distinct probability distributions, from each of which we observe independent, i.i.d. samples.
While the theme of multiple mean estimation has a long and rich history in statistics,
the purpose of the present work is to focus specifically on some high-dimensional aspects (large or even infinite $d$, in a Hilbert space).

The framework under examination is motivated by scenarios involving large volumes of high-dimensional data. These scenarios typically involve the categorization of independent samples into homogeneous units that may exhibit differences but also varying degrees of similarity. Examples include medical or educational records sourced from different institutions, or purchase histories organised by individual clients on an internet platform. This framework also intersects with the concepts of federated and personalised machine learning \citep{mcauley2022personalized, tan2022towards}.
An application of particular interest within this framework is that of kernel mean embeddings of distributions \citep{muandet2017overview}.
This involves estimating means of distributions after a formal mapping of the data into a Hilbert space.
As a specific example, we examine the estimation of cell type distributions in a patient's blood sample based on empirical data provided by a flow cytometer. By jointly analysing samples from multiple patients, inter-patient similarities can be used to enhance the estimation while still providing patient-individualized estimates.

To formalize, let $\mu_1, \ldots, \mu_B$ be the vector means
of distinct probability distributions $\mbp_1, \ldots, \mbp_B$ over $\mbr^d$ (an extension to Hilbert spaces is also discussed).
The estimation of the means is based on a family of independent sample sets, $X_\bullet^{(1)}, \ldots , X_\bullet^{(B)}$, where each $X_\bullet^{(k)}$ with $k \in \intr{B} := \lbrace 1,\ldots,B \rbrace$ comprises of $N_k$ samples drawn i.i.d. from $\mbp_k$.
Formally, the joint model is
\begin{equation}
  \label{eq:mainmodel}
  \begin{cases}
  X^{(k)}_{\bullet}:=(X_i^{(k)})_{1\leq i\leq N_k} \stackrel{\text{i.i.d.}}{\sim} \mbp_k, \; k \in \intr{B}  ; \\
  (X^{(1)}_{\bullet},\ldots,X^{(B)}_{\bullet}) \text{ independent.}
\end{cases}
\end{equation}
The distributions are assumed to be at least square-integrable.
We refer to a set of samples $X_\bullet^{(k)}$ as {\em bag} and to $\mbp_k$ as {\em task}, in line with the domain of multi-task learning.
Our aim is to define estimators $\estmu_k$ and analyse the risk given by the expected squared distance to the true means $\mu_k$.

Evident candidates are empirical means taken separately for each bag, which we call {\em naive} estimators.
  The question we want to tackle is whether it is possible to improve over these individual naive estimators
  by exploiting similarities between tasks.
We propose and study particular estimators $\estmu_k$ formed by a convex combination of naive
estimators of ``related'' tasks.
We insist that absolutely no information about the underlying similarity or task structure is assumed to be known {\em a priori}.
Roughly speaking, we measure relatedness between tasks by estimating the distance between their means.

The goal is to analyse the {\em relative} risk of the proposed estimators, i.e., the ratio of their risk to that of the corresponding naive estimator.
The following questions will guide our estimator construction and analysis:
  \begin{enumerate}[label=(\alph*)]
  \item what would be the ideal ``oracle'' convex combination estimator, if some additional {\em a priori} information about task relatedness were known?
  \item can an empirical estimator approach the oracle relative risk from the data only, in a suitable asymptotical sense?
  \item is the oracle relative risk minimax optimal in a suitable asymptotical sense?
  \end{enumerate}
  Because we focus on the relative risk, the usual asymptotics of the sample size going to infinity is not the most relevant one (though we will assume that the sample sizes are ``large enough''). Rather, we will focus on {\em high-dimensional asymptotics} where the
  dimension grows large. More precisely, we mean a notion of {\em effective} dimension rather than ambient space dimension: the effective dimension
  of a task will be defined from spectral quantities related to its covariance matrix, as is common in high-dimensional statistics.

{\bf Relation to previous work.}
The problem of estimating multiple means 
started in particular
with the seminal work of Stein on the eponymous paradox and the James-Stein estimator \citep{js1961},
continued with the empirical Bayes point of view on the latter \citep{efron1972empirical}, up to modern considerations
on the topic \citep{brown2009nonparametric,jiang2009}. The topic of ``multitask learning'' also provides
a more recent angle on the problem \citep{feldman2014revisiting,duan2023adaptive}. We defer a detailed discussion to Section~\ref{se:previouswork},
but stress that most previous works analysed the {\em compound} (or cumulated) risk over all tasks and
its behaviour in the asymptotics $B\rightarrow \infty$, in a one- or fixed-dimensional setting.
By contrast, we will be interested in analyzing the
individual risk separately for each task, and in ``high dimensional'' asymptotics.

We start with a description of the considered setting in Section~\ref{se:setting}.
Sections~\ref{se:testapproach} and \ref{se:Qaggreg} introduce two approaches to form convex combination estimators of the means, provide bounds on their relative risks, and a comparison of the two.
A minimax analysis for suitable distribution classes is conducted in Section~\ref{se:minimax}.
Finally, experiments on artificial and true data are presented in Section~\ref{se:application}.
All proofs are provided in the Supplemental, wherein Supplemental~\ref{se:notation} contains a list of the used notation for the reader's convenience.

\section{Setting and notation}\label{se:setting}
\subsection{Loss and risk}
We consider the squared loss and expected risk with the Euclidian norm:
\begin{equation}
  \label{eq:lossrisk}
  L_k(\estmu_k) := \norm{\estmu_k-\mu_k}^2\,; \qquad R_k(\estmu_k) := \e{L_k(\estmu_k)}.
\end{equation}
of an estimator $\estmu_k$ for $\mu_k$.
The empirical mean $\muNE_k : = \frac{1}{N_k}\sum_{i=1}^{N_k} X_i^{(k)}$, called the {\em naive estimator}, serves as a reference.
Due to the unbiasedness of the naive estimator, its variance is equal to its risk.
More specifically, let the {\em naive risk} be denoted by
\begin{equation}
  \label{eq:defnaiverisk}
  \nrisk_k := R_k(\muNE_k) = \frac{\tr \Sigma_k}{N_k},
\end{equation}
where $\Sigma_k$ is the covariance of task $k$. 
Then any estimator $\estmu_k$ is analysed in terms of its {\em relative risk} to the naive --- lower is better :
\begin{equation}
  \label{eq:lossriskave}
  \frac{R_k(\estmu_k)}{\nrisk_k}.
\end{equation}

In contrast to the compound decision setting, our goal is to analyse the relative risk for each task separately.
For this reason, the focus is on a specific task, say $k=1$ and $R_1(\estmu_1)/\nrisk_1$ without loss of generality.
In Section \ref{sec:avgrisk} the relative risk averaged over tasks
$\frac{1}{B} \sum_{k=1}^{B} {R_k(\estmu_k)}/{\nrisk_k}$ is considered.

\subsection{High-dimensional asymptotics}\label{sec:highdimasy}
Observe from~\eqref{eq:defnaiverisk} that the naive risk $\nrisk_1$ decreases at the parametric rate $\cO(N_1^{-1})$.
We expect the risk of a competing estimator $\estmu_1$ to follow the same trend. As a consequence, the role of the sample size will cancel out in the relative risk.
In order to state meaningful results,
it is necessary to obtain sharp estimates of the other factors in the rate.

  To this end, we shift the perspective from a standard asymptotic view point, $N_1 \rightarrow \infty$, to high-dimensional asymptotics, emphasizing the behaviour of the risks as the dimensionality grows.
    There are different possible definitions of {\em effective dimensionality} of a distribution, generally
    linked to the spectral decay of the covariance matrix and ratios of its Schatten norms. The following ones will be relevant to
    our analysis:
\begin{equation}\label{eq:effd}
\deamm_k := \frac{(\tr \Sigma_k)^2}{\tr \Sigma_k^2}, \qquad \deff_k := \frac{\tr \Sigma_k}{\|\Sigma_k\|_{\infty}},
\end{equation}
where $\norm{\cdot}_\infty$ is the operator norm. Observe that in the isotropic setting $\Sigma_k \propto I_{d}$, the effective dimensions $\deamm_k$ and $\deff_k$ coincide with the ambient
dimension $d$, as one would expect. In all cases it holds $1 \leq \sqrt{\deamm_k} \leq \deff_k \leq \deamm_k \leq d$. 
In random matrix literature, $\deff$ is sometimes called intrinsic dimension~\citep{tropp2015introduction,hsu2012tail} or effective rank~\citep{Kol16}, and $\detest:= (\deff)^2/\deamm$ is known as the numerical or stable rank of $\Sigma$ \citep{rudelson2007sampling,tropp2015introduction}.
Most notably, we uncover a ``blessing of dimensionality'' phenomenon:
in a nutshell, we will show that the relative risks of our estimators asymptotically approach a suitable notion of oracle relative risk as the (effective) dimensionality increases.

\subsection{Distributional assumptions}\label{sec:dist_ass}
For our theoretical analysis, we consider the following different possible distributional assumptions:
\begin{assumption}[\protect{\GS}, Gaussian setting] \label{ass:GS} For all $k \in \intr{B}$, the distribution $\mbp_k$ is $\cN(\mu_k,\Sigma_k)$.
\end{assumption}
\begin{assumption}[\protect{\BS}, Bounded setting] \label{ass:BS} For all $k \in \intr{B}$, $\mbp_k$ has support in the ball of radius $M$ centred at 0. \end{assumption}
The (\BS) setting is of particular interest for the application to kernel mean embeddings, for which the assumption of a bounded kernel is very common.
In fact, all of our results under (\BS) hold without change if $\mbr^d$ is replaced by a separable Hilbert space; in that case
  the covariance matrices $\Sigma_k$ become covariance operators, which are all trace-class as a consequence of the boundedness assumption.
\footnote{We believe results in the Gaussian case can be also extended to a Hilbert space, assuming that the covariance operators are
    trace-class. Since classical Gaussian concentration results that we make use of are generally stated in a finite-dimensional space, we stick to
    this case for simplicity of presentation.
  }

Supplemental~\ref{app:HT} covers another distributional assumption: heavy-tailed distributions with finite fourth moment.
These results only hold for some of the proposed estimators (the testing approach, introduced in Section~\ref{se:testapproach}) and are, thus, not discussed further in the main text of this paper.

\subsection{Simplifying settings}
At times we will discuss unrealistic but simplifying settings to help with the exposition or to illuminate our theoretical findings.
\begin{setting}[\protect{\ECSS}, Equal Covariance and Sample Sizes] For all $k \in \intr{B}$, $\Sigma_k=\Sigma$ and $N_k=N$, which implies that $\nrisk_k, \deamm_k, \deff_k$ do not depend on $k$.
\end{setting}
\begin{setting}[\protect{\KC}, Known Covariances]  For all $k \in \intr{B}$, $\Sigma_k$ is known.
Consequently, all derived quantities $\tr \Sigma_k, \tr \Sigma_k^2, \deamm_k, \deff_k, \nrisk_k$ are also known.
\end{setting}
We will first derive the estimators assuming known covariances (\KC) but later provide estimates for covariance-related quantities if those are unknown.
If the covariances and sample sizes are homogeneous (\ECSS) the risks are more transparent and interpretable which will help to illuminate our theoretical findings.
We insist that the final algorithms neither assume (\KC) nor (\ECSS).

\subsection{Naive estimator aggregation}\label{se:ne_aggregation}
As announced earlier, without loss of generality we focus on estimating task $k=1$. Furthermore,
we focus on estimators which can be written as convex combinations (aggregation) of naive estimators.
Let $\Spx_B$ denote the $(B-1)$-dimensional simplex, and $\omvect=(\om_1,\ldots,\om_B) \in \Spx_B$ be a weight vector, then
 \begin{equation}
   \label{eq:defmuhatomega}
   \estmu_{\bm{\om}} := \sum_{k \in \intr{B}} \om_k \muNE_k \qquad \text{ s.t. } \qquad \sum_{k \in \intr{B}} \om_k = 1 \,\,\, \text{and} \,\,\, \forall k \in \intr{B}: \om_k \geq 0,
 \end{equation}
 whose loss and risk will be abbreviated as $L_1(\bm{\om})$ and $R_1(\bm{\om})$, respectively.
 While the weight vector $\omvect$ may be data-dependent later, for the present considerations we assume
 that the weights are {\em deterministic}. In this case, using independence of the naive estimators and the notation $\Delta_{k} := \mu_k -\mu_1$,
we restate the risk $R_1(\bm{\om})$ by its bias-variance decomposition for a fixed $\bm{\om}$ as
\begin{equation}
  \label{eq:riskone}
  R_1(\omvect) = \norm[3]{
      \sum_{k \in \intr{B}} \om_k(\mu_k
      - \mu_{1})}^2 + \sum_{k \in \intr{B}} \om_k^2 \nrisk_k
  =\sum_{k,k' \in \intr{B}} \om_k \om_{k'} \inner{\Delta_k,\Delta_{k'}} + \sum_{k \in \intr{B}} \om_k^2 \nrisk_k \, ,
\end{equation}
where the first term corresponds to the (squared) bias and the second to the variance.
Intuitively, we want to give higher weights
to tasks that are close (small task bias $\norm{\Delta_k}$) and can be accurately estimated (small naive risk $\nrisk_k$).
At a  first glance, we could set as a goal to find suitable weights $\bm{\om}$ that minimise \eqref{eq:riskone};
this, however, would require full knowledge
of the Gram matrix $(\inner{\Delta_k,\Delta_{k'}})_{k,k' \in \intr{B}}$,
in addition to the naive risks $\nrisk_k$.
Estimation of the full Gram matrix, accurate enough to approach exact minimization of~\eqref{eq:riskone}, appears unattainable
if the number of tasks $B$ is large and the Gram matrix becomes high-dimensional, which is the scenario we are interested in.
For this reason, we will consider optimizing the risk given more limited information, which includes a subset of neighbouring tasks close to the
target in relative sense
but not their exact position. 
We define the oracle risk as the minimiser of the worst-case risk of~\eqref{eq:riskone} as if this partial information was known to the oracle.

We will consider two strategies to approach that oracle programme from data.
In Section~\ref{se:testapproach} we aggregate only means close to the target which are identified by a test procedure.
Minimization of an upper bound of the risk yields their weights.
In Section~\ref{se:Qaggreg} we minimise directly an upper confidence bound of the aggregate risk~\eqref{eq:riskone}
but have to take into account that the means that are further away induce a large uncertainty on the bias term.
In both cases, we compare the obtained relative risk to that of the oracle. Additionally, we
study the minimax risk under the oracle information in Section~\ref{se:minimax} and whether
the proposed estimators match it.

\section{A testing approach}
\label{se:testapproach}
A low-risk aggregation estimator \eqref{eq:defmuhatomega} combines naive estimations that --- at best --- provide a reduction in variance but add only a small bias, cf. \eqref{eq:riskone}.
Our first approach explicitly controls the bias.
We aim at identifying a subset of {\em neighbour} tasks whose means are sufficiently close to the target task.
We then restrict the support of the weights to that subset and form a convex combination of neighbouring naive estimations.
This approach and its analysis generalise ideas introduced in \citet{marienwald2021high}.
Let us first introduce some additional notation.
  \begin{definition}[$\tau$-neighbouring tasks]
    Recall the notation $\Delta_k=\mu_k-\mu_1$.
    For a fixed $\tau>0$, let $V_\tau \subseteq \intr{B}$ denote the set of all {\em $\tau$-neighbouring tasks} (of task 1) as:
\begin{equation}
  \label{eq:Vtau}
  V_\tau := \set{ k \in \intr{B} : \norm{\Delta_k}^2 \leq \tau \nrisk_1}.
\end{equation}

For $\tau=0$, for the sake of later notational coherence we define $V_0:=\set{1}$ which deviates from~\eqref{eq:Vtau} as $V_0$ does not contain
any other tasks $k\neq 1$ even if $\Delta_k=0$.
\end{definition}
Note that this notion of $\tau$-neighbourhood is relative to the naive risk of task 1, and that $1 \in V_\tau$ always holds.
  \begin{definition}[Relative aggregated variance $\nu$] \label{def:relagvar}
For a subset $U \subseteq \intr{B}$ of tasks, define their relative aggregated variance (to that of task 1) as:
 \begin{equation}\label{eq:def_s2v}
  \nu(U) := \frac{\nrisk(U)}{\nrisk_1}, \text{ with } \rnrisk^2(U) := \paren{ \sum_{k\in U}\frac{1}{\nrisk_k} }^{-1}.
\end{equation}
\end{definition}
Observe that $s^2(U)$ is the variance of the optimal convex combination of unbiased, independent estimators that have
different variances $\nrisk_k$ --- a classical problem of statistics. The quantity $\nu(U)$ is, again, relative to the naive risk of task 1.

The quantity $\tau$ can be seen as the worst-case
relative bias of a convex combination of their naive estimators for the goal of estimating $\mu_1$, while $\nu(V_\tau)$ is a best-case relative variance (i.e., all the tasks in $V_\tau$ would in fact have mean $\mu_1$).
We introduce the following auxiliary function, which will capture an optimal trade-off between these two quantities.
It provides a common reference value for the relative risks of our estimators and is of fundamental importance for the remainder of this manuscript.
  \begin{definition}
    Define the function $\cB :\mbr \times [0,1] \to [0,1]$ as
    \begin{equation}\label{eq:defBtaunu}
      \cB(\tau,\nu) := \paren{\frac{\tau}{1+\tau}} + \paren{\frac{1}{1+\tau}} \paren{\frac{\nu}{1+ \tau(1-\nu)}}.
    \end{equation}
    Observe that $\cB(0,\nu)=\nu$, $\cB(\tau,0) = \frac{\tau}{1+\tau}$, and $\cB$ is increasing in both of its variables.
  \end{definition}
  In the next section, we derive a form of optimal or ``oracle'' weights for combining naive estimators of
  tasks belonging to any given subset $V\subseteq V_\tau$, and identify $\cB$ as a bound on its relative risk.
  The following sections (\ref{sec:ortoemp} to~\ref{se:unknowncov}) are  concerned with approximating the oracle bound
  by estimating unknown quantities and using a plug-in principle.

\subsection{Oracle procedure}
\label{se:oracletestproc}
For a fixed $\tau>0$, assume an oracle provides a set of neighbours $V$ with the guarantee that $V\subseteq V_\tau$ holds.
We restrict our attention to convex combinations of naive estimators only in set $V$, i.e., estimators $\estmu_\omvect$
as in~\eqref{eq:defmuhatomega} with $\bm{\om} \in \Spx_V$, where $\Spx_V$ will denote the set of convex weights of support included in $V$.
Using the Cauchy-Schwartz inequality in~\eqref{eq:riskone} (with $\Delta_1=0$), for such aggregated estimators we obtain the risk bound
\begin{equation}
  \label{eq:riskboundV}
  R_1(\omvect) \leq \tau \nrisk_1 (1-\om_1)^2 + \sum_{k \in V} \om_k^2 \nrisk_k, \,\, \text{ for all } \,\, \omvect \in \Spx_V,
\end{equation}
which can be optimised for $\omvect$.
A bound on the oracle relative risk is presented next.

\begin{lemma}\label{lem:oraclebound}
  Let $\tau  >0$ be fixed.
For all $V \subseteq V_\tau$,  the weights $\omvect_V^* \in \Spx_V$ that minimise \eqref{eq:riskboundV} yield the bound
\begin{equation}\label{eq:oracletestbound}
  \frac{R_1(\omvect^*_V)}{\nrisk_1} \leq \cB\paren{ \tau, \nu(V)}.
\end{equation}
The oracle weights $\omvect^*_V$
are given by:
\begin{equation}
  \label{eq:STBoptweights}
  \om_{V,k}^*(\tau,{\bm \rnrisk}) =
 (1-\lambda) \ind{k=1} + \lambda \frac{\nrisk(V)}{\nrisk_k},
\text{ where } \lambda := \frac{1}{1+\tau(1- \nu(V))}.
\end{equation}
\end{lemma}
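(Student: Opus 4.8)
The statement to prove has two parts: (i) that the minimizer $\omvect^*_V$ of the right‑hand side of~\eqref{eq:riskboundV} over $\Spx_V$ is given by~\eqref{eq:STBoptweights}, and (ii) that the corresponding minimal value divided by $\nrisk_1$ is at most $\cB(\tau,\nu(V))$. Write $\Phi(\omvect):=\tau\nrisk_1(1-\om_1)^2+\sum_{k\in V}\om_k^2\nrisk_k$ for the quantity to be minimized. Since $\Phi$ is a strictly convex quadratic on the simplex, it has a unique minimizer, which I would locate by a two‑step reduction: first freeze the coordinate $w:=\om_1\in[0,1]$ and optimize the remaining coordinates in closed form, then minimize the resulting scalar function of $w$. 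Finally I would plug the optimizer back in and match the outcome with the formulas in the statement by a short algebraic simplification.

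\textbf{First step (inner minimization).} With $w$ fixed, the only part of $\Phi$ depending on $(\om_k)_{k\in V\setminus\{1\}}$ is $\sum_{k\in V\setminus\{1\}}\om_k^2\nrisk_k$, to be minimized under $\sum_{k\in V\setminus\{1\}}\om_k=1-w$ and $\om_k\ge 0$. This is the classical optimal inverse‑variance weighting: by Cauchy–Schwarz, $(1-w)^2=\big(\sum_k\om_k\sqrt{\nrisk_k}/\sqrt{\nrisk_k}\big)^2\le\big(\sum_k\om_k^2\nrisk_k\big)\big(\sum_k 1/\nrisk_k\big)$ with equality iff $\om_k\propto 1/\nrisk_k$, and these minimizing weights are automatically nonnegative. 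Setting $S:=\sum_{k\in V\setminus\{1\}}1/\nrisk_k=1/\nrisk(V)-1/\nrisk_1$, which in terms of $\nu:=\nu(V)$ equals $(1-\nu)/(\nu\nrisk_1)$, the inner minimum is $(1-w)^2/S$, attained at $\om_k=(1-w)/(S\,\nrisk_k)$ for $k\neq 1$.

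\textbf{Second step (outer minimization and identification).} Substituting, the problem reduces to minimizing over $w\in[0,1]$ the one‑dimensional convex quadratic $g(w):=a(1-w)^2+\nrisk_1 w^2$ with $a:=\tau\nrisk_1+1/S$. Its unconstrained minimizer $w^*=a/(a+\nrisk_1)$ lies in $[0,1]$, so the simplex constraints are inactive and the weights assembled from the two steps form the global minimizer $\omvect^*_V$; moreover $g(w^*)=a\nrisk_1/(a+\nrisk_1)$. Using $a=\nrisk_1(\tau(1-\nu)+\nu)/(1-\nu)$ one simplifies $w^*=(\tau(1-\nu)+\nu)/(1+\tau(1-\nu))=1-\lambda(1-\nu)$ and $\om^*_{V,k}=(1-w^*)/(S\,\nrisk_k)=\lambda\,\nrisk(V)/\nrisk_k$ for $k\neq 1$, with $\lambda=1/(1+\tau(1-\nu))$, which is exactly~\eqref{eq:STBoptweights}; and $g(w^*)/\nrisk_1=(\tau(1-\nu)+\nu)/(1+\tau(1-\nu))$. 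It then remains to check the algebraic identity $(\tau(1-\nu)+\nu)/(1+\tau(1-\nu))=\cB(\tau,\nu)$, which follows on clearing denominators since both sides equal $(\tau+\tau^2(1-\nu)+\nu)/\big((1+\tau)(1+\tau(1-\nu))\big)$. Combining with~\eqref{eq:riskboundV} gives $R_1(\omvect^*_V)/\nrisk_1\le\Phi(\omvect^*_V)/\nrisk_1=\cB(\tau,\nu(V))$.

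\textbf{Expected difficulty.} The argument is elementary throughout. The only points needing care are the degenerate case $V=\{1\}$ (equivalently $S=0$, $\nu=1$), where one sets $\lambda=1$, $\om^*_{V,1}=1$ and checks directly that the bound reads $\nrisk_1=\nrisk_1\,\cB(\tau,1)$; and recording that $0<\nu\le 1$ and $\tau>0$, so that $S\ge 0$, $w^*\in[0,1]$, and all produced weights lie in $[0,1]$. The one place where a slip could occur is the final reconciliation of the closed form of $g(w^*)/\nrisk_1$ with the definition~\eqref{eq:defBtaunu} of $\cB$; this is careful bookkeeping rather than a genuine obstacle.
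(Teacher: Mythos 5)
Your proof is correct. It differs from the paper's own argument in a worthwhile way: the paper does not derive the minimiser at all — it remarks that the weights can be obtained "by KKT conditions, for instance" and then simply substitutes the stated weights \eqref{eq:STBoptweights} into the bound \eqref{eq:riskboundV}, simplifying to $\frac{\tau(1-\nu)+\nu}{1+\tau(1-\nu)}=\cB(\tau,\nu(V))$, which suffices for the risk bound \eqref{eq:oracletestbound}. You instead construct the minimiser by partial minimisation: for fixed $\om_1=w$ the inner problem is classical inverse-variance weighting solved via Cauchy--Schwarz, giving value $(1-w)^2/S$ with $S=(1-\nu)/(\nu\nrisk_1)$, and the outer problem is a one-dimensional convex quadratic whose unconstrained minimiser $w^*=a/(a+\nrisk_1)$ lies in $[0,1]$, so the simplex constraints are inactive; identifying $w^*=1-\lambda(1-\nu)$ and $\om^*_k=\lambda\nrisk(V)/\nrisk_k$ recovers \eqref{eq:STBoptweights}, and $g(w^*)/\nrisk_1$ reduces to the same expression $\frac{\tau(1-\nu)+\nu}{1+\tau(1-\nu)}$ as in the paper. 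What your route buys is a genuine proof of the optimality claim in the lemma (that \eqref{eq:STBoptweights} are indeed the minimising weights of \eqref{eq:riskboundV}), including the check that nonnegativity constraints do not bind, together with correct handling of the degenerate case $V=\{1\}$; what the paper's route buys is brevity, since for the inequality \eqref{eq:oracletestbound} alone verification by substitution is enough. Your final algebraic identity matching $\cB$ is exactly the one the paper computes, so the two arguments meet at the same place.
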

It holds $\cB(\tau,\nu(V)) \in [\frac{\tau}{1+\tau}, 1]$, i.e., this bound cannot be better than $\frac{\tau}{1+\tau}$.
We will call $\frac{\tau}{1+\tau}$ {\em best potential improvement} (that can be
guaranteed by the oracle bound).
The bound on the relative risk depends on the relative neighbourhood size $\tau$ and the relative aggregated variance $\nu(V)$.
Because $\cB$ increases in both variables, small $\tau$ and $\nu(V)$ are beneficial.
This coincides with what we noted from the bias-variance decomposition \eqref{eq:riskone}.
If $\tau$ is fixed, it is of advantage to consider as many $\tau$-neighbours as possible so that $\nu(V)$ decreases, i.e., to take $V=V_\tau$.
On the other hand, reducing the neighbourhood size $\tau$ reduces the bias but also leads to a
smaller set of neighbours, ergo, a larger relative aggregated variance $\nu(V_\tau)$.
Thus, there is a trade-off between both quantities.
We may aim at a relative risk close to $\min_{\tau >0} \cB(\tau,V_\tau)$ but for the remainder of this section we assume $\tau>0$ fixed beforehand.

The following observations enable additional insight into the involved quantities:
\begin{enumerate}[label=(\alph*)]
\item $\cB(0,\nu(V)) = \nu(V)$, i.e., when $\tau \searrow 0$, which implies that all tasks in $V$ have the same mean, the bound is given by the relative aggregated variance, as should be expected from the remark following Definition~\ref{def:relagvar}.
\item $\cB(\tau,0) = \frac{\tau}{1+\tau}$,
  the best potential improvement is reached when $\nrisk(V) \searrow 0$.
  This happens if at least one of the $\tau$-neighbouring
  means is known with perfect precision and it becomes a ``reference point''. This scenario is comparable to the classical James-Stein
  setting, for which the origin is such a reference point and the James-Stein estimate
  improves most if the target is close to the origin (see Supplemental~\ref{se:JS} for a detailed discussion).
  However, $\nrisk(V) \searrow 0$ also happens when $\tau$-neighbours have a non-zero variance, but their number grows large.
\item $\cB(\tau,\nu(V))$ remains unchanged if we replace a group of neighbours $V\setminus\set{1}$ by a single $\tau$-neighbour with variance $\nrisk(V\setminus \lbrace 1 \rbrace)$.
\end{enumerate}

\subsection{From an oracle to an empirical procedure} \label{sec:ortoemp}
In practice, the oracle information about the relative neighbours is unavailable.
However, we can hope to approach the oracle setting
by estimating the set of $\tau$-neighbours $V_\tau$ and their risks $\nrisk_k$.
We will assume that such estimates are independent of the samples used to compute $(\muNE_k)_{k\in\intr{B}}$.
(To this end, one might resort to sample splitting.)
The independence assumption of estimates is emphasised by a tilde notation: $(\wt{V},\wt{\bm{\rnrisk}}^2)$.

The simplest is to plug in such estimates into the oracle formula~\eqref{eq:STBoptweights}. The next proposition quantifies
how the relative risk of the plug-in procedure can be bounded, provided the estimation error is.
\begin{lemma}
  \label{prop:boundSTB1}
  Let $\tau >0$ be fixed.
  Assume $\wt{V} \subseteq \intr{B} ,\wt{\bm{\rnrisk}}^2=(\wt{\rnrisk}^2_k)_{k \in \intr{B}} \in\mbr_+^B$ are possibly random but independent of the samples in model~\eqref{eq:mainmodel}.
  Let $\vref$ be some deterministic reference set, such that $1 \in \vref$.
  Let $(\wt{V},\wt{\bm{\rnrisk}}^2)$ be plugged in for $(V,\bm{\rnrisk}^2)$ into
  \eqref{eq:STBoptweights}, giving rise to weight vector $\wt{\omvect}$.
  Conditionally to the event
  \begin{align}\label{eq:eventA}
    \begin{cases}
      \vref \subseteq \wt{V} \subseteq V_\tau,\\
      \abs{\wt{\rnrisk}^2_k - \nrisk_k} \leq \eta \nrisk_k, \text{ for all } k \in \wt{V}, \text{ and some } \eta \in [0,1),
    \end{cases}
  \end{align}
  it holds
  \begin{equation}
    \label{eq:boundtest1}
    \frac{R_1(\wt{\omvect})}{\nrisk_1} \leq
    \paren{\frac{1+\eta}{1-\eta}} \cB\paren[1]{\tau,\nu(\wt{V})}
    \leq \paren{\frac{1+\eta}{1-\eta}} \cB\paren[1]{\tau,\nu(\vref)}.
  \end{equation}
\end{lemma}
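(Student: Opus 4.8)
The plan is to realise $\wt{\omvect}$ as the \emph{exact} minimiser of a perturbed version of the risk bound \eqref{eq:riskboundV}, and then to transfer this optimality back to the true objective at the cost of the multiplicative factor $(1+\eta)/(1-\eta)$ provided by the second line of \eqref{eq:eventA}. Throughout I work on the event \eqref{eq:eventA} and condition on the pair $(\wt{V},\wt{\bm{\rnrisk}}^2)$: by the independence assumption the weight vector $\wt{\omvect}$ is then deterministic, with support contained in $\wt{V}\subseteq V_\tau$, so the (conditional) risk of $\estmu_{\wt{\omvect}}$ obeys the deterministic-weights decomposition \eqref{eq:riskone} and hence the bound \eqref{eq:riskboundV}, i.e. $R_1(\wt{\omvect})\le g(\wt{\omvect})$ with
\[
  g(\omvect):=\tau\,\nrisk_1(1-\om_1)^2+\sum_{k\in\wt{V}}\om_k^2\,\nrisk_k,
  \qquad \omvect\in\Spx_{\wt{V}}.
\]
Integrating over $(\wt{V},\wt{\bm{\rnrisk}}^2)$ on the event at the very end recovers the displayed statement; I keep this conditioning implicit below.

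Introduce the perturbed objective $\wt{g}(\omvect):=\tau\,\wt{\rnrisk}^2_1(1-\om_1)^2+\sum_{k\in\wt{V}}\om_k^2\,\wt{\rnrisk}^2_k$. Since $1\in\vref\subseteq\wt{V}$ and $\abs{\wt{\rnrisk}^2_k-\nrisk_k}\le\eta\,\nrisk_k$ for every $k\in\wt{V}$, comparing term by term yields the sandwich $(1-\eta)\,g(\omvect)\le\wt{g}(\omvect)\le(1+\eta)\,g(\omvect)$ for all $\omvect\in\Spx_{\wt{V}}$. The key point is that \eqref{eq:STBoptweights} is \emph{exactly} the minimiser of the quadratic \eqref{eq:riskboundV} (this part of Lemma~\ref{lem:oraclebound} is purely algebraic and does not use $V\subseteq V_\tau$); applying it with the risk values $\wt{\bm{\rnrisk}}^2$ in place of $\bm{\rnrisk}^2$ shows that the plug-in weights satisfy $\wt{\omvect}=\argmin_{\omvect\in\Spx_{\wt{V}}}\wt{g}(\omvect)$. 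Writing $\omvect^*_{\wt{V}}$ for the true-risk oracle weights of the set $\wt{V}$ (an element of $\Spx_{\wt{V}}$), I chain
\[
  g(\wt{\omvect})\le\frac{\wt{g}(\wt{\omvect})}{1-\eta}
  =\frac{1}{1-\eta}\min_{\omvect\in\Spx_{\wt{V}}}\wt{g}(\omvect)
  \le\frac{\wt{g}(\omvect^*_{\wt{V}})}{1-\eta}
  \le\frac{1+\eta}{1-\eta}\,g(\omvect^*_{\wt{V}}).
\]
Since $\wt{V}\subseteq V_\tau$ on the event, Lemma~\ref{lem:oraclebound} gives $g(\omvect^*_{\wt{V}})=\min_{\omvect\in\Spx_{\wt{V}}}g(\omvect)\le\nrisk_1\,\cB(\tau,\nu(\wt{V}))$, and together with $R_1(\wt{\omvect})\le g(\wt{\omvect})$ this is the first inequality of \eqref{eq:boundtest1}.

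For the second inequality, note that in Definition~\ref{def:relagvar} the map $U\mapsto\rnrisk^2(U)=\big(\sum_{k\in U}1/\nrisk_k\big)^{-1}$ is non-increasing under inclusion, so $\vref\subseteq\wt{V}$ forces $\rnrisk^2(\wt{V})\le\rnrisk^2(\vref)$, hence $\nu(\wt{V})\le\nu(\vref)$; as $\cB(\tau,\cdot)$ is increasing, $\cB(\tau,\nu(\wt{V}))\le\cB(\tau,\nu(\vref))$, which finishes the proof. I expect the only genuinely delicate points to be bookkeeping ones: carefully justifying the reduction to the conditional, deterministic-weights setting via the sample-splitting independence, and invoking the explicit form of the oracle weights so that the plug-in is literally the argmin of $\wt{g}$ — once these are in place, the core of the argument is the one-line robustness-of-the-minimiser estimate above, with no real analytic content.
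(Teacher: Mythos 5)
Your proposal is correct and follows essentially the same route as the paper: condition on $(\wt{V},\wt{\bm{\rnrisk}}^2)$ so the weights are deterministic, bound $R_1(\wt{\omvect})$ by the quadratic objective on $\Spx_{\wt{V}}$, sandwich the plug-in objective between $(1\pm\eta)$ times the true one, exploit that $\wt{\omvect}$ minimises the plug-in objective, and finish with Lemma~\ref{lem:oraclebound} plus monotonicity of $\nu$ and $\cB$. The only cosmetic difference is that you compare against the specific oracle weights $\omvect^*_{\wt{V}}$ whereas the paper minimises over all $\omvect\in\Spx_{\wt{V}}$ at the end, which is the same argument.
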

Comparing the oracle relative risk bound \eqref{eq:oracletestbound} with that of the empirical procedure \eqref{eq:boundtest1},
note first the requirement that all estimated neighbours are  $\tau$-neighbours ($\wt{V} \subseteq V_\tau$);
secondly, the oracle risk is deteriorated by two factors: the excess factor $(1+\eta)/(1-\eta) \geq 1$ which quantifies
what we lose due to estimation of the neighbours' risks; and the replacement of the set of true neighbours by the smaller
set $\vref$, under the requirement that $\vref \subseteq \wt{V}$ holds. To summarise, we expect the risk of the
empirical procedure to be close to the oracle risk if (1) the relative estimation error $\eta$ for naive risks is small, and
(2) we can guarantee the ``sandwiching'' property $\vref \subseteq \wt{V} \subseteq V_\tau$, with $\vref$
as large as possible; typically we would be satisfied with $\vref = V_{(1-\eps)\tau}$ for a small $\eps$.

The next sections will introduce such estimates and the fulfillment of event \eqref{eq:eventA} under certain conditions, starting with
the estimation of neighbour tasks.

\subsection{Finding neighbours (known covariances)} \label{se:knownvar}
For now let us assume (\KC); we will generalise to unknown covariances in the next section.
Accordingly, the naive risks $\nrisk_k$ are known, so that $\eta=0$ in the context
of~\eqref{eq:boundtest1}, and we focus
on the estimation of the set of neighbours.
We assume that we are doing so using independent ``tilde'' data
$(\wt{X}_\bullet^{(k)})_{k\in \intr{B}}$ which are drawn from~\eqref{eq:mainmodel} but independent of $(X_\bullet^{(k)})_{k\in \intr{B}}$ (e.g., using sample splitting). For clarity $X_\bullet^{(k)}$ and $\wt{X}_\bullet^{(k)}$ are assumed to be of the same size $N_k$.
Given the first requirement $\wt{V} \subseteq V_\tau$, it is natural to think of $\wt{V}$ as the output of a multiple test procedure
(for which the null hypothesis for task $k$ is {\em not} being a $\tau$-neighbour, i.e., $\norm{\Delta_k} > \tau \nrisk_1$).

Our approach is based on recent results for two-sample mean vector testing \citep{BlaFer23}.
Assume $N_k\geq 2$ for all $k\in \intr{B}$.
For $k \in \intr{B}\setminus \set{1}$, we form an unbiased estimator for $\norm{\Delta_k}^2$ based on the U-statistics
\begin{equation}\label{eq:def_testU}
  \wt{U}_{k} := \sum_{\ell \in \set{1,k}}
  \sum_{\substack{i,j=1 \\ i \neq j}}^{N_\ell} \frac{\inner[1]{ \wt{X}^{(\ell)}_i , \wt{X}^{(\ell)}_j }}{N_\ell(N_\ell-1)}
    -  2 \sum_{i=1}^{N_1} \sum_{j=1}^{N_k} \frac{\inner[1]{ \wt{X}^{(1)}_i , \wt{X}^{(k)}_j }}{N_1N_k}\,.
\end{equation}
The following proposition is a direct consequence of \citet{BlaFer23}:
\begin{proposition}
	\label{prop:simplifiedtestresult}
	Assume (\GS), (\KC) hold and let $\alpha\in (0,1)$, $\tau>0$  be fixed.
        Let $(\wt{X}_\bullet^{(k)})_{k\in \intr{B}}$ be a dataset drawn from \eqref{eq:mainmodel},
          $\wt{U}_k$ as in~\eqref{eq:def_testU}.
	Let $\wt{T}^{(\tau)}_k$ be given by
	\begin{equation}
		\label{eq:defTk}
		\wt{T}^{(\tau)}_k := \ind{ \wt{U}_k \leq \tau \nrisk_1}.
	\end{equation}
	Put for $ k\in \intr{B}$
	\begin{equation}
		\label{eq:defqtest}
		\taumink
		:= 32 \paren{ \frac{1}{\sqrt{\deamm_1}} +  \frac{\nrisk_k/\nrisk_1}{\sqrt{\deamm_k}}} \log(8\alpha^{-1})\,,
	\end{equation}
	then it holds:
	\begin{align}
			\text{ if } \norm{\mu_1-\mu_k}^2 >
			\tau^+_k
			\nrisk_1: & \qquad \prob{\wt{T}^{(\tau)}_k=1} \leq \alpha; \label{eq:type1err} \\
			\text{ if } \norm{\mu_1-\mu_k}^2 \leq
			\tau^-_k
			\nrisk_1: & \qquad \prob{\wt{T}^{(\tau)}_k=0} \leq \alpha; \label{eq:type2err}
		\end{align}
	where $\tau^\pm_k = \paren[2]{\sqrt{\tau}\pm{\textstyle \sqrt{\taumink}}}_+^2$.
\end{proposition}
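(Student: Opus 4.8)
Since this proposition is an instantiation of the non-asymptotic two-sample bounds of \citet{BlaFer23}, the plan is to import their concentration bound for the U-statistic $\wt U_k$, specialise it to the Gaussian, known-covariance case, and convert a threshold-crossing event into type~I/II error control by elementary algebra. Write $\wt\mu_j:=\frac{1}{N_j}\sum_{i=1}^{N_j}\wt X_i^{(j)}$ and $\Gamma:=\Sigma_1/N_1+\Sigma_k/N_k$, the covariance of the Gaussian fluctuation of $\wt\mu_1-\wt\mu_k$ around $\mu_1-\mu_k$. First I would recall, by expanding~\eqref{eq:def_testU} as in \citet{BlaFer23}, that $\wt U_k$ is unbiased for $\norm{\Delta_k}^2$ and decomposes as $\wt U_k=\norm{\Delta_k}^2+L+Q$, where all mean-dependent contributions cancel, leaving $L=2\inner{\Delta_k,Z}$ with $Z\sim\cN(0,\Gamma)$ (so $\var{L}\leq 4\norm{\Gamma}_\infty\norm{\Delta_k}^2$) and $Q$ a centred quadratic form in the Gaussian noise --- the U-statistic analogue of $\norm{Z}^2-\tr\Gamma$ --- with sub-exponential tails governed by $\sqrt{\tr\Gamma^2}$ and $\norm{\Gamma}_\infty$. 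Combining a Gaussian tail bound for $L$ with a Hanson--Wright / Laurent--Massart bound for $Q$ --- exactly the content of the relevant theorem of \citet{BlaFer23} --- yields, for any $\alpha\in(0,1)$, an event $\cE_k$ with $\prob{\cE_k}\geq 1-\alpha$ on which
\begin{equation}\label{eq:plan-sandwich}
  \paren[1]{\norm{\Delta_k}-\sqrt{\rho_k}}_+^2\;\leq\;\wt U_k\;\leq\;\paren[1]{\norm{\Delta_k}+\sqrt{\rho_k}}^2,
\end{equation}
where $\rho_k$ is an absolute numerical multiple of $\log(8\alpha^{-1})\paren[1]{\sqrt{\tr\Sigma_1^2}/N_1+\sqrt{\tr\Sigma_k^2}/N_k}$.

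Next I would reduce $\rho_k$ to the closed form of~\eqref{eq:defqtest}. Using $\norm{\Sigma}_\infty\leq\sqrt{\tr\Sigma^2}$, $\sqrt{x+y}\leq\sqrt x+\sqrt y$, and Cauchy--Schwarz on the cross term $\tr(\Sigma_1\Sigma_k)/(N_1N_k)\leq\frac{1}{2}\paren[1]{\tr\Sigma_1^2/N_1^2+\tr\Sigma_k^2/N_k^2}$, both the $\sqrt{\tr\Gamma^2}$ and the $\norm{\Gamma}_\infty$ contributions to $\rho_k$ are at most $\sqrt{\tr\Sigma_1^2}/N_1+\sqrt{\tr\Sigma_k^2}/N_k$. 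By~\eqref{eq:defnaiverisk}--\eqref{eq:effd}, $\sqrt{\tr\Sigma_j^2}/N_j=(\tr\Sigma_j/N_j)(\sqrt{\tr\Sigma_j^2}/\tr\Sigma_j)=\nrisk_j/\sqrt{\deamm_j}$, so this sum equals $\nrisk_1\paren[1]{1/\sqrt{\deamm_1}+(\nrisk_k/\nrisk_1)/\sqrt{\deamm_k}}$; checking that the constant from \citet{BlaFer23} is at most $32$ then gives $\rho_k\leq\taumink\nrisk_1$. Since both sides of~\eqref{eq:plan-sandwich} are monotone in $\rho_k$, on $\cE_k$ one may replace $\rho_k$ by $\taumink\nrisk_1$ there.

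It then remains to do the threshold algebra on $\cE_k$, using $\sqrt{\tau^\pm_k}=(\sqrt{\tau}\pm\sqrt{\taumink}\,)_+$. If $\norm{\mu_1-\mu_k}^2>\tau^+_k\nrisk_1$, then $\norm{\Delta_k}>(\sqrt{\tau}+\sqrt{\taumink}\,)\sqrt{\nrisk_1}$, hence $\norm{\Delta_k}-\sqrt{\taumink\nrisk_1}>\sqrt{\tau\nrisk_1}>0$, so the left inequality of~\eqref{eq:plan-sandwich} forces $\wt U_k>\tau\nrisk_1$, i.e., $\wt T^{(\tau)}_k=0$; thus $\set{\wt T^{(\tau)}_k=1}\subseteq\cE_k^{c}$ and~\eqref{eq:type1err} follows. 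Symmetrically, if $\norm{\mu_1-\mu_k}^2\leq\tau^-_k\nrisk_1$ with $\tau>\taumink$, then $\norm{\Delta_k}+\sqrt{\taumink\nrisk_1}\leq\sqrt{\tau\nrisk_1}$, so the right inequality of~\eqref{eq:plan-sandwich} forces $\wt U_k\leq\tau\nrisk_1$, i.e., $\wt T^{(\tau)}_k=1$, which is~\eqref{eq:type2err}; the boundary case $\tau\leq\taumink$ (where $\tau^-_k=0$, so $\mu_k=\mu_1$ and $L\equiv 0$) is handled directly by the mean-zero tail bound for $Q$ in \citet{BlaFer23}.

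I expect the main obstacle to be the reduction step: one must extract from \citet{BlaFer23} the precise form of the deviation $\rho_k$ --- the split between the $\sqrt{\tr\Gamma^2}$ and $\norm{\Gamma}_\infty$ terms, the treatment of the (negligible) finite-sample discrepancy between $Q$ and $\norm{Z}^2-\tr\Gamma$, and the numerical constant --- and check that it is absorbed into $\taumink\nrisk_1$. Once~\eqref{eq:plan-sandwich} holds with $\rho_k\leq\taumink\nrisk_1$, the rest is the routine monotone manipulation above.
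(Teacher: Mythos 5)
Your overall route coincides with the paper's: import the two-sample concentration inequality for $\wt U_k$ from \citet{BlaFer23} (Proposition~6 there, in the Gaussian case), rewrite its deviation scale via $\sqrt{\tr \Sigma_j^2}/N_j=\nrisk_j/\sqrt{\deamm_j}$, and turn the threshold event into type~I/II control; the paper does exactly this, routing the last step through a generic proposition under a ``test statistic concentration'' assumption and a quadratic-root computation. The genuine gap is your intermediate sandwich \eqref{eq:plan-sandwich} with $\rho_k\leq\taumink\nrisk_1$. The cited bound has the mixed form $\abs{\wt U_k-\norm{\Delta_k}^2}\leq\norm{\Delta_k}q_k\sqrt{u_\alpha}+16q_k^2u_\alpha$ with $q_k^2=2\paren{\norm{\Sigma_1}_2/N_1+\norm{\Sigma_k}_2/N_k}$, and $16q_k^2u_\alpha$ is exactly $\taumink\nrisk_1$ for the constant $32$ of \eqref{eq:defqtest}. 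The upper half of your sandwich does hold with $\rho_k=16q_k^2u_\alpha$, but the lower half does not: it requires $2\norm{\Delta_k}\sqrt{\rho_k}-\rho_k\geq\norm{\Delta_k}q_k\sqrt{u_\alpha}+16q_k^2u_\alpha$, which with $\sqrt{\rho_k}=4q_k\sqrt{u_\alpha}$ forces $\norm{\Delta_k}\geq\tfrac{8}{7}\sqrt{\rho_k}$ (and it is plainly false for $\norm{\Delta_k}\leq\sqrt{\rho_k}$, since $\wt U_k$ can be negative). In your type-I step the only margin above $\sqrt{\rho_k}$ is $\sqrt{\tau\nrisk_1}$, so the chain closes only when $\tau\gtrsim\taumink/49$; the same restriction appears if you bypass the sandwich and plug the hypothesis directly into the concentration bound. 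This is precisely why the paper's generic proposition defines the detection threshold with an extra factor two, $\tau_{\min}^k=2c_0^2\invnrisk_1 q_k^2(\alpha)$, and then bounds the roots of the resulting quadratic in $\norm{\Delta_k}/(\sqrt{\tau}\rnrisk_1)$: that slack (equivalently, enlarging your $\rho_k$ by a constant factor) is what makes both directions of the threshold algebra work for every $\tau>0$. As written, your step ``checking that the constant from \citet{BlaFer23} is at most $32$'' cannot be completed; you must either carry out the root analysis with the inflated threshold, or accept a larger numerical constant in $\taumink$.

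A second, smaller flaw: your dismissal of the boundary case $\tau\leq\taumink$ for \eqref{eq:type2err} is not justified. There $\tau^-_k=0$ forces $\Delta_k=0$, and you would need $\prob{\wt U_k>\tau\nrisk_1}\leq\alpha$; but the mean-zero tail bound controls $\wt U_k$ at scale $q_k^2u_\alpha\asymp\taumink\nrisk_1$, not at the possibly much smaller scale $\tau\nrisk_1$, so it gives nothing in this regime. The paper's own argument simply treats the configuration $\taumink>\tau$ as void rather than claiming anything there, and your proof should do the same instead of appealing to the tail of the quadratic term.
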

Equations~\eqref{eq:type1err}-\eqref{eq:type2err}
can be understood as controls of the type I/II error level for the test of
$\norm{\Delta_k}^2> \tau_{k}^+ \nrisk_1$ versus the alternative $\norm{\Delta_k}^2\leq \tau_{k}^- \nrisk_1$.
It is possible to make the original null hypothesis $\norm{\Delta_k}^2> \tau \nrisk_1$ appear
through notation translation ($\sqrt{\tau} \leftarrow \sqrt{\tau^-_k}$, $\sqrt{\tau^+_k} \leftarrow \sqrt{\tau}$, 
if we assume additionally $\tau\geq \taumink $).
We prefer to keep the above more symmetric form, also because the rejection set~\eqref{eq:defTk} has a simple form, used in practice.

The test is able to identify mean differences very accurately relative to the target threshold $\tau \nrisk_1$
if $\tau \gg \tau_{\min}^k$. Formula~\eqref{eq:defqtest} highlights the crucial role of the effective dimensionality for this minimal threshold of reliable detection. In the simplified
(\ECSS) setting, this
threshold is simply of order $1/\sqrt{\deamm_1}$.
This reflects the known phenomenon that testing is more reliable than estimation in high dimensions; distances that can be detected might be of smaller order than the typical estimation error (see for e.g. \citealp{baraud2002non,blanchard2018minimax,BlaFer23}).
For fixed $\tau$ and increasing dimension,
the inconclusive gap between the null and the alternative
vanishes with increasing dimension --- a desirable property given the sandwiching property that we aim for (see~\eqref{eq:eventA}).

In general non-(\ECSS) configurations, we still want to keep $\tau_{\min}^k$ small of order $1/\sqrt{\deamm_1}$.
In view of the second term in~\eqref{eq:defqtest}, this suggests to
only consider tasks with $\nrisk_k/\sqrt{\deamm_k} \leq \cteW \nrisk_1/\sqrt{\deamm_1}$ for some constant $\cteW\geq 1$.
To this aim, denote the set of tasks satisfying this criterion as
  \begin{equation}
    \label{eq:setW}
    W_{(\cteW)} := \set{ k \in \intr{B} : \frac{\nrisk_k}{\sqrt{\deamm_k}} \leq \cteW \frac{\nrisk_1}{\sqrt{\deamm_1}}}
    = \set{ k \in \intr{B} : \frac{\norm{\Sigma_k}_2}{N_k} \leq \cteW \frac{\norm{\Sigma_1}_2}{N_1}},
  \end{equation}
  and correspondingly the set of whittled down neighbours as
  \begin{equation}
    \label{eq:Vtauc}
    V_{\tau,\cteW} := V_\tau \cap W_{(\cteW)} .
  \end{equation}
  The norm $\norm{\cdot}_p$ denotes the Schatten norm ($\norm{\Sigma}_p^p= \tr \Sigma^p$). Note that since we are under (\KC), the set $W_{(\cteW)}$ is assumed to be fully known for now. (We will consider estimating it
  in the next section.)
Then the following corollary makes the obtained sandwiching property explicit:
  \begin{corollary}
  \label{cor:testcor}
  Let $\cteW\geq 1$ be fixed.
  Assume (\GS) and (\KC) hold and let $\alpha\in (0,1)$.
  Then,
  defining
  \[
 \wt{V}_{\tau,\cteW} := \set{ k \in \intr{B}: \wt{T}^{(\tau)}_k=1} \cap W_{(\cteW)}
  \]
  (where $\wt{T}^{(\tau)}_k$ is as in~\eqref{eq:defTk}),
  with probability at least $1-\alpha$ it holds
  \begin{gather}
    \label{eq:testsandwichv}
    V_{\tau^-,\cteW} \subseteq \wt{V}_{\tau,\cteW} \subseteq V_{\tau^+},\\
    \notag  \qquad \text{ where }
    \tau^{\pm} :=\paren[1]{\sqrt{\tau} \pm \sqrt{\cteW\tau_{\min}^\circ}}_+^2,
    \;\; \tau^\circ_{\min} :=64 \log(8B\alpha^{-1})/\sqrt{\deamm_1}.
      \end{gather}
\end{corollary}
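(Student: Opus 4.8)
The plan is to deduce Corollary~\ref{cor:testcor} from Proposition~\ref{prop:simplifiedtestresult} via a union bound carried out at level $\alpha/B$, exploiting that the whittling set $W_{(\cteW)}$ is tailored exactly so that the task-dependent detection thresholds $\taumink$ can all be bounded by the single value $\cteW\,\taumino$.

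First I would apply Proposition~\ref{prop:simplifiedtestresult} to each task $k\in\intr{B}\setminus\set{1}$ with confidence level $\alpha/B$ in place of $\alpha$. Since $\log(8(\alpha/B)^{-1})=\log(8B\alpha^{-1})$, formula~\eqref{eq:defqtest} evaluated at $\alpha/B$ produces a quantity I denote $\taumink(\alpha/B)$, together with the thresholds $\tau^{\pm}_k(\alpha/B):=(\sqrt{\tau}\pm\sqrt{\taumink(\alpha/B)})_+^2$. By~\eqref{eq:type1err}--\eqref{eq:type2err}, the ``bad'' event for task $k$ --- that $\wt{T}^{(\tau)}_k=1$ while $\norm{\Delta_k}^2>\tau^{+}_k(\alpha/B)\,\nrisk_1$, or $\wt{T}^{(\tau)}_k=0$ while $\norm{\Delta_k}^2\leq\tau^{-}_k(\alpha/B)\,\nrisk_1$ --- has probability at most $\alpha/B$ (the two alternatives being mutually exclusive, as $\tau^{-}_k(\alpha/B)\leq\tau^{+}_k(\alpha/B)$). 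A union bound over the $B-1$ tasks $k\neq1$ then gives an event $E$ with $\prob{E}\geq 1-(B-1)\alpha/B\geq 1-\alpha$, on which the type~I/II guarantees hold simultaneously for all such $k$.

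The crux is then an elementary estimate. For $k\in W_{(\cteW)}$, the inequality defining~\eqref{eq:setW}, namely $\nrisk_k/\sqrt{\deamm_k}\leq\cteW\,\nrisk_1/\sqrt{\deamm_1}$, gives $(\nrisk_k/\nrisk_1)/\sqrt{\deamm_k}\leq\cteW/\sqrt{\deamm_1}$, and hence, reading off~\eqref{eq:defqtest} at level $\alpha/B$,
\[
\taumink(\alpha/B)\;\leq\;\frac{32(1+\cteW)}{\sqrt{\deamm_1}}\log(8B\alpha^{-1})\;\leq\;\frac{64\,\cteW}{\sqrt{\deamm_1}}\log(8B\alpha^{-1})\;=\;\cteW\,\taumino,
\]
using $1+\cteW\leq 2\cteW$ because $\cteW\geq1$. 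Since $x\mapsto(\sqrt{\tau}-\sqrt{x})_+^2$ is non-increasing and $x\mapsto(\sqrt{\tau}+\sqrt{x})_+^2$ is non-decreasing on $\mbr_+$, this shows $\tau^{-}\leq\tau^{-}_k(\alpha/B)$ and $\tau^{+}_k(\alpha/B)\leq\tau^{+}$ for every $k\in W_{(\cteW)}$, where $\tau^{\pm}=(\sqrt{\tau}\pm\sqrt{\cteW\,\taumino})_+^2$ exactly as in the statement.

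It then remains to check the two inclusions of~\eqref{eq:testsandwichv} on the event $E$, using the convention that task $1$ is always retained in $\wt{V}_{\tau,\cteW}$ (consistent with $1\in W_{(\cteW)}$, which holds as $\cteW\geq1$, and with $1\in V_{\tau^{\pm}}$). For $\wt{V}_{\tau,\cteW}\subseteq V_{\tau^+}$: any $k\neq1$ in $\wt{V}_{\tau,\cteW}$ lies in $W_{(\cteW)}$ and has $\wt{T}^{(\tau)}_k=1$, so on $E$ it cannot hold that $\norm{\Delta_k}^2>\tau^{+}_k(\alpha/B)\,\nrisk_1$; therefore $\norm{\Delta_k}^2\leq\tau^{+}_k(\alpha/B)\,\nrisk_1\leq\tau^{+}\nrisk_1$, i.e., $k\in V_{\tau^+}$. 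For $V_{\tau^-,\cteW}\subseteq\wt{V}_{\tau,\cteW}$: any $k\neq1$ in $V_{\tau^-}\cap W_{(\cteW)}$ satisfies $\norm{\Delta_k}^2\leq\tau^{-}\nrisk_1\leq\tau^{-}_k(\alpha/B)\,\nrisk_1$, so on $E$ we have $\wt{T}^{(\tau)}_k=1$, and since $k\in W_{(\cteW)}$ this yields $k\in\wt{V}_{\tau,\cteW}$. I do not expect a genuine obstacle: the only points requiring care are the bookkeeping of the $\alpha\mapsto\alpha/B$ substitution --- which is precisely what upgrades the $\log(8\alpha^{-1})$ hidden in $\taumink$ to the $\log(8B\alpha^{-1})$ appearing in $\taumino$ --- and the observation that $W_{(\cteW)}$ is engineered so that the monotone threshold comparison becomes uniform over the retained tasks.
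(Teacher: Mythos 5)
Your proposal is correct and follows essentially the same route as the paper: apply Proposition~\ref{prop:simplifiedtestresult} at level $\alpha/B$ (so that $u_{\alpha/B}=\log(8B\alpha^{-1})$), take a union bound over tasks, and use the defining inequality of $W_{(\cteW)}$ together with $\cteW\geq 1$ to bound $\taumink$ uniformly by $\cteW\taumino$, after which the sandwiching follows from the monotonicity of $x\mapsto(\sqrt{\tau}\pm\sqrt{x})_+^2$. The paper's proof is just a compressed version of the same argument, so there is nothing to add.
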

The sandwiching property~\eqref{eq:testsandwichv} provides a direct link to Lemma~\ref{prop:boundSTB1}.
More specifically,
Corollary~\ref{cor:testcor} together with Lemma~\ref{prop:boundSTB1} guarantee with high probability that the bound on the relative risk of the plug-in estimate $\estmu_{\wt{\omvect}}$ of~\eqref{eq:STBoptweights} using the estimated set of neighbours $\wt{V}_{\tau,\cteW}$
is bounded by $\cB\paren[1]{\tau^+,{\nu(V_{\tau^-,\cteW})}}$ (recall $\eta = 0$ for now because of (\KC), and $\wt{V}_{0} := \set{1}$).
Furthermore, for fixed $\tau$, if $\deamm_1/(\log B)^2 \rightarrow \infty$
then $\taumino$ vanishes and it holds $\tau^- \approx \tau \approx \tau^+$.
Under (\ECSS), we can simply take $\cteW=1$ and have $V_{\tau^-,\cteW} = V_{\tau^-}$, ensuring a relative risk very close to the oracle $\cB\paren[1]{\tau,{\nu(V_{\tau})}}$. In a general context, there is an additional trade-off through the choice of the constant $\cteW$.
In both cases, closeness to the oracle relative risk {\em improves} with increasing effective dimensionality.

\subsection{Unknown covariances} \label{se:unknowncov}
In a realistic setting the covariances are unknown, especially in high dimensions.
In this section, we estimate all quantities relevant for the fulfilment of Lemma~\ref{prop:boundSTB1}, using the
same independent ``tilde'' data $(\wt{X}_\bullet^{(k)})_{k\in \intr{B}}$ as in the previous section.
Observe that it is not necessary to estimate the full covariance matrices $\Sigma_k$, but only scalar quantities related to their Schatten norms.
In particular, in the Gaussian setting we have the following result for the natural unbiased estimators of $\nrisk_k$:
\begin{proposition}
  \label{prop:estnrisk}
 Let $(\wt{X}_\bullet^{(k)})_{k\in \intr{B}}$ be a dataset drawn from \eqref{eq:mainmodel} and 
 \[\wt{\rnrisk}^2_k := \frac{1}{N_k(N_k-1)} \sum_{i=1}^{N_k} \norm[1]{\wt{X}^{(k)}_i - \wt{\mu}^\NE_k}^2 ,\]
  where $\wt{\mu}^\NE_k:=N_k^{-1} \sum_{i=1}^{N_k} \wt{X}^{(k)}_i $, and let $\alpha \in (0,1)$.
  Assume (\GS) holds. Then with probability at least $1-\alpha$:
  \begin{equation}
    \forall k \in \intr{B}: \qquad
    \abs{\wt{\rnrisk}^2_k - \nrisk_k}
    \leq \paren{4 \sqrt{2} \frac{\log (2B\alpha^{-1})}{\sqrt{\deamm_kN_k}}  } \nrisk_k.
  \end{equation}
\end{proposition}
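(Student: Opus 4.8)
The plan is to use the exact distribution of $\wt{\rnrisk}^2_k$ available under the Gaussian assumption, feed it into a standard chi-squared tail bound, and conclude with a union bound over the $B$ tasks.

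First I would fix $k\in\intr{B}$ and diagonalise $\Sigma_k$, writing $\lambda_1,\dots,\lambda_d\ge0$ for its eigenvalues, so that $\tr\Sigma_k=\sum_j\lambda_j$, $\tr\Sigma_k^2=\sum_j\lambda_j^2$, $\norm{\Sigma_k}_\infty=\max_j\lambda_j$, and, by definition, $\deamm_k=(\tr\Sigma_k)^2/\tr\Sigma_k^2$ and $\nrisk_k=\tr\Sigma_k/N_k$. Writing $\wt{X}^{(k)}_i=\mu_k+\Sigma_k^{1/2}G_i$ with $G_i$ i.i.d.\ standard Gaussian, in the eigenbasis of $\Sigma_k$ the centred sample $(\wt{X}^{(k)}_i-\wt{\mu}^{\NE}_k)_{i}$ splits into $d$ independent one-dimensional centred Gaussian samples, the $j$-th coordinate contributing $\lambda_j\xi_j$ to $\sum_i\norm{\wt{X}^{(k)}_i-\wt{\mu}^{\NE}_k}^2$, where the $\xi_j$ are independent $\chi^2_{N_k-1}$ variables (the rank of the centring projection is $N_k-1$; equivalently, the within-sample scatter matrix is Wishart $W_d(\Sigma_k,N_k-1)$). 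Hence
\[
  \wt{\rnrisk}^2_k \;\stackrel{d}{=}\; \frac{1}{N_k(N_k-1)}\sum_{j=1}^d\lambda_j\xi_j
  \;\stackrel{d}{=}\; \frac{1}{N_k(N_k-1)}\sum_{j=1}^d\sum_{m=1}^{N_k-1}\lambda_j g_{jm}^2,
\]
with $g_{jm}$ i.i.d.\ $\cN(0,1)$; in particular $\e{\wt{\rnrisk}^2_k}=\tr\Sigma_k/N_k=\nrisk_k$, recovering unbiasedness.

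Next I would apply the Laurent--Massart tail bound for weighted sums of squared standard Gaussians to $Z:=\sum_{j,m}\lambda_j g_{jm}^2$, whose weight vector $a$ (of length $d(N_k-1)$) satisfies $\sum_\ell a_\ell=(N_k-1)\tr\Sigma_k$, $\sum_\ell a_\ell^2=(N_k-1)\tr\Sigma_k^2$, and $\max_\ell a_\ell=\norm{\Sigma_k}_\infty$. Combining the two one-sided inequalities gives, for every $t>0$, with probability at least $1-2e^{-t}$,
\[
  \abs{Z-(N_k-1)\tr\Sigma_k}\;\le\; 2\sqrt{(N_k-1)\tr\Sigma_k^2}\,\sqrt{t}\;+\;2\norm{\Sigma_k}_\infty\,t.
\]
Dividing by $N_k(N_k-1)$, then using $\sqrt{\tr\Sigma_k^2}=\tr\Sigma_k/\sqrt{\deamm_k}=N_k\nrisk_k/\sqrt{\deamm_k}$ together with $\norm{\Sigma_k}_\infty\le\sqrt{\tr\Sigma_k^2}$, yields
\[
  \abs{\wt{\rnrisk}^2_k-\nrisk_k}\;\le\;\frac{2\nrisk_k}{\sqrt{\deamm_k}}\paren{\frac{\sqrt{t}}{\sqrt{N_k-1}}+\frac{t}{N_k-1}}.
\]
Using $\sqrt{t}\le t$ (valid since $t=\log(2B\alpha^{-1})\ge1$ in the regime of interest) and $N_k-1\ge N_k/2\ge1$ (so $(N_k-1)^{-1}\le(N_k-1)^{-1/2}\le\sqrt{2}\,N_k^{-1/2}$), the bracket is at most $2\sqrt{2}\,t\,N_k^{-1/2}$, whence $\abs{\wt{\rnrisk}^2_k-\nrisk_k}\le 4\sqrt{2}\,t\,\nrisk_k/\sqrt{\deamm_k N_k}$.

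Finally I would set $t=\log(2B\alpha^{-1})$, so $2e^{-t}=\alpha/B$, and union-bound over $k\in\intr{B}$: with probability at least $1-\alpha$ the last display holds for all $k$ simultaneously, which is the claim. The argument is essentially routine; the only genuinely lossy steps are $\norm{\Sigma_k}_\infty\le\sqrt{\tr\Sigma_k^2}$ and $\sqrt{t}\le t$, and the one point deserving care is the distributional identity of the first paragraph (the exact weighted-$\chi^2$ representation of $\sum_i\norm{\wt X^{(k)}_i-\wt\mu^{\NE}_k}^2$) together with the constant bookkeeping needed to land precisely on the factor $4\sqrt{2}$.
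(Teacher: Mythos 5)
Your proof is correct and is essentially the paper's own argument in different packaging: the paper establishes the same per-task concentration (Proposition~\ref{prop:tr_sig}) by stacking the centred observations into one Gaussian vector with covariance $\Gamma\otimes\Sigma_k$ and applying its Gaussian-norm lemma (itself Laurent--Massart), which is exactly your weighted-$\chi^2$/Wishart-trace representation, followed by the identical simplifications ($\norm{\Sigma_k}_\infty\le\sqrt{\tr\Sigma_k^2}$, $\sqrt{t}\le t$, $N_k-1\ge N_k/2$) and the same union bound with $t=\log(2B\alpha^{-1})$. The only caveat you flag, needing $\log(2B\alpha^{-1})\ge 1$, is likewise implicit in the paper's use of Proposition~\ref{prop:tr_sig} (which requires $u\ge 1$), so it is not a gap relative to the paper's proof.
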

When $N_k \gtrsim  \log^2(2B \alpha^{-1})$ for all $k$, the estimation of $\nrisk_k$ has relative accuracy of order $1/\sqrt{\deamm_k}$ with probability $(1-\alpha)$.
This finding can be used for the fulfillment of the second requirement of condition~\eqref{eq:eventA}.
It also allows to preserve the qualitative results of Proposition~\ref{prop:simplifiedtestresult} (up to numerical factors) for test~\eqref{eq:defTk} wherein  $\wt{\rnrisk}^2_1$ is plugged in
for $\nrisk_1$.
Finally, we also replace $\norm{\Sigma_k}_2$ in the definition~\eqref{eq:setW} of set $W_{(\cteW)}$ by suitable estimators;
Proposition~\ref{prop:pluginall} in the Supplemental gives the details.
It provides a quantitatively precise version of the sandwiching property
analogous to~\eqref{eq:testsandwichv} with all unknown quantities are replaced by their proposed estimators. 

We combine the obtained results in an illustrative example. 
It shows a fully empirical algorithm that approximates the (whittled down) oracle $B(\tau,V_{\tau,\cteW})$
(numerical constants are made explicit for concreteness but not meant to be sharp):
\begin{theorem} \label{prop:fullemp}
 Assume (\GS) holds. Let  $(X_\bullet^{(k)})_{k\in \intr{B}}$ and $(\wt{X}_\bullet^{(k)})_{k\in \intr{B}}$ be two independent datasets drawn from \eqref{eq:mainmodel} and $\alpha \in (0,1/3)$.
  Consider the following plug-in versions of the quantities appearing in~\eqref{eq:defTk}, \eqref{eq:setW}:
    \begin{equation}
    \label{eq:setWandtkdtilde}
    \wt{W}_{(\cteW)} :=  \set{ k \in \intr{B} : \frac{\wt{Z}_{k}^{(2)}}{N_k} \leq \cteW \frac{\wt{Z}_{1}^{(2)}}{N_1}}, \qquad
    \dwt{T{\text{\tiny}}}^{(\tau)}_k := \ind{ \wt{U}_k \leq \tau \wt{\rnrisk}^2_1},
  \end{equation}
  where $\wt{\rnrisk}^2_k$ as in Prop.~\ref{prop:estnrisk}, and $\wt{Z}_{k}^{(2)}$ estimates $\norm{\Sigma_k}_2$ as defined in~\eqref{eq:def_t_trsigma2} in the Supplemental.
  Define the set of estimated $\tau$-neighbours
  \begin{equation}
    \label{eq:setVdtilde}
    \dwt{V}_{\tau,\cteW} := \set[2]{ k \in \wt{W}_{(\cteW)}: \dwt{T{\text{\tiny}}}_k^{(\tau)}=1}.
  \end{equation}
  Assume $N_k \geq a(4 + \log(2B\alpha^{-1}))^4$ for all $k\in\intr{B}$,
  for a big enough numerical constant $a$
  ($a=4400$ works).
   For fixed $\tau>0, \cteW\geq 1$, consider the
   weights $\wt{\omvect}^{\sharp}$ obtained by the {\em modified} plug-in $\paren[1]{\dwt{V}_{\wt{\tau},3\cteW},\wt{\bm{\rnrisk}}^2}$ for
   $(V,\bm{\nrisk})$ in~\eqref{eq:STBoptweights}, where
   \begin{equation} \label{eq:altplugin}
     \wt{\tau} := \paren{1 + \frac{1}{30{\textstyle \sqrt{\wt{\deamm_1}}}}}\paren{\sqrt{\tau}+ \sqrt{6\cteW\wt{\tau}_{\min}^{\circ}}}^2; \;\;\;\;\;
     \wt{\tau}_{\min}^{\circ} := \frac{64 \paren[1]{\log (8B\alpha^{-1})}}{\sqrt{\wt{\deamm_1}}}; \;\;\;\;\;
     \sqrt{\wt{\deamm_1}} := \frac{N_1 \wt{\rnrisk}^2_1}{ \wt{Z}_1^{(2)}}.
   \end{equation}
   Then with probability at least $1-3\alpha$ over the draw of the ``tilde'' sample $(\wt{X}_\bullet^{(k)})_{k\in \intr{B}}$, it holds
   \[
    \frac{R_1(\wt{\omvect}^\sharp)}{\nrisk_1} \leq
    \paren{1 + \frac{1}{10 \sqrt{\min_k \deamm_k}}}
    \paren{1+\frac{56\sqrt{\cteW\log(8B\alpha^{-1})}}{(\deamm_1)^{\frac{1}{4}}\sqrt{\tau}}}^2 \cB\paren[1]{\tau,\nu(V_{\tau,\cteW})},
  \]
  where the expected risk is with respect to the main sample $({X}_\bullet^{(k)})_{k\in \intr{B}}$.
 \end{theorem}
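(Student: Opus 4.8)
The plan is to chain three ingredients already available: the plug-in risk bound of Lemma~\ref{prop:boundSTB1}, the relative accuracy of the naive-risk estimators $\wt{\rnrisk}_k^2$ from Proposition~\ref{prop:estnrisk}, and a fully empirical version of the sandwiching property~\eqref{eq:testsandwichv} in which $\nrisk_1$, $\norm{\Sigma_k}_2$ and $\deamm_1$ are replaced by $\wt{\rnrisk}_1^2$, $\wt{Z}_k^{(2)}$ and $\wt{\deamm_1}$ (this is Proposition~\ref{prop:pluginall} in the Supplemental). First I would fix three events depending on the tilde sample only: $\cE_1 = \set{\abs{\wt{\rnrisk}_k^2 - \nrisk_k} \leq \eta_k \nrisk_k \ \forall k}$ with $\eta_k := 4\sqrt{2}\,\log(2B\alpha^{-1})/\sqrt{\deamm_k N_k}$, of probability $\geq 1-\alpha$ by Proposition~\ref{prop:estnrisk}; $\cE_2$, a two-sided multiplicative control of the $\wt{Z}_k^{(2)}$ around $\norm{\Sigma_k}_2$ (hence of $\wt{\deamm_1}$ around $\deamm_1$ up to a fixed constant factor), of probability $\geq 1-\alpha$; and $\cE_3$, the event carrying the empirical sandwiching inclusion below, of probability $\geq 1-\alpha$. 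A union bound gives $\prob{\cE_1\cap\cE_2\cap\cE_3}\geq 1-3\alpha$, and the whole argument runs on this event.

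On $\cE_1\cap\cE_2\cap\cE_3$ I would establish the empirical analogue of~\eqref{eq:testsandwichv},
\[
  V_{\tau,\cteW} \ \subseteq\ \dwt{V}_{\wt\tau,3\cteW} \ \subseteq\ V_{\wt\tau}.
\]
The mechanism is that replacing $\nrisk_1$ by $\wt{\rnrisk}_1^2$ in the test threshold and $\norm{\Sigma_k}_2$ by $\wt{Z}_k^{(2)}$ in $\wt{W}_{(\cdot)}$ (see~\eqref{eq:setWandtkdtilde}) perturbs every relevant quantity only multiplicatively by $O(1/\sqrt{\deamm_k})$, and the inflated constants — $3\cteW$ in place of $\cteW$ for the whittling set, $6\cteW$ in place of $\cteW$ and the prefactor $(1+\tfrac{1}{30\sqrt{\wt{\deamm_1}}})$ in the definition~\eqref{eq:altplugin} of $\wt\tau$ — are calibrated precisely so as to dominate those perturbations in both directions of the inclusion. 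The sample-size hypothesis $N_k \geq a(4+\log(2B\alpha^{-1}))^4$ with $a=4400$ is what keeps the perturbations small: using $x \leq (4+x)^2/16$ one gets $\eta_k \leq c_0/\sqrt{\deamm_k}$ for a tiny numerical $c_0$, so that $\eta := \max_{k\in\dwt{V}_{\wt\tau,3\cteW}}\eta_k \leq 1/3$ and $\tfrac{1+\eta}{1-\eta}\leq 1+3\eta$.

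With the inclusion in hand I would apply Lemma~\ref{prop:boundSTB1} \emph{conditionally on the tilde sample}: $\wt\tau$, $\dwt{V}_{\wt\tau,3\cteW}$ and $\wt{\bm{\rnrisk}}^2$ are deterministic functions of the tilde sample and hence, conditionally on it, $\wt\tau$ is a fixed positive number, so the fixed-$\tau$ statement of Lemma~\ref{prop:boundSTB1} applies with $\tau\leftarrow\wt\tau$, $\wt{V}\leftarrow\dwt{V}_{\wt\tau,3\cteW}$, $\vref\leftarrow V_{\tau,\cteW}$ and the $\eta$ above; its conclusion~\eqref{eq:boundtest1} yields $R_1(\wt{\omvect}^\sharp)/\nrisk_1 \leq \tfrac{1+\eta}{1-\eta}\,\cB(\wt\tau,\nu(V_{\tau,\cteW}))$. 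It remains to strip the tilde off $\wt\tau$: since $\wt\tau\geq\tau$, a termwise comparison in~\eqref{eq:defBtaunu} (the first summand of $\cB$ grows by at most the factor $\wt\tau/\tau$, the second summand is nonincreasing in $\tau$) gives $\cB(\wt\tau,\nu)\leq\tfrac{\wt\tau}{\tau}\,\cB(\tau,\nu)$. Expanding $\tfrac{\wt\tau}{\tau}=(1+\tfrac{1}{30\sqrt{\wt{\deamm_1}}})(1+\sqrt{6\cteW\wt{\tau}_{\min}^{\circ}/\tau})^2$ with $\wt{\tau}_{\min}^{\circ}=64\log(8B\alpha^{-1})/\sqrt{\wt{\deamm_1}}$, and using $\wt{\deamm_1}\gtrsim\deamm_1\geq\min_k\deamm_k$ from $\cE_2$, the factor $(1+\tfrac{1}{30\sqrt{\wt{\deamm_1}}})\tfrac{1+\eta}{1-\eta}$ is absorbed into $1+\tfrac{1}{10\sqrt{\min_k\deamm_k}}$ and $(1+\sqrt{6\cteW\wt{\tau}_{\min}^{\circ}/\tau})^2$ into $(1+56\sqrt{\cteW\log(8B\alpha^{-1})}/((\deamm_1)^{1/4}\sqrt\tau))^2$, which is exactly the claimed bound (the value of $a$ is chosen so all these absorptions go through).

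I expect the main obstacle to be the second step — turning the clean known-covariance sandwich~\eqref{eq:testsandwichv} into its fully plug-in form and verifying that the specific inflation factors $3\cteW$, $6\cteW$ and $1+\tfrac{1}{30\sqrt{\wt{\deamm_1}}}$ really do dominate the accumulated multiplicative errors coming from $\wt{\rnrisk}_1^2$, $\wt{Z}_k^{(2)}$ and $\wt{\deamm_1}$, simultaneously for every $k$ and for both inclusions; this is elementary but bookkeeping-heavy, and is where Proposition~\ref{prop:pluginall} does the real work. A secondary point to get right is the conditioning on the tilde sample that legitimises applying the fixed-$\tau$ Lemma~\ref{prop:boundSTB1} with the data-dependent threshold $\wt\tau$.
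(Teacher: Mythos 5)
Your architecture is the paper's: control $\wt{\rnrisk}^2_k$ via Proposition~\ref{prop:estnrisk}, control $\wt{Z}^{(2)}_k$ (hence $\wt{\deamm_1}$), invoke the empirical sandwiching of Proposition~\ref{prop:pluginall}, apply Lemma~\ref{prop:boundSTB1} conditionally on the tilde sample, and finish by comparing $\cB$ at an inflated threshold to $\cB(\tau,\cdot)$ via $\cB(\xi\tau,\nu)\leq\xi\,\cB(\tau,\nu)$. The probability accounting and the $(1+\eta)/(1-\eta)$ handling are also as in the paper.

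However, there is one genuine flaw in the middle step: the claimed upper inclusion $\dwt{V}_{\wt\tau,3\cteW}\subseteq V_{\wt\tau}$ is not correct, even on the good event. The test $\dwt{T}^{(\wt\tau)}_k=\ind{\wt{U}_k\leq\wt\tau\,\wt{\rnrisk}^2_1}$ can only be guaranteed (with high probability) to reject tasks whose squared distance exceeds roughly $\paren[1]{\sqrt{\wt\tau(1+\eta_1)}+\sqrt{c\,\cteW\,\tau^\circ_{\min}}}^2\nrisk_1$; a task with $\norm{\Delta_k}^2$ just above $\wt\tau\nrisk_1$ passes the test with probability close to $1/2$. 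So the correct right-hand set is $V_{\wt\tau_+}$ with $\wt\tau_+$ strictly larger than $\wt\tau$ (this is exactly what Proposition~\ref{prop:pluginall} delivers), and no amount of inflation built into $\wt\tau$ can remove this slack, since the slack scales with the threshold actually used --- the inflation in \eqref{eq:altplugin} is spent entirely on making the \emph{lower} inclusion $V_{\tau,\cteW}\subseteq\dwt{V}_{\wt\tau,3\cteW}$ work (i.e.\ $\wt\tau_-\geq\tau$). Consequently your final bookkeeping, which absorbs only $\wt\tau/\tau=(1+\tfrac{1}{30\sqrt{\wt{\deamm_1}}})(1+\sqrt{6\cteW\wt\tau^\circ_{\min}/\tau})^2$, is based on the invalid inclusion; what must be absorbed is $\wt\tau_+/\tau$, which carries an additional additive $\sqrt{3\beta\cteW\tau^\circ_{\min}}$ term and a factor $(1+\eta_1)$ on top of $\wt\tau$. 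This is precisely why the paper's constant is $56$ (roughly $4\sqrt{3\cdot 64}$) rather than the smaller quantity your expansion would produce, and it also means the Lemma~\ref{prop:boundSTB1} step should be applied with the larger level ($\xi\tau$, with $V_{\tau,\cteW}\subseteq\dwt{V}\subseteq V_{\xi\tau}$) rather than with $\wt\tau$. The fix is mechanical and the stated constants have the needed headroom, but as written the upper inclusion and the ensuing arithmetic do not go through.
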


 \subsection{Discussion} \label{se:disctest}
To summarise, for fixed values of $\tau,\cteW,B,(N_k)_{k \in \intr{B}}$, the bound on the relative risk of $\wt{\omvect}^\sharp$
  becomes arbitrarily close to the oracle bound in the high-dimensional asymptotics $\deamm_1 \rightarrow \infty$.
  We stress that this applies for {\em fixed} sample sizes
  $N_k$, provided $N_k \gtrsim  \log^4 B$. Consequently, the fully empirical procedure is (with high probability)
 not worse than the naive estimator on the main sample up to a risk factor very close to 1 (since the oracle bound $\cB$ is always less than 1), and potentially
  performs much better if there are many true $\tau$-neighbouring tasks (again, as reflected by the oracle factor).
  The conclusion still holds true if $\tau,\cteW,B,(N_k)$ vary with $\deamm_1$ ($\tau \rightarrow 0$ and/or $B\rightarrow \infty$ being the most interesting situations)
  provided $\cteW\log(B)/\sqrt{\deamm_1} = o(\tau)$ holds and as $N_k \gtrsim  \log^4 B$ as before.

\paragraph{\bf Beyond the Gaussian setting.} The results presented above hold under the Gaussian distributional assumptions (\GS). However, the required components --- specifically, concentration of estimators for distances between two means and for Schatten norms of the covariances ---
can be extended with appropriate modifications to the bounded (\BS) and heavy-tailed (\HT)
distributional settings. Detailed results are presented
in Supplemental~\ref{ap:est_schatten_norm} and show the qualitative robustness
of our approach beyond the Gaussian setting.

\paragraph{\bf Beyond the testing approach.}
The testing
approach has two flaws. First, the theory requires to have two independent copies of the samples from the model~\eqref{eq:mainmodel}.
  In practice, this can be achieved by equally partitioning an initial dataset. This aspect has been brushed over in the preceding discussion, where the oracle risk improvement was with respect to the naive estimator using the main sample only. It is fairer to compare to the naive estimator on the joined
  (main+tilde) sample, in which case the risk improvement suffers an additional factor 2.
  This factor can be made asymptotically close to 1 in the dimension asymptotics by resorting to carefully chosen
  unequal sample splitting, but we do not pursue this here for simplicity. In our practical experiments, we deviate
  from the theoretical setting, computing all relevant quantities, as well as the final estimator on the same sample (see Section~\ref{se:application} and Supplementals~\ref{sec:apx_mnist} and \ref{sec:apx_methods}), which does not to affect the
  good performance of the method.
  
Secondly, the issue of parameter selection of $\tau$ and $\cteW$ persists. As previously elucidated, the oracle relative risk $\cB$ exhibits a bias-variance trade-off: the aggregated variance decreases with an increase in the number of $\tau$-neighbours, consequently, with the worst-case relative bias $\tau$. Ideally, parameters should be adaptively chosen to strive for optimal oracle improvement $\min_{\tau \geq 0,\cteW \geq 1}\cB\paren[1]{\tau,\nu(V_{\tau,\cteW})}$. In practice, we resort to cross-validation but this is inconvenient. The next section introduces an alternative approach pursuing this objective from a theoretically grounded point of view. Additionally, Section~\ref{se:minimax} analyses whether the derived bounds are optimal.

\section{A ``$Q$-aggregation'' approach}
\label{se:Qaggreg}
In this section, we propose an alternative approach for forming the weights of the convex combination estimator~\eqref{eq:defmuhatomega}.
The weights are found by direct minimization of an upper confidence bound of the risk $R_1(\omvect)$, i.e.,
\begin{equation}\label{eq:def_w}
\widehat{\omvect} \in \arg\min_{\omvect \in \Spx_B} \paren{\widehat{L}_1(\omvect) + u \widehat{Q}_1(\omvect)}\,,
 \end{equation}
 where $\wh{L}_1(\bm{\om})$ is an unbiased estimate of the risk.
The idea of this scheme bears resemblance to $Q$-aggregation \citep{LecRig14}, because the objective function will be
a quadratic function of $\bm{\om}$. The objective aims at taking into account all individual distances between the bags, rather
than selecting those less than a fixed threshold as in the testing approach.
The penalization term $\wh{Q}_1(\bm{\om})$ shall be a high probability upper bound on the difference between estimated and true {\em loss}
$(\wh{L}_1(\omvect) - L_1 (\omvect))$.
Observe that the penalization term also depends on the weight vector, since giving more weight to tasks that are further away
from the target (large $\norm{\Delta_k}$) will result in a larger variability of the risk estimate $\wh{L}_1(\omvect)$.
The parameter $u$ is a calibration constant.
Compared to the testing approach, one advantage is that it is not necessary to choose the parameters $\tau$ and $\cteW$.
Furthermore no sample splitting is needed.
On the other hand, the procedure is more computationally demanding since there is no closed form solution to~\eqref{eq:def_w}.
Instead, a solution $\wh{\bm{\om}}$ can be obtained by exponentiated gradient descent on the simplex \citep{kivinen1997exponentiated}.

We present specific choices for $\widehat{L}_1(\omvect), \widehat{Q}_1(\omvect)$ and an analysis of the relative risk of the resulting $Q$-aggregation estimator for (\GS) in Section~\ref{sec:QaggGS} and for (\BS) thereafter.
In contrast to \citet{LecRig14}, we focus on the effect of the dimension rather than that of the sample size which provides a novel analysis.

\subsection{Gaussian setting}\label{sec:QaggGS}
Under assumption (\GS) we propose to use the following estimates to form the $Q$-aggregation estimator:
\begin{gather}
\wh{L}_1(\omvect) = \norm[3]{ \sum_{k=2}^{B} \om_k (\muNE_k - \muNE_1) }^2 + (2\om_1-1) \whnrisk_1\,, \label{eq:defwhL}\displaybreak[0]\\
\whnrisk_1 := \frac{1}{N_1(N_1-1)} \sum_{i=1}^{N_1} \norm{ X_i^{(1)} - \wh{\mu}_1^{\NE} }^2\, , \label{eq:def_whnrisk}\displaybreak[0]\\
\widehat{Q}_1(\omvect) := \sum_{k=2}^{B} \omega_k \sqrt{\frac{\widehat{q}_{k}}{N_1}}, \quad \text{where} \quad \widehat{q}_{k} := \frac{1}{N_1-1} \sum_{i=1}^{N_1} \inner{ \muNE_1 - \muNE_k, X_i^{(1)} - \muNE_1 }^2\,. \label{eq:defwhq}
\end{gather}
It can be checked easily that $\whnrisk_1$ is an unbiased estimator of the naive risk $\nrisk_1$, and that
the estimator $\wh{L}_1(\omvect)$ is an unbiased estimate of the {\em conditional} risk $\e[1]{\wh{L}_1(\omvect) - L_1(\omvect)\big| X^{(k)}_\bullet, k \geq 2} = 0$. The estimator $\wh{Q}_1(\omvect)$ is supposed to upper bound the deviations of $\wh{L}_1(\omvect)$ around $L_1(\omvect)$, which are of order $\frac{1}{\sqrt{N_1}}\sum_{k=2}^{B}\omega_k \sqrt{(\wh{\mu}_k-\mu_1)^T\Sigma_1(\wh{\mu}_k-\mu_1)}$.
With these choices we establish the following result for the relative risk of the $Q$-aggregation estimator:
\begin{theorem}\label{prop:qaggreg_gauss}
  Assume (\GS) holds, and let
  $u_0 \in \mbr_+$ be fixed such that 
  $\log (17B) \leq u_0 \leq (N_1-1)/2$.
  With $\wh{L}_1(\omvect)$ and $\wh{Q}_1(\omvect)$ as defined in \eqref{eq:defwhL},\eqref{eq:defwhq}, let
\begin{equation}\label{eq:def_whomega}
\wh{\omvect} \in \arg\min_{\omvect \in \Spx_B} \paren{ \wh{L}_1(\omvect) + 16 \sqrt{u_0}\, \wh{Q}_1(\omvect)}.
\end{equation}
 Then it holds:
   \begin{equation} \label{eq:Qoracle}
  \frac{R_1(\wh{\omvect})}{\nrisk_1} \leq \frac{1}{\nrisk_1}\min_{\omvect \in \Spx_B} \brac[2]{{R_1(\omvect)}{}(1+ CBe^{-u_0/2})
  + C{Q_1(\omvect)}{} \sqrt{ u_0 }}  + C\frac{u_0 }{\sqrt{\deamm_1}},
\end{equation}
where $C>0$ is an absolute constant, and (recalling $\Delta_k = \mu_k - \mu_1$)
\begin{equation}\label{eq:def_Qw}
  Q_1(\omvect) := \sum_{k=2}^{B} \omega_k \sqrt{\frac{q_k}{N_1}}, \quad \text{with} \quad
  q_k := \Delta_k^T \Sigma_1\Delta_k + \frac{\tr \Sigma_1\Sigma_k}{N_k}\,.
\end{equation}
\end{theorem}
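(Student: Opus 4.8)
\emph{Proof sketch.} The plan is a penalized empirical risk (``$Q$-aggregation'') argument organised around the exact bias identity for $\wh L_1$. Set $\xi:=\muNE_1-\mu_1$, $b(\omvect):=\sum_{k\geq2}\om_k(\muNE_k-\mu_1)$ and $G_k:=\inner{\muNE_k-\mu_1,\xi}$ for $k\geq2$. Expanding \eqref{eq:defwhL} and using $\estmu_\omvect-\mu_1=b(\omvect)+\om_1\xi$ one obtains, for every $\omvect\in\Spx_B$,
\[
  \wh L_1(\omvect)-L_1(\omvect)=(2\om_1-1)\bigl(\whnrisk_1-\norm{\xi}^2\bigr)-2\sum_{k\geq2}\om_k G_k,
\]
which is affine in $\omvect$ and has conditional mean zero given $(X^{(k)}_\bullet)_{k\geq2}$, as noted in the text; likewise $\wh Q_1(\omvect)$ is linear in $\omvect$. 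One then compares the objective value of $\wh\omvect$ with that of an arbitrary fixed $\omvect$ on a high-probability event $\cE$.

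I would define $\cE$ by requiring: (i) $\abs{G_k}\leq\sqrt{C_0 u_0}\,\sigma_k$ for all $k\geq2$, with $\sigma_k^2:=(\muNE_k-\mu_1)^T\Sigma_1(\muNE_k-\mu_1)/N_1$, which holds with probability $\geq1-CBe^{-u_0/2}$ by the Gaussian tail bound applied conditionally on $(X^{(k)}_\bullet)_{k\geq2}$ together with a union over the $B-1$ tasks (the hypothesis $u_0\geq\log(17B)$ absorbs the $\log B$ from the union into $u_0$); (ii) $(\muNE_k-\mu_1)^T\Sigma_1(\muNE_k-\mu_1)$ and $\wh q_k$ lie within fixed constant factors of their means $q_k$, resp.\ $q_k+\tr\Sigma_1^2/N_1$ --- the additive bias $\tr\Sigma_1^2/N_1$ of $\wh q_k$ contributing, via $\sqrt{\tr\Sigma_1^2}/N_1=\nrisk_1/\sqrt{\deamm_1}$, a term of size $\nrisk_1/\sqrt{\deamm_1}$; (iii) $\abs{\whnrisk_1-\nrisk_1}$ and $\abs{\norm{\xi}^2-\nrisk_1}$ are both $\leq C\,u_0\nrisk_1/\sqrt{\deamm_1}$, from Proposition~\ref{prop:estnrisk}-type bounds and Bernstein's inequality for Gaussian quadratic forms, the condition $u_0\leq(N_1-1)/2$ keeping these in the sub-exponential regime. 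All of (i)--(iii) are routine concentration estimates for Gaussian linear and quadratic forms.

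On $\cE$, start from $\wh L_1(\wh\omvect)+16\sqrt{u_0}\,\wh Q_1(\wh\omvect)\leq\wh L_1(\omvect)+16\sqrt{u_0}\,\wh Q_1(\omvect)$ and substitute the bias identity on both sides. By (i)--(ii), $2\sum_{k\geq2}\wh\om_k\abs{G_k}\leq C'\sqrt{u_0}\sum_{k\geq2}\wh\om_k\sqrt{\wh q_k/N_1}=C'\sqrt{u_0}\,\wh Q_1(\wh\omvect)$, and the constant $16$ is calibrated precisely so that $C'\leq16$, making this deviation absorbed by the penalty term on the left; the same estimates rewrite $16\sqrt{u_0}\wh Q_1(\omvect)$ on the right as $C\sqrt{u_0}\bigl(Q_1(\omvect)+\nrisk_1/\sqrt{\deamm_1}\bigr)$, while (iii) bounds the $(2\om_1-1)(\whnrisk_1-\norm{\xi}^2)$ terms by $C\,u_0\nrisk_1/\sqrt{\deamm_1}$. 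This yields, on $\cE$ and for every fixed $\omvect\in\Spx_B$,
\[
  L_1(\wh\omvect)\leq L_1(\omvect)+C\sqrt{u_0}\,Q_1(\omvect)+C\,\frac{u_0\,\nrisk_1}{\sqrt{\deamm_1}}.
\]

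Finally I would take expectations. Choosing $\omvect$ to be the minimiser of the right-hand side of \eqref{eq:Qoracle}, using $\mbe[L_1(\omvect)\ind{\cE}]\leq R_1(\omvect)$ and that $Q_1(\omvect)$ is deterministic gives the main term. On $\cE^c$, comparing $\wh\omvect$ with the weight vector supported on task $1$ (where the objective equals $\whnrisk_1$ and the penalty vanishes) gives the deterministic bound $\norm{b(\wh\omvect)}^2\leq2\whnrisk_1$, hence $L_1(\wh\omvect)\leq4\whnrisk_1+2\norm{\xi}^2$, which involves only task-$1$ quantities; together with $\prob{\cE^c}\lesssim Be^{-u_0/2}$ and the fourth-moment bounds $\mbe[\whnrisk_1^2],\mbe[\norm{\xi}^4]\lesssim\nrisk_1^2$, this controls $\mbe[L_1(\wh\omvect)\ind{\cE^c}]$ and produces the multiplicative correction $(1+CBe^{-u_0/2})$ on the oracle term; dividing by $\nrisk_1$ gives \eqref{eq:Qoracle}. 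I expect the two delicate points to be the exact calibration of the penalty constant $16$ --- that is, making the uniform-over-the-simplex control of $\sum_{k\geq2}\om_k G_k$ by $\wh Q_1$ tight enough, which hinges on the linearity of both quantities so that only a union bound over the $B-1$ tasks is needed --- and the bookkeeping on $\cE^c$ required to produce precisely the stated form of the remainder.
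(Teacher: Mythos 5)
Your overall skeleton (the exact bias identity for $\wh L_1-L_1$, a penalty calibrated to absorb the linear term $\sum_{k\ge2}\om_k\inner{\muNE_k-\mu_1,\muNE_1-\mu_1}$, comparison of the penalized objective at $\wh\omvect$ and at an arbitrary fixed $\omvect$) is the same as the paper's first step, but two of your steps do not go through as described. First, condition (ii) of your event is false in general. Conditionally on the data of tasks $k\ge 2$ the relevant quantity is $\mathfrak q_k:=(\muNE_k-\mu_1)^T\Sigma_1(\muNE_k-\mu_1)$, and you propose that, on an event of probability $1-CBe^{-u_0/2}$, it (and hence $\wh q_k$) lies within a universal constant factor of $q_k$. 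Writing $\muNE_k-\mu_1=\Delta_k+g_k$ with $g_k\sim\cN(0,\Sigma_k/N_k)$, the upper deviation of $g_k^T\Sigma_1 g_k$ at level $u_0$ contains a term of order $\norm{\Sigma_1^{1/2}\Sigma_k\Sigma_1^{1/2}}_\infty u_0/N_k$, which can be comparable to $u_0\,\tr(\Sigma_1\Sigma_k)/N_k\le u_0\,q_k$ when that matrix has small effective rank (nothing in the theorem excludes, e.g., rank-one covariances). So the best available statement on your event is $\mathfrak q_k\lesssim u_0\,q_k$, and propagating it turns the right-hand penalty into $C\,u_0\,Q_1(\omvect)$ rather than the stated $C\sqrt{u_0}\,Q_1(\omvect)$. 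The paper never compares $\mathfrak q_k$ to $q_k$ in probability: it argues conditionally on $\cX^{(-1)}=(X_\bullet^{(k)})_{k\ge2}$, keeps $\mathfrak q_k$ in all bounds (its events (a)--(b) only relate $\wh q_k$ to $\mathfrak q_k$ through the first-sample randomness), and passes from $\e{\sqrt{\mathfrak q_k}}$ to $\sqrt{q_k}$ only at the very end by Jensen's inequality after integrating over $\cX^{(-1)}$.

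Second, the passage from the high-probability bound to the risk bound cannot be done with a single event plus a crude complement term. On $\cE^c$ your bound $L_1(\wh\omvect)\le 4\whnrisk_1+2\norm{\muNE_1-\mu_1}^2$ combined with H\"older gives an additive error of order $\nrisk_1\cdot\mathrm{poly}(B)\,e^{-c u_0}$; at the minimal admissible value $u_0=\log(17B)$ this is of order $\nrisk_1$ (or larger), and it is dominated neither by $CBe^{-u_0/2}R_1(\omvect)$ (the comparator risk can be as small as $\sim\nrisk_1/B$) nor by $C u_0\nrisk_1/\sqrt{\deamm_1}$ (which vanishes as $\deamm_1\to\infty$); so the "bookkeeping on $\cE^c$" you defer is precisely where the multiplicative $(1+CBe^{-u_0/2})$ factor must be produced, and the deferred step fails as proposed. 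The paper instead proves the conditional deviation bound simultaneously at every level $x\ge 1$, with right-hand side $\psi(x)P(x)$ where $P$ is an increasing polynomial in $\sqrt x$ (containing $\e{L_1(\omvect)\mid\cX^{(-1)}}$, the $\sqrt{\mathfrak q_k}$ terms and the $\nrisk_1/\sqrt{\deamm_1}$ remainder) and $\psi(x)\le e\,e^{x/5}\sqrt{x/u_0}$ is controlled thanks to $u_0\le (N_1-1)/2$; the loss is then stochastically dominated by $\psi_s(\xi)P_s(\xi)$ for an exponential variable $\xi$, and integrating the tail beyond $u_0$ (Lemma~\ref{lem:aux_expo}) yields a contribution proportional to $P(u_0)\,Be^{-u_0/2}$, i.e.\ the multiplicative correction on the whole structured bound, with no worst-case term. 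To reach the stated form of \eqref{eq:Qoracle} you would need to adopt this conditional, all-levels ("integrate the quantile bound") argument, together with the Jensen step above, rather than the single-event scheme.
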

The above bound~\eqref{eq:Qoracle} has the form of an ``oracle inequality'' relating the relative risk of the $Q$-aggregation approach to the minimum
of the attainable relative risk of any aggregation estimator with fixed weight $\omvect$ but with a penalization term $Q_1(\omvect)$.
The extra additive term (outside the minimum) vanishes in high effective dimension, but indicates that the relative risk bound cannot be better than $O(\log B/\sqrt{\deamm_1})$. The optimality of this term is examined in numerical experiments in Supplemental~\ref{se:optimality_uppbound}. We also emphasise the requirement $\log B \lesssim N_1$ implicit in the condition on the calibration parameter $u_0$.
The effect of the penalization term $Q_1(\omvect)$ on the oracle bound~\eqref{eq:Qoracle} might appear obscure: depending on the weights
$\omvect$, the penalization might outweigh the main risk term $R_1(\omvect)$. It is noteworthy that this term penalises tasks with distant means (term $\Delta_k^T \Sigma_1\Delta_k$) or with high variance (term $\tr \Sigma_1\Sigma_k/N_k$).
To provide further clarification, we present the following corollary which bounds the relative risk of the $Q$-aggregation method
in terms of the relative risk of the oracle testing approach $\cB(\tau,\nu)$:
\begin{corollary}\label{cor:aistats}
  Assume (\GS) holds.
  Let $u_0 \in \mbr_+$ be fixed, such that $\log 17B \leq u_0 \leq (N_1-1)/2$, and $\wh{\omvect}$ as defined in \eqref{eq:def_whomega}.
  Then it holds:
\begin{equation}\label{eq:aistats}
  \frac{R_1(\wh{\omvect})}{\rnrisk_1^2} \leq \paren{1+ CBe^{-u_0/2}}\inf_{\substack{\tau \geq 0 \\ \cteW\geq 1}} \brac{\cB\paren[1]{\tau,\nu(V_{\tau,\cteW})} + C \cteW \sqrt{\frac{u_0 }{\deff_1}} }\,.
\end{equation}
where $C>0$ is an absolute constant, $\cB(.,.)$, $\nu(.)$ are as defined in \eqref{eq:defBtaunu}, \eqref{eq:def_s2v} and $V_{\tau,\cteW}$ as in~\eqref{eq:setW}-\eqref{eq:Vtauc}.
\end{corollary}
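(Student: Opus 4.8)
The plan is to invoke the oracle inequality of Theorem~\ref{prop:qaggreg_gauss} and, inside the minimum over $\omvect\in\Spx_B$, to restrict attention to the ``oracle'' weight vectors of Lemma~\ref{lem:oraclebound} associated to the whittled neighbourhoods $V_{\tau,\cteW}$ from~\eqref{eq:Vtauc}. For each $\tau>0$ and $\cteW\ge 1$, let $\omvect^*=\omvect^*_{V_{\tau,\cteW}}\in\Spx_{V_{\tau,\cteW}}\subseteq\Spx_B$ be the weight vector defined by~\eqref{eq:STBoptweights} with $V=V_{\tau,\cteW}$ (for $\tau=0$, $V_{0,\cteW}=V_0=\set{1}$ forces $\omvect^*$ to be the vertex of $\Spx_B$ supported on task $1$). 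Since $1\in V_{\tau,\cteW}\subseteq V_\tau$, Lemma~\ref{lem:oraclebound} applies and gives $R_1(\omvect^*)/\nrisk_1\le\cB(\tau,\nu(V_{\tau,\cteW}))$, which controls the first term of~\eqref{eq:Qoracle}. The remaining task is to bound the penalty $Q_1(\omvect^*)$ of~\eqref{eq:def_Qw}.

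For the penalty, recall $q_k=\Delta_k^T\Sigma_1\Delta_k+\tr(\Sigma_1\Sigma_k)/N_k$, use $\sqrt{a+b}\le\sqrt a+\sqrt b$ to split $Q_1(\omvect^*)$ into a bias and a variance contribution, and note that only $k\in V_{\tau,\cteW}\setminus\set{1}$ carry nonzero weight. For the bias part: since $k\in V_\tau$ one has $\Delta_k^T\Sigma_1\Delta_k\le\norm{\Sigma_1}_\infty\norm{\Delta_k}^2\le\norm{\Sigma_1}_\infty\tau\nrisk_1$, and $\norm{\Sigma_1}_\infty=\tr\Sigma_1/\deff_1=N_1\nrisk_1/\deff_1$, so $\sqrt{\Delta_k^T\Sigma_1\Delta_k/N_1}\le\nrisk_1\sqrt{\tau/\deff_1}$; summing against $\omvect^*$ and using $\sum_{k\neq 1}\om^*_k=1-\om^*_1$ bounds this part by $\nrisk_1(\deff_1)^{-1/2}\,\sqrt{\tau}\,(1-\om^*_1)$. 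The structure of~\eqref{eq:STBoptweights} and~\eqref{eq:defBtaunu} gives the identity $1-\om^*_1=(1-\nu)/(1+\tau(1-\nu))=1-\cB(\tau,\nu)$ with $\nu=\nu(V_{\tau,\cteW})$; writing $t=\tau(1-\nu)\ge 0$, $\sqrt{\tau}\,(1-\cB(\tau,\nu))=\sqrt{1-\nu}\cdot\sqrt{t}/(1+t)\le\tfrac12$, so the bias part is at most $\nrisk_1/(2\sqrt{\deff_1})$. For the variance part: since $k\in W_{(\cteW)}$, the Cauchy--Schwarz inequality for the Frobenius inner product gives $\tr(\Sigma_1\Sigma_k)\le\norm{\Sigma_1}_2\norm{\Sigma_k}_2$, and the defining inequality of $W_{(\cteW)}$ in~\eqref{eq:setW} reads $\norm{\Sigma_k}_2/N_k\le\cteW\norm{\Sigma_1}_2/N_1$; hence $\tr(\Sigma_1\Sigma_k)/(N_kN_1)\le\cteW\norm{\Sigma_1}_2^2/N_1^2=\cteW\tr\Sigma_1^2/N_1^2=\cteW\nrisk_1^2/\deamm_1$, and this part is at most $\nrisk_1\sqrt{\cteW/\deamm_1}\le\nrisk_1\sqrt{\cteW/\deff_1}$ (using $\deff_1\le\deamm_1$). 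Combining and using $\cteW\ge 1$ one gets $Q_1(\omvect^*)\le\tfrac32\nrisk_1\sqrt{\cteW/\deff_1}$.

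Substituting $R_1(\omvect^*)/\nrisk_1\le\cB(\tau,\nu(V_{\tau,\cteW}))$ and $Q_1(\omvect^*)/\nrisk_1\le\tfrac32\sqrt{\cteW/\deff_1}$ into~\eqref{eq:Qoracle}, and bounding the residual additive term $C u_0/\sqrt{\deamm_1}$ of Theorem~\ref{prop:qaggreg_gauss} via $\deamm_1\ge\deff_1$ so that it is absorbed into a constant multiple of $\cteW\sqrt{u_0/\deff_1}$, one obtains for every $\tau>0$, $\cteW\ge 1$ a bound of the form $\frac{R_1(\wh\omvect)}{\nrisk_1}\le(1+CBe^{-u_0/2})\cB(\tau,\nu(V_{\tau,\cteW}))+C\cteW\sqrt{u_0/\deff_1}$; the case $\tau=0$ gives $\cB(0,\cdot)=1$ and follows from the same inequality applied to the vertex $\omvect^*$ (for which $Q_1=0$). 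Taking the infimum over $\tau\ge 0$, $\cteW\ge 1$ yields~\eqref{eq:aistats}.

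The crux is the uniform-in-$\tau$ control of the penalty: the naive bound $\sqrt{\Delta_k^T\Sigma_1\Delta_k/N_1}\le\nrisk_1\sqrt{\tau/\deff_1}$ contributes a factor $\sqrt{\tau}$ that would be fatal for large $\tau$, and it is essential to use the explicit form of the oracle weights --- namely that the off-target mass $1-\om^*_1$ equals $1-\cB(\tau,\nu)$ and decays like $1/\tau$ --- to deduce $\sqrt{\tau}\,(1-\cB(\tau,\nu))\le\tfrac12$. A secondary point to watch is that the bias contribution is naturally governed by $\deff_1$ (through $\norm{\Sigma_1}_\infty$) whereas the variance contribution is naturally governed by $\deamm_1$ (through $\norm{\Sigma_1}_2$), both of which must then be downgraded to $\deff_1$ via $\deff_1\le\deamm_1$; one should also check that the calibration window $\log(17B)\le u_0\le(N_1-1)/2$ of Theorem~\ref{prop:qaggreg_gauss} is inherited unchanged.
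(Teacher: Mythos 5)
Your proof is correct in substance and follows essentially the same route as the paper's: you plug the oracle weights $\omvect^*_{V_{\tau,\cteW}}$ of Lemma~\ref{lem:oraclebound} into the oracle inequality of Theorem~\ref{prop:qaggreg_gauss}, control the bias part of $Q_1(\omvect^*)$ via $k\in V_\tau$ together with the key cancellation $(1-\om^*_1)\sqrt{\tau}\le 1$ coming from the explicit form of the oracle weights, and the variance part via $k\in W_{(\cteW)}$ (you use Cauchy--Schwarz for $\tr \Sigma_1\Sigma_k$ where the paper uses AM--GM, and you split $\sqrt{q_k}$ before weighting, both immaterial variations). The one loose point is your claim that the residual $Cu_0/\sqrt{\deamm_1}$ is absorbed into $C\cteW\sqrt{u_0/\deff_1}$ ``via $\deamm_1\ge\deff_1$'': that inequality only yields $u_0/\sqrt{\deamm_1}\le u_0/\sqrt{\deff_1}=\sqrt{u_0}\,\sqrt{u_0/\deff_1}$, so the stated absorption would additionally require $u_0\lesssim \deamm_1/\deff_1$; since the paper's own proof passes over this residual term without comment, your treatment mirrors (rather than departs from) the paper's argument.
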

As a simple illustration, assume the tasks satisfy (\ECSS) and have equal means ($\mu_k = \mu_1$ for $k\in \intr{B}$), but the estimator does not have this information.
The oracle merges all tasks and has relative risk $\inf_{\tau,\cteW} \cB\paren[1]{\tau,\nu(V_{\tau,\cteW})} = B^{-1}$ for $\tau \rightarrow 0, \cteW = 1$.
For $u_0 = \log 17B$, the relative risk of the $Q$-aggregation method \eqref{eq:aistats} becomes
\begin{equation*}
  \frac{R_1(\wh{\omega})}{\rnrisk_1^2} \leq  C \max \left\lbrace \frac{1}{B} , \sqrt{\frac{\log B}{\deff_1}}\right\rbrace \,, 
\end{equation*}
where $C \approx 1$ if $\deff_1$ and $B$ are large.
We observe again a blessing of dimensionality; the best improvement is obtained when $\deff_1$ is high ($\deff_1 \geq B^2\log B$ ensures
a relative risk bound of order $1/B$, which is the best improvement even if the information of equal means had been known).

\subsection{Comparison with the testing  approach} \label{se:discQagg}
Let us compare the bounds obtained for the test method (Theorem~\ref{prop:fullemp}) to that for the $Q$-aggregation approach (Corollary~\ref{cor:aistats}),
in high-dimensional asymptotics $\deamm_1, \deff_1 \rightarrow \infty$. It is not clear if the appearance of $\deff$ in \eqref{eq:aistats}, instead of $\deamm_1$, is fundamental or an artifact of the proof.\footnote{The quantity $\sqrt{\deamm} = \norm{\Sigma}_1/\norm{\Sigma}_2$
    can be interpreted as the ratio of the estimation error to the optimal testing separation squared distance for vectors in high dimension (see \citealp{BlaFer23}).
    Its inverse is an intuitively natural notion to appear in a relative risk bound in our setting.
} We start with an analysis of the conditions on the other parameters $\lbrace \tau,\cteW,B,(N_k)_{k \in \intr{B}} \rbrace$ under which the obtained bounds guarantee that
the relative risk of either method is bounded by the oracle bound $\cB\paren[1]{\tau,\nu(V_{\tau,\cteW})}$
up to a factor asymptotically converging to 1, a property which we call ``oracle-consistency'' for short.

Recall from Section~\ref{se:disctest} that the relative risk of the test method
is oracle-consistent
(as $\deamm_1 \rightarrow \infty$), provided $\cteW \log(B)/\sqrt{\deamm_1} = o(\tau)$ and $N_k \gtrsim  \log^4 B$ hold.
Aside from these conditions the parameters $\tau,\cteW,B,(N_k)$ can vary with $\deamm_1$.
On the other hand, \eqref{eq:aistats} shows that the $Q$-aggregation method is
oracle-consistent (as $\deff_1 \rightarrow \infty$) with respect to {\em any} $(\tau,\cteW)$ provided that $N_1 \gtrsim \log(B\deff_1)$,
and $\cteW\sqrt{\log(B\deff_1)/\deff_1} = o(\tau)$ (taking $u_0 =  2\log B\deff_1$). The additive terms in~\eqref{eq:aistats} are then
negligible compared to $\cB(\tau,\nu)$, due to $\cB(\tau,.)\geq \tau/(1+\tau)$.
Note also that it does not require any condition on $N_k$ for $k\neq 1$.

If $\deamm_1$ and $\deff_1$ are of the same order (e.g. in the isotropic setting), the above parameter conditions for
consistency of either method are very similar with only minor differences. One such difference is that
the test method is guaranteed to be oracle-consistent even if $B,\tau,\cteW,(N_k)$ are
fixed, i.e., must not change as $\deamm_1 \rightarrow \infty$; while we require $N_1\rightarrow \infty$
(though only at a logarithmic rate in $B,\deamm_1$) to
warrant oracle consistency of the aggregation estimator.
If $\deff_1$ is of order
$\sqrt{\deamm_1}$ (for example, for a slow power decrease of the eigenvalues $\lambda_i$ of the covariance, $\lambda_i = i^{-\alpha}$ for $1\leq i \leq d$ and $\alpha \in (1/2,1)$), then the oracle consistency conditions for the $Q$-aggregation method are narrower.

Still, one has to keep in mind that oracle-consistency for the testing approach only holds for the specific parameters $(\tau,\cteW)$ that must be provided by the user,
while the $Q$-aggregation method is oracle consistent with respect to any choice $(\tau,\cteW)$ satisfying the delineated conditions. In other words, the relative risk of the $Q$-aggregation method qualitatively enjoys the same asymptotic guarantees
as the testing approach with {\em optimally selected} $\tau$ and $\cteW$ subject to the above conditions. This and the fact that the $Q$-aggregation
does not use data splitting is a strong argument in its favour.
On the other hand, the testing method has the advantage of being more flexible and easily adapts to non-Gaussian distributions, e.g., bounded or heavy-tailed distributions (see Supplemental~\ref{ap:est_schatten_norm}). With a modification of the penalization term, the $Q$-aggregation method can also be applied to bounded distributions, as shown next, but it currently does not accommodate heavy-tailed data distributions.

Supplemental \ref{sec:apx_mnist} provides a comparison of the proposed testing and $Q$-aggregation methods to the James-Stein estimator in terms of their empirical risk on illustrative toy data.

\subsection{Bounded setting}\label{sec:QaggBS}
Our results for the $Q$-aggregation estimator can be extended to the bounded setting (\BS) where the data lie in a ball of radius $M$ centred in~$0$.
A precise value for $M$ is often known.
For example,
  if the data lies in a reproducing kernel Hilbert space associated with a bounded kernel, $M^2$ will be the bound on the kernel.
The methodology we propose for (\BS) closely resembles the one outlined for the Gaussian setting.
It utilises the same estimates,~\eqref{eq:defwhL}-\eqref{eq:def_whnrisk}-\eqref{eq:defwhq}, for the risk estimation and its deviations.
In order to compensate the lack of regularity of bounded compared to Gaussian data, an additional penalization term $\wh{Q}^\BS(\omega)$ is introduced, which depends on $M$.

\begin{theorem}\label{prop:qaggreg_bounded}
  Assume (\BS).
  Let $u_0 \in \mbr_+$ with $2\log N_1 + \log (B)\leq u_0\leq N_1$, and
\begin{equation}\label{eq:def_w_bnd}
  \wh{\omvect} \in \arg\min_{\omvect \in \Spx_B}
  \paren{\wh{L}_1(\omvect) + 4\sqrt{2u_0} \wh{Q}_1(\omvect) + C_0 u_0 \wh{Q}_1^\BS(\omvect)},
\end{equation}
where $\wh{L}_1, \wh{Q}_1$ are defined in~\eqref{eq:defwhL},~\eqref{eq:defwhq} resp., $C_0 >1424$ works, and
\begin{equation}\label{eq:def_whQBS}
	\wh{Q}_1^\BS(\omvect) = \frac{M}{N_1} \sum_{i=2}^B \omega_i \norm{\muNE_i-\muNE_1}\,.
\end{equation}
Assume $N_k \geq (\deamm_k)^\beta$ for some $\beta >0$ and all $k\in \intr{B}$, then:
\begin{equation}
\frac{R_1(\wh{\omvect})}{\nrisk_1} \leq \min_{\tau>0,\cteW\geq1} \paren[1]{ \cB\paren[1]{\tau,\nu\paren{V_{\tau,\cteW}}} + C\cteW \varepsilon}+ C \ratiobs_1 \varepsilon\,,
\qquad \varepsilon := \max \left\lbrace \sqrt{\frac{u_0}{\deff_1}} \, , \, \frac{u_0}{(\deamm_1)^{\beta/2}}\right\rbrace,
\end{equation}
where $\cB(\cdot,\cdot)$, $\nu(\cdot)$ are as defined in \eqref{eq:defBtaunu}, \eqref{eq:def_s2v}, $V_{\tau,\cteW}$ as in~\eqref{eq:setW}-\eqref{eq:Vtauc}, $C$ an absolute constant, and $\ratiobs_1:= M^2/\tr \Sigma_1$.
\end{theorem}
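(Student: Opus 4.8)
The plan is to follow the same overall architecture as the proof of Theorem~\ref{prop:qaggreg_gauss}, but to replace the Gaussian concentration steps with concentration inequalities valid for bounded random variables (Hoeffding/Bernstein for sums, and a matrix Bernstein bound for the covariance-dependent quantities). First I would establish an oracle inequality analogous to~\eqref{eq:Qoracle}: conditionally on the bags $X^{(k)}_\bullet$ for $k\geq 2$, the estimator $\wh L_1(\omvect)$ is an unbiased estimate of $L_1(\omvect)$, and its deviations are governed by two terms --- a ``sub-Gaussian'' part of order $N_1^{-1/2}\sum_{k\geq2}\om_k\sqrt{\Delta_k^\top\Sigma_1\Delta_k}$ captured by $\wh Q_1(\omvect)$, and a heavier ``Bernstein'' part stemming from the boundedness (no fourth-moment regularity) of order $M N_1^{-1}\sum_{k\geq2}\om_k\norm{\muNE_k-\muNE_1}$ captured by the new term $\wh Q_1^\BS(\omvect)$. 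The factor $C_0>1424$ is precisely what is needed so that, with probability $1-CBe^{-u_0/2}$ (after a union bound over the $B-1$ tasks and a peeling/chaining argument over $\omvect\in\Spx_B$, or more simply a direct argument exploiting linearity of the penalties in $\om$), the penalized empirical objective dominates the true loss uniformly. Then the standard $Q$-aggregation / comparison argument (compare the value at $\wh\omvect$ with the value at an arbitrary fixed $\omvect$, use the quadratic structure coming from $\wh L_1$ being quadratic in $\om$) yields
\[
  \frac{R_1(\wh\omvect)}{\nrisk_1}\;\leq\;\frac{1}{\nrisk_1}\min_{\omvect\in\Spx_B}\Bigl[R_1(\omvect)(1+CBe^{-u_0/2})+C Q_1(\omvect)\sqrt{u_0}+C Q_1^\BS(\omvect)u_0\Bigr]+(\text{additive remainder}),
\]
where $Q_1^\BS(\omvect):=\tfrac{M}{N_1}\sum_{k\geq2}\om_k\norm{\Delta_k}$ and the additive remainder collects the error in estimating $\nrisk_1$ by $\whnrisk_1$ together with the error in the data-dependent penalties $\wh Q_1,\wh Q_1^\BS$ around their population versions $Q_1,Q_1^\BS$.

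Second, I would convert this oracle inequality into the stated bound in terms of $\cB(\tau,\nu(V_{\tau,\cteW}))$ by plugging in a \emph{specific} competitor weight vector, exactly as in the proof of Corollary~\ref{cor:aistats}: for a fixed pair $(\tau,\cteW)$ take $\omvect$ to be the oracle weights $\omvect^*_{V_{\tau,\cteW}}$ from Lemma~\ref{lem:oraclebound}. For this choice one controls the three pieces: $R_1(\omvect)/\nrisk_1\leq\cB(\tau,\nu(V_{\tau,\cteW}))$ by Lemma~\ref{lem:oraclebound}; the penalty $Q_1(\omvect)$ is bounded using $\Delta_k^\top\Sigma_1\Delta_k\leq\norm{\Sigma_1}_\infty\norm{\Delta_k}^2\leq\norm{\Sigma_1}_\infty\tau\nrisk_1$ on $V_\tau$ and $\tr\Sigma_1\Sigma_k\leq\norm{\Sigma_1}_\infty\tr\Sigma_k$, together with the whittling condition defining $W_{(\cteW)}$, giving a bound of order $\cteW\sqrt{u_0/\deff_1}$ times $\sqrt{\cB}$ or times $\cB$; and the new penalty $Q_1^\BS(\omvect)$ is bounded via $\norm{\Delta_k}\leq\sqrt{\tau\nrisk_1}$ and $\nrisk_1=\tr\Sigma_1/N_1$, producing a factor involving $M/\sqrt{\tr\Sigma_1}=\sqrt{\ratiobs_1}$ and $u_0/(\deamm_1)^{\beta/2}$ once the hypothesis $N_k\geq(\deamm_k)^\beta$ is used to trade sample size for effective dimension. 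Collecting terms and using $\cB(\tau,\cdot)\geq\tau/(1+\tau)$ to absorb lower-order contributions into multiplicative factors $1+C\cteW\varepsilon$, while the genuinely additive remainder becomes $C\ratiobs_1\varepsilon$, gives the claimed inequality after taking the minimum over $(\tau,\cteW)$.

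The main obstacle is the treatment of the extra penalization term and the calibration of the constant $C_0$. In the Gaussian case the deviations of $\wh L_1(\omvect)$ are purely sub-Gaussian (a single $\sqrt{u_0}$-type correction suffices); in the bounded case the bilinear/quadratic forms $\inner{\muNE_1-\muNE_k,X_i^{(1)}-\muNE_1}$ have only bounded range, so one must separate the variance-type fluctuation (controlled by $q_k$, hence $\wh Q_1$) from the range-type fluctuation (controlled by $M\norm{\Delta_k}$, hence $\wh Q_1^\BS$) and apply Bernstein's inequality carefully, keeping track of constants through the union bound over tasks and the supremum over the simplex. A secondary technical point is ensuring that the data-dependent penalties $\wh Q_1,\wh Q_1^\BS$ concentrate around $Q_1,Q_1^\BS$ with the right relative accuracy --- this again uses boundedness plus the condition $2\log N_1+\log B\leq u_0\leq N_1$ --- and that $\whnrisk_1$ concentrates around $\nrisk_1$; the relevant estimates for Schatten-norm-type functionals under (\BS) are exactly those developed in Supplemental~\ref{ap:est_schatten_norm}, which I would invoke rather than reprove. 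Once these concentration ingredients are in place, the remainder of the argument is the same bias--variance bookkeeping as in the Gaussian proof.
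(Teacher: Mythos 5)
Your strategy matches the paper's proof essentially step for step: the same conditional-concentration architecture as in the Gaussian case, but with Talagrand/Bernstein-type bounds for bounded data that split the deviations into a variance part (penalized by $\wh{Q}_1$) and a range part (penalized by $\wh{Q}_1^\BS$), followed by an oracle inequality into which one plugs the oracle weights $\omvect^*_{V_{\tau,\cteW}}$ of Lemma~\ref{lem:oraclebound} and bounds $Q_1(\omvect^*)$ and the new penalty via the whittling condition, $\nrisk_k\leq 4M^2/N_k$, and $N_k\geq(\deamm_k)^\beta$, exactly as in Corollary~\ref{cor:aistats}. The one calibration point where your sketch differs is that the paper does not carry the Gaussian multiplicative factor $(1+CBe^{-u_0/2})$: it runs the per-task concentration at level $e^{-u}$ with $u=u_0-\log B$ and controls the exceptional event additively by bounding the loss by $M^2$ and invoking $u_0\geq 2\log N_1+\log B$, which is what produces the purely additive form of the stated bound.
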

The quantity $\beta$ reflects the trade-off between the requirement on the number of samples and the rate of convergence to the oracle bound.
A bound similar to that in the Gaussian case will only be obtained if a stricter condition on the bag sizes is met ($N_k \gtrsim \deamm_k$ instead of $N_1 \gtrsim \log{B}$ as in Corollary~\ref{cor:aistats}).
In contrast to~\eqref{eq:aistats}, there is no multiplicative constant in front of the bound, however, the additive term now involves the quantity
$\ratiobs_1$ (see Supplemental~\ref{ap:notes_gamma} for a discussion of this
quantity in the framework of kernel mean embedding (KME) estimation with a bounded kernel, which is our
primary motivation for analyzing the bounded setting).

\section{Minimax results}\label{se:minimax}
This section explores if the oracle relative risk upper bound $\cB\paren{\tau, \nu(V_\tau)}$ as defined in~\eqref{eq:oracletestbound}, which has been utilised as benchmark in previous sections, is optimal in a minimax sense. As before, we will first examine the estimation of a single mean. Subsequently, we extend the analysis to the compound relative risks averaged over tasks.

Our aim is to establish minimax bounds matching the upper bounds over distribution classes that are as restrictive as possible. Since a minimax lower bound on a distribution class also applies to every superclass containing it, bounds on restrictive classes are more insightful. 
To achieve this, we narrow down the distribution classes by fixing as many parameters as possible to arbitrary values. As employed throughout this manuscript, we will adopt a high-dimensional asymptotics viewpoint and focus on minimax statements as the effective dimension grows large.

\subsection{Single task relative risk}
We derive a lower minimax bound for a class of distributions that closely match the assumptions proposed
to introduce the oracle bound~\eqref{eq:oracletestbound}: a known subset of $\tau$-neighbours $V$ in arbitrary position, all other
parameters (sample sizes, covariances, \ldots) being fixed.
We additionally assume that all task covariance matrices are proportional to each other ("aligned"), which appears to
be the least favourable setting.
\begin{definition} \label{def:modelsingle}
  Let $\tau \in \mbr_+; B,V \in \mbn_{>0}$ with $B\geq V$, $\bm{\nrisk}=(\nrisk_1,\ldots,\nrisk_B)\in \mbr_+^B$,
    $(N_k)_{k \in \intr{B}} \in \mbn_{>0}^B$, and $\Sigma$
  a symmetric positive definite matrix of size $d$ with $\tr \Sigma=1$ be fixed. We denote by $\singlemodel(\tau,V,\Sigma,\bm{\nrisk})$ the set of joint distributions for tasks following model~\eqref{eq:mainmodel} such that:
  \begin{enumerate}[label=(\roman*)]
  \item The total number of bags is $B$ and the number of samples per bag is given by
    $(N_k)_{k\in \intr{B}}$. (Omitted from the distribution class notation for simplicity.)
  \item (\GS) holds.
  \item The task covariances are given by $\Sigma_k = N_k \nrisk_k \Sigma$ (i.e., all tasks have covariances proportional to $\Sigma$
  and the naive risks are specified by the vector $\bm{\nrisk}$).
  \item The mean vectors $(\mu_k)_{k \in \intr{B}}$ can vary freely subject to:
    \[
      \norm{\mu_1 - \mu_k}^2 \leq \tau \nrisk_1, k \in \intr{V}.\]
  \end{enumerate}
\end{definition}
A minimax lower bound, as by Theorem~\ref{prop:lowbnd_1} below, over that model holds over any larger model;
for instance, the model where $\Sigma_1$ is arbitrarily fixed and the other covariances may vary freely provided that the naive risks still match the prescribed $\bm{\nrisk}$.
\begin{theorem}\label{prop:lowbnd_1}
  It holds
\begin{equation*}
      \inf_{\wh{\mu}_1} \sup_{\mbq \in \singlemodel(\tau,V,\Sigma,\bm{\nrisk})} \frac{R_1(\mbq,\wh{\mu}_1)}{\nrisk_1} \geq  \cB\paren[1]{\tau,\nu(\intr{V})} - \eps\paren[1]{\deff(\Sigma)} ,
\end{equation*}
where $\cB$ is defined in \eqref{eq:oracletestbound}, $\nu$ in~\eqref{eq:def_s2v},
the infimum is over all estimators $\wh{\mu}_1$
for $\mu_1$,
and $R_1(\mbq,\wh{\mu}_1)$ indicates its risk~\eqref{eq:lossrisk} under distribution $\mbq$.
The function $\eps(t)$ is {\em independent of any parameters} and satisfies $\eps(t) = O( (\log t)/ t)$
as $t \rightarrow \infty$.
\end{theorem}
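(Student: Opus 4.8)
The plan is a Bayesian least–favourable–prior argument. \emph{Step 1 (reduction to a two–observation model).} Since a minimax lower bound over a subclass is a fortiori one over the whole class, I would restrict $\singlemodel(\tau,V,\Sigma,\bm{\nrisk})$ to the subfamily in which $\mu_2=\cdots=\mu_V=:\mu_0$ and $\mu_k=0$ for $k>V$ (these latter tasks being uninformative about $\mu_1$). On this subfamily the pair $(\muNE_1,\bar Y)$, with $\bar Y:=\rnrisk^2(\intr V\setminus\{1\})\sum_{k=2}^V\muNE_k/\nrisk_k$, is sufficient for $(\mu_1,\mu_0)$, and $\muNE_1\sim\cN(\mu_1,\nrisk_1\Sigma)$ and $\bar Y\sim\cN(\mu_0,\sigma_0^2\Sigma)$ are independent with $\sigma_0^2:=\rnrisk^2(\intr V\setminus\{1\})=\nrisk_1\nu_0$, where $\nu_0:=\nu(\intr V)/(1-\nu(\intr V))$; the only constraint is $\norm{\delta}^2\le\tau\nrisk_1$ with $\delta:=\mu_0-\mu_1$. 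That this is the least favourable configuration is dictated by the equality cases of the Cauchy--Schwarz step used in~\eqref{eq:riskboundV} (all $\Delta_k$ collinear and of squared norm exactly $\tau\nrisk_1$, forcing the $\mu_k$, $k\ge 2$, to coincide); aligning the covariances makes the bias ``budget'' couple the coordinates in the simplest possible way.

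\emph{Step 2 (the prior and its Bayes risk).} On $(\mu_1,\delta)$ I would take $\mu_1\sim\cN(0,\rho\Sigma)$ and, independently, $\delta$ distributed as a $\cN\!\big(0,\tau\nrisk_1(1-\epsilon_0)\Sigma\big)$ vector conditioned on $\{\norm{\delta}^2\le\tau\nrisk_1\}$, with $\epsilon_0>0$ small and $\rho>0$ large, both sent to their limits at the end. For the \emph{unconditioned} prior $\delta\sim\cN(0,\tau\nrisk_1\Sigma)$, integrating out $\delta$ turns $\bar Y$ into a $\cN\!\big(\mu_1,(\tau\nrisk_1+\sigma_0^2)\Sigma\big)$ observation, so the problem reduces to estimating a Gaussian mean $\mu_1$ from two independent Gaussian observations with covariances $\nrisk_1\Sigma$ and $(\tau\nrisk_1+\sigma_0^2)\Sigma$ under a flat prior; its Bayes risk equals $\big(\nrisk_1^{-1}+(\tau\nrisk_1+\sigma_0^2)^{-1}\big)^{-1}\tr\Sigma$, and since $\tr\Sigma=1$ a short computation using $\sigma_0^2=\nrisk_1\nu_0$ and $\nu_0=\nu(\intr V)/(1-\nu(\intr V))$ identifies this with exactly $\cB\big(\tau,\nu(\intr V)\big)\,\nrisk_1$. (Equivalently, diagonalising $\Sigma$ reduces matters to independent scalar problems coupled only through $\sum_j\mathrm{Var}(\delta_j)=\tau\nrisk_1$, and a Lagrange computation shows that $\mathrm{Cov}(\delta)\propto\Sigma$ is the worst allocation of that budget.)

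\emph{Step 3 (absorbing the discrepancies into $\eps(\deff(\Sigma))$).} Two corrections are needed. First, the Gaussian $\delta$ is not supported in the feasible ball: writing $\norm{\delta}^2=\tau\nrisk_1(1-\epsilon_0)\,g^T\Sigma g$ with $g\sim\cN(0,I_d)$ and invoking standard concentration for Gaussian quadratic forms (e.g.\ Laurent--Massart / Hanson--Wright) together with $\tr\Sigma=1$ and $\norm{\Sigma}_\infty=1/\deff(\Sigma)$, the event $\{\norm{\delta}^2\le\tau\nrisk_1\}$ holds with probability $1-\deff(\Sigma)^{-\Theta(1)}$ once $\epsilon_0\asymp\log\deff(\Sigma)/\deff(\Sigma)$; conditioning on that event and shrinking the scale by $1-\epsilon_0$ perturbs the Bayes risk by at most $O(\epsilon_0)\,\nrisk_1$, because $x\mapsto x/(1+x)$ is $1$-Lipschitz. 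To deduce a bound on the Bayes risk of the conditioned prior from the computation for the unconditioned one, I would restrict without loss of generality to estimators projected onto a ball of radius $O(\sqrt{\nrisk_1})$ around $\muNE_1$ (which does not increase the risk, since $\mu_1$ lies in that ball with probability $1-\deff(\Sigma)^{-\Theta(1)}$ uniformly in $\rho$); such estimators have risk uniformly $O(\nrisk_1)$, so the contribution of the excluded event $\{\norm{\delta}^2>\tau\nrisk_1\}$ to the integrated risk is at most $O\big(\prob{\norm{\delta}^2>\tau\nrisk_1}\big)\,\nrisk_1$, again of the stated order. Second, the improper flat prior on $\mu_1$ is obtained as the $\rho\to\infty$ limit; since none of the error terms depend on $\rho$, this limit is harmless. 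Collecting terms gives Bayes risk $\ge\cB\big(\tau,\nu(\intr V)\big)\,\nrisk_1-\eps(\deff(\Sigma))\,\nrisk_1$ with $\eps(t)=O((\log t)/t)$ parameter-free; dividing by $\nrisk_1$ and using Bayes risk $\le$ minimax risk finishes the proof.

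\emph{Main obstacle.} Everything downstream of the choice of prior is a routine Gaussian computation; the delicate point is Step 3. The Gaussian prior that reproduces $\cB(\tau,\nu)$ \emph{exactly} places $\norm{\delta}^2$ at the boundary of the feasible set in expectation, so one must keep it inside while forfeiting only an $O((\log t)/t)$ factor in the Bayes risk — and, crucially, one must control the behaviour of \emph{arbitrary} near-optimal estimators on the rare excluded tail event, which is precisely why the projection-to-a-ball reduction (yielding a uniform-in-parameter risk bound) is required rather than a bare concentration estimate.
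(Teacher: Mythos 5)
Your overall strategy coincides with the paper's: a least-favourable hierarchical Gaussian prior in which all covariances are proportional to $\Sigma$ and all neighbouring means are merged into a single $\mu_0$, an exact conjugate (Gaussian) computation of the Bayes risk that returns exactly $\cB\paren[1]{\tau,\nu(\intr{V})}\,\nrisk_1$, and then a truncation/concentration step to respect the constraint $\norm{\delta}^2\leq\tau\nrisk_1$ and to control unbounded estimators. The differences are cosmetic: the paper keeps a proper prior $\mu_1\sim\cN(0,\alpha\beta\nrisk_1\Sigma)$ with $\beta=\deff_1/\log\deff_1$ and discards estimators of norm larger than $2\sqrt{\beta}\rnrisk_1$ by comparison with the zero estimator, where you take a flat-prior limit and project estimators onto a ball around $\muNE_1$ (note that such a projection does not pointwise decrease the loss when $\mu_1$ falls outside the ball; the correct, easily patched statement is that it increases the Bayes risk by an amount controlled via the triangle inequality and Cauchy--Schwarz on that rare event); and the paper's manipulation $\frac{1}{\mbq(A)}\int_A \geq \int - \int_{A^c}$ plays the role of your conditioning of the $\delta$-prior on the constraint set.

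The genuine gap is exactly where you flagged it, in Step 3, and your stated resolution does not hold. Writing $\norm{\delta}^2=\tau\nrisk_1(1-\epsilon_0)\,g^T\Sigma g$ with $g\sim\cN(0,I_d)$, the fluctuation of $g^T\Sigma g$ around $\tr\Sigma=1$ has standard deviation $\sqrt{2\tr\Sigma^2}=\sqrt{2/\deamm(\Sigma)}$, and $\deamm(\Sigma)$ can be as small as $\deff(\Sigma)$ (isotropic $\Sigma$), so the fluctuation scale can be of order $\deff^{-1/2}$. With $\epsilon_0\asymp(\log\deff)/\deff\ll\deff^{-1/2}$, the event $\set{\norm{\delta}^2>\tau\nrisk_1}$ then has probability tending to $1/2$, not $\deff^{-\Theta(1)}$: Laurent--Massart/Hanson--Wright give an exponent of order $\deamm\,\epsilon_0^2$, which is $o(1)$ for your choice. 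Consequently the ``contribution of the excluded event'' is of order $\nrisk_1$ rather than $o(\nrisk_1)$ and the bound collapses; the smallest admissible repair, $\epsilon_0\asymp\sqrt{(\log\deff)/\deff}$, propagates through your Lipschitz perturbation step to a remainder $O\paren[1]{\sqrt{(\log t)/t}}$, which falls short of the theorem's advertised $O((\log t)/t)$. The paper concentrates this same tension in its bound on $\prob{A^c}$, whose displayed exponent is linear in the small deviation $\paren[1]{\sqrt{2/\alpha-1}-1}\asymp(\log\deff)/\deff$; your argument, resting on the standard quadratic-in-deviation Gaussian quadratic-form bounds, cannot reproduce that tuning. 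So to claim the full rate you would need either a genuinely sharper treatment of this tail step, or a prior for $\delta$ supported inside the constraint set from the outset (e.g.\ uniform on a sphere of radius $\sqrt{\tau}\,\rnrisk_1$), together with a quantitative comparison of its Bayes risk to the Gaussian computation; as written, your proposal proves the theorem only with the weaker remainder $O\paren[1]{\sqrt{(\log t)/t}}$.
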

This minimax lower bound can be compared with the upper bounds obtained for the testing and $Q$-aggregation methods, Theorem~\ref{prop:fullemp} and Corollary~\ref{cor:aistats}, resp.
In the case of (\ECSS) (so that $V_{\tau,\cteW} = V_\tau$ for any $\cteW \geq 1$ and we can ignore the role of $\cteW$), the lower and upper bounds match.
This shows that the oracle relative risk $\cB(\tau, \nu(V_\tau))$ is indeed
minimax in the sense of high-dimensional asymptotics, provided that $\log(B) = o(\deff_1)$. Furthermore, the $Q$-aggregation method is {\em asymptotically minimax adaptive} over the parameter $\tau>0$. This can be seen as
a generalization of classical results on the James-Stein estimator
(see Supplemental~\ref{se:JS}). Observe also that for the upper and lower bounds the dimension-dependent remainder terms do not depend on other parameters, which makes the dimensional asymptotics uniform with respect to those parameters.

If (\ECSS) does not hold, there can be a discrepancy between the minimax lower bound and the obtained upper bounds due to the exclusion of high variance tasks in the latter ($V_\tau$ against $V_{\tau,\cteW}$). An unfavourable regime
illustrating this gap is the following:
suppose there are many tasks that are $\tau$-neighbours of the target ($\tau$ being fixed
independently of the dimension but $V \approx \deamm_1$) with significantly higher variances though ($\cteW=\nrisk_k / \nrisk_1  \approx V^{1/2}$ for all $2 \leq k\leq V$). In that scenario, the upper bounds of Theorem~\ref{prop:fullemp} and Corollary~\ref{cor:aistats} do not guarantee convergence to $\cB(\tau,\nu(V_\tau)) \approx \tau/(1+\tau)$, since
the remainder terms $\cteW/\sqrt{\deamm_1}$ (resp. $\cteW/\sqrt{\deff_1}$) do not converge to zero for
high-dimensional asymptotics.
This gap can amount to an arbitrary large factor since $\tau$ can be
arbitrarily small.
However, the scenario where a target task is surrounded by numerous neighbours with significantly higher variance can only arise for a small proportion of the tasks. This implies that this concern is alleviated when evaluating the relative risk averaged across all tasks, as shown next.

\subsection{Compound relative risk}
\label{sec:avgrisk}
We define the compound relative risk as the relative risk averaged over all tasks.
As we only studied upper bounds for a single task so far, we first derive new upper bounds
for the compound relative risk.
We then proceed to derive minimax bounds on restrictive distribution classes under which the task means exhibit a certain
clustering or covering structure.
\begin{definition}\label{def:clustering}
  Let $\muvect = (\mu_k)_{k \in \intr{B}}$ be a collection of vectors of $\mbr^d$, $J \in \mbn_{>0}$, and $\Cvect$ a $J$-partition of $\intr{B}$ (i.e., $\Cvect = (\cC_j)_{j \in \intr{J}}$ with $\cC_1 \sqcup \ldots \sqcup \cC_J = \intr{B}$). The diameters of the partition $\Cvect$ applied to
  $\muvect$ are defined as:
  \begin{equation}\label{eq:def_diameter}
    \diam(\Cvect,\muvect) = \paren{ \max_{k,\ell \in \cC_j} \norm{\mu_k - \mu_\ell} }_{j \in \intr{J}} \in \mbr^J_+ .
  \end{equation}
\end{definition}
We shall refer to parts as ``groups'' rather than clusters, because the partitioning can in principle be arbitrary. 
However, the intuition is that the set of vectors $\muvect$ exhibits more structure if it can be partitioned into a limited number of groups with small diameter. For instance, if it is strongly clustered, or supported on
a set of small metric entropy such as a low-dimensional manifold. 
The compound relative risk of the $Q$-aggregation approach can then be upper bounded from Theorem~\ref{prop:qaggreg_gauss} as follows:
\begin{corollary}\label{prop:uppbnd_all}
 Assume (\GS) holds, and let $u_0 \in \mbr_+$ such that $\log 17B\leq u_0 \leq (\min_k N_k-1)/2$.
 For $k \in \intr{B}$, define $\wh{L}_k(\omvect), \wh{Q}_k(\omvect)$ analogously to~\eqref{eq:defwhL},\eqref{eq:defwhq} and
\begin{equation}\label{eq:def_omega_cpd}
\wh{\omvect}_k \in \arg\min_{\omvect \in \Spx_B} \paren{ \wh{L}_k(\omvect) + 16 \sqrt{u_0}\, \wh{Q}_k(\omvect)}.
\end{equation}
 Then it holds:
  \begin{equation} \label{eq:compoundoracle}
  \frac{1}{B} \sum_{k=1}^{B} \frac{R_k(\wh{\omvect}_k)}{\rnrisk_k^2} \leq   \paren{ 1+ CBe^{-u_0/2}} \min_{\Cvect } \paren{\cL^*\paren[1]{\bm{\rnrisk}, \Cvect,\diam (\Cvect,\muvect)} + C  \frac{u_0 }{\min_{k \in \intr{B}} (\deff_k)^{\nicefrac{1}{2}}}} , 
   \end{equation}
where the minimum is taken over all partitions $\Cvect $ of $\intr{B}$, 
$C$ is an absolute constant, and for $\zetavect\in \mbr^J_+$:
\begin{equation}\label{eq:defLstar}
  \cL^*\paren{\bm{\rnrisk}, \Cvect,\bm{\zeta}} := \frac{1}{B} \sum_{j =1}^{J} \sum_{k \in C_j}\cB\paren{ \tau_{j,k}, \nu_{j,k}}, \qquad \tau_{j,k}:=\frac{\zeta_j^2}{\nrisk_k}, \;\; \nu_{j,k} :=
    \frac{\nrisk(\cC_j)}{\nrisk_k},
  \end{equation}
and $\cB$ is defined in \eqref{eq:oracletestbound}.
\end{corollary}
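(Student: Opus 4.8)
The plan is to apply Theorem \ref{prop:qaggreg_gauss} to each task $k$ separately and then sum the resulting oracle inequalities, choosing a clever weight vector inside each per-task minimum so that the bound collapses onto $\cL^*$. Concretely, fix an arbitrary partition $\Cvect = (\cC_j)_{j \in \intr{J}}$ of $\intr{B}$ and a task $k$; say $k \in \cC_j$. The idea is to feed into the $\min_{\omvect}$ appearing in \eqref{eq:Qoracle} (applied with target task $k$ in place of task $1$) the specific weight vector $\omvect^{(k)}$ supported on $\cC_j$ that is optimal for the oracle testing bound, i.e.\ the analogue of $\omvect^*_V$ from Lemma \ref{lem:oraclebound} with $V = \cC_j$ and $\tau = \tau_{j,k} := \zeta_j^2/\nrisk_k$, where $\zeta_j := \diam(\Cvect,\muvect)_j$. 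By construction every $\ell \in \cC_j$ satisfies $\norm{\mu_\ell - \mu_k}^2 \leq \zeta_j^2 = \tau_{j,k}\nrisk_k$, so $\cC_j \subseteq V_{\tau_{j,k}}$ (neighbourhood of task $k$), and Lemma \ref{lem:oraclebound} gives $R_k(\omvect^{(k)})/\nrisk_k \leq \cB(\tau_{j,k}, \nu_{j,k})$ with $\nu_{j,k} = \nrisk(\cC_j)/\nrisk_k$ exactly as in \eqref{eq:defLstar}.

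Next I would control the penalization term $Q_k(\omvect^{(k)})$ for this choice. Recall $Q_k(\omvect) = \sum_{\ell} \omega_\ell \sqrt{q_{k,\ell}/N_k}$ with $q_{k,\ell} = \Delta_{k\ell}^T\Sigma_k\Delta_{k\ell} + \tr(\Sigma_k\Sigma_\ell)/N_\ell$ (writing $\Delta_{k\ell} = \mu_\ell - \mu_k$). Bound $\Delta_{k\ell}^T\Sigma_k\Delta_{k\ell} \leq \norm{\Sigma_k}_\infty\norm{\Delta_{k\ell}}^2 \leq \norm{\Sigma_k}_\infty\,\tau_{j,k}\nrisk_k$ and $\tr(\Sigma_k\Sigma_\ell) \leq \norm{\Sigma_k}_\infty \tr\Sigma_\ell$, and then divide through by $\nrisk_k = \tr\Sigma_k/N_k$ to express everything in terms of $\deff_k = \tr\Sigma_k/\norm{\Sigma_k}_\infty$; this should yield $Q_k(\omvect^{(k)})\sqrt{u_0}/\nrisk_k = O(u_0/\sqrt{\deff_k})$ after also using the crude bound $\cB(\tau_{j,k},\nu_{j,k}) \leq 1$ where needed, so that the whole penalization contribution is absorbed into the additive $u_0/\min_k(\deff_k)^{1/2}$ remainder. (A little care is needed for the bias-type term $\sqrt{\tau_{j,k}}$: since $\cB(\tau,\nu) \geq \tau/(1+\tau) \gtrsim \min(\tau,1)$, the term $\sqrt{\tau_{j,k}}\cdot\sqrt{u_0}/\sqrt{\deff_k}$ is either itself $O(u_0/\sqrt{\deff_k})$ when $\tau_{j,k}\leq 1$, or bounded by $\tau_{j,k}\cdot\sqrt{u_0/\deff_k} \lesssim \cB(\tau_{j,k},\nu_{j,k})\sqrt{u_0/\deff_k}$, which is again swallowed by the multiplicative $(1+CBe^{-u_0/2})$ factor after enlarging the constant.)

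Summing \eqref{eq:Qoracle} over $k = 1,\dots,B$, dividing by $B$, and using that each per-task minimum is at most the value attained at $\omvect^{(k)}$, I obtain
\begin{equation*}
  \frac{1}{B}\sum_{k=1}^B \frac{R_k(\wh{\omvect}_k)}{\nrisk_k}
  \leq (1+CBe^{-u_0/2})\,\frac{1}{B}\sum_{j=1}^J\sum_{k\in\cC_j}\cB(\tau_{j,k},\nu_{j,k})
  + C\frac{u_0}{\min_{k}(\deff_k)^{1/2}},
\end{equation*}
where the additive remainder collects both the intrinsic $u_0/\sqrt{\deff_k}$ term of Theorem \ref{prop:qaggreg_gauss} and the contribution of $Q_k(\omvect^{(k)})$ bounded above (using $\min_k \deff_k \leq \deff_k$ to make the denominator uniform). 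The inner double sum is exactly $\cL^*(\bm{\rnrisk},\Cvect,\diam(\Cvect,\muvect))$ by \eqref{eq:defLstar}, and since $\Cvect$ was arbitrary we may take the minimum over all partitions, giving \eqref{eq:compoundoracle}. The condition $\log 17B \leq u_0 \leq (\min_k N_k - 1)/2$ ensures Theorem \ref{prop:qaggreg_gauss} applies simultaneously to every task.

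The main obstacle I anticipate is the bookkeeping in the second paragraph: making sure the penalty term $Q_k(\omvect^{(k)})\sqrt{u_0}$, once divided by $\nrisk_k$, is genuinely dominated by the claimed additive remainder \emph{uniformly} over all tasks and all partitions, and in particular handling the case of large $\tau_{j,k}$ (far-apart groups) where $\sqrt{\tau_{j,k}}$ is not small — there one must trade it against the lower bound $\cB(\tau_{j,k},\nu_{j,k}) \gtrsim \tau_{j,k}/(1+\tau_{j,k})$ and fold the loss into the leading multiplicative constant rather than the additive term. Everything else is a direct per-task application of already-established results followed by averaging.
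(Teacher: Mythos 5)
Your overall strategy is exactly the paper's: apply Theorem~\ref{prop:qaggreg_gauss} to each task $k$, plug the oracle weights $\omvect^*_{\cC_j}$ of Lemma~\ref{lem:oraclebound} (with $V=\cC_j$, $\tau=\tau_{j,k}$) into the per-task minimum, bound the penalty $Q_k$, then average over $k$ and minimise over partitions. The gap is in the control of $Q_k(\omvect^{(k)})$, and it is genuine at two points. First, your treatment of the bias part $\sqrt{\tau_{j,k}}$ for large diameters does not work: the inequality you invoke, $\tau_{j,k}\sqrt{u_0/\deff_k}\lesssim \cB(\tau_{j,k},\nu_{j,k})\sqrt{u_0/\deff_k}$, would require $\tau_{j,k}\lesssim \cB\leq 1$, which fails as soon as $\tau_{j,k}$ is larger than a constant; and the surplus cannot be ``folded into the multiplicative constant'', because the diameters (hence $\tau_{j,k}=\zeta_j^2/\nrisk_k$) are unbounded over the class of partitions while both the allowed factor $(1+CBe^{-u_0/2})$ and $\cL^*\leq 1$ are bounded, nor can it be absorbed into the additive $Cu_0/\min_k\sqrt{\deff_k}$ term. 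The paper's fix (already used in the proof of Corollary~\ref{cor:aistats}) is not to decouple $\sqrt{\tau_{j,k}}$ from the weights: the bias part of $Q_k(\omvect^*)$ carries the total off-diagonal mass $1-\om^*_{k,k}=\frac{1-\nu_{j,k}}{1+\tau_{j,k}(1-\nu_{j,k})}$, and the product
\begin{equation*}
  (1-\om^*_{k,k})\sqrt{\tau_{j,k}} \;\leq\; \frac{\sqrt{\tau_{j,k}}}{1+\tau_{j,k}} \;\leq\; 1
\end{equation*}
is bounded uniformly in $\tau_{j,k}$, which is exactly what kills the large-diameter case.

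Second, your per-task claim $Q_k(\omvect^{(k)})\sqrt{u_0}/\nrisk_k=O(u_0/\sqrt{\deff_k})$ is false in general. Writing $\rhovect=(\rnrisk_\ell^{-1})_{\ell\in\cC_j}$, the variance part of $Q_k(\omvect^*)/\nrisk_k$ equals (up to the factor $\lambda\leq 1$) $\frac{1}{\sqrt{\deff_k}}\,\rho_k\norm{\rhovect}_1/\norm{\rhovect}_2^2$, which for inhomogeneous risks inside a group can be of order $\sqrt{\abs{\cC_j}}/\sqrt{\deff_k}$; summing such per-task bounds would produce an additive remainder of order $\frac{1}{B}\sum_j\abs{\cC_j}^{3/2}/\min_k\sqrt{\deff_k}$, which exceeds the claimed $u_0/\min_k\sqrt{\deff_k}$ (take one group of size $B$). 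The paper avoids this by first summing over $k\in\cC_j$ and using $\sum_{k\in\cC_j}\rho_k\norm{\rhovect}_1/\norm{\rhovect}_2^2=\norm{\rhovect}_1^2/\norm{\rhovect}_2^2\leq\abs{\cC_j}$, so that only the group averages — not the individual tasks — are controlled; the averaging over tasks is thus an essential ingredient of the penalty bound, not mere bookkeeping at the end. With these two corrections (retain the oracle-weight factor against $\sqrt{\tau_{j,k}}$, and bound the variance contributions groupwise), your argument coincides with the paper's proof.
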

Similarly to the estimation of a single mean, the bound on the compound relative risk depends on the maximum distance between tasks of the same group relative to the naive risk of
each task, and on the relative aggregated variances \eqref{eq:def_s2v} in each group.
Remarkably, the compound relative risk bound does not involve any ``whittling down''
  of high-variance tasks as in the single task bound~\eqref{eq:Vtauc}, and holds under arbitrary
  inhomogeneity of the tasks and sample sizes.

The quantity $\cL^*$ equates to an oracle compound relative risk and is minimax under high-dimensional asymptotics.
To show this, we extend the single task model~\ref{def:modelcompound} to a joint distribution class such that the tasks are divided into inhomogeneous groups.
\begin{definition}\label{def:modelcompound}
    Let $B \in \mbn_{>0}, \bm{\nrisk}=(\nrisk_1,\ldots,\nrisk_B)\in \mbr_+^B$,
    $(N_k)_{k \in \intr{B}} \in \mbn_{>0}^B$, and $\Sigma$
  a symmetric positive definite matrix of size $d$ with $\tr \Sigma=1$ be fixed.\\
  Let $J \in \mbn_{>0}$, $\Cvect$ be a $J$-partition of $\intr{B}$ and $\zetavect\in \mbr_+^J$.
  We define $\multimodel(\Cvect,\zetavect,\Sigma,\bm{\rnrisk})$ as the set of tasks according to model~\eqref{eq:mainmodel} with:
  \begin{itemize}
  \item[(i)-(iii)] as in Definition~\ref{def:modelsingle};
  \item[(iv)] The mean vectors $\muvect=(\mu_k)_{k \in \intr{B}}$ can vary freely subject to 
  \[ \muvect \in 
    \set{ \muvect \in \mbr^{d \times B}: \diam(\Cvect,\muvect)
            \leq\zetavect \text{ (coordinate-wise inequality) }}.
            \]
  \end{itemize}
\end{definition}

In words, $\multimodel(\Cvect,\zetavect,\Sigma,\bm{\rnrisk})$ is the set of Gaussian tasks
with fixed, aligned covariances and naive risks prescribed by the vector $\bs$,
such that the groups of mean vectors
given by partition $\Cvect$ have diameters bounded by the respective entries of vector $\zetavect$.
\begin{theorem}\label{prop:cpdlowbnd}
Let $\bm{\rnrisk} \in \mbr_+^B$, $J \in \mbn_{>0}$, $\Cvect$ a $J$-partition of $\intr{B}$ and $\zetavect \in \mbr_+^J$ be fixed. It holds
\begin{equation}\label{eq:cpdlowbnd}
  \lim_{\deff \to \infty} \sup_{\substack{\Sigma:\\ \deff(\Sigma) = \deff}} \inf_{\wh{\muvect}} \sup_{\mbq \in \multimodel(\Cvect,\bm{\zeta}, \Sigma,\bm{\rnrisk})} \frac{1}{B} \sum_{k =1}^{B} \frac{R_k(\mbq,\estmu_k)}{\rnrisk_k^2} \geq \cL^*\paren{\bm{\rnrisk}, \Cvect, \zetavect/2},
\end{equation}
where the infimum is over all joint estimators $\wh{\muvect}=(\wh{\mu}_1,\ldots,\wh{\mu}_B)$.
\end{theorem}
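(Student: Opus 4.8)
The plan is to argue by a Bayesian lower bound, using a prior on $\multimodel(\Cvect,\zetavect,\Sigma,\bm{\rnrisk})$ that factorizes over the groups $\cC_1,\dots,\cC_J$ of $\Cvect$, so that the compound Bayes risk decouples into a sum of per-task Bayes risks, each controlled by a compound-compatible variant of the argument behind Theorem~\ref{prop:lowbnd_1}. Throughout, $\deff=\deff(\Sigma)$ is sent to infinity only at the end; for the $\sup_\Sigma$ one picks any admissible $\Sigma$ with $\deff(\Sigma)=\deff$, $\tr\Sigma=1$. Minimax risk dominates Bayes risk for any prior supported in the class, so I take $\pi=\bigotimes_{j=1}^{J}\pi_j$, with the covariances fixed as in Definition~\ref{def:modelcompound} and the mean block $(\mu_\ell)_{\ell\in\cC_j}$ drawn from a prior $\pi_j$ supported on configurations of diameter at most $\zeta_j$, independently across $j$. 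Since the sampling model~\eqref{eq:mainmodel} and $\pi$ both factorize over groups and the loss $\frac1B\sum_k R_k/\nrisk_k$ is additive, the posterior of $\mu_k$ for $k\in\cC_j$ depends only on the bags of $\cC_j$, whence
\[
 \inf_{\wh{\muvect}}\ \mbe_\pi\Bigl[\tfrac1B\textstyle\sum_{k\in\intr B}\tfrac{R_k(\wh\mu_k)}{\nrisk_k}\Bigr]=\tfrac1B\sum_{j=1}^{J}\sum_{k\in\cC_j}\inf_{\wh\mu_k}\ \mbe_{\pi_j}\Bigl[\tfrac{R_k(\wh\mu_k)}{\nrisk_k}\Bigr].
\]
It then suffices to exhibit, for each $j$, \emph{one} prior $\pi_j$ for which the per-task Bayes relative risk of every $k\in\cC_j$ is at least $\cB\bigl((\zeta_j/2)^2/\nrisk_k,\ \nrisk(\cC_j)/\nrisk_k\bigr)-\eps(\deff)$; summing over $j,k$ reproduces exactly $\cL^*(\bm{\rnrisk},\Cvect,\zetavect/2)-\eps(\deff)$ via~\eqref{eq:defLstar}.

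For $\pi_j$ I would use a diffuse common centre $c_j$ (Gaussian with covariance $\rho^2\Sigma$, $\rho\to\infty$) together with independent perturbations $\delta_\ell$, $\ell\in\cC_j$, each uniform on the sphere of radius $\zeta_j/2$ in the range of $\Sigma$, and set $\mu_\ell=c_j+\delta_\ell$; the triangle inequality forces $\|\mu_\ell-\mu_{\ell'}\|\le\zeta_j$. In contrast with the single-task construction behind Theorem~\ref{prop:lowbnd_1}, where all neighbours share a single offset (the Cauchy--Schwarz worst case in~\eqref{eq:riskboundV}), no single configuration can make every member of a group play the role of ``centre'', and independent offsets of \emph{half}-radius are the compromise --- which is precisely why $\zetavect/2$ appears in the statement. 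For fixed $k\in\cC_j$, since $c_j$ is diffuse and everything is jointly Gaussian in the idealization, the data relevant for $\mu_k$ reduces to $\muNE_k$ (unbiased for $\mu_k$, noise energy $\nrisk_k$) together with an appropriately weighted aggregate $P_{-k}$ of the remaining bags of the group, into which $\delta_k$ enters only as a bias (the $\delta_\ell$ being independent and centred). A Gaussian conditioning computation then gives, as $\rho\to\infty$, the per-task Bayes relative risk
\[
 \frac{1}{K+\nrisk_k}\Bigl(K+\frac{\nrisk_k}{\bar S\,(K+\nrisk_k)}\Bigr),\qquad K:=(\zeta_j/2)^2,\quad \bar S:=\sum_{\ell\in\cC_j}\frac{1}{K+\nrisk_\ell},
\]
and one checks the elementary (if tedious) algebraic inequality that this dominates $\cB\bigl(K/\nrisk_k,\ \nrisk(\cC_j)/\nrisk_k\bigr)$: at $K=0$ both equal $\nrisk(\cC_j)/\nrisk_k=\cB(0,\cdot)$ (pooling the whole group), while switching on the perturbation inflates the effective variance of $P_{-k}$ at least as fast as $\cB$ grows in its first argument, the residual slack being absorbed by the factor $2$ lost in the diameter (this is why the matching upper bound of Corollary~\ref{prop:uppbnd_all}, which carries $\cL^*(\cdot,\cdot,\diam)$, meets the lower bound only up to that factor).

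It remains to discharge the approximations used in the idealization. Replacing the uniform-on-sphere perturbations by Gaussians, restoring the hard diameter constraint by conditioning on the high-probability event that all pairwise distances are $\le\zeta_j$, and passing to the diffuse-centre limit each perturb the Bayes risk by $O((\log\deff)/\deff)$ only, controlled exactly as in the proof of Theorem~\ref{prop:lowbnd_1}; crucially these corrections depend on none of $B,J,\Cvect,\bm{\rnrisk},(N_k)$, so the universal function $\eps(\cdot)$ of Theorem~\ref{prop:lowbnd_1} applies and the limit in $\deff$ is uniform. Summing the per-task bounds and letting $\deff\to\infty$ yields~\eqref{eq:cpdlowbnd}.

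The hard part is the per-group step: one must produce a \emph{single} prior on a group that is simultaneously hard for all of its tasks, and then verify that the resulting per-task Bayes risk dominates the target value $\cB\bigl((\zeta_j/2)^2/\nrisk_k,\ \nrisk(\cC_j)/\nrisk_k\bigr)$ for \emph{arbitrarily inhomogeneous} groups. The tempting shortcuts --- revealing $\delta_k$ to the estimator, or averaging ``$k$-centred'' priors over $k$ --- each lose a factor that drops the bound below $\cB$, so the delicate point is the exact bias-versus-noise accounting in the Gaussian conditioning computation above; everything else is the dimensional bookkeeping already present in the single-task bound.
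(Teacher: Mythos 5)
Your proposal is correct and follows essentially the same route as the paper's proof: a product prior over the groups with a diffuse Gaussian centre and independent half-radius perturbations (Gaussian, truncated to the high-probability diameter event), an exact Gaussian posterior computation whose limiting per-task Bayes relative risk $\frac{K}{K+\nrisk_k}+\frac{\nrisk_k}{(K+\nrisk_k)^2}\bigl(\sum_{\ell\in\cC_j}(K+\nrisk_\ell)^{-1}\bigr)^{-1}$ is exactly the paper's $I_k/\nrisk_k$, and the same final algebraic comparison $\sum_{\ell}\frac{K+\nrisk_k}{K+\nrisk_\ell}\leq 1+\sum_{\ell\neq k}\frac{K+\nrisk_k}{\nrisk_\ell}$ showing it dominates $\cB(\tau_{j,k},\nu_{j,k})$, hence sums to $\cL^*(\bm{\rnrisk},\Cvect,\zetavect/2)$. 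The only inessential quibbles are your attribution of the slack in that inequality to the factor $2$ in the diameter (it simply comes from $\nrisk_\ell+K\geq\nrisk_\ell$) and the claim that the truncation corrections are parameter-free (they carry $|\cC_j|$-dependent factors, which is harmless since the limit is taken at fixed parameters).
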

In particular, since it holds $\cL^*\paren{\bm{\rnrisk}, \Cvect, \zetavect/2} \geq \cL^*\paren{\bm{\rnrisk}, \Cvect, \zetavect}/4$, the upper bound matches the lower minimax
bound up to a fixed constant factor in a dimensional asymptotics sense (by choosing $u_0 = \log 17 B$ and provided that $\log B /(\min_k (\deff_k)^{-\nicefrac{1}{2}}) = o(\cL^*)$).
Moreover, \eqref{eq:compoundoracle} shows that
the $Q$-aggregation estimator is (up to that constant factor)
asymptotically minimax adaptive with respect to the choice of grouping $\Cvect$ of the task means, the corresponding group diameters, and the bag variances.

As in the single task case,
the minimax bound $\cL^*$ only depends on the bag sizes through the naive risks $\bm{\rnrisk}$:
bags with large variance and many samples are statistically equivalent to bags with
low variance and few samples.
Similarly, the improvement only depends on the relative aggregated variance of each group, not on the number of bags.
Lemma~\ref{prop:cpd_upbnd} gives an interpretable upper bound for $\cL^*$:

\begin{lemma}\label{prop:cpd_upbnd}
	Let $\bm{\rnrisk} \in \mbr_+^B$, $J \in \mbn_{>0}$, $\Cvect$ a $J$-partition of $\intr{B}$ and $\zetavect \in \mbr_+^J$, it holds:
	\begin{equation}\label{eq:cpd_upbnd_gen}
          \cL^*\paren{\bm{\rnrisk}, \Cvect,\zetavect}\leq
           \sum_{j=1}^{J} \frac{|\cC_j|}{B} \cdot
          \frac{\bar{\tau}_j + |\cC_j|^{-1}}{\bar{\tau}_j+1},
        \quad \ol{\tau}_{j}:=\frac{\zeta_j^2}{\ol{\rnrisk}^2(\cC_j)}, \;\; \ol{\rnrisk}^2(\cC_j) := \paren{ \frac{1}{|\cC_j|}\sum_{k\in \cC_j} \rnrisk_k^{-2}}^{-1},
	\end{equation}
          implying in particular:
                \begin{equation} \label{eq:estlglob}
                  \cL^*\paren{\bm{\rnrisk}, \Cvect,\zetavect} \leq \min\paren{1,\frac{\taubb  }{1+\taubb} + \frac{J}{B}}, \qquad \taubb:=\sum_{j=1}^J \frac{\abs{\cC_j}}{B} \ol{\tau}_j.
                \end{equation}
                If all risks and diameters are equal, $\nrisk_k = \nrisk$ and $\zeta_j^2 = \zeta^2$ for all $k\in \intr{B}$ and $j\in \intr{J}$, then
the bound of~\eqref{eq:estlglob} is sharp up to a factor at most $2.7$.
\end{lemma}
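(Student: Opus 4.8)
The plan is to work group by group and exploit a scaling identity hidden in the definitions of $\tau_{j,k},\nu_{j,k}$. Fix a part $\cC_j$ and set $a_j:=\zeta_j^2/\nrisk(\cC_j)$. From $\tau_{j,k}=\zeta_j^2/\rnrisk_k^2$ and $\nu_{j,k}=\nrisk(\cC_j)/\rnrisk_k^2$ one gets $\tau_{j,k}=a_j\,\nu_{j,k}$ for every $k\in\cC_j$, while the definition $\nrisk(\cC_j)^{-1}=\sum_{k\in\cC_j}\rnrisk_k^{-2}$ gives $\sum_{k\in\cC_j}\nu_{j,k}=1$ and $\nu_{j,k}\le 1$. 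Also $\ol\rnrisk^2(\cC_j)=|\cC_j|\,\nrisk(\cC_j)$, so $\ol\tau_j=a_j/|\cC_j|$, i.e.\ $a_j=|\cC_j|\ol\tau_j$. Next I would record the elementary inequality $\cB(\tau,\nu)\le(\tau+\nu)/(1+\tau)$ for $\nu\in[0,1]$, which follows from $1+\tau(1-\nu)\ge 1$ in \eqref{eq:defBtaunu}; applied with $\tau=a_j\nu_{j,k}$ it gives $\cB(\tau_{j,k},\nu_{j,k})\le g_j(\nu_{j,k})$, where $g_j(x):=(a_j+1)x/(1+a_jx)$.

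The key step is then a concavity argument: $g_j$ is concave on $[0,1]$ (second derivative $-2a_j(a_j+1)/(1+a_jx)^3\le 0$), so Jensen's inequality with the uniform weights $1/|\cC_j|$ yields $\sum_{k\in\cC_j}g_j(\nu_{j,k})\le |\cC_j|\,g_j(1/|\cC_j|)=|\cC_j|(a_j+1)/(|\cC_j|+a_j)$. Substituting $a_j=|\cC_j|\ol\tau_j$ rewrites the right-hand side as $|\cC_j|(\ol\tau_j+|\cC_j|^{-1})/(1+\ol\tau_j)$. Summing $\frac1B\sum_j\sum_{k\in\cC_j}\cB(\tau_{j,k},\nu_{j,k})$ then produces exactly \eqref{eq:cpd_upbnd_gen}.

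For \eqref{eq:estlglob} I would split $(\ol\tau_j+|\cC_j|^{-1})/(1+\ol\tau_j)=\ol\tau_j/(1+\ol\tau_j)+|\cC_j|^{-1}/(1+\ol\tau_j)$. The second piece contributes $\sum_j \tfrac{1}{B(1+\ol\tau_j)}\le J/B$ after summing with the factors $|\cC_j|/B$. For the first piece, apply Jensen to the concave map $x\mapsto x/(1+x)$ with the convex weights $|\cC_j|/B$ (which sum to $1$ since $\sum|\cC_j|=B$), giving $\sum_j\tfrac{|\cC_j|}{B}\tfrac{\ol\tau_j}{1+\ol\tau_j}\le \taubb/(1+\taubb)$. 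Combined with the trivial bound $\cL^*\le 1$ (each $\cB\le1$, and there are $B$ summands), this gives the claimed $\cL^*\le\min(1,\taubb/(1+\taubb)+J/B)$.

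Finally, for sharpness under equal risks and diameters: there $\ol\rnrisk^2(\cC_j)=\rnrisk^2$, hence $\ol\tau_j=\zeta^2/\rnrisk^2=:\tau$, $\taubb=\tau$, $\tau_{j,k}=\tau$, and $\nu_{j,k}=1/|\cC_j|$, so $\cL^*=\tfrac1B\sum_j|\cC_j|\cB(\tau,1/|\cC_j|)$. Using $\nu/(1+\tau(1-\nu))\ge\nu/(1+\tau)$ in \eqref{eq:defBtaunu} gives the matching lower bound $\cL^*\ge \tfrac{\tau}{1+\tau}+\tfrac{J}{B(1+\tau)^2}$. Writing $p=\tau/(1+\tau)\in[0,1)$ and $r=J/B\in(0,1]$, one then checks by one-variable calculus that $\min(1,p+r)\le 2.7\,(p+r(1-p)^2)$ on this range, splitting on whether $p+r\le 1$ (where $r\le 1-p$) or $p+r>1$ (where $r>1-p$, and $1-q+q^3$ with $q=1-p$ is minimized around $q=1/\sqrt3$). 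I expect this last numerical verification of the constant to be the only slightly delicate part; everything preceding it is a clean chain consisting of the bound $\cB(\tau,\nu)\le(\tau+\nu)/(1+\tau)$ followed by two applications of Jensen's inequality.
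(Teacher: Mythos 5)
Your proposal is correct and follows essentially the same route as the paper's proof: the elementary bound $\cB(\tau,\nu)\leq(\tau+\nu)/(1+\tau)$ combined with a per-group Jensen/concavity argument (the paper phrases it via concavity of $t\mapsto t/(1+t)$, you via concavity of $g_j$) gives \eqref{eq:cpd_upbnd_gen}, and a second concavity step plus the trivial bound by $1$ gives \eqref{eq:estlglob}. For the sharpness claim you use the same lower bound $\cB(\tau,\nu)\geq\tfrac{\tau}{1+\tau}+\tfrac{\nu}{(1+\tau)^2}$ as the paper; your explicit minimization of $1-q+q^3$ at $q=1/\sqrt{3}$ (value $\approx 0.615$) is just a more direct way of obtaining the paper's constant $0.38$, and comfortably yields the factor $2.7$.
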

  
Bound \eqref{eq:estlglob} elucidates that the compound oracle relative risk $\cL^*$ is small when (i) there are few groups relative to the number of bags (i.e., $J/B$ small); and (ii) groups have on average a small squared diameter relative to the harmonic mean of the naive risks of its constituent tasks. 
	
Eq.~\eqref{eq:compoundoracle} implies that the compound risk is upper bounded by $\cL^*$ for any valid partitioning.
As an illustrative example we consider the (\ECSS) setting and  $\Cvect$ as a $\sqrt{\tau}\rnrisk$-covering of $\muvect$ for a given $\tau$. 
Then $\taubb = \tau$ and the number of groups $J$ is the covering number $N(\bm{\mu},\sqrt{\tau}s)$. 
This highlights that the $Q$-aggregation strategy will be very effective to reduce the compound risk if the set of true means can be covered by a relatively small number of balls, in comparison to the total number of tasks,
with a radius significantly smaller than the standard deviation of the naive estimates.

This bound takes a form akin to the findings presented in~\citet{marienwald2021high}, who examined the (\ECSS) setting only and used a testing strategy comparable to that of the previous section.
The parameter of their (and our) test, though, has to be fixed by the user.
In contrast, the $Q$-aggregation approach attains the oracle trade-off between the ``bias'' term $\tau / (1+\tau)$ and the ``variance'' term $N(\bm{\mu},\sqrt{\tau}s)/B$ without the need to specify $\tau$. 

Finally, observe that the first term $\tau / (1+\tau)$ resembles the best potential improvement and is reminiscent of the oracle improvement factor of the James-Stein estimator, which can be conceived as a special case; see Supplemental~\ref{se:JS} for additional details.

\section{Application: estimation of multiple kernel mean embeddings}\label{se:application}
We emphasise that our discussion and theoretical results include the case when $\mathcal{X}$ is a reproducing kernel Hilbert space (RKHS), in which case the mean corresponds to a kernel mean embedding (KME) \citep{smola2007hilbert, muandet2017overview}.
We showcase the efficacy of our methods for the estimation of multiple kernel mean embeddings here. For results on the estimation of real means on illustrative toy data, see Supplemental \ref{sec:apx_mnist}, where we also demonstrate the validity of the neighbouring test.
Let $\mathcal{Z}$ be a measurable space enriched with a reproducing kernel $\kappa: \mathcal{Z} \times \mathcal{Z} \rightarrow \mathbb{R}$ and its corresponding RKHS $\mathcal{H}$.
The kernel mean embedding $\mu_{\mathbb{P}_Z} \in \mathcal{H}$ of distribution $\mathbb{P}_Z$ on $\mathcal{Z}$ and its empirical (naive) estimation $\widehat{\mu}_{\mathbb{P}_Z}$, which is based on the samples ${(Z_i )}_{1 \leq i \leq N_Z} \sim \mathbb{P}_Z$, are defined as
\begin{equation}\label{eq:KME_naiveestimation}
\mu_{\mathbb{P}_Z}(\cdot) = \int_\mathcal{Z} \kappa(z, \cdot) \,\mathrm{d}\mathbb{P}_Z (z) \,\,\, , \,\,\, \widehat{\mu}_{\mathbb{P}_Z}(\cdot) = \frac{1}{N_Z}\sum_{i=1}^{N_Z} \kappa(Z_i, \cdot).
\end{equation}
The estimation of multiple KMEs is an instance of model \eqref{eq:mainmodel} once we identify $\mathcal{X} = \mathcal{H}$ and $X_i^{(k)} = \kappa(Z_i^{(k)},\cdot)$ for a bounded reproducing kernel $\kappa$; this allows a direct application of our theoretical results for the bounded setting.

For characteristic kernels, (see \citealp{fukumizu2007kernel,sriperumbudur2010hilbert} or \citealp{fukumizu2009kernel, sriperumbudur2011universality,nishiyama2016characteristic,szabo2018characteristic,simon2018kernel} for alternative descriptions) the map from $\mathbb{P}$ to $\mu_\mathbb{P}$ is injective and contains information about all moments of $\mathbb{P}$, so that $\mu_\mathbb{P}$ provides a unique representation of $\mathbb{P}$.
Thus, KMEs can naturally be used to define a metric on probability distributions.
Let $\mathbb{P}, \mathbb{Q}$ denote distributions and their KMEs $\mu_\mathbb{P}, \mu_\mathbb{Q}$ respectively.
The maximum mean discrepancy (MMD) expresses the distance between $\mu_\mathbb{P}$ and $\mu_\mathbb{Q}$ in $\mathcal{H}$
\begin{align*}
\text{MMD}^2(\mu_\mathbb{P}, \mu_\mathbb{Q}) &= {\Vert \mu_\mathbb{P} - \mu_\mathbb{Q} \Vert}_\mathcal{H}^2 \,\,\, , \\
\widehat{\text{MMD}}^2(\mu_\mathbb{P}, \mu_\mathbb{Q}) &= \sum_{i \neq i'=1}^{N} \frac{\kappa(Z_i, Z_{i'})}{N (N - 1)}
- 2 \sum_{i=1}^{N} \sum_{j=1}^{M} \frac{\kappa(Z_i, Y_j)}{N M}
+ \sum_{j \neq j'}^{M} \frac{\kappa(Y_j, Y_{j'})}{M (M - 1)},
\end{align*}
where $\widehat{\text{MMD}}^2$ denotes an unbiased estimate based on the samples ${(Z_i)}_{1 \leq i \leq N} \sim \mathbb{P}$ and ${(Y_j)}_{1 \leq j \leq M} \sim \mathbb{Q}$.
For characteristic kernels it holds that $\text{MMD}^2(\mu_\mathbb{P}, \mu_\mathbb{Q}) = 0$ iff $\mathbb{P} = \mathbb{Q}$ \citep{gretton2012kernel}, which enables a large range of possible applications.

\subsection{Motivation and related work}
KMEs are employed for a variety of statistical tests, e.g., two-sample tests \citep{gretton2012kernel}, goodness-of-fit tests \citep{chwialkowski2016kernel}, and tests on statistical independence based on the Hilbert Schmidt independence criterion \citep{gretton2007kernel}.
It also finds application in machine learning, e.g., for unsupervised \citep{jegelka2009generalized} or supervised distributional learning \citep{muandet2012learning, szabo2016learning}, density estimation \citep{muandet2014kernel}, as part of the optimization criterion of the learning \citep{fakoor2020trade, brehmer2020flows}, and so on.
Due to the wide variety of kernel functions, kernel mean embeddings can in general be used on various data types and for structured data.
See \citet{muandet2017overview} for an in-depth overview on KMEs and their applications.

The success of applying the KME or the MMD relies heavily on the ability to accurately estimate the kernel mean based on sample data.
The naive empirical estimator \eqref{eq:KME_naiveestimation} was recently superseded by a James-Stein-like estimator \citep{muandet2014kernel}.
They showed that this estimator is admissible and consistent for a suitable choice of shrinkage.
Other single KME estimation strategies were proposed since then, e.g., non-linear shrinkage \citep{muandet2016stein}, an empirical Bayesian approach \citep{filippi2016bayesian}, and more robust estimations based on marginalised corrupted data \citep{xia2022sample}, or a MOM approach \citep{Ler19}.
To the best of our knowledge, there is no prior work on the improved estimation of {\em multiple} kernel mean embeddings except for \citet{marienwald2021high}.

\subsection{Description of the experiments}
We evaluate the estimation of multiple kernel mean embeddings on artificial and real-world data.

\subsubsection{Methods}
While the theoretical derivation necessitates sample splitting which results in independence between testing and estimation, this is not performed when applying the test-based methods in practice. All methods operate on the full data and function fully empirically without any prior knowledge. 
A complete list and detailed description of all methods, including other SOTA approaches, can be found in Supplemental~\ref{sec:apx_methods}, and we only sketch the best performing ones here.
Methods based on the testing procedure, which aggregate only neighbouring means, are abbreviated as \texttt{STB} (similarity test-based).

The approaches differ in their definition of the convex weights.
\STBopt{} calculates the plug-in version of the oracle weights \eqref{eq:STBoptweights}, where all quantities are replaced by their empirical estimates.
\STBorth{} performs constrained risk minimization of \eqref{eq:riskone} but posits a heuristic orthogonality assumption, $\inner{\Delta_k,\Delta_{\ell}} = 0$ for all $k \neq \ell \in \intr{B}$, i.e., the mean differences are orthogonal. In other words, all remaining means $k, \ell$ spread pairwise orthogonal around mean $1$.
A loose motivation for this heuristic is the principle that independent random centred vectors in high dimension are nearly orthogonal, which
could apply if we assume that neighbouring means are in fact drawn locally and independently from a common prior distribution.
This assumption might be unrealistic in practice but yields a closed-form solution for the weights.
Finally, \STBegd{} minimises the $Q$-aggregation objective \eqref{eq:def_whomega}
and applies exponentiated gradient descent on the simplex \citep{kivinen1997exponentiated} to approximate the solution.
For the estimation of kernel means, we do not include the theoretically motivated additional term $Q^{\BS}$ in the (BS) setting (see \eqref{eq:def_w_bnd}-\eqref{eq:def_whQBS}),
  instead we use a neighbouring test 
  as a prior ``safeguard'' with a purpose similar to the penalisation term $Q^{\BS}$, 
  namely to discard distant means.
Note to this end we use a much larger value for $\tau$ than that required for the test-based methods.
Thus, \STBegd{} performs $Q$-aggregation on the set of candidates $\dwt{V}_{\tau,\cteW}$ (see \eqref{eq:setVdtilde}) passing the safeguard test.

We compare their performances to the naive estimation (\NE), and we modify the multitask-averaging approach from \citet{feldman2014revisiting} (\MTAconst) so that it is applicable to the estimation of KMEs.
It assumes a constant similarity across tasks.
In Supplemental~\ref{sec:apx_expresults}, we further report the results of our previously proposed approach, \STBweight{} \citep{marienwald2021high}, which was not designed to handle inhomogenous data, and the regularised kernel mean shrinkage estimator \RKMSE, proposed in \citet{muandet2016stein}. It corresponds to a James-Stein-like kernel mean estimator, that shrinks the estimation towards the origin and is performed separately on each bag.
Additionally, we also state the performance of the $Q$-aggregation approach without safeguard pre-selection of neighbours.
Because their results are superseded by our main methods, we only report them in the Supplemental.
In Supplemental~\ref{sec:apx_complexity} we discuss the computational complexity of all approaches.

The considered methods have data-dependent model parameters. See Supplemental~\ref{sec:apx_desc_methods}, \ref{apx:sec_defaultParams} for a discussion of their impact on the method's performance and guidelines on how to set them. Their optimal values might be found by cross-validation whenever possible.
We also provide default parameter choices that we observed to perform well in most situations (see \ref{apx:sec_defaultParams} for a discussion and Table~\ref{tab:apx_default} for an overview).

\subsubsection{Experimental metric}
In the kernel case, the true KME $\mu$ is in general unknown even for synthesised data.
We use a (naive) estimation based on an independent sample of the same distribution as approximation.
Because this proxy is computed on a very large sample, it can be assumed to have low risk and to be more accurate than the estimation performed by any method on much smaller bags.
The squared MMD between the (proxy) true KME $\mu_k$ of bag $k \in \intr{B}$ and its estimation $\estmu_k^{\texttt{m}}$, of form \eqref{eq:defmuhatomega}, performed by method \texttt{m} with weights $\pmb{\omega}_{k}^{\texttt{m}}$ is then used as error measure
\begin{align}\label{eq:expmmderror}
\widehat{\text{MMD}}^2(\mu_k, \estmu_k^{\texttt{m}}) = &\sum_{\ell,\ell' \in \intr{B}} \omega_{k_\ell}^{\texttt{m}} \omega_{k_{\ell'}}^{\texttt{m}} \sum_{i=1}^{N_\ell} \sum_{i'=1}^{N_{\ell'}} \frac{\kappa(Z_i^{(\ell)}, Z_{i'}^{(\ell')})}{N_\ell N_{\ell'}}
- \sum_{\ell \in \intr{B}} 2 \, \omega_{k_\ell}^{\texttt{m}} \sum_{i=1}^{N_\ell} \sum_{j=1}^{M_{k}} \frac{\kappa(Z_i^{(\ell)}, Y_j^{(k)})}{N_\ell M_{k}} \nonumber \\
&+ \sum_{j \neq j'}^{M_k} \frac{\kappa(Y_j^{(k)}, Y_{j'}^{(k)})}{M_k (M_{k}-1)},
\end{align}
where $Y^{(k)}, Z^{(k)} \sim \mathbb{P}_k$ independent with $|Y^{(k)}_{\bullet}| = M_k \gg N_k = |Z^{(k)}_{\bullet}|$ for all $k \in \intr{B}$, so that $Y^{(k)}_{\bullet}$ can be used to calculate the proxy and $Z^{(k)}_{\bullet}$ for the estimation.
Each method is validated on the same data to guarantee comparability.
This estimation error is averaged over multiple trials $\overline{\text{MMD}}^2(\mu_k, \estmu_k^{\texttt{m}})$ and its decrease compared to the naive estimation $\muNE$ is reported for all experiments
\begin{equation*}
\left( \paren[1]{ \, \overline{\text{MMD}}^2(\mu_k, \estmu_k^{\NE}) - \overline{\text{MMD}}^2(\mu_k, \estmu_k^{\texttt{m}}) \,} \, / \,\, \overline{\text{MMD}}^2(\mu_k, \estmu_k^{\NE})\right) \cdot 100 \, \, \, [\%].
\end{equation*}

\subsection{Artificial Gaussian data}
The toy data sets are Gaussian distributed in $\mathbb{R}^2$ with fixed means and randomly rotated covariance matrices.
For $k \in \intr{B}$ and $B=50$
\begin{equation*}
Z_\bullet^{(k)}, Y_\bullet^{(k)} \sim \mathcal{N} \left( m_k, R(\theta_k) \Sigma {R(\theta_k)}^T \right) = \mathbb{P}_k \, , \,\,\, \,\,\,\,
\theta_k \sim \mathcal{U} \left( -\frac{\pi}{4}, \frac{\pi}{4} \right),
\end{equation*}
where the rotation matrix $R(\theta_k)$ rotates the matrix $\Sigma=\text{diag}(1,10)$ according to angle $\theta_k$.
We generate $|Y^{(k)}_{\bullet}| = 1000$ data for the ``proxy truth''.
A Gaussian RBF kernel, with a kernel width set to the average feature-wise standard deviation of the data, maps the data from the two-dimensional input space to the infinite dimensional RKHS.
Two setups are tested:
\begin{enumerate}[label=(\alph*)]
\item 	Clustered Bags: $N_k = 50$ for all $k \in \intr{B}$.
In the input space, each ten bags form a cluster where the cluster centres ($= m_k$) lie equally spaced on a circle.
The radius of that circle varies between $0$ and $3$, which creates different amount of overlap between the clusters.
\item Imbalanced Bags: $m_k = 0$ for all $k \in \intr{B}$.
The bags $Z^{(k)}_\bullet$ are highly imbalanced, i.e. $N_k \in [10,300]$.
Because the tasks only vary in the rotation of their covariance matrices, we know that their KMEs lie on a low dimensional manifold in the RKHS.
Because of the different bag sizes, the individual KMEs have different naive risks.
\end{enumerate}

\begin{figure}[t]
\includegraphics[width=\textwidth]{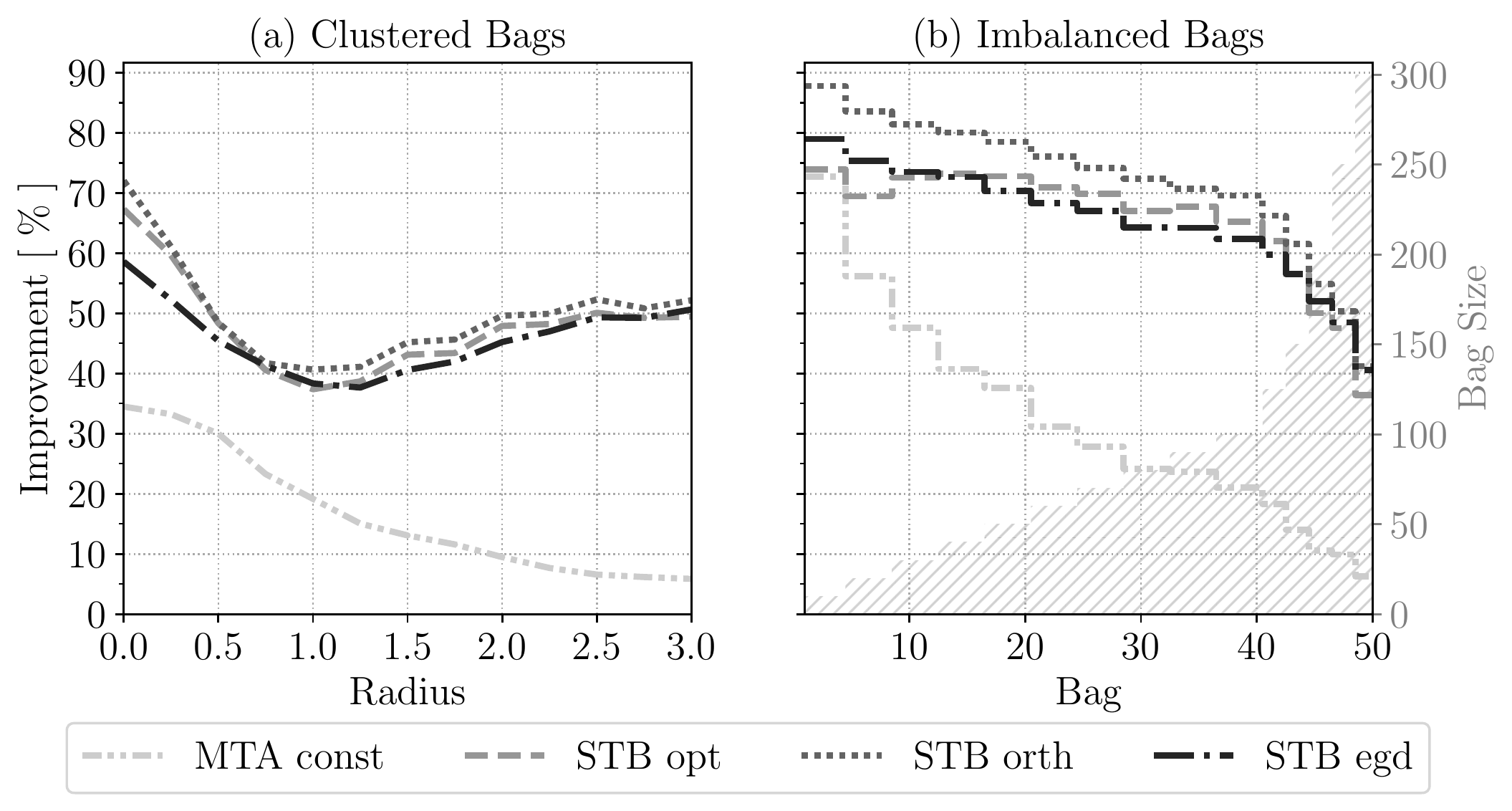}
\caption{
Decrease in average quadratic estimation error compared to \NE{} in percent on Gaussian data settings (a) and (b) resp.
Higher is better.
The hashed histogram bars in (b) show the bag sizes for the bags $1$ to $50$, which vary between $10$ and $300$  (right axis).}
\label{fig:exp_art_results}
\end{figure}

The experiments are repeated for $100$ trials; the results of the methods with default parameter choices are shown in Fig.~\ref{fig:exp_art_results}. Figure~\ref{fig:apx_exp_art_results} in the Supplemental shows the results of the remaining methods.
We also report the performances with tuned parameters (optimised on i.i.d. training data) in Supplemental~\ref{apx:sec_defaultParams}, Figure~\ref{fig:apx_exp_trueopt}.

All methods provide an improvement over \NE{}, which is most significant for bags with few samples.
This was already observed in other multi-task learning problems, e.g., see \citet{marienwald2021high} or \citet{feldman2014revisiting}.
The constant similarity assumption of \MTAconst{} leads to an inadequate estimation for large radii or large bags.
Namely, a KME with large bag size is shrunk to the grand empirical mean of all bags even though it includes high-variance (low sample size) or distant bags.
This impairs the improvement.
This effect is alleviated by the proposed \texttt{STB} approaches, that define the shrinkage according to the variance of and the distances between the KMEs.
They show high performances for the tested settings.
For $0.5 < \text{radius} < 2$, the similarity test might mistake a bag of another cluster for a neighbour due to the strong overlap between the clusters, which explains the slight performance dip.
All the proposed methods provide similarly accurate results.
Despite its unrealistic orthogonality assumption, \STBorth{} performs best on the artificial data.

\subsection{Flow cytometry data}
Flow cytometry is fundamental to biomedical research and clinical practice.
It provides a multiparametric, single-cell analysis of a suspension or sample.
The flow cytometer analyses the size, shape and internal complexity of cells and can detect the presence and amount of different fluorochromes (which in turn reveal insights about the presence of proteins or structures within the cell).
These characteristics might then be used to classify the cells into different populations.
Applications are vast, but well-known examples are differential blood count, or immunophenotyping of leukemia or in HIV infections \citep{adan2017flow, mckinnon2018flow}.

The data set we use, corresponds to the T-cell panel of the Human ImmunoPhenotyping Consortium \citep{finak2016standardizing}.
Seven laboratories were asked to perform a flow cytometry analysis of three replicates of blood samples of three patients.
All laboratories were asked to follow the same experimental protocol and used the same seven markers to characterise the cells ($d=7$).
Based on the observed characteristics the cells were then classified into ten different populations or cell types.
We use this structure (laboratory, replicate, patient, cell type) to divide the data into bags.
We excluded bags with less than $1000$ data points, which leads to $424$ bags in total.
Each data point $Z_i^{(k)} \in \mathbb{R}^7$ in a bag $k$ corresponds to one cell.
As the number of cells varies, the bags are highly imbalanced.
We use a Gaussian RBF kernel with kernel width of~$950$ to map the cell features to a RKHS. 
The kernel choice and width are in accordance with \citet{dussap2023label}.
The (proxy) true KME is approximated by a naive estimation based on $Y^{(k)}_\bullet$ with $|Y^{(k)}_\bullet|=1000$ (bags with more samples are capped).
The sizes of the bags that are used for the estimation are chosen proportional to the bag sizes of the original input data, $N_k \in [7,125]$, to mimic a realistic setting.
In each one of the $100$ trials, a subset of samples $Z^{(k)}_\bullet$ with $|Z^{(k)}_\bullet|=N_k$ is drawn randomly from $Y^{(k)}_\bullet$, on which the methods perform their estimation.
We conducted experiments on each cell type separately so that $B \in [43, 62]$, and on all cell types jointly ($B=424$). Cell types~$5, 6$ and $10$, for which $B < 7$, are excluded for the separate but included in the joint analysis.
\begin{figure}[t]
\includegraphics[width=\textwidth]{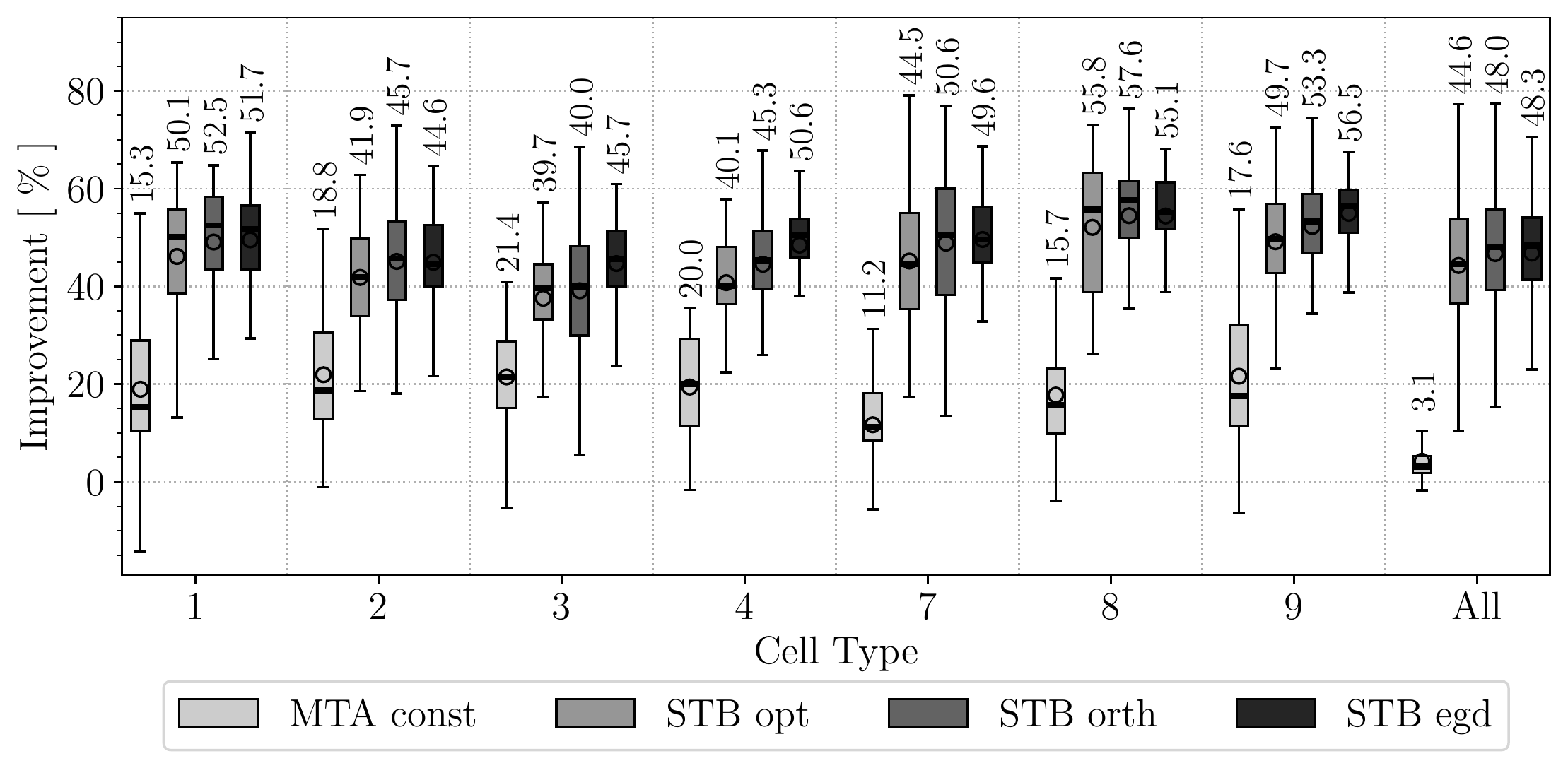}
\caption{
Decrease in estimation error compared to \NE{} in percent on the flow cytometry data.
Higher is better.
The number next to the boxplot quantifies the median, which is also depicted as a line.
The mean is visualised as a circle.
From left to right: results on individual cell types $1,2,3,4,7,8,9,$ and all cell types taken jointly.}
\label{fig:exp_cyto_results}
\end{figure}

The results are depicted in Fig.~\ref{fig:exp_cyto_results} and in Supplemental Fig.~\ref{fig:apx_exp_cyto_results} for the additional methods.
On average, all methods provide an improvement over \NE.
For some trials, \MTAconst{} gives worse estimations than \NE{} (negative improvement), see e.g., cell type~1.
When all cell types are considered jointly, its performance drops significantly.
The \texttt{STB} approaches give more accurate estimations than \MTAconst{} and provide an improvement of $\approx50\%$ for all cell types.
\STBegd{} gives the most accurate and stable estimations across the different settings but also has high computational complexity.

In summary, our presented methods provide an improvement over the naive estimation and over other state-of-the-art methods.
Although \RKMSE{} or \MTAconst{} give more accurate estimations than the sample average, the provided improvements vary whether a shrinkage towards a common reference point or the grand mean resp. complies with the underlying data.
In contrast, our proposed methods identify inhomogeneous task similarities and are applicable to imbalanced data sets (which, therefore, surpass our previously introduced method \STBweight).
While \STBegd{} provides in most cases the highest improvement with least variance, it also requires the most computational complexity.
\STBorth{} and \STBopt{} provide a good trade-off.

\section{Relation and comparison to previous work}
\label{se:previouswork}

We review related literature grouped along two axes: the first is rooted in statistics, compound decision rules and
the empirical Bayes point of view, and secondly a more recent one related to multitask learning.
We first emphasise again the seminal importance of the James-Stein (JS) estimator \citep{js1961} for a single vector mean,
which can be seen as a particular setting of model~\eqref{eq:mainmodel}.
Historically important is the realization that the sample average $\muNE_i := \frac{1}{N_i} \sum_{k=1}^{N_i} X_k^{(i)}$, despite being MLE (in the Gaussian model) and BLUE, is inadmissible and dominated by the shrinkage-based JS estimator.
\citet{Pin80} should be credited for an early ``dimensional asymptotics'' point of view, analysing the minimax risk if the
mean vector belongs to a ball of $\mbr^d$ as a by-product of his celebrated minimax analysis of estimators in Sobolev ball models (see, e.g., \citealp{Nussbaum96} for a discussion).
The risk of the JS estimator is asymptotically close to that minimax in the isotropic Gaussian model if $d \rightarrow \infty$, as well as adaptive to the radius of the ball \citep{Beran1996};
see more details in Supplemental~\ref{se:JS}.

\subsection{Empirical Bayes and compound decision point of view}
The celebrated series of works by \citet{efron1972empirical,efron1973stein,efron1976multivariate} advocated for an
interpretation of the JS estimator as a compound decision problem  and an empirical
Bayes point of view \citep{robbins1951asymptotically,robbins1964empirical,zhang2003compound}:
the problem of estimating a single mean vector in $\mbr^B$ with
standard Gaussian noise is better seen as $B$-many estimations of one-dimensional means observed with independent
observation noise (which in model~\eqref{eq:mainmodel} corresponds to $B>1$ means in dimension $d=1$).
The authors compare the performance of the JS estimator to that of a Bayesian model, i.e.,
the means are themselves drawn from a centred Gaussian prior.
The Bayes rule under the fully Gaussian model (prior and observations)
is solely determined by the prior variance, which is usually unknown, hence, called ``oracle'' in the present discussion.
The JS estimator can then be interpreted as being empirically Bayes as it replaces the oracle (prior) variance with an empirically estimated counterpart.
The compound risk is shown to converge to the oracle Bayes risk, as $B$ grows.

\citet{efron1976multivariate}  generalised this analysis to the multidimensional case which is an instance of
model~\eqref{eq:mainmodel} for arbitrary $d$ and Gaussian task distributions with
identical covariances.
They proposed a multidimensional version of the JS estimator.
Similarly to the one-dimensional case, this is interpreted as
an empirical Bayes procedure with a multidimensional Gaussian prior, whose unknown covariance is replaced by
an empirically estimated counterpart. 
If $(d+2)/ B \rightarrow 0$, then the risk of the multidimensional JS estimator approaches that of the oracle Bayes rule.

The nonparametric empirical Bayes estimator developed by~\citet{brown2009nonparametric} (see also \citealp{jiang2009} for a closely related, independent
work) is in the same line of thought, but considers
a completely arbitrary prior on the means (in dimension $d=1$).
In that situation, the oracle Bayes procedure can be expressed in terms of the marginal, nonparametric mixture density of the observations across tasks and of
its derivative (to establish this, Gaussian partial integration is used, thus, relying heavily on the assumption of isotropic Gaussian tasks). The proposed estimator replaces the true density with a kernel density estimate
(while \citealp{jiang2009} adopt a Generalised Maximum Likelihood Empirical Bayes estimator to
estimate the prior).
For a Gaussian kernel and as $B\rightarrow \infty$, 
 this estimator approaches the oracle Bayes rule.

Similar to our approach, \citet{george1986minimax} proposed a weighted combination of shrinkage estimators, e.g., multiple JS estimators. The weights are assumed to be known but can adapt to the data to some extent.
He showed that an aggregation of Bayes rules is again Bayes on a mixture prior where the weights naturally translate to prior probabilities.

We emphasise the following key differences of this important line of work to the present one:
\begin{enumerate}[label=(\alph*)]
\item The above approaches focus on the {\em compound} risk, while we analyse the risk of each individual task. The compound relative risk, analysed in Section~\ref{sec:avgrisk},
is a different quantity from the ratio between compound risk and oracle Bayes risk.
\item In the empirical Bayes framework, the focus lies on asymptotics as the number of independent tasks $B$ grows large, while ours is on the growing (effective) dimension. 
Consequently, the choice of ``oracle'' reference for analyzing risk ratios differs between the two perspectives. 
Within the empirical Bayes paradigm, the compound oracle Bayes risk serves as the reference. 
We adopt a task-specific oracle improvement relative to the naive estimator. 
Thus, the theoretical outcomes derived from these divergent approaches are not readily comparable.

Concerning the role of the dimension, consistency with the oracle Bayes reference requires $d/B \rightarrow 0$ for the parametric approach of \citet{efron1976multivariate} and presumably an even more stringent condition for the nonparametric approaches of \citet{brown2009nonparametric} or \citet{jiang2009}. In fact they only considered the case $d=1$, but since both works rely on metric entropy estimates on appropriate function spaces, one would expect those to suffer of the curse of dimensionality.
  
Consistency with the oracle, as considered in our paper, requires roughly $\mathrm{polylog}(B)/d \rightarrow 0$, thereby accommodating a broader spectrum of regimes. 
For instance, when $B = \Theta( d^\alpha)$ for arbitrary $\alpha>0$, our approach ensures consistency with our oracle improvement, yet fails to achieve consistency with the oracle Bayes with a Gaussian prior if $\alpha \leq 1$. 
Conversely, the regime where $B\rightarrow \infty$ while $d$ remains fixed, which is pertinent to empirical Bayes analyses, does not yield meaningful results in our framework (though, allowing the dimension to increase at an arbitrary small power of $B$ remains viable).

In summary, our perspective is tailored towards high-dimensional scenarios, with possibly non-isotropic covariance structures, whereas the empirical Bayes methodology is not inherently designed for such settings. 
Moreover, we emphasise the minimax property of our oracle improvement across suitable models as the dimension grows.
\item We allow non-Gaussian data.
\item We allow strong task heterogeneity (e.g., the covariances are not shared across tasks).
\end{enumerate}

\subsection{Multitask learning point of view}

\citet{feldman2014revisiting} viewed the many means estimation problem~\eqref{eq:mainmodel} as a multi-task learning problem
\citep{caruana1997multitask, zhang2021survey},
which gave rise to the term multi-task averaging. Also inspired by the JS estimator, the proposed approach extends the empirical compound risk minimization with a regularization term that favours the alignment of mean estimations for ``related'' tasks. The notion of ``task relatedness'' is encoded as a similarity matrix
considered as {\em a priori} information. In absence of specific information, the similarities are taken constant across tasks
and the method reduces to shrinkage towards the grand mean. The theoretical analysis focused mainly
on the low-dimensional setting and the oracle weights when $B=2$. Their data-driven similarity estimation yielded inconclusive results.
\citet{martinez2013multi} mitigated the default constant similarities in the absence of information by first clustering the tasks into different groups and then applying the approach of \citet{feldman2014revisiting} on each cluster separately;
but a theoretical analysis of this approach was not conducted.
In our work, we also propose to assimilate estimators of related tasks and thereby define an appropriate shrinkage direction.
We eliminate the disadvantage of both approaches, i.e., constant or known similarities, by estimating them solely based on the available data. We also extend significantly our preliminary work
\citep{marienwald2021high} which was limited to the testing approach unfit for heterogeneous tasks,
and with less precise theoretical results.

Recent work of \citet{duan2023adaptive} considers a general multi-task learning setting which includes
the multiple mean estimation problem as a special case. Comparable to that
of  \citet{feldman2014revisiting}, their estimators are determined by compound empirical risk
minimization with a regularization term measuring alignment to a predetermined model of task relatedness, e.g.,
the means form $K$ clusters or are close to a linear subspace
of dimension $K$. The proposed estimators depend on the considered task relatedness and on $K$.
Once interpreted in terms of relative squared risk, the theoretical bounds
obtained by \citet{duan2023adaptive} are not bounded by a constant but can grow as $\mathcal{O}(K^2)$ in the worst case where the fit to the
posited task relatedness is poor. For the relative risk to be significantly less than $\mathcal{O}(1)$,
the bounds require the condition $\delta \lesssim \rnrisk_1/K$, where $\delta$ represents closeness
to the model (cluster radius resp. distance to linear subspace). By contrast, in our analysis we do
not posit a particular task relatedness or value of $K$ to define the estimators; those are adaptive to the most advantageous
grouping model, including cluster number and size, describing the structure of the true parameters (see Section~\ref{se:minimax}). 
Our relative risk bounds are worst-case bounded (and even bounded close to 1), and
show a significant improvement in favourable cases even for the number of groups $K$ growing with the number of tasks $B$. On the other hand, our approach won't result in a significant risk improvement if the task means
belong to a low-dimensional subspace but are very far apart from each other. Still, using
the covering complexity point of view discussed in Section~\ref{se:minimax}, an improvement
can be shown if the tasks increase in number and are drawn, say, from a fixed a priori distribution having a low-dimensional support while the ambient dimension grows.

\section{Conclusion}

Considering the estimation of multiple mean  vectors in high dimensions from independent samples, we
focused on estimators formed as convex combinations of empirical averages of each sample. We proposed a test-then-aggregate method generalizing the approach of \citet{marienwald2021high}, and a direct $Q$-aggregation approach where the weights are found by minimization of an adequate objective.
From a theoretical perspective, we established asymptotic convergence to an oracle risk in an
appropriate ``dimensional asymptotics'' sense, as the effective dimensionality grows.
This oracle risk was proved to be exactly minimax under certain homogeneity conditions for the single-task
risk, and minimax up to a fixed factor for the compound relative risk (without homogeneity conditions).
One advantage of the $Q$-aggregation method is its theoretical adaptivity with respect to parameters
that have to be user-provided for the testing approach. We demonstrated the efficacy of the proposed
methods on showcase experiments for estimating multiple kernel mean embeddings
on controlled artificial datasets and real-world flow cytometry data.

Future investigations will aim to address the discrepancy between the lower and upper bounds for the single mean estimation in extremely inhomogeneous cases (we suspect the minimax lower bound could be too
conservative in such a case because it does not take into account the problem of neighbour detection).
Another important open direction is the integration in the multiple-mean estimation setting
of recent advances on single-mean estimation in high dimension,
achieving sub-Gaussian performance even under heavy-tailed distributions or samples that were adversarially corrupted, e.g., the median of means estimator (\citealp{lugosi2019subgaussian, lugosi2020multivariate}, see \citealp{lugosi2019mean, fathi2020relaxing} for an overview), or efficiently computable estimators (e.g., \citealp{cheng2019high, depersin2022robust}). Finally, a significant future avenue is to extend
our approach from mean estimation to more general high-dimensional multi-task learning problems such as those
considered by \citet{duan2023adaptive}.

\printbibliography

\section*{Acknowledgements and funding}
GB gratefully acknowledges funding from the grants ANR-21-CE23-0035 (ASCAI) and ANR-19-CHIA-0021-01 (BISCOTTE) of the French National Research Agency ANR. Part of this research was done when GB received
support from DFG SFB1294 ``Data Assimilation'', as ``Mercator fellow'' at Universit{\"a}t Potsdam, Germany.
HM gratefully receives funding from the German Federal Ministry of Education and Research under the grant BIFOLD24B.
Part of this research was done when HM was hired at Universit{\"a}t Potsdam, Germany, and funded by the German Ministry for Education and Research as BIFOLD (01IS18025A and 01IS18037A).

\newpage
\markleft{G. BLANCHARD, J-B. FERMANIAN, H. MARIENWALD}
\markright{SUPPLEMENTAL MATERIAL}
\appendix
\section{Nomenclature}\label{se:notation}

\begin{longtable}[H]{ll}
$B$ & number of tasks, Sec.~\ref{sec:intro}\\
$\cB(\tau,\nu)$ & oracle relative risk, \eqref{eq:defBtaunu} \\ 
(\BS) & bounded setting, Sec.~\ref{sec:dist_ass} \\
$\Cvect$ & $J$-partition of $\intr{B}$, Def.~\ref{def:clustering}\\
$d$ & ambient dimension, Sec.~\ref{sec:intro} \\
$\deamm_k $ & effective dimension, \eqref{eq:effd} \\
$\deff_k$ & effective dimension, \eqref{eq:effd} \\
$\diam(\Cvect,\muvect)$ & diameter of partition $\Cvect$ of $\muvect$, \eqref{eq:def_diameter} \\
$\Delta_k$ & difference between $\mu_k$ and $\mu_1$, Sec.~\ref{se:ne_aggregation} \\
(\ECSS) & equal covariances and sample sizes, Sec.~\ref{sec:dist_ass} \\
$\eta$ & relative estimation error of $s^2_k$, \eqref{eq:eventA} \\
(\GS) & Gaussian setting, Sec.~\ref{sec:dist_ass} \\
(\HT) & heavy-tailed assumption, Sup.~\ref{ap:est_schatten_norm} \\
$J$ & nr. of parts of the partition, Def.~\ref{def:clustering}\\
$k$ & index of task, Sec.~\ref{sec:intro}\\
(\KC) & known covariances, Sec.~\ref{sec:dist_ass} \\
$L_k(\estmu)$ & loss of estimator $\estmu$, \eqref{eq:lossrisk}\\
$L_k(\omvect)$ & loss of aggregation estimator $\estmu_{\bm{\om}}$, \eqref{se:ne_aggregation} \\
$\wh{L}_k(\omvect)$ & estimator for cond. risk, \eqref{eq:defwhL}, Sec.~\ref{sec:intro} \\
$\cL^*\paren{\bm{\rnrisk}, \Cvect,\bm{\zeta}}$ & compound oracle risk, \eqref{eq:defLstar} \\
$M$ & radius of ball in which the bounded data lies, Sec.~\ref{sec:QaggBS} \\
$\mu_k$ & expectation of distribution $k$\\
$\estmu_k$ & estimator of $\mu_k$\\ 
$\muNE_k$ & naive estimation (empirical average) of $\mu_k$, Sec.~\ref{sec:intro}\\ 
$\estmu_{\bm{\om}}$ & aggregation estimator, \eqref{eq:defmuhatomega}\\
$\intr{n}$ & integers $1$ to $n$, Sec.~\ref{sec:intro}\\ 
$N_k$ & number of samples (bag size) of task $k$, Sec.~\ref{sec:intro}\\
$\norm{a}$ & Euclidian or Hilbert norm of the vector $a$, Sec.~\ref{se:setting}\\
$\norm{\Sigma}_p$ & Schatten norm of matrix $\Sigma$, Sec.~\ref{sec:highdimasy}\\
$\norm{\Sigma}_\infty$ & operator norm of matrix $\Sigma$, Sec.~\ref{sec:highdimasy}\\
$\nu(U)$ & relative aggregated variance, \eqref{eq:def_s2v} \\
$\omvect$ & aggregation weights, \eqref{eq:defmuhatomega} \\
$\mbp_k$ & $k$-th task (probability distribution), Sec.~\ref{sec:intro}\\
$\multimodel(\Cvect,\zetavect,\Sigma,\bm{\rnrisk})$ & class of distributions, Def.~\ref{def:modelcompound}\\
$\singlemodel(\tau,V,\Sigma,\bm{\nrisk})$ & class of distributions, Def.~\ref{def:modelsingle} \\
$\wh{Q}_1(\omvect)$ & prob. upper bound on $(\wh{L}_1(\omvect) - L_1(\omvect))$, \eqref{eq:defwhq} \\
$\wh{Q}_1^\BS(\omvect)$ & additional penalization for (\BS), \eqref{eq:def_whQBS} \\
$R_k(\estmu)$ & risk of estimator $\estmu$, \eqref{eq:lossrisk}\\
$R_k(\omvect)$ & risk of aggregation estimator $\estmu_{\bm{\om}}$, \eqref{se:ne_aggregation} \\
$s^2(U)$ & harmonic mean of the risks of the tasks in $U$\eqref{eq:def_s2v} \\
$\Spx_B$ & $(B-1)$-dimensional simplex, Sec.~\ref{se:ne_aggregation} \\
$\nrisk_k$ & naive risk, \eqref{eq:defnaiverisk} \\
$\Spx_V$ & set of convex weights of support incl. in $V$, Sec.~\ref{se:ne_aggregation}\\
$\cteW$ & threshold for $W_{(\cteW)}$, \eqref{eq:setW} \\
$\Sigma_k$ & covariance matrix of $k$-th task, Sec.~\ref{se:setting} \\
$\wt{T{\text{\tiny}}}^{(\tau)}_k$, $\dwt{T{\text{\tiny}}}^{(\tau)}_k$ & empirical similarity test on independent copy data, \eqref{eq:defTk}-\eqref{eq:setWandtkdtilde}\\
$\tau, \tau^k_{\min}, \tau^\circ_{\min}, \tau^\pm$ & thresholds for similarity test, \eqref{eq:Vtau}-\eqref{eq:defqtest}-\eqref{eq:testsandwichv}-\eqref{eq:testsandwichv}\\
$\tau/(1 + \tau)$ & best potential improvement \\
$\wt{U}_k$ & unbiased estimator for $\norm{\Delta_k}^2$, \eqref{eq:def_testU} \\
$\wt{V}$ & estimation of $V_\tau$, \eqref{eq:eventA} \\
$V_\tau$ & $\tau$-neighbouring tasks, \eqref{eq:Vtau} \\
$V_{\tau, \cteW}$ & trimmed $V_\tau$, \eqref{eq:Vtauc} \\
$V^*$ & subset of $\wt{V}$, \eqref{eq:eventA} \\
$W_{(\cteW)}$ & set of tasks with bounded relative variance, \eqref{eq:setW} \\
$X_\bullet^{(k)}$ & $k$-th bag, \eqref{eq:mainmodel}\\
$\wt{X}_\bullet^{(k)}$ & independent copy of $k$-th bag, Sec.~\ref{sec:ortoemp}\\
$\bm{\zeta}$ & bound on the diameter of the $J$-partition, Def.~\ref{def:modelcompound} \\
\end{longtable}

\section{Properties of the James-Stein estimator in the large dimension regime}\label{se:JS}
In this section, we provide a concise overview of the properties of the James-Stein estimator in high-dimensional settings and a comparison with the Q-aggregation approach.
Let us first cast the standard James-Stein problem as a particular limiting case of our general setting~\eqref{eq:mainmodel} with only two bags, the second of which with known mean equal to 0 and serving as a reference point:
 \begin{assumption}[\protect{\JS}, James-Stein setting]  $B=2$,
 	$\mu_2=0$, formally $N_2=\infty$ and $\nrisk_2=0$.
 \end{assumption}
 Since only $\mu_1$ is of interest, we drop the index 1 everywhere from the notation in what follows.
 In that case, identifying $\omvect$ with its first weight, renoted as $\omega\in[0,1]$, $\estmu_\om = \om \muNE$ (defined in \eqref{eq:defmuhatomega}) is simply a shrinkage estimator towards $0$.

This is the type of estimator that \citet{stein1956} used to demonstrate that the empirical mean is not admissible.
Indeed, in an isotropic Gaussian setting ($\Sigma = \sigma^2I_d$), a shrinkage estimator $\mu^{\JS+}$ \citep{js1961} outperforms the empirical mean by shrinking towards a chosen reference point $\mu_2=0$. Let us denote $\sigma^2_N = \sigma^2/N$ and:
\begin{equation}\label{eq:defJS}
  \wh{\mu}^{\JS+} = \paren{ 1 - \frac{\sigma^2}{N} \frac{(d-2)}{\|\muNE\|_2^2} }_+ \muNE\,.
\end{equation}
The estimator is minimax for means inside a ball of radius $\tau\sigma^2_Nd$ but beats the empirical mean in general. For the model $\cG_d(\tau, \sigma^2) = \set{ \cN(\mu,\sigma^2I_d)^{\otimes N}, \|\mu\|^2 \leq \tau \sigma^2_Nd }$, the class of $N$-samples of an isotropic Gaussian distribution with bounded mean vector, \citet{Pin80} shows that (see also \citealp{Beran1996,Nussbaum96,Tsy08}):
\begin{equation}\label{eq:pinskerbound}
  \lim_{d\to \infty} \inf_{\wh{\mu}} \sup_{ \mbp \in \cG_d(\tau,\sigma^2) } \frac{\e{\norm{\wh{\mu}- \mu}^2}}{d\sigma^2_N} = \frac{\tau}{1+ \tau}, \qquad \frac{\e{ \norm{ \wh{\mu}^{\JS+} - \mu}^2}}{d\sigma^2_N} \leq \frac{\tau}{1+\tau} + \frac{4}{d}.
\end{equation}
In a non isotropic setting ($\Sigma$ is not necessary equal to $\sigma^2 I_d$ but remains known), a similar estimator achieving the same bounds can be constructed by replacing $d$ by $\deff$ and $\sigma^2$ by $\|\Sigma\|_{\infty}$ in \eqref{eq:defJS} and \eqref{eq:pinskerbound}.\\

We review three interpretations of the James-Stein estimator which relates it to our approaches in
the general model: the oracle and testing ones in Section~\ref{se:testapproach} and the Q-aggregation one in Section~\ref{se:Qaggreg}. \\
\textbf{Oracle interpretation:} the James-Stein shrinkage factor can be seen as an approximation of an oracle weight defined as the minimiser of the risk of $\estmu_\omega$:
 \[R(\estmu_\omega) = (1-\omega)^2 \norm{\mu}^2 + \om^2\sigma^2_N d,
 \]

which is minimised by $\omega^* =  \frac{\|\mu\|^2}{d\sigma^2_N+ \|\mu\|^2}$. By remarking that $\omega^* = 1 - \frac{d\sigma^2_N}{d\sigma^2_N+ \|\mu\|^2}$, a natural estimation of the oracle weight is obtained in \eqref{eq:defJS}.\\
\textbf{Test interpretation:} As in Section~\ref{se:testapproach}, we could first want to detect if $\mu$ is close to $\mu_2 = 0$. Knowing the variance, a very simple test is $T = {\bf 1}\set{ \|\muNE\|^2_2\leq (d-2)\sigma^2_N }$ for the hypothesis:
\begin{equation*}
  (H_0) : \|\mu\|^2 \geq d\sigma^2_N \qquad \text{against} \qquad (H_1) : \mu = 0.
\end{equation*}
If $(H_0)$ is rejected, then we choose $0$ as an estimator of $\mu$. \\
\textbf{Regularization interpretation:} Consider the following estimation by regularization, for $\lambda >0$:
\begin{equation}\label{eq:defmulanbda}
  \wh{\mu}^\lambda \in \arg \min_{\mu \in \mbr^d} \frac{1}{n} \sum_{i=1}^{n} \|\mu-X_i \|^2 + 2\lambda \|\mu\|.
\end{equation}
Then $\wh{\mu}^\lambda = \paren{ 1 - \frac{\lambda}{\|\muNE\|_2}}_+ \muNE$ is a minimiser. Using Stein's Lemma, we recover \eqref{eq:defJS} by choosing $\lambda = \frac{\sigma^2}{N}\frac{d-2}{\|\muNE\|_2}$ (see for example Lemma 3.8 of \citealp{Tsy08}). \\

In the (\JS) setting, the Q-aggregation method exhibits the same asymptotic behaviour as the James-Stein estimator $\estmu^{\JS+}$ without knowing the covariance $\Sigma$. Corollary~\ref{cor:stein} is deduced from Theorem~\ref{prop:qaggreg_gauss} and is proven in Supplemental~\ref{se:proofQaggreg}.
\begin{corollary}\label{cor:stein}
	Assume (\JS) and (\GS), let  $N \geq 7$, $(N-1)/2\geq u_0 \geq 3$, 
	and $\wh{\omvect}$ as defined in \eqref{eq:def_whomega}. Then: 
	\begin{equation}
		\frac{R(\wh{\omvect})}{\rnrisk^2}\leq  \frac{\|\mu\|^2}{\rnrisk^2+ \|\mu\|^2}\paren{1+Ce^{-u_0/2} }+  C \sqrt{\frac{u_0}{\deff}}  \,,
	\end{equation}
	where $C>0$ is some absolute constant.
\end{corollary}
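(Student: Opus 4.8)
The plan is to specialise Corollary~\ref{cor:aistats} --- itself a consequence of Theorem~\ref{prop:qaggreg_gauss} --- to the (\JS) setting and to evaluate its right-hand side explicitly. In that setting $B=2$, and $\mu_2=0$ is a known reference point with $\nrisk_2=0$ (formally $N_2=\infty$), so that $\muNE_2=\mu_2=0$ deterministically and, identifying $\omvect$ with the scalar $\om\in[0,1]$, $\estmu_\om=\om\,\muNE$ is a shrinkage of $\muNE=\muNE_1$ towards the origin; the minimiser $\wh{\omvect}$ of~\eqref{eq:def_whomega} is then obtained by a one-dimensional optimisation. A first (purely bookkeeping) point is that Theorem~\ref{prop:qaggreg_gauss} and Corollary~\ref{cor:aistats}, though stated for finite sample sizes, apply verbatim here: having a second bag whose empirical mean is deterministic only removes a source of fluctuation, and the union bound in the proof of Theorem~\ref{prop:qaggreg_gauss} then effectively runs over the single random bag (task~$1$), so the requirement $u_0\geq\log(17B)=\log 34$ weakens to $u_0\geq\log 17<3$ and the stated hypothesis $u_0\geq3$ suffices (note also $N\geq7$ follows from $3\leq u_0\leq(N-1)/2$). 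Alternatively one may run the argument for finite $N_2$ with $\Sigma_2$ fixed and let $N_2\to\infty$, every quantity in the bound being continuous in $\nrisk_2$ at $\nrisk_2=0$.

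It then remains to compute the quantities entering~\eqref{eq:aistats}. Put $r:=\norm{\mu}^2/\nrisk$. Since $\Delta_2=\mu_2-\mu_1=-\mu$, task~$2$ is a $\tau$-neighbour of task~$1$ iff $\norm{\mu}^2\leq\tau\nrisk$, i.e.\ iff $\tau\geq r$; and $2\in W_{(\cteW)}$ for every $\cteW\geq1$ because $\nrisk_2/\sqrt{\deamm_2}=0$. Hence $V_{\tau,\cteW}=\set{1,2}$ for $\tau\geq r$ and $V_{\tau,\cteW}=\set{1}$ for $\tau<r$, irrespective of $\cteW$. As $\nrisk_2=0$ forces $\rnrisk^2(\set{1,2})=(\nrisk_1^{-1}+\nrisk_2^{-1})^{-1}=0$, we get $\nu(V_{\tau,\cteW})=0$ for $\tau\geq r$ and $\nu(V_{\tau,\cteW})=1$ for $\tau<r$; using $\cB(\tau,0)=\tau/(1+\tau)$ and $\cB(\tau,1)=\tfrac{\tau}{1+\tau}+\tfrac{1}{1+\tau}=1$, this gives $\cB\paren[1]{\tau,\nu(V_{\tau,\cteW})}=\tau/(1+\tau)$ for $\tau\geq r$ and $=1$ otherwise. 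Since $\tau\mapsto\tau/(1+\tau)$ is increasing and $r/(1+r)\leq1$, taking $\cteW=1$ and $\tau=r$ bounds the infimum in~\eqref{eq:aistats} by
\begin{equation*}
  \inf_{\tau\geq0,\;\cteW\geq1}\brac[2]{\cB\paren[1]{\tau,\nu(V_{\tau,\cteW})}+C\cteW\sqrt{\tfrac{u_0}{\deff}}}
  \leq \frac{r}{1+r}+C\sqrt{\tfrac{u_0}{\deff}}
  = \frac{\norm{\mu}^2}{\nrisk+\norm{\mu}^2}+C\sqrt{\tfrac{u_0}{\deff}}.
\end{equation*}

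Substituting this into~\eqref{eq:aistats} with $B=2$ and using $e^{-u_0/2}\leq1$, the factor $1+CBe^{-u_0/2}=1+2Ce^{-u_0/2}$ multiplies $\norm{\mu}^2/(\nrisk+\norm{\mu}^2)$ into a term of the form $\tfrac{\norm{\mu}^2}{\nrisk+\norm{\mu}^2}(1+C'e^{-u_0/2})$, and multiplies $C\sqrt{u_0/\deff}$ into a term still bounded by an absolute constant times $\sqrt{u_0/\deff}$; after relabelling constants this is exactly the asserted bound. The computation just described is elementary, and the heart of the matter is only that the (\JS)-specialised values of $V_{\tau,\cteW}$, $\nu$ and $\cB$ collapse the oracle term to the James--Stein improvement factor $\norm{\mu}^2/(\nrisk+\norm{\mu}^2)$. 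The sole genuine delicacy --- and hence where I would be careful rather than gloss over --- is the degeneracy $N_2=\infty$, $\nrisk_2=0$, which is why the validity of Theorem~\ref{prop:qaggreg_gauss}/Corollary~\ref{cor:aistats} in this limiting regime (and the attendant slight relaxation of the condition on $u_0$) is argued explicitly in the first paragraph above.
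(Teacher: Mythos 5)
Your derivation is correct in substance, and it takes a (mildly) different route from the paper's. The paper proves Corollary~\ref{cor:stein} by invoking Theorem~\ref{prop:qaggreg_gauss} directly in the (\JS) setting: it keeps $\omega_1$ free, picks $\omega_1=\min\paren{(\norm{\mu}^2+\eta)/(\norm{\mu}^2+\nrisk),1}$ with $\eta \asymp \norm{\mu}\rnrisk\sqrt{u_0/\deff}$, and runs a two-case analysis ($\eta\leq\nrisk$ versus $\eta\geq\nrisk$) to absorb the cross term coming from the penalty $Q_1$. You instead specialise Corollary~\ref{cor:aistats}: your identification $V_{\tau,\cteW}=\set{1,2}$ for $\tau\geq\norm{\mu}^2/\nrisk$ (and $\set{1}$ otherwise), $\nu(V_{\tau,\cteW})=0$, $\cB(\tau,0)=\tau/(1+\tau)$, the choice $\tau=\norm{\mu}^2/\nrisk$, $\cteW=1$, and the absorption of the factor $1+2Ce^{-u_0/2}$ multiplying $C\sqrt{u_0/\deff}$ into the absolute constant are all correct and give exactly the stated bound. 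Both routes rest on the same master theorem; yours is shorter once Corollary~\ref{cor:aistats} is available, and it is in fact the argument the paper itself sketches for the bounded analogue (the remark after Corollary~\ref{cor:stein} notes $V_{\tau,\cteW}=\set{1,2}$ and $\nu(V_{\tau,\cteW})=0$ when invoking Theorem~\ref{prop:qaggreg_bounded}). Your treatment of the degeneracy $\nrisk_2=0$, where the oracle-weight formula~\eqref{eq:STBoptweights} is formally $0/0$, by a limiting argument (equivalently: take $\Sigma_2=0$ with finite $N_2$ and use the limiting weights $(\tau/(1+\tau),\,1/(1+\tau))$) is adequate, and $N\geq 7$ does follow from $3\leq u_0\leq (N-1)/2$ as you note.

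The one imprecise step is your justification of the threshold $u_0\geq 3$ in place of $\log(17B)=\log 34$. It is not accurate that the union bound in the proof of Theorem~\ref{prop:qaggreg_gauss} ``effectively runs over the single random bag'': the events of $A(x)$ indexed by $k=2$ --- concentration of $\wh{q}_2$ and of $\inner{\muNE_2-\mu_1,\muNE_1-\mu_1}$ --- quantify fluctuations of bag $1$ conditionally on $\cX^{(-1)}$ and remain fully in force even though $\muNE_2$ is deterministic; none of them drops out. What actually rescues the constant is simply that for $B=2$ an exact count of the events in $A(x)$ gives a failure probability of about $9e^{-x}$ rather than the crude $(6B+4)e^{-x}$, so the proof's requirements (essentially $u_0\geq \tfrac12+\log 9$ together with $u_0\geq \tfrac52$) are met by $u_0\geq 3$ after a brief re-run of the constant chasing with $B=2$ --- not for the reason you give. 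Since this concerns explicit constants only, and the paper's own proof is equally silent about the mismatch between $u_0\geq 3$ and the theorem's nominal $\log(17B)$, I would call this a blemish rather than a gap; but if you keep the route through Corollary~\ref{cor:aistats} you should either assume $u_0\geq\log 34$ or state explicitly that the $B=2$ constant count is being redone.
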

The first term is, up to the multiplicative factor, Stein's error $\tau/(1+\tau)$ with $\tau=\norm{\mu^2}/\nrisk$.
In the dimensional asymptotic $\deff \rightarrow \infty$, assume
$u_0 {\rightarrow} \infty$ such that $u_0 = o(\deff)$
and suppose the mean
satisfies $\|\mu\|^2 \leq \tau \rnrisk^2$, then the estimator attains the Pinsker bound \eqref{eq:pinskerbound}:
\begin{equation*}
	\lim_{\deff \to \infty} \sup_{\substack{\mu, \rnrisk:\\ \|\mu\|^2 \leq \tau \nrisk}} \frac{ R(\wh{\omega}) }{\rnrisk^2} \leq \frac{\tau}{1 + \tau}\,.
\end{equation*}

We recover the same phenomenon in the bounded setting directly from Theorem~\ref{prop:qaggreg_bounded} as $V_{\tau,\cteW} = \{1,2\}$ for all $\cteW>0$ and the relative aggregated variance is null: $\nu(V_{\tau,\cteW}) =0$.\\

Among the numerous works related to the James-Stein estimator, we  mention the work of \citet{green1991james}, who consider two bags of Gaussian distributions $\cN(\mu_1,\sigma^2I_d)$ and $\cN(\mu_1+ \xi,v^2I_d)$. Like in the work of James and Stein, their approach
uses an analysis strongly based on the Gaussian assumption,
that does not generalize naturally to non-Gaussian data nor to $B>2$. In particular, their method relies on the estimation of optimal weights of the theoretical risk, the lack of a simple explicit form for those when $B>2$ prevents a direct estimation.
It is worth noting that the risk bound derived for their estimator of $\mu_1$ (their Eq.~(2.4)) is close to the minimax risk $\cB$ introduced in our work:
\[ d\sigma^2 -\frac{\sigma^4(d-2)^2}{\norm{\xi}^2+d(\sigma^2+v^2)} = d\sigma^2\brac{\cB\paren{ \frac{\norm{\xi}^2}{d\sigma^2}, \frac{\sigma^2}{\paren{\frac{1}{\sigma^2}+ \frac{1}{v^2}}^{-1}}} + O\paren{d^{-1}} }\,.\]

\section{On the optimality of Q-aggregation relative risk upper bound}\label{se:optimality_uppbound}

The upper bounds we have obtained on the relative risk converge to the minimax risk at a rate $C/\sqrt{d}$ for some different notions of effective dimension (Theorem~\ref{prop:fullemp}, Theorem~\ref{prop:qaggreg_gauss}, Corollary~\ref{cor:aistats} and Theorem~\ref{prop:qaggreg_bounded}). In this section, we investigate the sharpness of the bound of the Q-aggregation procedure in a simplified Gaussian setting and show experimentally that our upper bounds are sharp.\\ 

We consider $(\GS)$ with $B=50$, $\Sigma_k =I_d$ and $N_k =10$ for all $k\in \intr{B}$. We take $\mu_1 = 0$ and for $k\neq 1$, $\mu_k \sim \cN(0, \delta^2I_d )$. In that setting, the optimal relative risk for the estimation of $\mu_1$ is 
\[r^* = \frac{N_1\delta^2+1}{N_1\delta^2+T} = \inf_{\omvect \in \Spx_B} \frac{R_1(\omvect)}{d/N_1}.\]
 We estimate the excess of relative risk $\wh{R}_1(\wh{\omvect})/(d/N_1)- r^*$ with $\wh{\omvect}$ constructed with \AGGe, for $10\leq d \leq 400$ and $\delta \in \set{0,1,3,6,10}$.\\

\begin{figure}[ht]
	\includegraphics[width=0.6\textwidth]{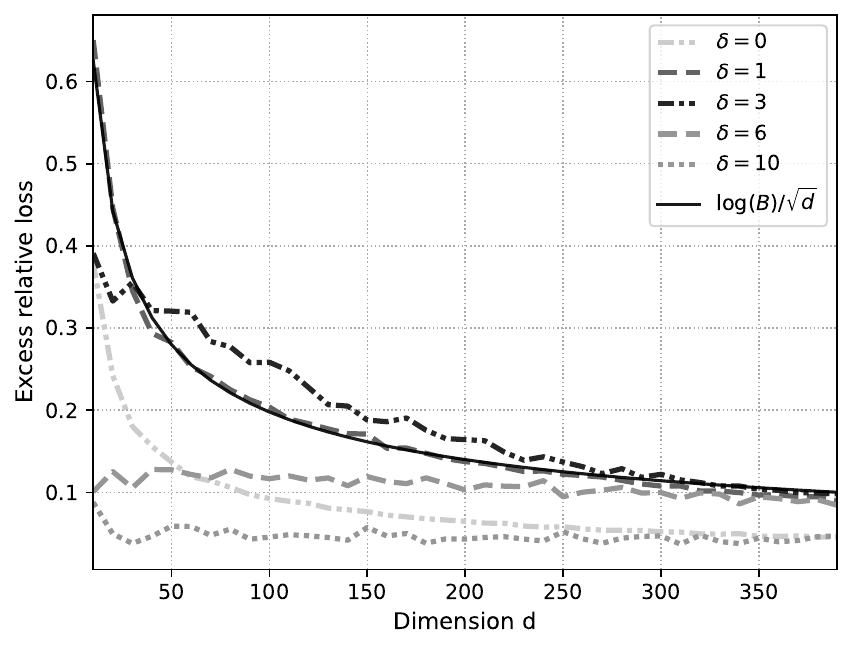}
	\caption{Excess relative risk for the estimation of $\mu_1$ using {\AGGe} with $c_q = \sqrt{\log(B)}$, $c_1=c_2 =c_{bs}=0$, in function of the dimension. Each curve corresponds to a different value of $\delta$. Each point is the mean of $500$ realizations. Data generation is detailed above.}
	\label{fig:relloss}
\end{figure}

The worst case for estimating $\mu_1$ is when the other averages are close enough to be useful but far enough apart to be difficult to detect. Hence, the excess relative risk is maximal for $\delta$ around $3$. When the others means are far away from the target mean, for instance $\delta \in \set{6,10}$ in Figure~\ref{fig:relloss}, our estimator is close to the optimal one, which only considers the target sample. In that case, we do not pay the error of $O(d^{-\nicefrac{1}{2}})$. However, in the other cases, the excess risk behaves as $Cd^{-\nicefrac{1}{2}}$, which aligns with our upper bounds, at least in
the (GS) setting for the Q-aggregation procedure. The slope of the linear regression of the logarithm of the excess risk in function of the logarithm of the dimension is, for $\delta \in \set{0,1,3}$, close to $-0.5$ (see Table~\ref{tab:reg_excessrisk}).  

\begin{table}[h]
	\caption{Slope of $\log(\wh{R}_1(\omvect)/(d/N_1)-r^*)$ in function of $\log(d)$.}\label{tab:reg_excessrisk}
	\begin{tabular}{c|ccc} 
		$\delta$ & 0 & 1 & 3  \\
		\toprule
		Slope & -0.56 & -0.53 & -0.46 
	\end{tabular}
\end{table}

\section{Proofs for Section~\ref{se:testapproach}}
\label{se:proofstests}

\subsection{Proof of Lemma~\ref{lem:oraclebound}}
The weights $\omvect^*$ are obtained by minimizing the upper bound \eqref{eq:riskboundV} using KKT conditions, for instance. However, to verify the bound \eqref{eq:oracletestbound}, it suffices to substitute the weights \eqref{eq:STBoptweights} into \eqref{eq:riskboundV}. Let us denote $\nu = \frac{\nrisk(V)}{\nrisk_1}$, from \eqref{eq:oracletestbound}:
\begin{align*}
  \frac{R_1(\omvect^*)}{\nrisk_1}&\leq \tau  (1-\omega_1^*)^2 + \sum_{k \in V} (\omega^*_k)^2 \frac{\nrisk_k}{\nrisk_1} \\
   &= \tau \lambda^2(1-\nu)^2 + (1-\lambda)^2 + 2 \lambda(1-\lambda) \nu + \lambda^2 \nu.
\end{align*}
By substituting $\lambda$ with its value from Equation \eqref{eq:STBoptweights}, we obtain:
\begin{align*}
  \frac{R_1(\omvect^*)}{\nrisk_1}&\leq \frac{\tau(1-\nu)^2+\tau^2(1-\nu)^2 + 2\tau (1-\nu)\nu + \nu}{(1+\tau(1-\nu))^2} \\
  &= \frac{\tau(1-\nu)((1-\nu)+ \tau(1-\nu) + \nu) + \nu(\tau(1-\nu)+1)}{(1+\tau(1-\nu))^2} \\
  &= \frac{\tau(1-\nu)+ \nu}{1+ \tau(1-\nu)} = \cB(\tau,\nu) = \cB\paren{ \tau, \frac{\nrisk(V)}{\nrisk_1} }.
\end{align*}
Thus, the inequality holds as claimed. \qed
\subsection{Proof of Lemma~\ref{prop:boundSTB1}}
Recall that we assume the following event holds:
  \begin{align} \tag{\ref{eq:eventA}}
    \begin{cases}
      1 \in V^* \subseteq \wt{V} \subseteq V_\tau,
      \\
      \abs{\wt{\rnrisk}^2_k - \nrisk_k}  \leq \eta \nrisk_k, \text{ for all } k \in \wt{V},
    \end{cases}
  \end{align}
  for quantities $\wt{V},\wt{\rnrisk}$ which are considered as nonrandom for this proof (e.g.,
  they are computed from an independent sample and we argue conditionally to that sample).
Denote
\[
  \ol{R}_1(\wt{V},\omvect) := \tau \nrisk_1 (1-\om_1)^2 + \sum_{k \in \wt{V}} \om_k^2 \nrisk_k
\]
the risk upper bound from~\eqref{eq:riskboundV} wherein we used the index set $\wt{V}$. Due to the first
$\wt{V} \subset V_\tau$
the same argument leading up to~\eqref{eq:riskboundV}, it
holds $R_1(\omvect)\leq \ol{R}_1(\wt{V},\omvect)$ for all $\omvect \in \Spx_{\wt{V}}$.
Denoting now
\[
  \wt{R}_1(\wt{V},\omvect) := \tau \wt{\rnrisk}^2_1 (1-\om_1)^2 + \sum_{k \in \wt{V}} \om_k^2 \wt{\rnrisk}^2_k
\]
the plug-in version of $\ol{R}_1(\wt{V},\omvect)$, we have, putting $\eps_k := \abs{
  \nrisk_k - \wt{\rnrisk}_k^2}$:
  \[
  \forall \omvect \in \Spx_{\wt{V}} :
  \abs{\ol{R}_1(\wt{V},\omvect) - \wt{R}_1(\wt{V},\omvect)}
  \leq \tau \eps_1 (1-\om_1)^2 + \sum_{k\in \wt{V}} \om^2_k \eps_k
  \leq \paren{\max_{k \in \wt{V}} \frac{\eps_k}{\nrisk_k}} \ol{R}_1(\wt{V},\omvect),
\]
which entails, from the second part of event~\eqref{eq:eventA}:
\[
  (1-\eta) \ol{R}_1(\wt{V},\omvect) \leq \wt{R}_1(\wt{V},\omvect) \leq (1+\eta) \ol{R}_1(\wt{V},\omvect)
\]
  Since $\wt{\omvect}$ is a minimiser of $\wt{R}_1(\wt{V},\omvect)$, it holds for any other $\omvect \in \Spx_{\wt{V}}$:
\[
  R_1(\wt{\omvect})\leq \ol{R}_1(\wt{V},\wt{\omvect})
  \leq (1-\eta)^{-1} \wt{R}_1(\wt{V},\wt{\omvect})
  \leq (1-\eta)^{-1} \wt{R}_1(\wt{V},\omvect)
  \leq \paren{\frac{1+\eta}{1-\eta}} \ol{R}_1(\wt{V},{\omvect}).
\]
Minimizing the latter inequality over $\omvect$ yields (from Lemma~\ref{lem:oraclebound}):
\[\frac{R_1(\wt{\omvect})}{\nrisk_1}\leq \paren{\frac{1+\eta}{1-\eta}} \cB\paren[1]{\tau,\nu(\wt{V})} \leq
\paren{\frac{1+\eta}{1-\eta}} \cB\paren[1]{\tau,\nu(V^*)},\]
due to $V^* \subseteq \wt{V}$ and the monotonicity properties of $\nu$, $\cB$.

\subsection{Proofs for Section~\ref{se:knownvar}}

We start with a generic result linking concentration of the test statistic to the properties of the associated
test. It will allow to handle different distributional settings as particular cases.

  We recall that $\wt{U}_k$ is the test U-statistic given by~\eqref{eq:def_testU} using independent ``tilde'' data.
\begin{assumption}[\TSC, Test Statistic Concentration]  Assume that for all $k \in \intr{B}$ and $\alpha \in (0,1)$, there exists $q_k(\alpha)$:
  \begin{equation}
    \label{eq:assmptconc}
    \prob{ \abs{\wt{U}_k - \|\Delta_k\|^2 } \geq \|\Delta_k\| q_k(\alpha) + c^2_0 q^2_k(\alpha) } \leq \alpha\,.
  \end{equation}
  where $c_0\geq 2$ is a numerical constant.
\end{assumption}

Put $u_\alpha := \log(8/\alpha)$, it is established that:
\begin{itemize}
  \item The assumption is satisfied under (\GS) for $q^2_k(\alpha) = 2\paren{ \frac{\norm{\Sigma_1}_2}{N_1} +  \frac{\norm{\Sigma_k}_2}{N_k}}u_\alpha $ and $c_0 =4$.  (Proposition~6 in \citet{BlaFer23})
  \item The assumption is satisfied under (\BS) for $q^2_k(\alpha) = 16\paren{ \frac{\norm{\Sigma_1}_2}{N_1} +  \frac{\norm{\Sigma_k}_2}{N_k}}u_\alpha+ 4 \frac{M^2 u_\alpha^2}{N_1^2 \wedge N_k^2} $ and $c_0=31$. (Proposition~9 in \citet{BlaFer23}) 
   \item The assumption is satisfied under (\HT) by an estimator of type "median-of-mean" for $q^2_k(\alpha) =  16\paren{ \frac{\|\Sigma_1\|_2}{N_1} + \frac{\|\Sigma_i\|_2}{N_i} }u_\alpha$ and $c_0=2$ but for $\alpha\geq 8e^{-N_1\wedge N_k}$. (see Proposition~\ref{prop:HTconcU}).
   \end{itemize}

\begin{proposition}
  \label{prop:simplifiedtestresultJB}
  Grant assumption (\,\TSC) and let $\alpha\in (0,1)$, $\tau>0$ be fixed. Let $k\in \intr{B}$ and $\wt{T}_k$ be given by
  \begin{equation}
    \label{eq:defTk2}
    \wt{T}_k := \ind{ \wt{U}_k \leq \tau \nrisk_1}.
  \end{equation}
  Define $\tau^k_{\min} := 2 c_0^2 \invnrisk_1 q_k^2 (\alpha)$, then it holds:
\begin{align}
\label{eq:typeIerr} \text{ if } \norm{\mu_1-\mu_k} &> (\sqrt{\tau} + {\textstyle \sqrt{\tau^k_{\min}}}) \rnrisk_1:   && \prob{\wt{T}_k=1} \leq \alpha;\\
\label{eq:typeIIerr}  \text{ if } \norm{\mu_1-\mu_k} & \leq  (\sqrt{\tau} - {\textstyle \sqrt{\tau^k_{\min}}})\rnrisk_1: &&  \prob{\wt{T}_k=0} \leq \alpha.
\end{align}
\end{proposition}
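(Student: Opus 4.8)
The plan is to derive both statements deterministically on the complement of the bad event in assumption~(\TSC), which occurs with probability at least $1-\alpha$; on that event $\abs{\wt{U}_k - \norm{\Delta_k}^2} < \norm{\Delta_k}\,q_k(\alpha) + c_0^2 q_k^2(\alpha)$. To lighten the notation I would set $x := \norm{\Delta_k}$ and $q := q_k(\alpha)$, so that $\sqrt{\tau^k_{\min}}\,\rnrisk_1 = c_0\sqrt{2}\,q$ by the definition $\tau^k_{\min}\rnrisk_1^2 = 2c_0^2 q^2$. The problem then reduces to two elementary claims: (i) if $x > (\sqrt{\tau}+\sqrt{\tau^k_{\min}})\,\rnrisk_1$ then $x^2 - xq - c_0^2 q^2 > \tau\nrisk_1$; and (ii) if $0 \le x \le (\sqrt{\tau}-\sqrt{\tau^k_{\min}})\,\rnrisk_1$ then $x^2 + xq + c_0^2 q^2 \le \tau\nrisk_1$. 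Indeed (i) gives $\wt{U}_k > \tau\nrisk_1$ on the good event, hence $\wt{T}_k = 0$, so $\prob{\wt{T}_k = 1} \le \alpha$; and (ii) gives $\wt{U}_k < \tau\nrisk_1$, hence $\wt{T}_k = 1$, so $\prob{\wt{T}_k = 0} \le \alpha$.

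The heart of the matter is the pair of quadratic comparisons
\[
  x^2 - xq - c_0^2 q^2 \;\ge\; \bigl(x - \sqrt{\tau^k_{\min}}\,\rnrisk_1\bigr)^2 \;\text{ for } x \ge \sqrt{\tau^k_{\min}}\,\rnrisk_1,
  \qquad
  x^2 + xq + c_0^2 q^2 \;\le\; \bigl(x + \sqrt{\tau^k_{\min}}\,\rnrisk_1\bigr)^2 \;\text{ for } x \ge 0 .
\]
Substituting $\sqrt{\tau^k_{\min}}\,\rnrisk_1 = c_0\sqrt{2}\,q$ and expanding, the second reduces to $0 \le (2c_0\sqrt{2}-1)\,qx + c_0^2 q^2$, which is trivial since $c_0 \ge 2$; the first reduces to $x\,(2c_0\sqrt{2}-1) \ge 3 c_0^2 q$, and here I would note that $3c_0^2/(2c_0\sqrt{2}-1) \le c_0\sqrt{2}$ is equivalent to $c_0 \ge \sqrt{2}$, so the hypothesis $x \ge \sqrt{\tau^k_{\min}}\,\rnrisk_1 = c_0\sqrt{2}\,q$ already implies it. Pinning down this threshold is essentially the only subtle point of the proof, and it is the unique place where $c_0 \ge 2$ (rather than merely $c_0 > 0$) is used.

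Granting the comparisons, claim~(i) follows because $x > (\sqrt{\tau}+\sqrt{\tau^k_{\min}})\,\rnrisk_1$ forces both $x \ge \sqrt{\tau^k_{\min}}\,\rnrisk_1$ and $x - \sqrt{\tau^k_{\min}}\,\rnrisk_1 > \sqrt{\tau}\,\rnrisk_1 > 0$, whence $x^2 - xq - c_0^2 q^2 \ge (x - \sqrt{\tau^k_{\min}}\,\rnrisk_1)^2 > \tau\rnrisk_1^2 = \tau\nrisk_1$. Claim~(ii) is vacuous unless $\sqrt{\tau} \ge \sqrt{\tau^k_{\min}}$, and in that case, using monotonicity of $t \mapsto t^2$ on $[0,\infty)$, $x^2 + xq + c_0^2 q^2 \le (x + \sqrt{\tau^k_{\min}}\,\rnrisk_1)^2 \le \bigl((\sqrt{\tau}-\sqrt{\tau^k_{\min}})\,\rnrisk_1 + \sqrt{\tau^k_{\min}}\,\rnrisk_1\bigr)^2 = \tau\nrisk_1$. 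Combining these deterministic facts with the probability bound on the bad event from~(\TSC) then yields both~\eqref{eq:typeIerr} and~\eqref{eq:typeIIerr}. No genuine obstacle is expected beyond getting the two quadratic comparisons sharp enough that the stated separations $\sqrt{\tau}\pm\sqrt{\tau^k_{\min}}$ (and not something worse) suffice.
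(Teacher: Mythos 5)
Your proposal is correct and follows essentially the same route as the paper: condition on the concentration event of (\TSC) and reduce both error bounds to elementary quadratic inequalities separating $\wt{U}_k$ from $\tau\nrisk_1$ when $\norm{\Delta_k}$ lies beyond $(\sqrt{\tau}\pm\sqrt{\taumink})\rnrisk_1$. The only difference is cosmetic: the paper rescales by $\tau\nrisk_1$ and bounds the largest roots of the resulting quadratics via $\sqrt{1\pm a}\lessgtr 1\pm\sqrt{a}$, whereas you compare the deviation bound directly to the shifted squares $\paren{x\pm\sqrt{\taumink}\rnrisk_1}^2$ (incidentally needing only $c_0\geq\sqrt{2}$), which is an equivalent computation.
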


  {\bf Proof of Prop.~\ref{prop:simplifiedtestresultJB}}
  Let $k\in \intr{B}$, we assume for the rest of the proof that
  \[
    \abs{\wt{U}_k - \|\Delta_k\|^2 } \leq \|\Delta_k\| q_k(\alpha) + c^2_0 q^2_k(\alpha)
  \]
  holds, which according to Assumption~(\TSC) is the case with probability at least $1-\alpha$. As $k$ is fixed, we will denote $\tau_{\min} := \tau_{\min}^k$.
  
  Using $q_k^2(\alpha) \invnrisk_1 = \tau_{\min} c_0^{-2}/2$ and putting $x:=\frac{\norm{\Delta_k}}{\sqrt{\tau} \rnrisk_1}$,
  the above inequality entails
  \begin{equation}
    \abs{\frac{\wt{U}_k}{\tau \nrisk_1} - x^2 }
     \leq x \sqrt{\frac{\tau_{\min}}{2\tau}} c_0^{-1} + \frac{\tau_{\min}}{2\tau}
     \leq x \frac{\eps_\tau}{2\sqrt{2}}  + \frac{\eps_\tau^2}{2}, \label{eq:quadbd}
   \end{equation}
  where we have used $c_0 \geq 2$ and where $\eps_\tau:= \sqrt{\tau_{\min}/\tau}$.
  This entails
  \[
    \tau^{-1}\invnrisk_1(\wt{U}_k - \tau\nrisk_1) \leq x^2 + x \frac{\eps_\tau}{2\sqrt{2}}  + \frac{\eps_\tau^2}{2} -1.
  \]
Assuming $\eps_\tau \leq 1$,  the largest root of the quadratic polynomial on the right-hand-side above is lower bounded as
  \[
     x_+ = -\frac{\eps_\tau}{4\sqrt{2}} + \sqrt{1 - \frac{15}{32} \eps_\tau^2}
    \geq 1 - \eps_\tau,
  \]
  using $\sqrt{1-a} \geq 1 -\sqrt{a}$ for $a\in[0,1]$.
  Thus, $0 \leq x \leq  1 - \eps_\tau$ is
  a sufficient condition ensuring $\wt{T}_k =1$, implying~\eqref{eq:typeIIerr}
  since $(1-\eps_\tau)^2 \tau = \paren{\sqrt{\tau}-\sqrt{\tau_{\min}}}^2$.
  (The case $\eps_\tau>1$ is trivial since the statement is void in that configuration.)

  Similarly, \eqref{eq:quadbd} entails
  \[
    \tau^{-1}\invnrisk_1(\wt{U}_k - \tau\nrisk_1) \geq x^2 - x \frac{\eps_\tau}{2\sqrt{2}}  - \frac{\eps_\tau^2}{2} -1;
  \]
  the largest root of the quadratic polynomial on the right-hand-side above is upper bounded as
  \[
     x'_+ = \frac{\eps_\tau}{4\sqrt{2}} + \sqrt{1 + \frac{17}{32} \eps_\tau^2} \leq 1 + \eps_\tau,
  \]
  using 
  $\sqrt{1+a} \leq 1 +\sqrt{a}$.
  Thus, $x> 1 + \eps_\tau$ is a sufficient condition ensuring $\wt{T}_k =0$, implying~\eqref{eq:typeIerr}
  since $(1+\eps_\tau)^2 \tau = \paren{\sqrt{\tau}+\sqrt{\tau_{\min}}}^2$.

{\bf Proof of Prop.~\ref{prop:simplifiedtestresult}}. Proposition~6 of \citet{BlaFer23} states that under
(\GS) it holds with probability at least $1-\alpha$ that
\begin{equation}
  \label{eq:concU}
  \abs{\wt{U}_k -\norm{\Delta_k}^2} \leq \norm{\Delta_k} q'_k \sqrt{u_\alpha} + 16q_k^2 u_\alpha,
\end{equation}
where \[
  q_k^2 = 2\paren{ \frac{\norm{\Sigma_1}_2}{N_1} +  \frac{\norm{\Sigma_k}_2}{N_k}}
  = 2 \nrisk_1 \paren{ \frac{1}{\sqrt{\deamm_1}} +  \frac{\nrisk_k/\nrisk_1}{\sqrt{\deamm_k}}},
  \] 
  and
\[
  (q'_{k})^2 :=  2\paren{ \frac{\norm{\Sigma_1}_\infty}{N_1} +  \frac{\norm{\Sigma_k}_\infty}{N_k}} ;
\]
since $\norm{\Sigma}_\infty \leq \norm{\Sigma}_2$, we have $q'_k\leq q_k$,
so that assumption~\eqref{eq:assmptconc} is satisfied with $c_0=4$.
The claim is then a consequence of Proposition~\ref{prop:simplifiedtestresultJB}.

{\bf Proof of Cor.~\ref{cor:testcor}}.
For any $k\in \wt{V}$, we have $k\in W_{(c)}$, and since $\cteW \geq 1$, it holds
(with the notation used in Proposition~\ref{prop:simplifiedtestresult}, but using $\alpha/B$ in place of $\alpha$)
\[
  \tau_{\min}^{(k)} 
  =32 (u_\alpha + \log B) \paren{ \frac{1}{\sqrt{\deamm_1}} +  \frac{\nrisk_k}{\nrisk_1\sqrt{\deamm_k}}}
  \leq \frac{\tau^\circ_{\min}}{2} + \frac{\cteW\tau^\circ_{\min}}{2} 
  \leq  \cteW\tau^\circ_{\min},
\]
and
the result is a direct consequence of Proposition~\ref{prop:simplifiedtestresult}
(combined with a union bound over $k \in \intr{B}$).

\subsection{Proofs for Section~\ref{se:unknowncov}: estimating Schatten norms and plug-in estimates} \label{ap:est_schatten_norm}
We will be concentrating on one bag at a time
and for this reason omit the task index $k$ in the next results.
Thus, we assume $\wt{X}_1,\ldots,\wt{X}_N$ (with $N \geq 4$)
are i.i.d. data points in $\mbr^d$ with
expectation $\mu$ and known covariance matrix $\Sigma$.
We start with estimators for the Schatten norms $\norm{\Sigma}_p$, $p=1,2$.

We can use the natural unbiased estimator for any fixed $\norm{\Sigma}_1=\tr \Sigma$,
\begin{equation}
  \label{eq:esttrace}
  \wt{Z}^{(1)} := \frac{1}{N-1} \sum_{i=1}^{N} \|\wt{X}_i - \wt{\mu}\|^2 = \frac{1}{2N(N-1)} \sum_{i\neq j} \norm{\wt{X}_i-\wt{X}_j}^2,
\end{equation}
where $\wt{\mu} = N^{-1} \sum_{i=1}^{N} \wt{X}_i$ is the empirical mean of the (sub-)sample.

\subsubsection{Gaussian setting}\label{apx:test_gs}
 We have the following error control in the Gaussian setting:
\begin{proposition}\label{prop:tr_sig}
Assume (\GS) holds. For $u\geq 1$, if $N\geq 2$
\begin{equation*}
  \prob{\abs{\wt{Z}^{(1)} - \tr \Sigma } \geq 4 \sqrt{\frac{2\tr \Sigma^2}{N}}u } \leq 2 e^{-u}\,.
\end{equation*}
\end{proposition}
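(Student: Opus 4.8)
The plan is to reduce the claim to a classical Bernstein-type deviation inequality for a weighted sum of independent $\chi^2_1$ variables. Since $\wt{Z}^{(1)}$ in~\eqref{eq:esttrace} is a function of the centred observations $\wt X_i - \wt\mu$ only, one may assume $\mu=0$ without loss of generality. Then I would use a standard orthogonal decomposition (Cochran's theorem): applying to the sample matrix an orthogonal transformation in $\mbr^N$ whose last row is $N^{-1/2}\mathbf 1^\top$ produces vectors $Y_1,\dots,Y_{N-1}$ that are i.i.d.\ $\cN(0,\Sigma)$ and satisfy $(N-1)\wt{Z}^{(1)} = \sum_{j=1}^{N-1}\norm{Y_j}^2$. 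Diagonalising $\Sigma=\sum_{\ell=1}^d\lambda_\ell v_\ell v_\ell^\top$ and writing $Y_j = \Sigma^{1/2}G_j$ with $G_j\sim\cN(0,I_d)$, this becomes $\sum_{j=1}^{N-1}\sum_{\ell=1}^d\lambda_\ell Z_{j,\ell}^2$ with $Z_{j,\ell}:=v_\ell^\top G_j$ i.i.d.\ standard normal. Hence $(N-1)(\wt{Z}^{(1)}-\tr\Sigma)$ equals in distribution a centred weighted $\chi^2$ sum $\sum_{j,\ell}a_{j,\ell}(Z_{j,\ell}^2-1)$ with $a_{j,\ell}=\lambda_\ell/(N-1)$, so that $\sum_{j,\ell}a_{j,\ell}^2 = \tr\Sigma^2/(N-1)$ and $\max_{j,\ell}a_{j,\ell} = \norm{\Sigma}_\infty/(N-1)$.

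Next I would invoke the standard one-sided tail bounds for such sums (Laurent--Massart type): for every $u>0$,
\begin{gather*}
  \prob{\wt{Z}^{(1)} - \tr\Sigma \geq 2\sqrt{\tfrac{\tr\Sigma^2}{N-1}}\,\sqrt u + \tfrac{2\norm{\Sigma}_\infty}{N-1}\,u} \leq e^{-u},\\
  \prob{\wt{Z}^{(1)} - \tr\Sigma \leq -2\sqrt{\tfrac{\tr\Sigma^2}{N-1}}\,\sqrt u} \leq e^{-u}.
\end{gather*}
Finally I would rewrite the deviation terms at the target scale. Since $N\geq 2$ we have $N-1\geq N/2$, hence $\sqrt{\tr\Sigma^2/(N-1)}\leq\sqrt{2\tr\Sigma^2/N}$; moreover $\norm{\Sigma}_\infty\leq\norm{\Sigma}_2=\sqrt{\tr\Sigma^2}$, so that $\norm{\Sigma}_\infty/(N-1)\leq 2\sqrt{\tr\Sigma^2}/N\leq\sqrt{2\tr\Sigma^2/N}$, the last inequality again using $N\geq 2$. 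Writing $\sigma:=\sqrt{2\tr\Sigma^2/N}$ and using $u\geq 1$ (so $\sqrt u\leq u$), the upper deviation term is at most $2\sigma\sqrt u + 2\sigma u \leq 4\sigma u$ and the lower one at most $2\sigma\sqrt u\leq 4\sigma u$. A union bound over the two one-sided events then yields $\prob{|\wt{Z}^{(1)}-\tr\Sigma|\geq 4\sigma u}\leq 2e^{-u}$, which is exactly the asserted inequality.

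The only step that requires care is the constant bookkeeping in the last part --- matching the ``Bernstein'' term $\norm{\Sigma}_\infty u/(N-1)$ and the ``Gaussian'' term $\sqrt{\tr\Sigma^2/(N-1)}\,\sqrt u$ to a single multiple of $\sqrt{2\tr\Sigma^2/N}\,u$ --- but this is routine once one uses $N\geq 2$, $\norm{\Sigma}_\infty\leq\norm{\Sigma}_2$, and $u\geq 1$. The distributional identification via Cochran's theorem and the appeal to the Laurent--Massart tail bounds are both classical and carry no difficulty.
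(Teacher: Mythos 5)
Your proof is correct and is essentially the paper's argument: both reduce $(N-1)\wt{Z}^{(1)}$ to a Gaussian quadratic form with squared Hilbert--Schmidt norm $(N-1)\tr\Sigma^2$ and operator norm $\norm{\Sigma}_\infty$, and then apply the Laurent--Massart-type concentration bound (the paper's Lemma on Gaussian norm concentration) with the same bookkeeping $\sqrt{u}\leq u$, $\norm{\Sigma}_\infty\leq\norm{\Sigma}_2$, $N-1\geq N/2$. The only cosmetic difference is the reduction step: you pass through a Helmert/Cochran orthogonal transformation to obtain $N-1$ i.i.d.\ $\cN(0,\Sigma)$ vectors and diagonalise, whereas the paper applies the concentration lemma directly to the stacked centred vector, whose covariance is the Kronecker product $\Gamma\otimes\Sigma$ with $\Gamma$ the centring projection.
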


\begin{proof}
  Let $\bm{X} = (\wt{X}_1 - \wt{\mu},\ldots, \wt{X}_N-\wt{\mu}) \in \mbr^{dN}$. Then $\bm{X}$ is a centred Gaussian vector with covariance matrix $\bm{\Sigma} := \Gamma \otimes \Sigma$ where $\Gamma =I_N - \frac{1}{N}\bm{1}_N\bm{1}_N^T \in \mbr^{N\times N}$, $\bm{1}_N = (1,\ldots, 1) \in \mbr^N$ and $\otimes$ denotes the Kronecker product.
  Note that it holds $\tr \Gamma=(N-1), \bm{\Sigma}^2 = \Gamma^2 \otimes \Sigma^2 = \Gamma \otimes \Sigma^2$, $\tr{\bm{\Sigma}} = \tr \Gamma \tr \Sigma = (N-1) \tr \Sigma$, and $\tr(\bm{\Sigma}^2) =(N-1) \tr \Sigma^2$.
  Then, according to Lemma~\ref{lem:concnorm2gauss}, for $u \geq 1$, with probability greater than $1-2e^{-u}$:
  \begin{align*}
    \|\bm{X}\|_2^2 & \leq \tr \bm{\Sigma} +2 \sqrt{\tr \bm{\Sigma}^2u} + 2\|\bm{\Sigma}\|_{\infty}u \leq (N-1) \tr \Sigma + 4 \sqrt{(N-1)\tr \Sigma^2}u\,, \\
        \|\bm{X}\|_2^2 & \geq \tr \bm{\Sigma} -2 \sqrt{\tr \bm{\Sigma}^2u} \geq (N-1) \tr \Sigma - 2 \sqrt{(N-1)\tr \Sigma^2}u\,.
  \end{align*}
  We have used that $\sqrt{u} \leq u$ for $u\geq 1$. We conclude by remarking that $\|\bm{X}\|_2^2 = (N-1) \wt{Z}^{(1)}$.
\end{proof}

Following \citet{BlaFer23}, we can estimate $\norm{\Sigma}_2 = \sqrt{\tr \Sigma^2}$ using the following
U-statistic, which is an unbiased estimator of $\tr \Sigma^2$:
{\begin{equation} \label{eq:def_t_trsigma2}
(\wt{Z}^{(2)})^2 :=  \frac{1}{4N(N-1)(N-2)(N-3)} \sum_{i \neq j \neq k \neq \ell } \inner{ \wt{X}_i-\wt{X}_k ,\wt{X}_j- \wt{X}_\ell}^2\,.
\end{equation}}
\begin{proposition}[\citealp{BlaFer23}, Prop. 12] \label{prop:conc_sqrt_t_trsigma2}
  Assume (\GS) holds and $N\geq 4$.
    Then for all $u \geq 0 $:
\begin{equation}\label{eq:conc_sqrt_t_trsigma2}
      \prob{ \abs{\wt{Z}^{(2)} - \sqrt{\tr \Sigma^2} } \geq  30 \sqrt{\frac{\tr \Sigma^2}{N}}u^2 }   \leq e^4e^{-u}\,.
\end{equation}
\end{proposition}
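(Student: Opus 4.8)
We sketch a direct argument (the statement is also Proposition~12 of \citet{BlaFer23}, and may simply be cited).

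The plan is to first pass to the ``squared'' statement and then control the U-statistic fluctuation by a Hoeffding decomposition. Since every occurrence of the data in $(\wt Z^{(2)})^2$ enters through differences $\wt X_i - \wt X_j$, one may assume $\mu = 0$, so that $Y_i := \wt X_i \overset{\text{i.i.d.}}{\sim} \cN(0,\Sigma)$; one may also assume $\tr \Sigma^2 > 0$ and, since $e^4 e^{-u} \geq 1$ for $u \leq 4$, restrict to $u > 4$ when proving the bound. Next, because $\wt Z^{(2)} \geq 0$ by definition,
\[
  \abs[1]{\wt Z^{(2)} - \sqrt{\tr \Sigma^2}} \;=\; \frac{\abs[1]{(\wt Z^{(2)})^2 - \tr \Sigma^2}}{\wt Z^{(2)} + \sqrt{\tr \Sigma^2}} \;\leq\; \frac{\abs[1]{(\wt Z^{(2)})^2 - \tr \Sigma^2}}{\sqrt{\tr \Sigma^2}},
\]
so it is enough to show $\prob[1]{\abs[1]{(\wt Z^{(2)})^2 - \tr \Sigma^2} \geq 30\,(\tr \Sigma^2)\,u^2/\sqrt{N}} \leq e^4 e^{-u}$.

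The quantity $(\wt Z^{(2)})^2$ is a (non-degenerate) U-statistic of order $4$ in $Y_1,\dots,Y_N$ with kernel $g(y_1,y_2,y_3,y_4) = \tfrac14 \langle y_1 - y_3, y_2 - y_4\rangle^2$ and $\e{g} = \tr \Sigma^2$. I would run its Hoeffding decomposition, $(\wt Z^{(2)})^2 - \tr \Sigma^2 = U_1 + U_2 + U_3 + U_4$ with $U_m$ a completely degenerate U-statistic of order $m$, and bound each $U_m$ separately. The projection kernels are elementary Gaussian integrals: a short computation gives the first projection $\e{g(y,Y_2,Y_3,Y_4)} = \tfrac12\paren{y^\top \Sigma y + \tr \Sigma^2}$, so that $U_1$ is, up to a constant, the empirical mean of the i.i.d.\ centred variables $Y_i^\top \Sigma Y_i - \tr \Sigma^2$ (a weighted $\chi^2$ with weights at most $\norm{\Sigma}_\infty^2$); the order-$2$ kernel comes out as a combination of the bilinear form $Y_i^\top \Sigma Y_j$ and of $\langle Y_i, Y_j\rangle^2 - Y_i^\top \Sigma Y_i - Y_j^\top \Sigma Y_j + \tr \Sigma^2$, and likewise at orders $3$ and $4$. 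The key structural point, exploited throughout, is that the summation over \emph{distinct} indices leaves no ``diagonal'' terms of the form $\norm{Y_i}^4$, so every projection kernel involves $\Sigma$ only through $\Sigma, \Sigma^2,\dots$ and never through $\tr \Sigma$; this is what keeps all the pieces at the desired scale.

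It then remains to bound the tail of each $U_m$, crudely but uniformly. For $U_1$ this is Bernstein's inequality for a sum of $N$ i.i.d.\ sub-exponential variables: a deviation of order $\sqrt{\tr \Sigma^4/N}\,\sqrt{u} + \norm{\Sigma}_\infty^2\, u/N$ with probability $\geq 1 - 2e^{-u}$, which, using $\tr \Sigma^4 \leq \norm{\Sigma}_\infty^2 \tr \Sigma^2 \leq (\tr \Sigma^2)^2$, $\norm{\Sigma}_\infty \leq \sqrt{\tr \Sigma^2}$, $N \geq 4$ and $u > 4$, lies well below $(\tr \Sigma^2)\,u^2/\sqrt N$. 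For $U_2, U_3, U_4$ I would invoke the tail bounds for canonical (degenerate) U-statistics with Gaussian-chaos kernels (Gin\'e--Lata\l a--Zinn, Adamczak, or Lata\l a's chaos inequality), which reduce each deviation to a short list of Hilbert--Schmidt and operator norms of the associated tensors, each bounded by a simple product of $\tr \Sigma^2$, $\norm{\Sigma}_\infty$ and a power of $N^{-1}$. The heaviest tail is that of the fully canonical order-$4$ part $U_4$, of the form $\exp\paren[1]{-c\,(tN^2/\tr \Sigma^2)^{1/2}}$ --- precisely the source of the $u^2$ dependence --- but even $U_4$ sits at scale $(\tr \Sigma^2)/N^2 \leq (\tr \Sigma^2)/\sqrt N$. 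Summing the four contributions via a union bound and collecting constants then yields the claim, the scale constant collapsing to $30$ and the prefactor $e^4$ absorbing the union over the four Hoeffding levels together with the chaos-inequality prefactors.

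The main obstacle is organisational rather than conceptual: carrying the numerous lower-order terms of the order-$4$ Hoeffding decomposition and the associated tensor-norm estimates through cleanly enough to certify the explicit constants $30$ and $e^4$, uniformly over all $\Sigma$ and all $N \geq 4$. Since the statement is exactly Proposition~12 of \citet{BlaFer23}, one may alternatively cite it and skip the above.
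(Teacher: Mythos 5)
The paper does not prove this proposition at all --- it is imported verbatim from \citet{BlaFer23} (their Proposition 12), exactly as your closing remark suggests, so simply citing it is precisely what the paper does. Your sketched direct argument (reduction to the squared statistic, Hoeffding decomposition of the order-$4$ U-statistic, Bernstein for the linear part and Gaussian-chaos/degenerate-U-statistic tails for the rest) is a plausible outline of how such a bound is obtained, but as written it stops short of certifying the explicit constants $30$ and $e^4$; since the paper supplies no proof to compare against, the citation route is the faithful one.
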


{\bf Proof of Prop.~\ref{prop:estnrisk}}
Proposition~\ref{prop:estnrisk} is a consequence of the above Proposition~\ref{prop:tr_sig}, using the union bound over $k \in \intr{B}$. \qed

Propositions~\ref{prop:tr_sig} and \ref{prop:conc_sqrt_t_trsigma2} can now be used to handle the plug-in
versions of the quantities considered in Section~\ref{se:knownvar} when covariances are unknown:

\begin{proposition} \label{prop:pluginall}
  Assume (\GS) holds, let $c\geq 1$ be a fixed number and let $\alpha\in (0,1/3)$.
  Assume that we have estimates $\wt{Z}_1^{(1)}$ for $\norm{\Sigma_1}_1$ and $\wt{Z}_{k}^{(2)}$ for $\norm{\Sigma_k}_2$, $k \in \intr{B}$, (depending on the independent ``tilde'' data only) such that with probability $1-\alpha$ it holds
  simultaneously for some constants $\eta_1,\eta_2 \in (0,1)$:
  \begin{align}
    \label{eq:controlZ1}
    \abs{\wt{Z}_{1}^{(1)} - \norm{\Sigma_1}_1} & \leq \eta_1 \norm{\Sigma_1}_1;\\
    \label{eq:controlZ2}
    \abs{\wt{Z}_{k}^{(2)} - \norm{\Sigma_k}_2} & \leq \eta_2 \norm{\Sigma_k}_2, \text{ for all }
    k\in\intr{B}. 
  \end{align}
  Consider the following plug-in versions of the quantities appearing in~\eqref{eq:defTk}, \eqref{eq:setW}:
    \begin{equation}
    \label{eq:setWandtkdtilde2}
    \wt{W}_{(\cteW)} :=   \set{ k \in \intr{B} : \frac{\wt{Z}_{k}^{(2)}}{N_k} \leq \cteW \frac{\wt{Z}_{1}^{(2)}}{N_1}}, \qquad
    \dwt{T{\text{\tiny}}}^{(\tau)}_k := \ind{ \wt{U}_k \leq \tau \frac{\wt{Z}_{1}^{(1)}}{N_1}}.
  \end{equation}
  Then, defining
  \[
    \dwt{V}_{\tau,\cteW} := \set{ k \in \wt{W}_{(\cteW)}: \dwt{T{\text{\tiny}}}_k^{(\tau)}=1},
  \]
  with probability at least $1-3\alpha$ (with respect to the ``tilde'' data) it holds
  \begin{equation}
    \label{eq:empiricalsandwich}
     V_{\tau_-,\cteW /\beta} \subseteq \dwt{V}_{\tau,\cteW} \subseteq V_{\tau_+,\cteW\beta},
  \end{equation}
  where $\tau_- := \paren[1]{\sqrt{\tau(1 - \eta_1)} -  \sqrt{(\cteW/\beta)\taumino}}_+^2$,
    $\tau_+ := \paren[1]{\sqrt{\tau(1 + \eta_1)} + \sqrt{\beta\cteW\taumino}}^2$,
  with $\taumino = 64 (\log 8B\alpha^{-1})/\sqrt{\deamm_1}$,
  and $\beta := (1+\eta_2)/(1-\eta_2)$.
\end{proposition}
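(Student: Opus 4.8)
The plan is to reduce the statement to a deterministic claim on a single high-probability event and then chase set inclusions. First I would condition on the ``tilde'' sample and work on the intersection $\Omega^*$ of two events: (i) the event supplied by the hypothesis, on which \eqref{eq:controlZ1}--\eqref{eq:controlZ2} hold; (ii) the event on which, for every $k\in\intr{B}$, the test statistic $\wt{U}_k$ concentrates as in Assumption~(\TSC) / Proposition~\ref{prop:simplifiedtestresult}, applied at confidence level $\alpha/B$ and combined by a union bound over $k$ (so the exponent carries $\log(8B\alpha^{-1})$, matching $\tau^\circ_{\min}$). Each of (i), (ii) has probability at least $1-\alpha$, hence $\prob{\Omega^*}\ge 1-2\alpha\ge 1-3\alpha$; all of the following is argued pointwise on $\Omega^*$.

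Next I would sandwich the estimated whittling set. Using \eqref{eq:controlZ2}, the ratio $\wt{Z}_k^{(2)}/N_k$ equals $\norm{\Sigma_k}_2/N_k$ up to the multiplicative factors $1\pm\eta_2$, so inserting these bounds into the defining inequality of $\wt{W}_{(\cteW)}$ gives at once $W_{(\cteW/\beta)}\subseteq\wt{W}_{(\cteW)}\subseteq W_{(\cteW\beta)}$ with $\beta=(1+\eta_2)/(1-\eta_2)$, where $W_{(\cdot)}$ is as in \eqref{eq:setW}. In parallel, since $\norm{\Sigma_1}_1/N_1=\nrisk_1$, \eqref{eq:controlZ1} places the plug-in threshold in $\tau\wt{Z}_1^{(1)}/N_1\in[\tau(1-\eta_1)\nrisk_1,\,\tau(1+\eta_1)\nrisk_1]$; therefore on $\Omega^*$,
\[
\{\wt{T}^{(\tau(1-\eta_1))}_k=1\}\ \subseteq\ \{\dwt{T}^{(\tau)}_k=1\}\ \subseteq\ \{\wt{T}^{(\tau(1+\eta_1))}_k=1\},
\]
in the notation of Proposition~\ref{prop:simplifiedtestresult}. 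I can then invoke the type-I control of that proposition (in contrapositive form) at threshold $\tau(1+\eta_1)$ for the ``upper'' inclusion, and its type-II control at threshold $\tau(1-\eta_1)$ for the ``lower'' inclusion.

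Then I would control $\tau^k_{\min}$ on the whittled sets. Rewriting $\tau^k_{\min}=32\log(8B\alpha^{-1})\,\invnrisk_1(\norm{\Sigma_1}_2/N_1+\norm{\Sigma_k}_2/N_k)$ and using $\sqrt{\deamm_1}=\norm{\Sigma_1}_1/\norm{\Sigma_1}_2$ one obtains $\tau^k_{\min}\le\tfrac12(1+\rho)\,\tau^\circ_{\min}$ whenever $\norm{\Sigma_k}_2/N_k\le\rho\,\norm{\Sigma_1}_2/N_1$, hence $\tau^k_{\min}\le\cteW\beta\,\tau^\circ_{\min}$ on $W_{(\cteW\beta)}$ and $\tau^k_{\min}\le(\cteW/\beta)\,\tau^\circ_{\min}$ on $W_{(\cteW/\beta)}$. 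Assembling the pieces: if $k\in\dwt{V}_{\tau,\cteW}$ then $k\in\wt{W}_{(\cteW)}\subseteq W_{(\cteW\beta)}$ and $\wt{T}^{(\tau(1+\eta_1))}_k=1$, so the contrapositive of the type-I control plus the $\tau^k_{\min}$-bound forces $\norm{\Delta_k}^2\le(\sqrt{\tau(1+\eta_1)}+\sqrt{\tau^k_{\min}})^2\nrisk_1\le\tau_+\nrisk_1$, i.e.\ $k\in V_{\tau_+,\cteW\beta}$. Conversely, if $k\in V_{\tau_-,\cteW/\beta}$ then $k\in W_{(\cteW/\beta)}\subseteq\wt{W}_{(\cteW)}$ and $\norm{\Delta_k}^2\le\tau_-\nrisk_1\le(\sqrt{\tau(1-\eta_1)}-\sqrt{\tau^k_{\min}})^2\nrisk_1$ (by monotonicity of $t\mapsto(\sqrt a-\sqrt t)_+^2$ and the $\tau^k_{\min}$-bound), so the type-II control yields $\wt{T}^{(\tau(1-\eta_1))}_k=1$, hence $\dwt{T}^{(\tau)}_k=1$ and $k\in\dwt{V}_{\tau,\cteW}$. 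The degenerate cases ($\tau_-=0$, or $k=1$) are covered by the conventions $V_0=\{1\}$, $\wt{U}_1:=0$ and the trivial fact $1\in\wt{W}_{(\cteW)}$ (as $\cteW\ge1$). This establishes \eqref{eq:empiricalsandwich} on $\Omega^*$.

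The hard part will be the two-sided bookkeeping: the type-I branch governs the \emph{upper} inclusion and must be matched with the \emph{larger} whittling constant $\cteW\beta$ and threshold $\tau(1+\eta_1)$, while the type-II branch governs the \emph{lower} inclusion and must be matched with the \emph{smaller} constant $\cteW/\beta$ and threshold $\tau(1-\eta_1)$, and the two perturbation sources ($\eta_1$ in the threshold, $\eta_2$ in the whittling) must be kept from interfering so that the bounds collapse exactly into the stated $\tau_\pm$. The one genuinely delicate inequality is $\tau^k_{\min}\le(\cteW/\beta)\tau^\circ_{\min}$ on $W_{(\cteW/\beta)}$, which is clean only when $\cteW\ge\beta$ (i.e.\ $\eta_2$ small relative to $\cteW$); in the borderline regime this is absorbed into the numerical constants, exactly as is done where the proposition is applied with $3\cteW$ in place of $\cteW$ in Theorem~\ref{prop:fullemp}. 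Everything else is elementary algebra on top of the Gaussian concentration already recorded in Proposition~\ref{prop:simplifiedtestresult} and in the hypotheses.
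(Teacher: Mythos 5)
Your argument is correct and essentially reproduces the paper's proof: the same two sandwiches ($W_{(\cteW/\beta)}\subseteq\wt{W}_{(\cteW)}\subseteq W_{(\cteW\beta)}$ from \eqref{eq:controlZ2}, and $\wt{T}^{((1-\eta_1)\tau)}_k\leq\dwt{T}^{(\tau)}_k\leq\wt{T}^{((1+\eta_1)\tau)}_k$ from \eqref{eq:controlZ1}) combined with the known-covariance test guarantees, the only cosmetic difference being that the paper invokes Corollary~\ref{cor:testcor} twice as a black box (hence $1-3\alpha$), whereas you inline its single concentration event, which if anything sharpens the probability to $1-2\alpha$. The $\cteW\geq\beta$ fine print you flag for $\taumink\leq(\cteW/\beta)\taumino$ is equally implicit in the paper's proof (Corollary~\ref{cor:testcor} is applied there with whittling constant $\cteW/\beta$, whose statement requires it to be at least $1$), and as you note it is harmless in the downstream application with $3\cteW$.
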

\begin{proof}
  Assume that \eqref{eq:controlZ1}-\eqref{eq:controlZ2} are satisfied. Then $W_{(\cteW/\beta)} \subseteq \wt{W}_{(\cteW)} \subseteq W_{(\beta \cteW)}$,
  with $\beta := (1 + \eta_2)/(1-\eta_2)$. Furthermore, recalling
  $ \wt{T}_k^{(\tau)} := \ind[1]{ \wt{U}_k \leq \tau \nrisk_1}, $
  then we have $\wt{T}_k^{((1-\eta_1)\tau)} \leq \dwt{T{\text{\tiny}}}_k^{(\tau)} \leq \wt{T}_k^{((1+\eta_1)\tau)}$; therefore
  \[
    \set{ k \in {W}_{(\cteW/\beta)}: \wt{T}^{((1-\eta_1)\tau)}_k=1} =: \wt{V}_- \subseteq \dwt{V} \subseteq
    \wt{V}_+:= \set{ k \in W_{(\beta \cteW)}: \wt{T}_k^{((1+\eta_1)\tau)}=1}.
  \]
  We can apply Corollary~\ref{cor:testcor} separately to $\wt{V}_- = \wt{V}_{(1-\eta_1)\tau, \cteW/\beta}$ and $\wt{V}_+= \wt{V}_{(1+\eta_1)\tau, \beta\cteW}$ and
   we get that with probability
    $1-3\alpha$ (accounting for the union bound together with event~\eqref{eq:controlZ1}-\eqref{eq:controlZ2}),
    \eqref{eq:empiricalsandwich} holds.
\end{proof}

{\bf Proof of Theorem~\ref{prop:fullemp}}.
  From Proposition~\ref{prop:tr_sig} with $u=\log(4B\alpha^{-1})$ and a union bound over tasks,
   with probability at least
  $1-\alpha/2$ it holds
  \begin{equation} \label{eq:zk1}
    \forall k \in \intr{B}: \qquad \abs{\wt{Z}_k^{(1)} - \norm{\Sigma_k}_1} \leq
    \norm{\Sigma_k}_2 \frac{\sqrt{32} \log (4B\alpha^{-1})}{\sqrt{N_k}}
     \leq \frac{1}{\sqrt{a}} \norm{\Sigma_k}_2,
\end{equation}
where for the last inequality we used the assumption $N_k \geq a(4 + \log(2B\alpha^{-1}))^4 \geq a(4+\log 6)^2(4+\log(2B\alpha^{-1}))^2 \geq 32a (\log(4B\alpha^{-1}))^2$ (also using $\alpha \leq 1/3$ in that estimate).

Similarly, from Proposition~\ref{prop:conc_sqrt_t_trsigma2} with $u=(4 +\log(2B\alpha^{-1}))$,
 with probability at least
 $1-\alpha/2$ it holds
   \begin{equation}
    \label{eq:zk2}
    \forall k \in \intr{B}: \qquad \abs{\wt{Z}_k^{(2)} - \norm{\Sigma_k}_2} \leq
    30 \norm{\Sigma}_2 \frac{(4 + \log(2B\alpha^{-1}))^2}{\sqrt{N_k}}
    \leq \frac{30}{\sqrt{a}} \norm{\Sigma_k}_2.
   \end{equation}
Therefore, conditions~\eqref{eq:controlZ1}-\eqref{eq:controlZ2} are satisfied simultaneously with probability $1-\alpha$, with $\eta_1 = \frac{1}{\sqrt{a}} \frac{1}{\sqrt{\deamm_1}}$ and $\eta_2 = \frac{30}{\sqrt{a}}$. Let $\beta = \frac{1+\eta_2}{1-\eta_2}$, as $a\geq 4400$, then $\beta \in (1, 3)$.

We apply Proposition~\ref{prop:pluginall}, but using the values $(\wt{\tau},3\cteW)$ given by~\eqref{eq:altplugin} in place of $(\tau,\cteW)$.
As a result we get with high probability the sandwiching property~\eqref{eq:empiricalsandwich},
\begin{equation}\label{eq:twistedsandwich}
  V_{\wt{\tau}_-,\cteW} \subseteq V_{\wt{\tau}_-,3\cteW/\beta} \subseteq \dwt{V}_{\wt{\tau},3\cteW} \subseteq V_{\wt{\tau}_+,3\beta\cteW}\subseteq V_{\wt{\tau}_+},
\end{equation}
denoting $\wt{\tau}_\pm$ the formula for $\tau_\pm$ of Proposition~\ref{prop:pluginall} where we replace $(\tau,\cteW)$ by $(\wt{\tau},3\cteW)$. We proceed to get bounds for $\wt{\tau}_\pm = (\sqrt{\wt{\tau}(1 \pm \eta_1)}\pm\sqrt{3\beta^{\pm 1}\cteW\tau^\circ_{\min}})^2_+$.

Let us start with bounding the estimation error of $\deamm_1$ by $\wt{\deamm_1}$:
it holds
\[
  \sqrt{\wt{\deamm_1}} = \frac{N_1 \wt{\rnrisk}^2_1}{\wt{Z}_1^{(2)}} = \frac{\wt{Z}_1^{(1)}}{\wt{Z}_1^{(2)}}
  \leq \frac{1+\eta_1}{1-\eta_2} \sqrt{\deamm_1} \leq 2 \sqrt{\deamm_1},
  \]
  where the last inequality holds if $a\geq 4400$. We deduce
  \[
    \wt{\tau}_{\min}^\circ = \frac{64 \log ( 8B\alpha^{-1})}{\sqrt{\wt{\deamm_1}}}
    \geq \frac{1}{2} \cdot \frac{64 \log ( 8B\alpha^{-1})}{\sqrt{{\deamm_1}}} = \tau_{\min}^\circ/2,
    \]
    as defined in Proposition~\ref{prop:pluginall}. Furthermore, we have for $\eta_1 = \frac{1}{\sqrt{a\deamm_1}} \leq \frac{1}{\sqrt{a}}$ and $a\geq 4400$:
    \[
      \frac{1}{1-\eta_1} = 1 + \frac{\eta_1}{1-\eta_1} \leq 1 + \frac{1/\sqrt{a}}{1-1/\sqrt{a}} \frac{1}{\sqrt{\deamm_1}} \leq 1 + \frac{1}{60 \sqrt{\deamm_1}} \leq 1 + \frac{1}{30 \sqrt{\wt{\deamm_1}}}.
      \]
  Using the previous estimates we obtain
  \[
    \wt{\tau} := \paren{1 + \frac{1}{30\sqrt{\wt{\deamm_1}}}}\paren{\sqrt{\tau}+ \sqrt{6\cteW\wt{\tau}_{\min}^{\circ}}}^2 \geq \frac{1}{1-\eta_1} \paren{\sqrt{\tau} + \sqrt{3\cteW{\tau}_{\min}^\circ}}^2.
    \]
    It follows :
    \[
      \wt{\tau}_-  = \paren[1]{\sqrt{(1-\eta_1)\wt{\tau}} - \sqrt{3\beta^{-1}\cteW\tau_{\min}^\circ}}^2_+\geq  \paren[1]{\sqrt{(1-\eta_1)\wt{\tau}} - \sqrt{3\cteW\tau_{\min}^\circ}}^2_+  \geq \tau.
      \]
Now to get an upper bound on $\wt{\tau}_+$, similarly to above we have
\[
  \sqrt{\wt{\deamm_1}} \geq \frac{1-\eta_1}{1+\eta_2} \sqrt{\deamm_1} \geq  \frac{\sqrt{\deamm_1}}{2},
  \]
and thus $\wt{\tau}_{\min}^\circ \leq 2 \tau^\circ_{\min}$.
It follows
\begin{equation*}
	\wt{\tau} = \paren{1 + \frac{1}{30\sqrt{\wt{\deamm_1}}}}\paren{\sqrt{\tau}+ \sqrt{6\cteW\wt{\tau}_{\min}^{\circ}}}^2 \leq \paren{1 + \frac{1}{15\sqrt{\deamm_1}}}\paren{\sqrt{\tau}+ 2\sqrt{3\cteW\tau_{\min}^{\circ}}}^2,
\end{equation*}
and then
\begin{align*}
  \wt{\tau}_+  \leq (1+\eta_1)\paren[1]{\sqrt{\wt{\tau}} + \sqrt{3\beta\cteW\tau_{\min}^\circ}}^2
  & \leq \paren{1 + \frac{1}{66\sqrt{{\deamm_1}}}}\paren{1 + \frac{1}{15\sqrt{{\deamm_1}}}}
  (\sqrt{\tau} + 4\sqrt{3\cteW \tau^\circ_{\min}})^2\\
  & = \xi \tau,
\end{align*}
where $\xi := (1+1/(15\sqrt{\deamm_1}))(1+1/(66\sqrt{\deamm_1}))(1 + 4\sqrt{3\cteW \tau^\circ_{\min}/\tau})^2.$

With these estimates in hand the sandwiching property~\eqref{eq:twistedsandwich} implies
\[
  V_{\tau,\cteW} \subseteq \dwt{V}_{\wt{\tau},3\cteW} \subseteq V_{\xi\tau}.
\]
We use this property to apply Lemma~\ref{prop:boundSTB1} as earlier, and obtain
\[
  \frac{R_1(\wt{\omvect})}{\nrisk_1} \leq \paren{\frac{1+\eta}{1-\eta}} \cB(\xi\tau,\nu(V_{\tau,\cteW}))
  \leq \paren{1 + \frac{1}{25\sqrt{\min_k \deamm_k}}} \xi \cB\paren{\tau,\nu(V_{\tau,\cteW})}.
  \]
  Elementary estimates
  lead to
  \begin{align*}
    \paren{1 + \frac{1}{25\sqrt{\min_k \deamm_k}}} \xi
    &\leq \paren{1 + \frac{1}{10 \sqrt{\min_k \deamm_k}}}
    \paren{1+\frac{56\sqrt{\cteW\log(8B\alpha^{-1})}}{(\deamm_1)^{\frac{1}{4}}\sqrt{\tau}}}^2.
  \end{align*}
\qed

\subsubsection{Bounded setting}\label{apx:test_BS}
Proposition~\ref{prop:tr_sig_bnd} and Proposition~\ref{prop:conc_sqrt_t_trsigma2_bnd} give concentration bounds for $\wt{Z}^{(1)}$ and $\wt{Z}^{(2)}$ in bounded setting.

\begin{proposition}\label{prop:tr_sig_bnd}
Assume (\BS) holds. For $u\geq 1$, if $N\geq 2$
\begin{equation*}
  \prob{\abs{\wt{Z}^{(1)} - \tr \Sigma } \geq  2\sqrt{2\frac{\var{\|X_1-\mu\|^2}}{N}u } + 32\frac{M^2u}{N} } \leq 4 e^{-u}\,.
\end{equation*}

\end{proposition}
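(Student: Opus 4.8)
The plan is to replace the Gaussian quadratic–form argument used for Proposition~\ref{prop:tr_sig} by a combination of a scalar Bernstein inequality and a bounded–differences estimate. I would start from the algebraic identity
\[
  \wt{Z}^{(1)}=\frac{1}{N-1}\sum_{i=1}^{N}\norm{\wt{X}_i-\wt{\mu}}^2=\frac{N}{N-1}\paren{A-D},
  \qquad A:=\frac1N\sum_{i=1}^{N}\norm{\wt{X}_i-\mu}^2,\quad D:=\norm{\wt{\mu}-\mu}^2,
\]
which follows from $\sum_i\norm{\wt{X}_i-\wt{\mu}}^2=\sum_i\norm{\wt{X}_i-\mu}^2-N\norm{\wt{\mu}-\mu}^2$. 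Since $\e{A}=\tr\Sigma$ and $\e{D}=\tr\Sigma/N$, this gives
\[
  \wt{Z}^{(1)}-\tr\Sigma=\frac{N}{N-1}\paren{A-\tr\Sigma}-\frac{N}{N-1}\paren{D-\tfrac{\tr\Sigma}{N}},
\]
so it suffices to control $A-\tr\Sigma$ and $D-\tr\Sigma/N$ separately and then use $1\le N/(N-1)\le 2$.

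First I would bound $A-\tr\Sigma$. The variables $\norm{\wt{X}_i-\mu}^2$ are i.i.d.\ with mean $\tr\Sigma$ and variance $\var{\norm{X_1-\mu}^2}$, and, because $\mbp$ is supported in the ball of radius $M$ centred at $0$ (hence so is $\mu$ by convexity), they lie in $[0,4M^2]$ almost surely, so $\abs{\norm{\wt{X}_i-\mu}^2-\tr\Sigma}\le 4M^2$. Bernstein's inequality then yields, for $u\ge1$, with probability at least $1-2e^{-u}$,
\[
  \abs{A-\tr\Sigma}\le\sqrt{\frac{2\var{\norm{X_1-\mu}^2}u}{N}}+\frac{8M^2u}{3N}.
\]

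Next I would control $D=\norm{\wt{\mu}-\mu}^2$ through the concentration of $\norm{\wt{\mu}-\mu}$. Changing one observation moves $\wt{\mu}$ by at most $2M/N$, so the map $(\wt{X}_1,\dots,\wt{X}_N)\mapsto\norm{\wt{\mu}-\mu}$ has bounded differences $\le 2M/N$; McDiarmid's inequality, together with $\e{\norm{\wt{\mu}-\mu}}\le\sqrt{\e{\norm{\wt{\mu}-\mu}^2}}=\sqrt{\tr\Sigma/N}$, gives $\norm{\wt{\mu}-\mu}\le\sqrt{\tr\Sigma/N}+M\sqrt{2u/N}$ with probability at least $1-e^{-u}$, whence $D\le 2\tr\Sigma/N+4M^2u/N$ on that event. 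Since $\tr\Sigma\le 4M^2$ and $u\ge1$, this yields $D-\tr\Sigma/N\le 8M^2u/N$, while $D-\tr\Sigma/N\ge-\tr\Sigma/N\ge-4M^2u/N$ holds unconditionally because $D\ge0$; hence $\abs{D-\tr\Sigma/N}\le 8M^2u/N$.

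Finally I would take a union bound — total failure probability at most $3e^{-u}\le 4e^{-u}$ — and substitute into the decomposition. The prefactor on the Gaussian–type term is $\tfrac{N}{N-1}\sqrt{2\var{\norm{X_1-\mu}^2}u/N}\le 2\sqrt{2\var{\norm{X_1-\mu}^2}u/N}$ (equality at $N=2$, which is exactly where the constant $2$ in the statement comes from), and the linear–in–$u$ terms add up to at most $\tfrac{N}{N-1}\bigl(\tfrac83+8\bigr)\tfrac{M^2u}{N}\le\tfrac{64}{3}\tfrac{M^2u}{N}\le\tfrac{32M^2u}{N}$, using $1/(N-1)\le 2/N$; this is the claimed inequality. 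The one genuinely non-routine point, compared with the Gaussian case, is that one should \emph{not} apply a $U$-statistic Bernstein bound to $\wt{Z}^{(1)}$ directly: the degenerate second-order part of its Hoeffding decomposition, $-\binom{N}{2}^{-1}\sum_{i<j}\inner{\wt{X}_i-\mu,\wt{X}_j-\mu}$, would contribute a term of order $\sqrt{\tr\Sigma^2\,u}/N$, which for small $u$ is not absorbed by $\sqrt{\var{}u/N}+O(M^2u/N)$. Isolating $D=\norm{\wt{\mu}-\mu}^2$ and exploiting the sub-Gaussian, $M/\sqrt N$-scale concentration of $\norm{\wt{\mu}-\mu}$ is precisely what turns that contribution into a clean $O(M^2u/N)$ remainder; the rest is bookkeeping of constants.
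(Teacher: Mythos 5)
Your proposal is correct and follows essentially the same route as the paper's proof: you split $\wt{Z}^{(1)}$ into the sum of $\norm{\wt{X}_i-\mu}^2$ (controlled by Bernstein's inequality) minus the $\norm{\wt{\mu}-\mu}^2$ term (controlled via McDiarmid's inequality applied to $\norm{\wt{\mu}-\mu}$ together with $\e{\norm{\wt{\mu}-\mu}}\leq\sqrt{\tr\Sigma/N}$), then combine with a union bound and the factor $N/(N-1)\leq 2$. The only differences are cosmetic (your one-sided treatment of the $D$-term using $D\geq 0$, and slightly tighter constant bookkeeping), so there is nothing to correct.
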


\begin{proof}
Let us first remark that:
\begin{equation*}
  \wt{Z}^{(1)} = \frac{1}{N-1} \sum_{i=1}^{N} \norm{ \wt{X}_i - \mu }^2 - \frac{N\|\wt{\mu} - \mu\|^2}{N-1}
\end{equation*}
Using Bernstein's inequality (Lemma~\ref{lem:bernstein}), with probability greater than $1-2e^{-u}$:
\begin{equation*}
\abs{\sum_{i=1}^{N} \norm{ \wt{X}_i - \mu }^2 -N\tr \Sigma } \leq \sqrt{2N \var{\|X_1- \mu\|^2} u} + 8M^2u\,.
\end{equation*}
Using McDiarmid's inequality \citep{McDiarmid1998,Bou04}, for $f(x_1, \ldots, x_N) = \|N^{-1} \sum_{i=1}^{N} (x_i -\mu)\|$, with probability greater than $1-2e^{-u}$:
  \begin{multline*}
	-\frac{4M^2}{N} \leq\|\wt{\mu} - \mu\|^2 - \frac{\tr \Sigma}{N} \leq \paren{\e{ \|\wt{\mu} - \mu\|} + \sqrt{\frac{2M^2u}{N}} }^2 - \frac{\tr \Sigma}{N} \\
	\leq \paren{\e{ \|\wt{\mu} - \mu\|}^2 - \frac{\tr \Sigma}{N}} +2\e{ \|\wt{\mu} - \mu\|} \sqrt{\frac{2M^2u}{N}}   + \frac{2M^2u}{N} \leq  8\frac{M^2u}{N}\,,
\end{multline*}
where we have used successively Jensen's inequality, that $\tr \Sigma \leq 4M^2$ and $u\geq 1$. It only stays to use that $ (N-1)^{-1} \leq 2N^{-1}$ for $N\geq 2$ and a triangle inequality to conclude the proof, with probability at least $1-4e^{-u}$:
\begin{align*}
	\abs{\wt{Z}^{(1)} - \tr \Sigma} &\leq \frac{\sqrt{2N \var{\|X_1- \mu\|^2} u}}{N-1} + \frac{8M^2u}{N-1} + \frac{8M^2u}{N-1} \\
	&\leq  2\sqrt{2\frac{\var{\|X_1-\mu\|^2}}{N}u } + 32\frac{M^2u}{N}  \,.
\end{align*}
\end{proof}

Similarly as in the Gaussian setting, we can estimate $\|\Sigma\|_2$ using the U-statistic \eqref{eq:def_t_trsigma2}:
\begin{proposition}[\citealp{BlaFer23}, Prop. 13] \label{prop:conc_sqrt_t_trsigma2_bnd}
	Assume (\BS) holds and $N\geq 4$.
	Then for all $u \geq 0 $:
	\begin{equation}\label{eq:conc_sqrt_t_trsigma2_bs}
		\prob{ \abs{\wt{Z}^{(2)} - \sqrt{\tr \Sigma^2} } \geq  12 M^2\sqrt{\frac{u}{N}} } \leq 2e^{-u}\,.
	\end{equation}
\end{proposition}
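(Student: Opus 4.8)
Proposition~\ref{prop:conc_sqrt_t_trsigma2_bnd} is Proposition~13 of \citet{BlaFer23}; here is the strategy I would follow to establish it. The object $(\wt Z^{(2)})^2$ in~\eqref{eq:def_t_trsigma2} is a U-statistic of order $4$ in the i.i.d.\ sample $\wt X_1,\ldots,\wt X_N$, with symmetric kernel proportional to $\inner{x_a-x_b,x_c-x_d}^2$ averaged over the three pairings of its four arguments; it is unbiased for $\tr\Sigma^2=\norm{\Sigma}_2^2$, and under (\BS) its kernel is bounded by $4M^4$ (since $\norm{x_a-x_b}\le 2M$). The plan has two steps: (i) prove a Bernstein-type deviation bound for $(\wt Z^{(2)})^2$ around $\norm{\Sigma}_2^2$ whose leading term carries a variance factor of the correct order; (ii) transfer it to $\wt Z^{(2)}$ itself without creating a spurious $\norm{\Sigma}_2^{-1}$ blow-up.

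For step (i) I would use Hoeffding's decomposition $(\wt Z^{(2)})^2-\tr\Sigma^2 = \tfrac4N\sum_{i=1}^N h_1(\wt X_i) + R_N$, with $h_1$ the first projection and $R_N$ the completely degenerate parts of orders $2,3,4$. A short computation of conditional expectations (expand $\inner{x-\wt X,\wt Y-\wt Y'}^2$ and drop the cross terms using mean-zero independence of the centred variables) gives the clean form $h_1(x)=\tfrac12\paren{(x-\mu)^T\Sigma(x-\mu)-\tr\Sigma^2}$. Hence $|h_1|\le 2M^2\norm{\Sigma}_\infty\le 2M^4$ and, using $(x-\mu)^T\Sigma(x-\mu)\le 4M^2\norm{\Sigma}_\infty$, the key variance bound $\zeta_1:=\var{h_1(\wt X_1)}\le M^2\norm{\Sigma}_\infty\tr\Sigma^2\le M^4\norm{\Sigma}_2^2$. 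Applying Bernstein's inequality to the linear term and the Arcones--Giné tail estimates for degenerate bounded U-statistics to $R_N$ (the order-$2$ part being dominant, of size $\lesssim M^4u/N$, the higher ones smaller for $u\lesssim N$), I expect to reach, for every $u\ge 0$ and with probability at least $1-2e^{-u}$,
\[
\abs{(\wt Z^{(2)})^2-\norm{\Sigma}_2^2}\ \le\ c_1 M^2\norm{\Sigma}_2\sqrt{u/N}+c_2 M^4\,u/N
\]
for absolute constants $c_1,c_2$.

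For step (ii) I would write $\abs{\wt Z^{(2)}-\norm{\Sigma}_2}=\abs{(\wt Z^{(2)})^2-\norm{\Sigma}_2^2}/(\wt Z^{(2)}+\norm{\Sigma}_2)$ and combine the elementary facts $\wt Z^{(2)}+\norm{\Sigma}_2\ge\norm{\Sigma}_2$ and $\abs{\sqrt a-\sqrt b}\le\sqrt{\abs{a-b}}$, splitting on whether $\norm{\Sigma}_2\ge M^2\sqrt{u/N}$. If $\norm{\Sigma}_2$ is the larger, dividing the display by $\norm{\Sigma}_2$ turns $c_1M^2\norm{\Sigma}_2\sqrt{u/N}$ into $c_1M^2\sqrt{u/N}$ and bounds $c_2M^4u/(N\norm{\Sigma}_2)$ by $c_2M^2\sqrt{u/N}$; if instead $\norm{\Sigma}_2$ is small, then $c_1M^2\norm{\Sigma}_2\sqrt{u/N}\le c_1M^4u/N$, so the crude bound $\abs{\wt Z^{(2)}-\norm{\Sigma}_2}\le\sqrt{(c_1+c_2)M^4u/N}=\sqrt{c_1+c_2}\,M^2\sqrt{u/N}$ applies. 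Either way $\abs{\wt Z^{(2)}-\norm{\Sigma}_2}\le CM^2\sqrt{u/N}$; tightening the Bernstein step and optimising the case threshold should bring $C$ down to $12$, and the bound is anyway vacuous once $u\gtrsim N$ since $\wt Z^{(2)}\le 2M^2$ and $\norm{\Sigma}_2\le\tr\Sigma\le 4M^2$.

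I expect step (ii) to be the real crux, and it is the reason the obvious route fails: a bare concentration bound for the order-$4$ U-statistic of the form $M^4\sqrt{u/N}$ — which is all a bounded-differences argument delivers, since the per-coordinate variation of $\wt Z^{(2)}$ is only $O(M^2/\sqrt N)$ — degrades to $M^2(u/N)^{1/4}$ after taking square roots. What saves the rate is precisely the variance computation $\zeta_1\lesssim M^4\norm{\Sigma}_2^2$ of step (i): the factor $\norm{\Sigma}_2$ it produces in the deviation of the square cancels the $\norm{\Sigma}_2$ appearing in the denominator $\wt Z^{(2)}+\norm{\Sigma}_2$, so that the final bound depends on $\Sigma$ only through the harmless estimate $\norm{\Sigma}_2\le 4M^2$.
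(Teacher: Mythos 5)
Your strategy is sound and is essentially the route of the cited source: the paper itself does not prove this statement but quotes it as Proposition~13 of \citet{BlaFer23}, and the two-step scheme you describe --- a variance-sensitive Bernstein/Hoeffding-decomposition bound for the squared U-statistic (your $h_1(x)=\tfrac12\paren{(x-\mu)^T\Sigma(x-\mu)-\tr\Sigma^2}$ and $\var{h_1}\lesssim M^2\norm{\Sigma}_\infty\tr\Sigma^2$ are correct), followed by the square-root transfer with a case split on the size of $\norm{\Sigma}_2$ --- is exactly the device used there and reproduced in this paper for the heavy-tailed analogue (proof of Proposition~\ref{prop:HTconcZ}, via $\abs{\sqrt{(a^2+b)_+}-a}\leq\min\paren{\sqrt{\abs{b}},\,b/a}$, Lemma~15 of \citealp{BlaFer23}). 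Your observation that the factor $\norm{\Sigma}_2$ produced by the variance computation is precisely what prevents the rate from degrading to $(u/N)^{1/4}$ after taking square roots is the right crux; only the numerical constants (the $12$ and the $2e^{-u}$, including the union over the linear and degenerate parts) remain to be tracked, which is bookkeeping.
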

Thanks to these concentration results, we are able to give a bound on the estimation error of the test method for bounded data on the model of Theorem~\ref{prop:fullemp}.
\begin{theorem} \label{prop:fullemp_bnd}
	Assume (\BS) holds. Let $(X_\bullet^{(k)})_{k\in \intr{B}}$ and $(\wt{X}_\bullet^{(k)})_{k\in \intr{B}}$ be two independent datasets drawn from \eqref{eq:mainmodel} and $\alpha \in (0,1/3)$.
	Consider the set of estimated $\tau$-neighbours $\dwt{V}_{\tau,\cteW}$ defined in \eqref{eq:setVdtilde}, assume $N_k \geq a\ratiobs_k^2\deamm_k\log(8B\alpha^{-1})$ for all $k\in\intr{B}$, for a big enough constant $a$ ($a=576$ works),
        and where $\ratiobs_k := M^2/(\tr \Sigma_k)$.

	For fixed $\tau>0$, $\cteW \geq 1$, consider the
	weights $\wt{\omvect}^{\sharp}$ obtained by the {\em modified} plug-in $\paren[1]{\dwt{V}_{\wt{\tau},3\cteW},\wt{\bm{\rnrisk}}^2}$ for
	$(V,\bm{\nrisk})$ in~\eqref{eq:STBoptweights}, where
	\begin{equation} \label{eq:altplugin_bnd}
		\wt{\tau} := \paren{1 + \frac{1}{\sqrt{\wt{\deamm_1}}}}\paren{\sqrt{\tau}+ 3\sqrt{2\wt{\tau}_{\min}^{\circ}}}^2; \;\;\;\;
		\wt{\tau}_{\min}^{\circ} := \frac{80c_0^2\cteW \paren[1]{\log (8B\alpha^{-1})}}{\sqrt{\wt{\deamm_1}}}; \;\;\;\;
		\sqrt{\wt{\deamm_1}} := \frac{N_1 \wt{\rnrisk}^2_1}{ \wt{Z}_1^{(2)}}.
	\end{equation}
	and $c_0 = 31$. Then with probability at least $1-3\alpha$ over the draw of the ``tilde'' sample $(\wt{X}_\bullet^{(k)})_{k\in \intr{B}}$, it holds
	\[
	\frac{R_1(\wt{\omvect}^\sharp)}{\nrisk_1} \leq
	\paren{1 + \frac{5}{ \sqrt{\min_k \deamm_k}}}
	\paren{1+\frac{50^{2}\sqrt{\cteW\log(8B\alpha^{-1})}}{(\deamm_1)^{\frac{1}{4}}\sqrt{\tau}}}^2 \cB\paren[1]{\tau,\nu(V_{\tau,\cteW})},
	\]
	where the expected risk is with respect to the main sample $({X}_\bullet^{(k)})_{k\in \intr{B}}$.
\end{theorem}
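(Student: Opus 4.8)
The plan is to mirror the proof of Theorem~\ref{prop:fullemp} line by line, replacing the Gaussian concentration inputs by their bounded-setting counterparts and carefully tracking the extra additive terms that the latter carry. Concretely, I would proceed in three stages: (i) control the relative errors $\eta_1,\eta_2$ of the Schatten-norm estimators $\wt{Z}^{(1)}_k$ and $\wt{Z}^{(2)}_k$; (ii) establish a bounded-setting analogue of Corollary~\ref{cor:testcor} (and hence of Proposition~\ref{prop:pluginall}) via the generic test bound of Proposition~\ref{prop:simplifiedtestresultJB}; (iii) feed the modified parameters $(\wt{\tau},3\cteW)$ of \eqref{eq:altplugin_bnd} into the resulting sandwich and conclude with Lemma~\ref{prop:boundSTB1}, exactly as in the Gaussian case.

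For stage (i), I would apply Proposition~\ref{prop:tr_sig_bnd} and Proposition~\ref{prop:conc_sqrt_t_trsigma2_bnd} with $u=\log(8B\alpha^{-1})$, together with a union bound over $k\in\intr{B}$. Using $\tr\Sigma_k\le M^2$ (hence $\ratiobs_k\ge 1$) and $\var{\|X^{(k)}_1-\mu_k\|^2}\le 4M^2\tr\Sigma_k$, these yield relative errors of the form $\eta_1=O\paren[1]{\sqrt{\ratiobs_1 u/N_1}+\ratiobs_1 u/N_1}$ for $\wt{Z}^{(1)}_k$ and $\eta_2=O\paren[1]{M^2\sqrt{u/N_k}/\norm{\Sigma_k}_2}$ for $\wt{Z}^{(2)}_k$; the hypothesis $N_k\ge a\ratiobs_k^2\deamm_k\log(8B\alpha^{-1})$ with $a=576$ is exactly calibrated so that $\eta_1\le 1/\sqrt{\deamm_1}$ and $\eta_2$ is a small numerical constant, whence $\beta:=(1+\eta_2)/(1-\eta_2)\in(1,3)$ and $\sqrt{\wt{\deamm_1}}=\wt{Z}^{(1)}_1/\wt{Z}^{(2)}_1$ lies in $\brac[1]{\tfrac{1-\eta_1}{1+\eta_2}\sqrt{\deamm_1},\,\tfrac{1+\eta_1}{1-\eta_2}\sqrt{\deamm_1}}\subseteq\brac[1]{\tfrac12\sqrt{\deamm_1},\,2\sqrt{\deamm_1}}$; in particular $\wt{\tau}^\circ_{\min}$ agrees with $80c_0^2\cteW\log(8B\alpha^{-1})/\sqrt{\deamm_1}$ up to a factor $2$.

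For stage (ii), I would use that assumption (\TSC) holds under (\BS) with $c_0=31$ and $q_k^2(\alpha)=16\paren[1]{\norm{\Sigma_1}_2/N_1+\norm{\Sigma_k}_2/N_k}u_\alpha+4M^2u_\alpha^2/(N_1^2\wedge N_k^2)$, and apply Proposition~\ref{prop:simplifiedtestresultJB} at level $\alpha/B$ (union bound over $k$), restricting to $k\in W_{(\cteW)}$ as in \eqref{eq:setW}. On $W_{(\cteW)}$ one has $\norm{\Sigma_1}_2/N_1+\norm{\Sigma_k}_2/N_k\le 2\cteW\nrisk_1/\sqrt{\deamm_1}$, so the main term of $q_k^2$ contributes at most $32\cteW\nrisk_1\log(8B\alpha^{-1})/\sqrt{\deamm_1}$, i.e. at most $64c_0^2\cteW\log(8B\alpha^{-1})/\sqrt{\deamm_1}$ to $\tau_{\min}^k=2c_0^2\invnrisk_1 q_k^2$; and the sample-size condition, via $\ratiobs_k\ge 1$ and $\norm{\Sigma_1}_2=M^2/(\ratiobs_1\sqrt{\deamm_1})$, forces the correction term $2c_0^2\invnrisk_1\cdot 4M^2u_\alpha^2/(N_1^2\wedge N_k^2)$ to be of the same or smaller order, so that $\tau_{\min}^k\le\tau^\circ_{\min}:=80c_0^2\cteW\log(8B\alpha^{-1})/\sqrt{\deamm_1}$. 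Proposition~\ref{prop:simplifiedtestresultJB} then gives the bounded-setting sandwich $V_{\tau^-,\cteW}\subseteq\wt{V}_{\tau,\cteW}\subseteq V_{\tau^+}$ with $\tau^\pm=\paren[1]{\sqrt{\tau}\pm\sqrt{\tau^\circ_{\min}}}_+^2$, and feeding this plus the stage-(i) controls into the argument of Proposition~\ref{prop:pluginall} (which uses only the test sandwich and the Schatten-norm controls, hence transfers verbatim once Corollary~\ref{cor:testcor} is replaced by the analogue just obtained) produces the empirical sandwich $V_{\tau,\cteW}\subseteq\dwt{V}_{\wt{\tau},3\cteW}\subseteq V_{\xi\tau}$, with failure probability $\le 3\alpha$, for a factor $\xi=\paren[1]{1+O\paren[1]{1/\sqrt{\deamm_1}}}\paren[1]{1+O\paren[1]{\sqrt{\cteW\tau^\circ_{\min}/\tau}}}^2$ obtained from the same manipulations of $\wt{\tau}_\pm$ as in the Gaussian proof.

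Finally, stage (iii) transcribes the end of the proof of Theorem~\ref{prop:fullemp}: Lemma~\ref{prop:boundSTB1} applied with the empirical sandwich and the relative error $\eta$ of $\wt{\bm{\rnrisk}}^2$ (which equals $\eta_1$ up to constants, hence is $O(1/\sqrt{\min_k\deamm_k})$) gives $R_1(\wt{\omvect}^\sharp)/\nrisk_1\le\paren[1]{\tfrac{1+\eta}{1-\eta}}\cB\paren[1]{\xi\tau,\nu(V_{\tau,\cteW})}\le\paren[1]{1+O\paren[1]{1/\sqrt{\min_k\deamm_k}}}\,\xi\,\cB\paren[1]{\tau,\nu(V_{\tau,\cteW})}$, and one bounds $\paren[1]{1+O\paren[1]{1/\sqrt{\min_k\deamm_k}}}\xi$ by the explicit factor $\paren[1]{1+5/\sqrt{\min_k\deamm_k}}\paren[1]{1+50^{2}\sqrt{\cteW\log(8B\alpha^{-1})}/\paren[1]{(\deamm_1)^{1/4}\sqrt{\tau}}}^2$, using $\tau^\circ_{\min}=80c_0^2\log(8B\alpha^{-1})/\sqrt{\deamm_1}$ with $c_0=31$ to rewrite $\sqrt{\cteW\tau^\circ_{\min}/\tau}$. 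I expect the main obstacle to be the bookkeeping in stage (ii): one must check that each of the extra $M^2u^2/N^2$- and $M^2u/N$-type terms (in the test statistic, in $\wt{Z}^{(1)}$, and in $\wt{Z}^{(2)}$) is dominated by its ``main'' counterpart \emph{with concrete constants}, which is precisely what forces the larger numerical constant $a=576$ and the coefficient $80c_0^2$ (rather than $64$) in $\wt{\tau}^\circ_{\min}$; the rest is a routine copy of the Gaussian argument.
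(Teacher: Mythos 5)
Your proposal is correct and follows essentially the same route as the paper's own proof: replace the Gaussian concentration inputs by Propositions~\ref{prop:tr_sig_bnd} and~\ref{prop:conc_sqrt_t_trsigma2_bnd} (with the sample-size hypothesis absorbing the extra $M^2u/N$ and $M^2\sqrt{u/N}$ terms), bound $\tau_{\min}^k \leq 80c_0^2\cteW\log(8B\alpha^{-1})/\sqrt{\deamm_1}$ on the whittled-down sets via the (\BS) case of (\TSC) with $c_0=31$, derive the empirical sandwich $V_{\tau,\cteW}\subseteq \dwt{V}_{\wt{\tau},3\cteW}\subseteq V_{\xi\tau}$ by Proposition~\ref{prop:simplifiedtestresultJB} and the plug-in argument of Proposition~\ref{prop:pluginall}, and conclude with Lemma~\ref{prop:boundSTB1}. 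The only difference is that your numerical bookkeeping (values of $\eta_1,\eta_2,\beta$ and the final prefactor) is left schematic, but it tracks the same estimates the paper carries out.
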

{\bf Proof of Theorem~\ref{prop:fullemp_bnd}}.
From Proposition~\ref{prop:tr_sig_bnd} with $u=\log(8B\alpha^{-1})$ and a union bound over tasks,
with probability at least
$1-\alpha/2$ it holds
\begin{equation} \label{eq:zk1_bnd}
	\forall k \in \intr{B}: \qquad \abs{\wt{Z}_k^{(1)} - \norm{\Sigma_k}_1} \leq 2 \sqrt{2\frac{\norm{\Sigma_k}_1M^2u}{N_k}} + 32 \frac{M^2u}{N_k} \leq \frac{1}{3} \norm{\Sigma_k}_2
\end{equation}
where for the last inequality we used the assumption $N_k \geq 64a\ratiobs_k^2 \deamm_k \log(8B\alpha^{-1}) $.
Similarly, from Proposition~\ref{prop:conc_sqrt_t_trsigma2_bnd} with $u=\log(4B\alpha^{-1})$,
with probability at least $1-\alpha/2$ it holds
\begin{equation}
	\label{eq:zk2_bnd}
	\forall k \in \intr{B}: \qquad \abs{\wt{Z}_k^{(2)} - \norm{\Sigma_k}_2} \leq
	12 M^2 \sqrt{\frac{u}{N}} \leq  \frac{1}{6} \norm{\Sigma_k}_2.
\end{equation}
Therefore, as in the Gaussian case (see proof of Theorem~\ref{prop:fullemp}), with $\eta_2 = 1/6$ and $\beta = (1+\eta_2)/(1-\eta_2) \leq 3$:
\begin{equation*}
	W_{\cteW} \subseteq \wt{W}_{3\cteW} \subseteq W_{9\cteW}\,.
\end{equation*}
It follows, as in the proof of Proposition~\ref{prop:pluginall} that:
\begin{equation*}
    \set{ k \in {W}_{(\cteW)}: \wt{U}_k \leq (1-\eta_1)\tau \nrisk_1} =: \wt{V}_- \subseteq \dwt{V}_{\wt{\tau},3\cteW}\subseteq
\wt{V}_+:= \set{ k \in W_{(9 \cteW)}: \wt{U}_k \leq (1+\eta_1)\tau \nrisk_1}.
\end{equation*} 
Let $\tau^\circ_{\min} = 80c_0^2 u(\deamm_1)^{-\nicefrac{1}{2}}$, one can check that $\tau^\circ_{\min}\cteW \geq \tau_{\min}^k$ for all $k \in V_{\tau,\cteW}$. Indeed, in the bounded setting:
\begin{equation*}
	  \tau_{\min}^k \leq 2 c_0^2\paren{ 16u\frac{1+\cteW}{\sqrt{\deamm_1}} + 4u^2\rnrisk_1^{-2}\paren{ \frac{\ratiobs_1\nrisk_1}{N_1}+ \frac{\ratiobs_k\nrisk_k}{N_k}}} \leq 2 c_0^2\paren{ \frac{32  \cteW u}{\sqrt{\deamm_1}} + 4u \frac{1+\cteW}{\deamm_1}} \leq  \frac{80c_0^2\cteW u}{ \sqrt{\deamm_1}}
\end{equation*}
where we have used that $\cteW \geq 1$, the assumption on $N_k$ and the expression of $\tau_{\min}^k$ given by Proposition~\ref{prop:simplifiedtestresultJB}.
 We apply Proposition~\ref{prop:simplifiedtestresultJB} separately to $\wt{V}_-$ and $\wt{V}_+$ and get that, with high probability:
\begin{equation}\label{eq:twistedsandwich_bnd}
	V_{\wt{\tau}_-,\cteW} \subseteq  \wt{V}_- \subseteq \dwt{V}_{\wt{\tau},3\cteW}\subseteq \wt{V}_+ \subseteq V_{\wt{\tau}_+, 9\cteW},
\end{equation}
where $\wt{\tau}_\pm = \paren{\sqrt{(1\pm \eta_1)\wt{\tau}}\pm 3\sqrt{\cteW\tau^\circ_{\min}}}$.  We proceed to get bounds for $\wt{\tau}_\pm$.

Let us start with bounding the estimation error of $\deamm_1$ by $\wt{\deamm_1}$:
it holds
\[
\sqrt{\wt{\deamm_1}} = \frac{N_1 \wt{\rnrisk}^2_1}{\wt{Z}_1^{(2)}} = \frac{\wt{Z}_1^{(1)}}{\wt{Z}_1^{(2)}}
\leq \frac{1+\eta_1}{1-\eta_2} \sqrt{\deamm_1} \leq 2 \sqrt{\deamm_1},
\]
where $\eta_1 = (\deamm_1)^{-1/2}/3\leq 1/3$. We deduce
\[
\wt{\tau}_{\min}^\circ = \frac{ 80c_0^2\cteW u}{\sqrt{\wt{\deamm_1}}}
\geq \frac{1}{2} \cdot \frac{ 80c_0^2\cteW u}{\sqrt{{\deamm_1}}} = \tau_{\min}^\circ/2,
\]
Furthermore, as $\eta_1 \leq 1/3$:
\[
\frac{1}{1-\eta_1} = 1 + \frac{\eta_1}{1-\eta_1} \leq 1 + \frac{1}{ 2\sqrt{\deamm_1}} \leq 1 + \paren{\wt{\deamm_1}}^{-\nicefrac{1}{2}}.
\]
Using the previous estimates we obtain
\[
\wt{\tau} := \paren{1 + \frac{1}{\sqrt{\wt{\deamm_1}}}}\paren{\sqrt{\tau}+ 3\sqrt{2\cteW\wt{\tau}_{\min}^{\circ}}}^2 \geq \frac{1}{1-\eta_1} \paren{\sqrt{\tau} + 3\sqrt{\cteW{\tau}_{\min}^\circ}}^2.
\]
It follows :
\[
\wt{\tau}_-  = (1-\eta_1)\paren[1]{\sqrt{\wt{\tau}} - \sqrt{3\cteW\tau_{\min}^\circ}}^2 \geq \tau.
\]
Now to get an upper bound on $\wt{\tau}_+$, similarly to above we have
\[
\sqrt{\wt{\deamm_1}} \geq \frac{1-\eta_1}{1+\eta_2} \sqrt{\deamm_1} \geq  \frac{\sqrt{\deamm_1}}{2},
\]
and thus $\wt{\tau}_{\min}^\circ \leq 2 \tau^\circ_{\min}$.
It follows
\begin{equation*}
	\wt{\tau} = \paren{1 + \frac{1}{\sqrt{\wt{\deamm_1}}}}\paren{\sqrt{\tau}+ 3\sqrt{2\cteW\wt{\tau}_{\min}^{\circ}}}^2 \leq  \paren{1 + \frac{2}{\sqrt{\deamm_1}}}\paren{\sqrt{\tau}+ 6\sqrt{\cteW\tau_{\min}^{\circ}}}^2 .
\end{equation*}
and then:
\begin{align*}
	\wt{\tau}_+  \leq (1+\eta_1)\paren[1]{\sqrt{\wt{\tau}} + 3\sqrt{\cteW\tau_{\min}^\circ}}^2
	& \leq \paren{1 + \frac{1}{3\sqrt{{\deamm_1}}}}\paren{1 + \frac{2}{\sqrt{{\deamm_1}}}}
	(\sqrt{\tau} + 9\sqrt{\cteW \tau^\circ_{\min}})^2\\
	& = \xi \tau,
\end{align*}
where $\xi := (1+1/(3\sqrt{\deamm_1}))(1+2/\sqrt{\deamm_1})(1 + 9\sqrt{\cteW \tau^\circ_{\min}/\tau})^2.$

With these estimates in hand the sandwiching property~\eqref{eq:twistedsandwich_bnd} implies
\[
V_{\tau,\cteW} \subseteq \dwt{V}_{\wt{\tau},3\cteW} \subseteq V_{\xi\tau}.
\]
We use this property to apply Lemma~\ref{prop:boundSTB1} as above, and obtain
\[
\frac{R_1(\wt{\omvect})}{\nrisk_1} \leq \paren{\frac{1+\eta}{1-\eta}} \cB(\xi\tau,\nu(V_{\tau,\cteW}))
\leq \paren{1 + \frac{1}{2\sqrt{\min_k \deamm_k}}} \xi \cB\paren{\tau,\nu(V_{\tau,\cteW})}.
\]
Elementary estimates lead to
\begin{align*}
	\paren{1 + \frac{1}{2\sqrt{\min_k \deamm_k}}} \xi
	&\leq \paren{1 + \frac{5}{\sqrt{\min_k \deamm_k}}}
	\paren{1+\frac{50^{2}\sqrt{\cteW\log(8B\alpha^{-1})}}{(\deamm_1)^{\frac{1}{4}}\sqrt{\tau}}}^2.
\end{align*}
\qed

\subsubsection{Heavy-tailed setting}\label{app:HT}
Similarly as in Sup.~\ref{apx:test_gs} and \ref{apx:test_BS}, we provide in this section estimators of $\|\Delta_k\|^2$, $\|\Sigma_k\|_1$ and $\|\Sigma_k\|_2$ but for heavy-tailed data. These estimators can be directly used to estimate the neighbours $V_{\tau,\cteW}$ and the oracle weights to then apply the testing approach in this setting.
\begin{assumption}[\protect{\HT}, Heavy-tailed setting] \label{ass:HT} For all $k\in \intr{B}$, $\mbp_k$ has a finite fourth moment.
\end{assumption}
Consider a statistic $T(N;x_1,\ldots x_N)$ in $\mbr$, the Median of Blocks statistics $\mom_b(T)$ for $b$ a divisor of $N$ is defined by the median of the statistics $T^a$, $1\leq a \leq b$ built from a $b$-partition of $x_1,\ldots x_N$ :
\[\mom_k(T) := \text{Median}( T^a, 1\leq a \leq b)
\]
 where $T^a = T(N/b; x_{aN/b+1},\ldots x_{(a+1)N/b})$. If $b$ does not divide $N$, it suffices to partition the sample into sub-samples of size $\lfloor N/b \rfloor$ and $\lceil N/b \rceil$. If the original estimator is constructed from different samples (e.g., \eqref{eq:def_testU}), each sample is partitioned into $b$ subsamples.

\begin{proposition}\label{prop:HTconcU}
Assume (\HT) holds, let $0 \leq u \leq N$ and $b = \lceil u \rceil$, let $U(X_\bullet^{(1)}, X_\bullet^{(k)})$ the estimator of $\|\Delta_k\|^2$ defined in \eqref{eq:def_testU}, then, with probability greater than $1-e^{-u/8}$:
\begin{equation}\label{eq:conc_momU}
   \abs{\mom_b(U(X_\bullet^{(1)}, X_\bullet^{(k)})) - \|\Delta_k\|^2} \leq 4\sqrt{\Delta_k^T \paren{ \frac{\Sigma_1}{N_1 } + \frac{\Sigma_k}{N_k} }\Delta_ku} + 4\paren{ \frac{\|\Sigma_1\|_2}{N_1} + \frac{\|\Sigma_k\|_2}{N_k} }u \,.
\end{equation}
\end{proposition}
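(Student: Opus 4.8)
The plan is to read \eqref{eq:conc_momU} as a textbook ``median-of-blocks'' concentration inequality, so that essentially all the work reduces to a variance bound for the U-statistic \eqref{eq:def_testU} evaluated on a single block pair. I would begin by fixing the block structure: with $b=\lceil u\rceil$, partition each of the two bags into $b$ blocks of sizes $\lfloor N_\ell/b\rfloor$ or $\lceil N_\ell/b\rceil$ (the hypothesis $u\le N$ keeping every block of size at least two, which is what \eqref{eq:def_testU} needs), and let $U^1,\dots,U^b$ be the statistic \eqref{eq:def_testU} computed on the successive block pairs. These are independent and each unbiased for $\norm{\Delta_k}^2$.

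The core step is the variance of a single $U^a$. Writing $X_i^{(1)}=\mu_1+Y_i$ and $X_j^{(k)}=\mu_k+Z_j$ with $Y_i,Z_j$ centred of covariances $\Sigma_1,\Sigma_k$, and expanding all inner products, I expect $U^a-\norm{\Delta_k}^2$ to split as a linear part $-2\inner{\Delta_k,\bar Y-\bar Z}$ plus three ``quadratic in the noise'' parts, namely $\frac{1}{n_1(n_1-1)}\sum_{i\neq j}\inner{Y_i,Y_j}$, $\frac{1}{n_k(n_k-1)}\sum_{i\neq j}\inner{Z_i,Z_j}$ and $-2\inner{\bar Y,\bar Z}$ (here $n_1,n_k$ are the block sizes and $\bar Y,\bar Z$ the block means). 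The key point I would verify is that all four pieces are centred and pairwise uncorrelated: every summand of the three quadratic terms carries a factor $Y_i$ or $Z_j$ whose index appears nowhere else in the product under consideration, so the relevant conditional expectation reduces to an inner product against $\e{Y}=0$ or $\e{Z}=0$. This gives the exact identity
\[
  \var{U^a}=4\,\Delta_k^T\!\Big(\tfrac{\Sigma_1}{n_1}+\tfrac{\Sigma_k}{n_k}\Big)\Delta_k+\frac{2\tr \Sigma_1^2}{n_1(n_1-1)}+\frac{2\tr \Sigma_k^2}{n_k(n_k-1)}+\frac{4\tr(\Sigma_1\Sigma_k)}{n_1 n_k},
\]
where I use $\e{\inner{Y_i,Y_j}^2}=\tr \Sigma_1^2$ for $i\neq j$ and $\e{\inner{\bar Y,\bar Z}^2}=\tr(\Sigma_1\Sigma_k)/(n_1 n_k)$ (note only second moments enter, although the ambient hypothesis of (\HT) is finiteness of the fourth moment). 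Then $\tr(\Sigma_1\Sigma_k)\le\norm{\Sigma_1}_2\norm{\Sigma_k}_2$ by Cauchy--Schwarz for the Hilbert--Schmidt inner product, $2\sqrt{ab}\le a+b$, and $n_\ell(n_\ell-1)\ge n_\ell^2/2$ together give $\sqrt{\var{U^a}}\le 2\sqrt{\Delta_k^T(\Sigma_1/n_1+\Sigma_k/n_k)\Delta_k}+3(\norm{\Sigma_1}_2/n_1+\norm{\Sigma_k}_2/n_k)=:\sigma$.

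To finish I would invoke the standard median-of-blocks estimate: Chebyshev gives $\prob{|U^a-\norm{\Delta_k}^2|>2\sigma}\le \tfrac14$ for each $a$, the median $\mom_b(U)$ falls outside $\norm{\Delta_k}^2\pm2\sigma$ only if at least $b/2$ of these independent events occur, and Hoeffding's inequality (the count has mean $\le b/4$) bounds that by $e^{-b/8}\le e^{-u/8}$. On the complement, using $n_\ell\ge N_\ell/(2b)$ with $b=\lceil u\rceil$ turns $2\sigma$ into a quantity of the form $c_1\sqrt{u\,\Delta_k^T(\Sigma_1/N_1+\Sigma_k/N_k)\Delta_k}+c_2\, u\,(\norm{\Sigma_1}_2/N_1+\norm{\Sigma_k}_2/N_k)$; a careful (but routine) tightening of the elementary constants --- choosing the Chebyshev level slightly below $\tfrac14$, using $b\le u+1$ in the relevant regime, and retaining the exact $n_\ell(n_\ell-1)$ factors --- delivers the constant $4$ claimed in \eqref{eq:conc_momU}. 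The one genuinely delicate ingredient is the variance computation of the second paragraph, and within it the verification that the linear term and the three quadratic terms are mutually uncorrelated, so that $\var{U^a}$ is the transparent sum of four terms above; once this is established the rest is boilerplate MOB machinery plus a constant chase through the block-size bookkeeping.
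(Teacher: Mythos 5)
Your proposal is correct and follows essentially the same route as the paper: it derives (in more detail than the paper gives) the same variance identity for the block U-statistic, via the same decomposition into a linear term and three mutually uncorrelated quadratic terms, and then runs exactly the Chebyshev-per-block plus Hoeffding-over-blocks argument that the paper has packaged as its generic median-of-blocks lemma (Lemma~\ref{lem:HTconc}), applied with the variance bound $\wt{v}(N_1/u,N_k/u)$. The only soft spot is the final constant chase --- the tightenings you invoke (Chebyshev level slightly below $1/4$, $b\le u+1$) do not literally yield the factor $4$, since the Chebyshev threshold and the block-size rounding naturally give a factor closer to $8$ --- but the paper's own one-line application of its lemma is no tighter on this point, so this is a shared constant-level looseness rather than a gap in your argument.
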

In the kernel setting, the statistic $U(X_\bullet^{(1)}, X_\bullet^{(k)})$ is an estimator of the MMD distance between $\mbp_1$ and $\mbp_k$. 
(\citealp{Ler19} proposed a different robust estimator of this quantity called MONK, but we 
focus here on the $\mom$ estimator, which has the advantage to be easier to compute and to study.)

\begin{proposition}\label{prop:HTconcZ}
Assume (\HT) holds, let $0 \leq u \leq N/4$ and $b = \lceil u \rceil$ :
\begin{gather*}
  \prob{  \abs{ \mom_b(Z^{(1)}) - \tr \Sigma} \geq C\sqrt{\frac{\var{ \|X_1-\mu\|^2}u}{N}} + C\frac{\sqrt{\tr \Sigma^2}u}{N} } \leq e^{-u/8}\,, \\
   \prob{\abs{\sqrt{\mom_b(Z^{(2)})} - \sqrt{\tr \Sigma^2}} \geq C \sqrt{\frac{ M_X u}{N}}  }  \leq e^{-u/8}\,,
\end{gather*}
where $Z^{(1)}$ is defined in \eqref{eq:esttrace}, $Z^{(2)}$ in \eqref{eq:def_t_trsigma2}, $C>0$ is an absolute constant and $M_X = \e{\|X_1-\mu\|^4}$.
\end{proposition}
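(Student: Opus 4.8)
The plan is to derive both inequalities from the median-of-blocks device already used for Proposition~\ref{prop:HTconcU}. Split the sample into $b=\lceil u\rceil$ blocks of (almost equal) sizes $m_a$, all of order $N/u$; the hypothesis $u\le N/4$ guarantees $m_a\ge 4$, so every U-statistic appearing below is well defined on each block. On a single block the relevant estimator is unbiased for its target $\theta$ (this holds for $Z^{(1)}$ by construction and for $Z^{(2)}$, the U-statistic of~\eqref{eq:def_t_trsigma2}, which is unbiased for $\tr\Sigma^2$), so if one can show its per-block variance is at most $V$, Chebyshev gives $\prob{|T^a-\theta|>2\sqrt{V}}\le 1/4$. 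Since the blocks are independent, Hoeffding's inequality bounds the probability that more than $b/2$ of these events occur by $\exp(-2(b/4)^2/b)=\exp(-b/8)\le e^{-u/8}$, and off that event the median of the $T^a$ lies within $2\sqrt{V}$ of $\theta$. Thus the proof reduces to a per-block variance bound in each of the two cases, plus — for the second — a transfer through the square root.

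First I would treat $\mom_b(Z^{(1)})$. On a block of size $m$, $Z^{(1)}$ is the degree-two U-statistic with kernel $h(x,y)=\tfrac12\norm{x-y}^2$, whose Hoeffding projections are $h_1(x)=\tfrac12(\norm{x-\mu}^2-\tr\Sigma)$ and $h_2(x,y)=-\langle x-\mu,y-\mu\rangle$, so that $\var{h_1(X)}=\tfrac14\var{\norm{X-\mu}^2}$ and $\e{h_2(X,X')^2}=\tr\Sigma^2$. The standard variance identity for degree-two U-statistics then gives $\var{Z^{(1)}_{\text{block}}}\le \var{\norm{X-\mu}^2}/m+2\tr\Sigma^2/(m(m-1))$. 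Plugging $m\gtrsim N/u$ and using $\sqrt{x+y}\le\sqrt{x}+\sqrt{y}$ turns $2\sqrt{V}$ into $C\sqrt{\var{\norm{X-\mu}^2}\,u/N}+C\sqrt{\tr\Sigma^2}\,u/N$, which is the first claimed bound.

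For $\mom_b(Z^{(2)})$ the extra ingredient is the square root. On a block of size $m$, $Z^{(2)}$ is the degree-four U-statistic of~\eqref{eq:def_t_trsigma2}, which is a sum of squares (hence nonnegative) and unbiased for $\tr\Sigma^2$. Its first Hoeffding projection is, up to a constant, $x\mapsto (x-\mu)^\top\Sigma(x-\mu)$, whose variance is bounded by $\e{\norm{\Sigma^{1/2}(X-\mu)}^4}\le\norm{\Sigma}_\infty^2\,\e{\norm{X-\mu}^4}\le\tr\Sigma^2\cdot M_X$; the higher-order Hoeffding components are controlled by the same type of moments of $\norm{X-\mu}^2$ — this is exactly the variance bookkeeping in the proofs of Propositions~12 and~13 of \citet{BlaFer23} — and, since $m\ge 4$ and $u\le N/4$, contribute only lower-order terms, yielding $\var{Z^{(2)}_{\text{block}}}\le C\,\tr\Sigma^2 M_X\,u/N$. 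The median step then gives $|\mom_b(Z^{(2)})-\tr\Sigma^2|\le 2\sqrt{V}$ off an event of probability $e^{-u/8}$; finally, since $\mom_b(Z^{(2)})\ge 0$, I transfer this through the square root using $|\sqrt{a}-\sqrt{b}|\le |a-b|/\sqrt{b}$ with $b=\tr\Sigma^2$, obtaining $|\sqrt{\mom_b(Z^{(2)})}-\sqrt{\tr\Sigma^2}|\le 2\sqrt{V}/\sqrt{\tr\Sigma^2}\le C'\sqrt{M_X u/N}$.

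The median trick and the $Z^{(1)}$ computation are routine; the genuine obstacle is the per-block variance estimate for the degree-four statistic $Z^{(2)}$ — one must check that the leading Hoeffding term is truly of size $O(\tr\Sigma^2 M_X/m)$ and that each lower-order component is dominated once the block size exceeds the kernel's (fixed) degree. Rather than redo this from scratch I would reuse the moment bounds already established for this exact U-statistic in \citet{BlaFer23}, so that only the median argument has to be carried out here.
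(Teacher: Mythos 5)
Your skeleton is the same as the paper's: the median-of-blocks reduction (Chebyshev on each block, then a binomial/Hoeffding bound giving $e^{-b/8}\le e^{-u/8}$) is exactly the paper's Lemma~\ref{lem:HTconc}, and your $Z^{(1)}$ step, via the degree-two U-statistic variance $\var{\norm{X-\mu}^2}/m + 2\tr\Sigma^2/(m(m-1))$ per block, matches the paper's computation. The weakening $\norm{\Sigma}_\infty^2\le\tr\Sigma^2$ in the leading term for $Z^{(2)}$, chosen so that dividing by $\sqrt{\tr\Sigma^2}$ in the square-root transfer works out, is also fine.

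The gap is in your per-block variance claim for the degree-four statistic and the transfer that relies on it. The bookkeeping you appeal to (Propositions~12--13 of \citet{BlaFer23}, reproduced in the paper's own proof) gives $\var{Z^{(2)}_{\mathrm{block}}}\le C\norm{\Sigma}_\infty^2 M_X/m + C M_X^2/m^2$, and the second, degenerate term is \emph{not} lower order under (\HT) alone: it is dominated by the first only when $m\gtrsim M_X/\tr\Sigma^2$, a kurtosis-type ratio (in dimension one it is exactly the kurtosis) that can be arbitrarily large while $m\approx N/u$ is only guaranteed to be of constant size. So the asserted bound $\var{Z^{(2)}_{\mathrm{block}}}\le C\,\tr\Sigma^2 M_X u/N$ is false in general, and your square-root transfer, which uses only $\abs{\sqrt{a}-\sqrt{b}}\le \abs{a-b}/\sqrt{b}$, then fails: the deviation of $\mom_b$ from $\tr\Sigma^2$ contains a term of order $M_X u/N$, and dividing it by $\sqrt{\tr\Sigma^2}$ gives $M_X u/(N\norm{\Sigma}_2)$, which exceeds the target $\sqrt{M_X u/N}$ exactly when $M_X u/N>\tr\Sigma^2$. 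The statement is nonetheless correct, and the repair is what the paper does: keep both terms in the deviation bound for the median-of-blocks of the U-statistic, treat the $\norm{\Sigma}_\infty\sqrt{M_X u/N}$ term by division by $\sqrt{\tr\Sigma^2}$ as you propose, and treat the quadratic term $M_X u/N$ with the complementary branch $\abs{\sqrt{a}-\sqrt{b}}\le\sqrt{\abs{a-b}}$ (the paper invokes $\abs{\sqrt{(a^2+b)_+}-a}\le\min\paren{\sqrt{\abs{b}},\,b/a}$, Lemma~15 of \citealp{BlaFer23}), which turns it directly into $\sqrt{M_X u/N}$.
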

Proposition~\ref{prop:HTconcU} and Proposition~\ref{prop:HTconcZ} are different consequences of Lemma~\ref{lem:HTconc} below. Some more refined concentration bounds can be derived for MOB-type statistics (see, e.g., \citealp{Dev16,MinStra18}), but the present results are sufficient to show that in the (\HT) setting
suitable statistics satisfy Assumption (\TSC) and \eqref{eq:controlZ1}-\eqref{eq:controlZ2}. \\
{\bf Proof of Proposition~\ref{prop:HTconcU}.}
According to Lemma~\ref{lem:HTconc}, we only need compute the variances of the statistics $\wt{U}_a$,
\begin{align*}
  \var{\wt{U}_k}&=  4\sqrt{\Delta_k^T \paren{ \frac{\Sigma_1}{N_1 } + \frac{\Sigma_k}{N_k} }\Delta_k} + 2\tr\paren{\frac{\Sigma_1}{N_1}+ \frac{\Sigma_k}{N_k} }^2+ 2\paren{ \frac{\|\Sigma_1\|_2}{N_1^2(N_1-1)} + \frac{\|\Sigma_i\|_2}{N_i^2(N_i-1)} }  \\
  & \leq 4\sqrt{\Delta_k^T \paren{ \frac{\Sigma_1}{N_1 } + \frac{\Sigma_i}{N_i} }\Delta_k} + 4\paren{ \frac{\|\Sigma_1\|_2}{N_1} + \frac{\|\Sigma_k\|_2}{N_k} } =: \wt{v}(N_1,N_i)
\end{align*}
We apply Lemma~\ref{lem:HTconc} with $N=N_1+N_i$ and $v(N/u) := \wt{v}(N_1/u,N_k/u)$.\qed \\
{\bf Proof of Proposition~\ref{prop:HTconcZ}.}

 For $Z^{(1)}$ the concentration bound is deduced directly from the variance:
\begin{align*}
  \var{Z^{(1)}} &= \frac{\var{\|X-\mu\|^2}}{N} + \frac{2\|\Sigma\|_2^2}{N(N-1)} \,.
\end{align*}

 For $Z^{(2)}$ we can first assume w.l.g. than $X$ is centred. Then $Z^{(2)}$ can be developed as:
\begin{equation*}
  (Z^{(2)})^2= \frac{1}{N^{(2)}}\sum_{i\neq j} \inner{ X_i ,X_j } ^2  - \frac{2}{N^{(3)}} \sum_{i \neq j \neq k} \inner{ X_i ,X_j } \inner{ X_i , X_k }   - \frac{1}{N^{(4)}} \sum_{i \neq j \neq k \neq q}  \inner{ X_i ,X_j } \inner{ X_k , X_q }\,.
\end{equation*}
 where $n^{(p)} = n(n-1)\ldots(n-p+1)$ for $n\geq p \in \mbn$. Let us first compute $\var{(Z^{(2)})^2}$:
\begin{align*}
  \var{(Z^{(2)})^2} &\leq \frac{2}{N^{(2)}} \e{ \inner{X, X'}^4 }  + \frac{4(N-2)}{N^{(2)}} \e{ \paren{X^T \Sigma X}^2} \\
  &+ \frac{4}{N^{(3)}} \paren{ (3!)M_X^2 + 2(N-3) \tr \Sigma^4 } + \frac{4!}{N^{(4)}} M_X^2  \\
  & \leq C\frac{\|\Sigma\|^2_{\infty} M_X}{N} + C\frac{M_X^2}{N^2}
\end{align*}
where $C>0$ is some absolute constant. Then according to Lemma~\ref{lem:HTconc}, for $u\leq N/4$, with probability grater than $1-e^{-u/8}$:
\begin{equation}\label{eq:concZ22}
   \abs{\mom_b((Z^{(2)})^2) - \tr \Sigma^2} \leq C\|\Sigma\|_{\infty} \sqrt{\frac{ M_Xu}{N}} + C\frac{M_Xu}{N}\,,
\end{equation}
Using that $\abs{\sqrt{(a^2+b)_+}-a} \leq \min\paren{ \sqrt{|b|}, \frac{b}{a}}$ for $a\in \mbr_+$ and $b\in \mbr$, (see, e.g., Lemma~15 of \citealp{BlaFer23}), assuming \eqref{eq:concZ22}, then
\begin{align*}
  \abs{\mom_b(Z^{(2)}) - \sqrt{\tr \Sigma^2}} &\leq \max_{\eps \in \{-1,1\}}\abs{ \sqrt{\tr \Sigma^2 + \eps C\|\Sigma\|_{\infty} \sqrt{\frac{M_Xu}{N}}}   -\sqrt{\tr \Sigma^2}} + C \sqrt{\frac{M_Xu}{N}} \\
  &\leq C \frac{\|\Sigma\|_{\infty}}{\sqrt{\tr \Sigma^2}} \sqrt{\frac{M_Xu}{N}} +C \sqrt{\frac{M_Xu}{N}} \leq C \sqrt{\frac{M_Xu}{N}}\,.
\end{align*}
\qed
\begin{lemma}\label{lem:HTconc}
Let $T(N;x_1,\ldots x_N)$ a statistic build from $N$ i.i.d. random variables such that for all $N \geq N_0$:
\begin{gather*}
  \e{T(N;X_1,\ldots,X_N)} = \e{T}, 
  \qquad \var{T(N;X_1,\ldots,X_N)} \leq v(N),
\end{gather*}
where $v:\mbr_+ \to \mbr_+$ is nonincreasing.  Let $1 \leq u \leq N/(N_0+1)$ and $b= \lceil u \rceil$, then
\begin{equation*}
  \prob{ \abs{\mom_b(T) - \e{T}} \geq \sqrt{4 v\paren{\frac{N}{4u}}} } \leq e^{-u/8}\,.
\end{equation*}
\end{lemma}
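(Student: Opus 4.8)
The plan is to run the classical median-of-blocks deviation argument. First I would split the sample $X_1,\ldots,X_N$ into $b=\lceil u\rceil$ consecutive blocks; writing $m:=\lfloor N/b\rfloor$, every block then has size $m$ or $m+1$. Since $u\ge1$ one has $b=\lceil u\rceil\le u+1\le 2u$, hence $N/b\ge N/(2u)\ge(N_0+1)/2\ge1$, and therefore $m=\lfloor N/b\rfloor\ge\tfrac12(N/b)\ge N/(4u)$; an analogous elementary computation, writing $N=q(N_0+1)+s$ with $0\le s\le N_0$ and using $u\le N/(N_0+1)$, shows $b\le q+1$ and $N/b\ge N/(q+1)=N_0+s/(q+1)\ge N_0$, so that $m\ge N_0$ and the assumed mean and variance controls apply to the statistic built from each block. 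The block estimators $T^1,\ldots,T^b$ are independent (functions of disjoint sub-samples), satisfy $\e{T^a}=\e{T}$, and by monotonicity of $v$ obey $\var{T^a}\le v(m)\le v(N/(4u))$ for every $a$ (for blocks of size $m+1$ the bound is only better).

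Next, set $r:=\sqrt{4\,v(N/(4u))}$ and $B_a:=\ind{\abs{T^a-\e{T}}\ge r}$. Chebyshev's inequality gives $\prob{B_a=1}\le\var{T^a}/r^2\le\tfrac14$, and the $B_a$ are i.i.d.\ Bernoulli. The key deterministic fact is that whenever $\abs{\mom_b(T)-\e{T}}\ge r$ at least half of the $T^a$ must fall outside the open interval $(\e{T}-r,\e{T}+r)$; that is, $\{\abs{\mom_b(T)-\e{T}}\ge r\}\subseteq\{\sum_{a=1}^b B_a\ge b/2\}$. A Chernoff/Hoeffding bound for a Binomial$(b,p)$ with $p\le\tfrac14$ then yields
\[
\prob{\sum_{a=1}^b B_a\ge\frac b2}\le\prob{\frac1b\sum_{a=1}^b(B_a-p)\ge\frac14}\le\exp(-2b(\tfrac14)^2)=e^{-b/8}\le e^{-u/8},
\]
since $b=\lceil u\rceil\ge u$. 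Combining the last two observations gives $\prob{\abs{\mom_b(T)-\e{T}}\ge r}\le e^{-u/8}$, which is exactly the claim with $r=\sqrt{4\,v(N/(4u))}$.

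The only genuinely delicate point is the block-size bookkeeping: with $b=\lceil u\rceil$ blocks one has to ensure simultaneously that the smallest block is still admissible for the hypotheses, $\lfloor N/b\rfloor\ge N_0$, and that it still dominates $N/(4u)$, so that $v$ evaluated at that generous point is a valid uniform upper bound for the per-block variances; the constant $4$ (rather than a naive $1$ or $2$) is precisely the slack that absorbs the ceiling in $b$ and the floor in $m$ over the admissible range $1\le u\le N/(N_0+1)$. Everything else is the textbook median-of-means computation --- one Chebyshev bound per block, a majority-event inclusion, and a Binomial tail bound. The multi-sample statistics appearing in Propositions~\ref{prop:HTconcU}--\ref{prop:HTconcZ} are handled by simply partitioning each underlying sample into $b$ subsamples, which changes nothing at the level of this lemma.
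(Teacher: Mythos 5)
Your proof follows the paper's argument step for step: one Chebyshev/Markov bound per block at threshold $\sqrt{4v}$ giving per-block failure probability at most $1/4$, the inclusion of the median-deviation event in the event that at least $b/2$ of the block statistics deviate, a Hoeffding bound on the Binomial$(b,1/4)$ tail giving $e^{-b/8}\le e^{-u/8}$, and the bookkeeping $\lfloor N/b\rfloor\ge N/(2\lceil u\rceil)\ge N/(4u)$ combined with the monotonicity of $v$. This is exactly the paper's proof.

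The one place where you go beyond the paper is the attempt to verify that each block is admissible, i.e. $m=\lfloor N/b\rfloor\ge N_0$, and there your algebra slips: with $N=q(N_0+1)+s$ one has $N/(q+1)=(qN_0+q+s)/(q+1)$, which equals $N_0+s/(q+1)$ only when $q=N_0$, and the conclusion $\lfloor N/b\rfloor\ge N_0$ is in fact not a consequence of $1\le u\le N/(N_0+1)$: take $N_0=10$, $N=16$, $u=1.4$, so $b=2$ and each block has size $8<N_0$. This does not touch the core median-of-blocks computation — the paper's own proof simply applies the variance bound at block size $\lfloor N/b\rfloor$ without comment, so the imprecision lies in the lemma's handling of $N_0$ rather than in your main argument — but you should not present that step as established; the factor $4$ inside $v(N/(4u))$ only absorbs the ceiling in $b$ and the floor in the block size, not the admissibility constraint.
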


{\bf Proof of Lemma~\ref{lem:HTconc}.} \\
 First assume that $b| N$. Let us denote $T_a := T( N/b ; x_{(a-1) N/b+1},\ldots x_{a N/b})$ for $ a \in \intr{b}$. Then for all $a \in \intr{b}$, by Markov's inequality:
 \begin{equation}\label{eq:HTmarkov}
   \prob{ \abs{ T_a - \e{T}} \geq \sqrt{4v(N/k)}} \leq \frac{1}{4}\,.
 \end{equation}
 Then, $\abs{ \mom_b(T) - \e{T} } \geq  \sqrt{4v(N/b)}$ implies that at least $b/2$ of $T_a$ satisfies
 \begin{equation*}
   \abs{ T_a - \e{T} } \geq  \sqrt{4v(N/b)}\,.
 \end{equation*}
 By independence of the $T_a$ and Hoeffding's inequality:
 \begin{align*}
  \prob{|\mom_b(T) - \e{T}|>  \sqrt{4 v\paren{N/b}}} \leq \prob{\text{Bin}\paren{b,\frac{1}{4}} \geq \frac{b}{2}} \leq e^{-b/8}\,,
\end{align*}
where $\text{Bin}$ denotes the Binomial distribution. Because $u \leq b \leq u+1$ and $v$ is a noninccreasing
function, we can conclude:
\begin{equation*}
  e^{-b/8} \leq e^{-u/8}\,, \qquad v\paren{ \frac{N}{b} } \leq v\paren{ \frac{N}{u+1} } \leq v\paren{ \frac{N}{4u} } \,.
\end{equation*}
If $b \nmid N$, equation~\eqref{eq:HTmarkov} is still verified with $v\paren{ \left\lfloor \frac{N}{b} \right\rfloor }$ instead of $v\paren{ \frac{N}{b}  }$ and:
\begin{align*}
  &\left\lfloor \frac{N}{\lceil u\rceil} \right\rfloor \geq  \frac{N}{\lceil u\rceil} -1 \geq \frac{N}{2\lceil u\rceil}&  \text{ if } \lceil u\rceil\leq N/2 \\
  &\left\lfloor \frac{N}{\lceil u \rceil} \right\rfloor = 1 \geq \frac{N}{2\lceil u\rceil} & \text{ if } N \geq\lceil u\rceil > N/2\,.
\end{align*}
We conclude using that $\lceil u\rceil \leq (u+1) \leq 2u$ for $u\geq 1$.
\qed

\section{Proofs for Section~\ref{se:Qaggreg}}
\label{se:proofQaggreg}
\subsection{Proof of Theorem~\ref{prop:qaggreg_gauss}}
Let $\wh{\omvect}\in \arg \min_{\omvect\in \Spx_B} \paren[1]{\wh{L}_1(\omvect) + 16 \sqrt{u_0} \wh{Q}_1(\omvect)}$.
Denote $\cX^{(-1)}=(X_\bullet^{(k)})_{k\neq 1}$ the observed bag data except for the first bag, which corresponds to the target task.

{\bf First step : bound in conditional probability.}
As a first step, we obtain a high-probability bound for $L_1(\wh{\omvect})$.
For $x\geq 1$, define the event $A(x)$:
\begin{equation*}\label{eq:defAxy}
  A(x):= \left\{
  \begin{array}{ccr}
    \sqrt{{\mathfrak{q}_k}}  \leq c_1(x) \sqrt{{\wh{q}_k}}
    + C \frac{\nrisk_1}{\deff_1} 
    \sqrt{N_1 x}, 
    &  2 \leq k \leq B,
    &(\text{a}) \\
    \sqrt{{\wh{q}_k}}  \leq \paren{ 1+ \sqrt{\frac{2x}{N_{1}-1}} }\paren{\sqrt{{\mathfrak{q}_k} +
    \frac{\rnrisk_1^4}{\deamm_1} N_1}
    + \frac{\nrisk_1}{\deff_1} 
    \sqrt{2N_1 x}}, 
    &  2 \leq k \leq B,
    &(\text{b})\\
    \abs{\whnrisk_1 - \nrisk_1}  \leq C\frac{\rnrisk_1^2}{\sqrt{\deamm_1N_1}}x  \,, & & (\text{c}) \\
    \|\muNE_1 - \mu_1\|^2  \leq \rnrisk_1^2 + C \frac{\rnrisk_1^2}{\sqrt{\deamm_1}}x \,,&  & (\text{d})\\
    \abs{\inner{ \muNE_k - \mu_1,\muNE_1 - \mu_1 }}  \leq \sqrt{2\frac{\mathfrak{q}_k}{N_1}x },
    &  2 \leq k \leq B,
    & (\text{e}) \\
	\end{array} \right\}
\end{equation*}
where $\mathfrak{q}_k =(\muNE_k-\mu_1)^T\Sigma_1(\muNE_k-\mu_1) $ and $c_1(x) = \sqrt{e}\exp(x/(N_1-1))$. For the whole proof, the notation $C$ will denote an absolute numeric constant whose value can change between lines. The probability of
the event $A$ conditionally to $\cX^{(-1)}$ is bounded as: 
\begin{equation}\label{eq:pAc}
	\prob{A^c(x) | \cX^{(-1)}} \leq (6B+4)e^{-x}.
\end{equation}
We combine a union bound with estimates for each individual bound: bounds (a) and (b) are consequences of Proposition~\ref{prop:concwhqgauss} with $\nu = \muNE_k$. For (a), we have used that $\sqrt{\mathfrak{q}_k} \leq  \sqrt{\mathfrak{q}_k+ \tr\Sigma_1^2/N_1}$. Bound (c) is a rewriting of Proposition~\ref{prop:tr_sig}. Bound (d) is a consequence of Lemma~\ref{lem:concnorm2gauss} with $X =\muNE_1 - \mu_1$, $\mu = 0$, $\Sigma = \Sigma_1/N_1$; bounding $\sqrt{x}$ by $x$, and $\|\Sigma_1\|_{\infty}$ by $\sqrt{\tr \Sigma^2_1}$. Finally (e) is deduced from Lemma~\ref{lem:concnormal} with $X = \inner{\muNE_k - \mu_1, \muNE_1 - \mu_1}$, $m= 0$ and $\sigma^2 = \mathfrak{q}_k$. We point out that $\mathfrak{q}_k$ is non-random conditionally to $\cX^{(-1)}$.

From now on, assume that event $A(x)$ holds.
Then, 
\begin{align}
	L_1(\wh{\omvect})
  &  = \norm[3]{\sum_{k=1}^B \wh{\om}_k (\muNE_k - \muNE_1) + (\muNE_1-\mu_1)}^2
   \notag \\
	& =\norm[3]{\sum_{k=2}^B \wh{\om}_k (\muNE_k - \muNE_1)}^2
          + 2 \sum_{k=2}^B\wh{\om}_k \inner{\muNE_k - \muNE_1,\muNE_1 -\mu_1} + \norm{\muNE_1-\mu_1}^2
         \notag \\
    & = \norm[3]{\sum_{k=2}^B \wh{\om}_k (\muNE_k - \muNE_1)}^2 + 2\sum_{k=2}^{B} \wh{\omega}_k \inner{ \muNE_k - \mu_1,\muNE_1 - \mu_1 } + (2\wh{\omega}_1-1) \|\muNE_1-\mu_1\|^2 \notag\\
	& = \wh{L}_1(\wh{\omvect}) + 2\sum_{k=2}^{B} \wh{\omega}_k \inner{ \muNE_k - \mu_1,\muNE_1 - \mu_1 } + (2\wh{\omega}_1-1) \paren{\|\muNE_1-\mu_1\|^2 - \nrisk_1} \notag\\
	&\qquad + (2\wh{\omega}_1-1) \paren{ \nrisk_1  - \whnrisk_1}  \notag
\end{align}
Using (e) and then (a) for the second term, (d) for the third and (c) for the last, we get: 
\begin{align}
L_1(\wh{\omvect})	&\leq \wh{L}_1(\wh{\omvect}) + 2c_1(x)\sqrt{2x}\sum_{k=2}^{B} \wh{\omega}_k \sqrt{\frac{\wh{q}_k}{N_1}}+C\rnrisk_1^2\paren{\frac{x}{\sqrt{\deamm_1}}+\frac{x}{\deff_1}} \notag\\
	& \leq \paren{ 1 \vee \frac{c_1(x)\sqrt{2x}}{8\sqrt{u_0}}}\min_{\omvect \in \Spx_B}
	\paren{  \wh{L}_1(\omvect) + 16\sqrt{u_0}\sum_{k=2}^{B} \omega_k \sqrt{\frac{\wh{q}_k}{N_1}} }+C\rnrisk_1^2\frac{x}{\sqrt{\deamm_1}}\,. \label{al:aux_qaggreg_gauss2}
\end{align}
The appearance of the minimum is a consequence of the definition of $\wh{\omvect}$.

{\bf Second step : conditional bound in expectation.}
We can now deduce, from the previous step, a bound in expectation conditionally to all samples except the first one. For any fixed $\om \in\Spx_B$, we first want to compare $\wh{L}_1(\omvect)$ to its conditional expectation $\e{\wh{L}_1(\omvect) \big| \cX^{(-1)}}$ which is equal to the conditional expectation of the loss $L_1$:
\begin{equation*}
	\e{\wh{L}_1(\omvect) \big| \cX^{(-1)}} = \norm[3]{ \sum_{k=2}^B \omega_k(\muNE_k - \mu_1 )}^2 + \omega_1^2 \nrisk_1 = \e{L_1(\omvect) \big|\cX^{(-1)}}.
\end{equation*}
For any fixed $\omvect\in \Spx_B$, 
as $x \geq 1$:
\begin{align}
	\wh{L}_1(\omvect)
	&= \norm[3]{ \sum_{k=2}^B \omega_k(\muNE_k - \muNE_1 )}^2 + (2\omega_1-1) \whnrisk_1 \notag \\
	&  = \norm[3]{\sum_{k=2}^B \om_k (\muNE_k - \mu_1) + (1-\om_1)(\mu_1 - \muNE_1)}^2 + (2\omega_1-1) \whnrisk_1 \notag \\
	&=\e{ L_1(\omvect)|\cX^{(-1)}} + 2 (1-\omega_1)\sum_{k=2}^{B} \omega_k\inner{ \muNE_k - \mu_1, \mu_1 - \muNE_1}  \notag \\
  	& \qquad + (1-\omega_1)^2\paren{ \|\muNE_1- \mu_1\|^2 - \rnrisk_1^2} + (2\omega_1-1)(\whnrisk_1 - \nrisk_1) \notag \\
	& \leq \e{ L_1(\omvect) | \cX^{(-1)}} + 2\sqrt{2x} \sum_{k=2}^{B} \omega_k \sqrt{\frac{\mathfrak{q}_k}{N_1}} + C \frac{\rnrisk_1^2x}{\sqrt{\deamm_1}},
	\label{al:aux_qaggreg_gauss1}
\end{align}
using (c), (d), (e) again.
From (b), for all $k\in \intr{B}$, and using again $x \geq 1$: 
\begin{align}
  \sqrt{\wh{q}_k}&\leq
\paren{ 1+ \sqrt{\frac{2x}{N_{1}-1}} }\paren{\sqrt{{\mathfrak{q}_k} +
    \frac{\rnrisk_1^4}{\deamm_1} N_1}
    + \frac{\nrisk_1}{\deff_1} 
    \sqrt{2N_1 x}}
                   \notag \\
	&\leq \paren{1+ \sqrt{\frac{2x}{N_1-1}}}\sqrt{\mathfrak{q}_k} + C\paren{ \sqrt{x}+ \frac{x}{\sqrt{N_1-1}}} \sqrt{\frac{N_1}{\deamm_1}}\nrisk_1. \label{eq:aux_qaggreg_gauss3}
\end{align}
Then, 
plugging \eqref{al:aux_qaggreg_gauss1} and \eqref{eq:aux_qaggreg_gauss3} into \eqref{al:aux_qaggreg_gauss2} , for all $\omvect\in \Spx_B$, 
as $x\geq 1$:
\begin{align*}
	L_1(\wh{\omvect}) &\leq  \paren{ 1 \vee \frac{c_1(x)\sqrt{2x}}{8\sqrt{u_0}}} \Bigg[ \e{L_1(\omvect)|\cX^{(-1)} }  \\
	& \;\; + \paren{ 2\sqrt{2x} + 16\sqrt{u_0}\paren{ 1+ \sqrt{\frac{2x}{N_1-1}}}} \sum_{k=2}^{B} \omega_k \sqrt{\frac{\mathfrak{q}_k}{N_1}} \\
	&\;\; +  \frac{ C\rnrisk_1^2}{\sqrt{\deamm_1}} \paren{ x +   C\sqrt{u_0}\paren{ \sqrt{x}+ \frac{x}{\sqrt{N_1-1}}}}\Bigg]. 
\end{align*}
By rearranging the terms and using that $u_0 \leq N_1-1$ and $x\geq 1$: 
\begin{multline*}
	L_1(\wh{\omvect}) \leq \paren{ 1 \vee \frac{c_1(x)\sqrt{2x}}{8\sqrt{u_0}}} \Bigg[ \e{L_1(\omvect)|\cX^{(-1)} } \\+ C\paren{ \sqrt{u_0}+ \sqrt{x}  } \sum_{k=2}^{B} \omega_k \sqrt{\frac{\mathfrak{q}_k}{N_1}}
	+  \frac{ C\rnrisk_1^2}{\sqrt{\deamm_1}} \paren{   \sqrt{u_0x} +x } \Bigg] =: \psi(x)P(x)
\end{multline*}
where $\psi(x) := 1 \vee \frac{c_1(x)\sqrt{2x}}{8\sqrt{u_0}}$ and $P$ is a degree 2 polynomial in $\sqrt{x}$ with coefficients that are constant conditionally to $\cX^{(-1)}$. We will denote the shifted version of $\psi$ and $P$ by $\psi_s$ and $P_s$, for $v \geq 0$:
\begin{equation}
	\psi_s(v) := \psi(v+\log(6B+4))\,, \quad P_s(v) = P(v+\log(6B+4)) \,.
\end{equation}
Both notations will be used depending on the case for the sake of readability. So for all $v \geq 0$
\[\prob{  L_1(\wh{\omvect}) \geq \psi_s(v)P_s(v) | \cX^{(-1)}}\leq e^{-v}.
\]
thanks to \eqref{eq:pAc} after taking $x = v + \log(6B+4) \geq 1$. Then there exists a random variable $\xi$ following an exponential distribution of parameter $1$ conditionally to $\cX^{(-1)}$, such that $ L_1(\wh{\omega}) \leq \psi_s(\xi)P_s(\xi)$ almost surely.
Let us first simplify the expression of $\psi$, recalling that by assumption $(N_1-1)/2\geq u_0 \geq \log(17B) \geq 1/2+ \log(6B+4) \geq 1/2+ \log(10) \geq 5/2$, then for $x \leq u_0$:
\begin{equation*}
	\sqrt{2}c_1(x) \leq \sqrt{2}\exp\paren{\frac{1}{2} + \frac{u_0}{N_1-1}} \leq  \sqrt{2}\exp\paren{\frac{1}{2} + \frac{1}{2}} \leq \sqrt{2}e \leq 8.
\end{equation*}
Thus, for $x \leq u_0$, $\psi(x)=1$. For $x \geq u_0$, it holds $c_1(x) \geq \sqrt{e} \geq 1$, so that:
\begin{align}
	\psi(x) &\leq  \frac{c_1(x)\sqrt{x}}{\sqrt{u_0}} \leq \exp\paren{ \frac{1}{2}+ \frac{x-u_0}{N_1-1} + \frac{u_0}{N_1-1}} \sqrt{\frac{x}{u_0}}\notag \\
	 &\leq e\exp\paren{  \frac{x-\log(6B+4)}{5} }  \sqrt{\frac{x}{u_0}}.\label{al:aux_qaggreg_gauss5}
\end{align}
 We can now bound the conditional expectation $\e{L_1(\wh{\omvect})| \cX^{(-1)}} $ separating the values before and after $u_0$:
 \begin{align}
 \e{L_1(\wh{\omvect})| \cX^{(-1)}} &\leq \e{\psi_s(\xi)P_s(\xi) |\cX^{(-1)}} \notag \\
 	&= \e{\psi_s(\xi)P_s(\xi)( \bm{1}_{\xi+\log(6B+4)\leq u_0}+ \bm{1}_{\xi+\log(6B+4)>u_0} )|\cX^{(-1)}} \notag\\
 	& \leq P_s(u_0 - \log(6B+4))+ \e{\psi_s(\xi)P_s(\xi)\bm{1}_{\xi+\log(6B+4)>u_0} |\cX^{(-1)}} \notag \\
 	& \leq P(u_0) + \e{ e\exp\paren{\xi/5} \sqrt{\frac{\xi+\log(6B+4)}{u_0}} P_s(\xi)\bm{1}_{\xi+\log(6B+4)>u_0} |\cX^{(-1)}} \,.\label{al:aux_qaggreg_gauss4}
 \end{align}
 We have used that $P$ (and $P_s$) is increasing on $\mbr_+$ ($P$ is a polynomial with positive coefficients) and the bound \eqref{al:aux_qaggreg_gauss5}. The second term in \eqref{al:aux_qaggreg_gauss4} can be upper bounded using Lemma~\ref{lem:aux_expo}, as $\sqrt{\xi+\log(6B+4)} P_s(\xi)$ can be seen as a polynomial of degree $3$ evaluated in $\sqrt{\xi + \log(6B+4)}$.
 We apply \eqref{eq:lem_aux_polynomial} to this polynomial with $a = \log(6B+4)$, $\delta = u_0 - \log(6B+4)$, $\rho = 1/5$, $d=3$ and $\gamma = 1/2$. As $a \geq \log(10) \geq 2$ and $\delta \geq 1/2$, the condition required
 by Lemma~\ref{eq:lem_aux_polynomial} is satisfied: $(\delta+a)(1-\rho)\geq 2\geq 3/2 =  \gamma d$. Then it holds:
 \begin{multline}\label{al:aux_qaggreg_gauss6}
 	\e{\exp\paren{\xi/5} \sqrt{\frac{\xi+\log(6B+4)}{u_0}} P_s(\xi)\bm{1}_{\xi+\log(6B+4)>u_0} |\cX^{(-1)}}  \\
 	\leq C \sqrt{\frac{u_0}{u_0}} P_s(u_0 -\log(6B+4) ) e^{-(4/5)(u_0-\log(6B+4) )} \leq C P(u_0) B e^{-u_0/2}\,.
 \end{multline}
 Combining \eqref{al:aux_qaggreg_gauss4} and \eqref{al:aux_qaggreg_gauss6} and replacing $P(u_0)$ by its value, we obtain:
 \begin{multline*}
 	\e{L_1(\wh{\omvect}) | \cX^{(-1)}} \leq\e{L_1(\omvect)|\cX^{(-1)} }(1+CBe^{-u_0/2}) + C\sqrt{u_0} \sum_{k=2}^{B} \omega_k \sqrt{\frac{\mathfrak{q}_k}{N_1}}+ C\rnrisk_1^2  \frac{u_0}{\sqrt{\deamm_1}}.
 \end{multline*}

 {\bf Third step : unconditional bound.}
 We now simply take the expectation with respect to $\cX^{(-1)}$. From the previous bound, using Jensen's inequality, for all $\omvect \in \Spx_B$:
 \begin{align*}
 	\e{L_1(\wh{\omvect})} \leq \e{L_1(\omvect) }(1+CBe^{-u_0/2}) + C\sqrt{u_0} \sum_{k=2}^{B} \omega_k \sqrt{\frac{\e{\mathfrak{q}_k}}{N_1}}+ C\rnrisk_1^2 \frac{u_0}{\sqrt{\deamm_1}}.
 \end{align*}
 We obtain  \eqref{eq:Qoracle} as $\e{\mathfrak{q}_k} = q_k$. \qed

\begin{lemma}\label{lem:aux_expo}
  Let $\xi \sim \cE(1)$ be an exponential random variable, and $\rho,a,\delta$ be positive real numbers. 
  Then for all $p\geq0$ such that $p< (\delta+a)(1-\rho)$, it holds:
	\begin{equation}\label{eq:lem_aux_expo}
		\e{ (\xi+a)^pe^{\rho\xi}\bm{1}_{\xi \geq \delta}} \leq  \paren{1-\rho - \frac{p}{a+\delta}}^{-1}(\delta+a)^p e^{- \delta(1-\rho)}\,.
	\end{equation}
	Let $P$ a polynomial of degree  $d$ and $\gamma>0$ such that $\gamma d< (\delta+a)(1-\rho)$, then:
	\begin{equation}\label{eq:lem_aux_polynomial}
			\e{ P((\xi+a)^\gamma)e^{\rho\xi}\bm{1}_{\xi \geq \delta}} \leq  \paren{1-\rho - \frac{d\gamma}{a+\delta}}^{-1}P((\delta+a)^\gamma) e^{- \delta(1-\rho)}\,.
	\end{equation}
\end{lemma}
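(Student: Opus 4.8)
\textbf{Proof plan for Lemma~\ref{lem:aux_expo}.} The plan is to prove \eqref{eq:lem_aux_expo} by a direct computation against the exponential density and then to derive \eqref{eq:lem_aux_polynomial} from it by linearity. First I would write the expectation against the density $e^{-t}\bm{1}_{t\geq 0}$ of $\cE(1)$, so that
\[
  \e{(\xi+a)^p e^{\rho\xi}\bm{1}_{\xi\geq\delta}} = \int_\delta^\infty (t+a)^p e^{-(1-\rho)t}\,dt .
\]
Then substitute $t = \delta + u$ and pull out the factor $e^{-(1-\rho)\delta}$, which leaves $e^{-(1-\rho)\delta}\int_0^\infty (u + a + \delta)^p e^{-(1-\rho)u}\,du$. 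The one nonroutine step is to factor $(u+a+\delta)^p = (a+\delta)^p\paren{1 + u/(a+\delta)}^p$ and apply the elementary inequality $1+x\leq e^x$ (legitimate since $p\geq 0$ and $u\geq 0$) to get $\paren{1+u/(a+\delta)}^p \leq e^{pu/(a+\delta)}$. This reduces the integral to $\int_0^\infty e^{-(1-\rho - p/(a+\delta))u}\,du$, which converges exactly under the hypothesis $p < (a+\delta)(1-\rho)$ and equals $\paren{1-\rho - p/(a+\delta)}^{-1}$; reassembling the factors gives \eqref{eq:lem_aux_expo}.

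For \eqref{eq:lem_aux_polynomial}, I would expand $P(x) = \sum_{j=0}^d c_j x^j$, so that $P((\xi+a)^\gamma) = \sum_{j=0}^d c_j (\xi+a)^{\gamma j}$, and apply \eqref{eq:lem_aux_expo} to each term with $p = \gamma j$. Every such $p$ satisfies $\gamma j \leq \gamma d < (a+\delta)(1-\rho)$, so the first bound applies; moreover, since $c\mapsto 1/c$ is decreasing on $(0,\infty)$, the term-dependent constant $\paren{1-\rho - \gamma j/(a+\delta)}^{-1}$ can be replaced by the uniform constant $\paren{1-\rho - \gamma d/(a+\delta)}^{-1}$. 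Recombining $\sum_j c_j (a+\delta)^{\gamma j} = P((a+\delta)^\gamma)$ then yields \eqref{eq:lem_aux_polynomial}; this last step uses nonnegativity of the coefficients $c_j$, which holds for the polynomials to which the lemma is applied (they are built from $P$ with positive coefficients).

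There is no serious obstacle here: the whole argument is a single integral estimate plus a linearity step. The only points needing care are (i) verifying the convergence/positivity condition $p<(a+\delta)(1-\rho)$ precisely when the exponential integral is evaluated, and (ii) making sure the polynomial has nonnegative coefficients so that the termwise bounds reassemble into $P((a+\delta)^\gamma)$ rather than into the polynomial formed with the absolute values of the $c_j$.
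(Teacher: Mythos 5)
Your proof is correct, and for the first inequality it takes a slightly different route than the paper. The paper writes the integrand as $F(\xi)e^{(1-\varepsilon)\xi}$ with $F(x)=(x+a)^p e^{(\rho-(1-\varepsilon))x}$ for an auxiliary $\varepsilon>0$, observes that under $p<(\delta+a)(1-\rho-\varepsilon)$ the maximiser of $F$ lies left of $\delta$ so that $F(\xi)\leq F(\delta)$ on the event $\{\xi\geq\delta\}$, computes the remaining tail expectation $\e{e^{(1-\varepsilon)\xi}\bm{1}_{\xi\geq\delta}}=\varepsilon^{-1}e^{-\varepsilon\delta}$, and then lets $\varepsilon\uparrow 1-\rho-p/(a+\delta)$ to recover the stated constant. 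You instead shift the integral by $\delta$ and use $\paren{1+u/(a+\delta)}^p\leq e^{pu/(a+\delta)}$, which turns the whole computation into one exactly evaluable exponential integral; this avoids both the monotonicity argument and the optimization over $\varepsilon$, and lands on the same constant, so it is arguably a bit cleaner. The second part is handled identically in both proofs (termwise application to the monomials, replacing each constant by the uniform one at degree $d$, and summing). Your caveat that reassembling the termwise bounds into $P((\delta+a)^\gamma)$ needs nonnegative coefficients is well taken: the lemma as stated omits this hypothesis, but the paper's own one-line argument ("upper bounding the first factor and summing") needs it just as much, and in the only place the lemma is invoked the polynomial indeed has positive coefficients, so your proof matches the intended scope.
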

\begin{proof}
  As $p< (\delta+a)(1-\rho)$, then $p< (\delta+a)(1-\rho-\varepsilon)$ for all $\varepsilon < 1-\rho-p/(a+\delta)$.
  The function $x \mapsto F(x) := (x+a)^pe^{(\rho - (1-\varepsilon))x}$ on $\mbr_+$ attains its maximum in $x_* := p\paren{ 1- \rho-\varepsilon}^{-1}-a$ and then decreases to $0$. As $x_* < \delta$,
  we have $F(x) \leq F(\delta)$ for all $x\geq \delta$, thus: 
	\begin{align*}
          \e{ (\xi+a)^pe^{\rho\xi}\bm{1}_{\xi \geq \delta}}
  = \e{F(\xi) e^{(1-\varepsilon)\xi}\bm{1}_{\xi \geq \delta}} \leq F(\delta)
          \e{e^{(1-\varepsilon)\xi}\bm{1}_{\xi \geq \delta}} = (\delta+a)^pe^{-(1-\rho)\delta}\varepsilon^{-1}\,.
	\end{align*}
	As the inequality is true for all $\varepsilon <   1-\rho-p/(a+\delta)$ we get \eqref{eq:lem_aux_expo}. Equation \eqref{eq:lem_aux_polynomial} is obtained by applying \eqref{eq:lem_aux_expo} to each of the monomials of degree $k\leq d$ as $k\gamma \leq d \gamma < (\delta+a)(1-\rho)$, upper bounding the first factor and summing.
\end{proof}

\subsection{Proofs of Corollary~\ref{cor:stein} and Corollary~\ref{cor:aistats}}

\paragraph{\textbf{Proof of Corollary~\ref{cor:stein}}}
According to Theorem~\ref{prop:qaggreg_gauss}, for $B=2$, $\mu_2 =0$ and $\Sigma_2=0$; for all $\omega_1 \in (0,1)$:
\begin{equation*}
  R_1(\wh{\omega}) \leq \paren{ (1-\omega_1)^2 \|\mu_1\|^2+ \omega_1\rnrisk_1^2   + 2(1-\omega_1)\eta }(1+ Ce^{-u_0/2}) + C\rnrisk_1^2 \sqrt{\frac{u_0}{\deff_1}} , \,,
\end{equation*}
where $\eta  = C \frac{\|\mu_1\|\rnrisk_1}{\sqrt{\deff_1}}\sqrt{u_0}$. Let us choose $\omega_1 = \min\paren{\frac{\|\mu_1\|^2+ \eta}{\|\mu_1\|^2+ \rnrisk_1^2},1}$. Then if $\eta \leq \rnrisk_1^2$:
\begin{align*}
  R_1(\wh{\omega})&\leq (1+ Ce^{-u_0/2})\frac{\|\mu_1\|^2\rnrisk_1^2+ 2\rnrisk_1^2 \eta - \eta^2}{\|\mu_1\|^2+ \rnrisk_1^2}+C\rnrisk_1^2 \sqrt{\frac{u_0}{\deff_1}}\\
  & \leq (1+ Ce^{-u_0/2})\frac{\|\mu_1\|^2\rnrisk_1^2}{\|\mu_1\|^2+ \rnrisk_1^2}+ C\nrisk_1\sqrt{\frac{u_0}{\deff_1}}\frac{2\|\mu_1\|\rnrisk_1}{\|\mu_1\|^2+\nrisk_1}+C\rnrisk_1^2 \sqrt{\frac{u_0}{\deff_1}} \\
   &\leq  (1+ Ce^{-u_0/2}) \frac{\|\mu_1\|^2\rnrisk_1^2}{\|\mu_1\|^2+ \rnrisk_1^2} +  C\rnrisk_1^2 \sqrt{\frac{u_0}{\deff_1}} \,,
\end{align*}
where we have used that $2ab \leq a^2+ b^2$. Otherwise, if $\eta \geq \rnrisk_1^2$:
\begin{align*}
  R_1(\wh{\omega}) &\leq \rnrisk_1^2 (1+ Ce^{-u_0/2})  + C\rnrisk_1^2 \sqrt{\frac{u_0}{\deff_1}} \\
  & \leq (1+ Ce^{-u_0/2}) \frac{\|\mu_1\|^2\rnrisk_1^2}{\|\mu_1\|^2+ \rnrisk_1^2}  + (1+ Ce^{-u_0/2}) \frac{\rnrisk_1^4}{\|\mu_1\|^2+ \rnrisk_1^2} + C\rnrisk_1^2 \sqrt{\frac{u_0}{\deff_1}}\,.
\end{align*}
We conclude using that $\rnrisk_1^2 \leq C \frac{\|\mu_1\|^2}{\deff_1}u_0$ in this case. \\

\textbf{Proof of Corollary~\ref{cor:aistats}.}
Let $\tau\geq 0,\cteW\geq 1 $ be fixed. Let $k$ be an element of $V_{\tau,\cteW} = W_{(\cteW)}\cap V_\tau$ with $k\neq 1$. We start by upper bounding $q_k$, with $q_k$ defined in~\eqref{eq:def_Qw}.
Since $k \in W_{(\cteW)}$, it holds $\tr \Sigma_k^2 \leq \cteW^2 \frac{N_k^2}{N_1^2} \tr \Sigma_1^2$, so that
\begin{align*}
  \tr \Sigma_1 \Sigma_k  \leq \frac{1}{2}\paren{ \frac{N_k}{N_1} \tr \Sigma^2_1+ \frac{N_1}{N_k}\tr \Sigma_k^2}
  & \leq \frac{1+\cteW^2}{2} \frac{N_k}{N_1}\tr \Sigma_1^2 \\
  & \leq \frac{N_k}{N_1}\frac{(1+\cteW^2)(\tr \Sigma_1)^2}{2\deamm_1} \\
  & = N_k N_1\frac{\cteW^2\rnrisk_1^4}{\deamm_1}.
\end{align*}
Since $k \in V_\tau$, it holds
\[
  \frac{\Delta_k^T \Sigma_1 \Delta_k}{N_1}  \leq \frac{\norm{\Sigma_1}_\infty}{N_1} \norm{\Delta_k}^2
  \leq \frac{\tr \Sigma_1}{N_1} \frac{1}{\deff_1} \tau \nrisk_1 = \frac{ \tau \rnrisk_1^4}{\deff_1}.
\]
Joining these estimates, we get
\[
  \frac{q_k}{N_1} \leq \frac{\Delta_k^T \Sigma_1 \Delta_k}{N_1} + \frac{\tr \Sigma_1 \Sigma_k}{N_1 N_k} \\
  \leq \rnrisk^4_1 \paren{ \frac{\tau}{\deff_1} + \frac{\cteW^2}{\deamm_1}}.
\]
Therefore, for $\omvect$ a vector of the simplex $\cS_B$ having support in $W^{(\cteW)} \cap V_\tau$, using $\deff_1 \leq \deamm_1$ it holds
\begin{equation} \label{eq:Qbound}
Q_1(\omvect) =  \sum_{k\geq 2} \om_k \sqrt{\frac{q_k}{N_1}} \leq (1-\om_1)  \sqrt{\tau + \cteW^2 } \frac{\nrisk_1}{\sqrt{\deff_1}}.
\end{equation}
We now choose the weight vector $\omvect^* = \omvect^*_{V_{\tau,c}}$ given by the oracle weights of~\eqref{eq:STBoptweights}, for the set $V=V_{\tau,\cteW}$. From Lemma~\ref{lem:oraclebound}, this
gives rise to $R_1(\omvect^*) \leq \cB(\tau,\nu)$, where $\nu = \nu(V_{\tau,\cteW})$; furthermore we have the explicit expression
\[
  (1-\om^*_1) = \lambda(1-\nu), \qquad \text{ where } \lambda = \frac{1}{1+\tau(1-\nu)},
\]
so that it holds (since $\nu \in [0,1]$)
\[
  (1-\om^*_1) \sqrt{\tau} = \frac{(1-\nu)\sqrt{\tau}}{1 + \tau(1-\nu)} \leq \max \paren{\frac{\tau (1-\nu)}{1 + \tau(1-\nu)}, \frac{\sqrt{\tau}(1-\nu)}{1 + \sqrt{\tau}(1-\nu)}} \leq 1.
  \]
  Plugging this into~\eqref{eq:Qbound}, we get $Q_1(\omvect^*) \leq 2 \cteW \nrisk_1/\sqrt{\deff_1}$, then~\eqref{eq:aistats} since the obtained estimate holds
  for any $\tau \geq 0, \cteW \geq 1$.

\subsection{Proof of Theorem~\ref{prop:qaggreg_bounded}}
We follow the same general canvas as in the proof of Theorem~\ref{prop:qaggreg_gauss}.

{\bf First step : bound in conditional probability.}
Let us recall the definitions of $Q^\BS(\omvect)$ and $\wh{q}_k$:
\begin{equation*}
	\wh{Q}^\BS(\omvect) := \frac{M}{N_1} \sum_{k=2}^{B} \omega_k \|\muNE_k - \muNE_1 \|,\quad \wh{q}_k = \frac{1}{N_1-1} \sum_{p=1}^{N_1} \inner{ \muNE_k -\muNE_1, X_p^{(1)} - \muNE_1}^2\,.
\end{equation*}
We will need the following quantity $\wh{q}_k'$ which is close to $\wh{q}_k$ but easier to control: 
\begin{equation*}
	 \wh{q}_k' = \frac{1}{N_1-1} \sum_{p=1}^{N_1} \inner{ \muNE_k -\mu_1, X_p^{(1)} - \muNE_1}^2\,.
\end{equation*}
The estimated weight vector $\wh{\omvect}$ for the estimation of $\mu_1$ is chosen as
\begin{equation*}
	\wh{\omvect} \in \argmin_{\omvect \in \Spx_B} \paren{ \wh{L}_1(\omvect)+ 4 \sqrt{2u_0} \wh{Q}_1(\omvect) + 1424 u_0 \wh{Q}^\BS(\omvect)}.
\end{equation*}
Let $u := u_0 - \log B$, and define the events:
\begin{gather*}
  A_1 = \set{ \|\muNE_k-\mu_1\|_{\Sigma_1} \leq 2 \sqrt{\wh{q}'_k} + 711 \frac{\|\muNE_k-\mu_1\|M}{\sqrt{N_1}}(u+ \log B),
     2 \leq k \leq B
  }\,, \\
	A_2 =  \set{\abs{ \|\muNE_1-\mu_1\|^2 - \whnrisk_1} \leq C\frac{\rnrisk_1^2}{\sqrt{\deamm_1}}u + C\frac{M^2}{N_1^2}u ^2  }\,,
\end{gather*}
and
\begin{multline*}
A_3 = \Bigg\{ \inner{ \muNE_k - \mu_1,\muNE_1 - \mu_1 } \leq \sqrt{2\frac{u+ \log B}{N_1} }\|\muNE_k - \mu_1\|_{\Sigma_1} \\
+ \frac{2\|\muNE_k - \mu_1\|M}{3N_1}(u + \log B),      2 \leq k \leq B \Bigg\},
\end{multline*}
where we recall that for $\nu$ a vector and $\Sigma$ an operator, $\|\nu\|^2_\Sigma := \inner{\nu,\Sigma \nu}$.
For $i\in \{1,3\}$, $\prob{A_i|\cX^{(-1)} } \geq 1- e^{-u}$ and  $\prob{A_2|\cX^{(-1)} } \geq 1- 2e^{-u}$ because of Proposition~\ref{prop:concwhqbnd} for $A_1$, Lemma~\ref{lem:bernstein} for $A_3$ and for $A_2$, because $\|\muNE_1-\mu_1\|^2 - \whnrisk_1$ is a U-statistic:
\begin{equation}\label{eq:ustat}
	\|\muNE_1-\mu_1\|^2 - \whnrisk_1 = \frac{1}{N_1(N_1-1)} \sum_{\ell\neq p=1}^{N_1} \inner{ X^{(1)}_\ell - \mu_1, X^{(1)}_p - \mu_1}\,,
\end{equation}
the concentration is a direct consequence of \citet{Hou03} (or see Proposition 9 in \citet{BlaFer23} for this specific statistic). Then the event $A = A_1\cap A_2 \cap A_3$ conditionally to $\cX^{(-1)}$ is of probability greater than $1-4e^{-u}$. \\
The differences between respectively $\wh{q}_k$ and $\wh{q}'_k$ for $k\in\intr{B}$ can be bounded independently of $k$ :
\begin{equation}\label{eq:defdeltaq}
	\abs{ \sqrt{\wh{q}_k} - \sqrt{\wh{q}'_k}} \leq \sqrt{\frac{1}{N_1(N_1-1)} \sum_{p=1}^{N_1} \inner{\muNE_1 - \mu_1, X_p -\muNE_1}^2} =: \Deltaq.
\end{equation}
Assume $A$, then:
\begin{align*}
	L_1(\wh{\omvect} )  &  = \wh{L}_1(\wh{\omvect}) + 2 \sum_{k=2}^{B} \wh{\omega}_k \inner{ \muNE_k - \mu_1, \muNE_1 - \mu_1 } + (2\wh{\omega}_1 - 1) \paren{ \|\muNE_1 - \mu_1 \|^2 - \whnrisk_1 } \\
	& \leq \wh{L}_1(\wh{\omvect}) + 2 \sum_{k=2}^{B}  \wh{\omega}_k \paren{\sqrt{2\frac{u+ \log B}{N_1} }\|\muNE_k - \mu_1\|_{\Sigma_1} + \frac{2\|\muNE_k - \mu_1\|M}{3N_1}(u + \log B)} \\
	&+ \frac{C\rnrisk_1^2}{\sqrt{\deamm_1}}u + \frac{CM^2}{N_1^2}u ^2,
\end{align*}
where we have used the events $A_2$ and $A_3$. Then using the event $A_1$, the bound \eqref{eq:defdeltaq} and a triangle inequality we get:
\begin{align*}
	L_1(\wh{\omvect} )	& \leq \wh{L}_1(\wh{\omvect}) + 4 \sqrt{\log B +u} \sum_{k=2}^{B}  \wh{\omega}_k \sqrt{\frac{2\wh{q}_k}{N_1}} + 1424 \paren{\log B +u} \sum_{k=2}^{B}  \wh{\omega}_k \frac{M \|\muNE_k - \muNE_1\|}{N_1}  \\
	&+ C\frac{\Deltaq}{\sqrt{N_1}} \sqrt{\log B + u} + C\frac{\|\muNE_1- \mu_1\|M}{N_1} (\log B + u ) + \frac{C\rnrisk_1^2}{\sqrt{\deamm_1}}u + \frac{CM^2}{N_1^2}u ^2.
\end{align*}
Using the choice of $\wh{\omvect}$, conditionally to $A$:
\begin{align*}
		L_1(\wh{\omvect} )	& \leq  \min_{\omvect \in \Spx_B}\paren{\wh{L}_1(\omvect) + 4 \sqrt{2u_0} \wh{Q}(\omvect) + 1424 u_0 \wh{Q}^\BS(\omvect)}  \\
	&+ C\frac{\Deltaq}{\sqrt{N_1}} \sqrt{\log B + u} + C\frac{\|\muNE_1- \mu_1\|M}{N_1} (\log B + u ) + \frac{C\rnrisk_1^2}{\sqrt{\deamm_1}}u + \frac{CM^2}{N_1^2}u ^2.
\end{align*}
{\bf Second and third steps: bound in expectation.} Let us bound some expectation using Jensen's inequality:
\begin{equation}\label{eq:deltaq_expectation}
	\e{\sqrt{\wh{q}_k'}} \leq \sqrt{\|\mu_k - \mu_1\|^2_{\Sigma_1} + \frac{\tr(\Sigma_1\Sigma_k)}{N_k}}\,, \quad \e{\Deltaq} \leq \frac{M \sqrt{\tr \Sigma_1}}{N_1} + \frac{\sqrt{\tr \Sigma_1^2}}{\sqrt{N_1}}.
\end{equation}
The expectation of $\sqrt{\wh{q}_k}$ can be bounded using that $\sqrt{\wh{q}_k} \leq \sqrt{\wh{q}'_k} + \Deltaq$. We can now bound the risk. Let $\omvect \in \Spx_B$:
\begin{align*}
	R_1(\wh{\omvect}) &\leq \e{ L_1(\wh{\omvect})\bm{1}_A} + M^2\prob{A^c} \\
	&\leq L_1(\omvect) + 4 \sqrt{2u_0} \sum_{k=2}^{B}\omega_k \frac{\e{\sqrt{\wh{q}_k} }}{\sqrt{N_1}}  + 1424 u_0 \sum_{k=2}^{B} \omega_k \frac{M\paren{ \|\mu_k-\mu_1\|+\rnrisk_1+\rnrisk_k }}{N_1}\\
	& + C\frac{\e{\Deltaq}}{\sqrt{N_1}} \sqrt{\log B + u} + C\frac{\rnrisk_1 M}{N_1} (\log B + u ) + C\frac{\rnrisk_1^2}{\sqrt{\deamm_1}}u + C\frac{M^2}{N_1^2}u ^2 + 3M^2e^{-u}
\end{align*}
Because $u \geq 2\log N_1$, the last term is upper bounded by the previous one. Using \eqref{eq:deltaq_expectation} and by bringing together the terms:
\begin{align}
	R_1(\wh{\omvect}) &\leq R_1(\omvect) + 4 \sqrt{2(\log B +u)} Q(\omvect) + 1424 \paren{\log B +u} \sum_{k=2}^{B} \omega_k \frac{M\paren{ \|\mu_k-\mu_1\|+\rnrisk_k }}{N_1} \notag\\
	&+ C\frac{\rnrisk_1^2}{\sqrt{\deamm_1}}(u+ \sqrt{\log B +u }) + C\frac{M\rnrisk_1}{N_1}(\log B +u)+  C\frac{M^2}{N_1^2}u ^2\,,  \label{al:eqaux1}
\end{align}
where $Q$ is defined in \eqref{eq:def_Qw}. Let $\tau,\cteW >0$ and $\omvect^* = \omvect^*_{V_{\tau,\cteW}}$ be defined as in \eqref{eq:STBoptweights}. Then as in the proof of Corollary~\ref{cor:aistats}:
\begin{equation}\label{eq:aux1}
	R_1(\omvect^*) = \nrisk_1 \cB\paren{\tau,\nu \paren{V_{\tau,\cteW}}}\,,\quad  Q(\omvect^*) \leq C \sqrt{\frac{1+\cteW^2}{\deff_1}} \nrisk_1.
\end{equation}
Up to bound the third term in the upper bound \eqref{al:eqaux1}, let us bound $\nrisk_k$ for $k\in V_{\tau,\cteW}$. On the one hand:
\begin{equation*}
	\nrisk_k = \frac{\tr \Sigma_k}{N_k} \leq \frac{4M^2}{N_k} = 4\tr \Sigma_1 \frac{\ratiobs_1}{N_k} = 4 \nrisk_1 \frac{\ratiobs_1N_1}{N_k}\,.
\end{equation*}
On the other hand, as $k \in V_{\tau,\cteW} \subset W_{(\cteW)}$:
\begin{equation*}
	\nrisk_k = \frac{\tr \Sigma_k}{N_k} = \sqrt{\deamm_k} \frac{\sqrt{\tr \Sigma^2_k}}{N_k} \leq \sqrt{\deamm_k} \cteW \frac{\sqrt{\tr \Sigma^2_1}}{N_1} = \nrisk_1 \cteW \sqrt{\frac{\deamm_k}{\deamm_1}}\,.
\end{equation*}
Combining these two bounds:
\begin{align*}
	\nrisk_k \leq 4\nrisk_1 \min \paren{ \frac{\ratiobs_1N_1}{N_k}, \cteW \sqrt{\frac{\deamm_k}{\deamm_1}}}\,.
\end{align*}
As we assume $N_k \geq (\deamm_k)^\beta$, for $k\in V_{\tau,\cteW}$:
\[
 \nrisk_k \leq 4\nrisk_1 \min \paren{ \frac{\ratiobs_1N_1}{(\deamm_k)^\beta}, \cteW \sqrt{\frac{\deamm_k}{\deamm_1}}} \leq 4\nrisk_1 \max_{d \geq 1}  \min \paren{ \frac{\ratiobs_1N_1}{d^\beta}, \cteW \sqrt{\frac{d}{\deamm_1}}} = 4 \nrisk_1 \paren{\ratiobs_1N_1}^{\frac{1}{1+2\beta}} \paren{\frac{\cteW}{\sqrt{\deamm_1}}}^{\frac{2\beta}{1+2\beta}} \,.
\]
We can now bound the third term in \eqref{al:eqaux1}. As $\omega^*_k=0$ for $k\notin V_{\tau,\cteW}$: 
\begin{align*}
	\sum_{k=2}^{B} \omega^*_k \frac{M\paren{ \|\mu_k-\mu_1\| + \rnrisk_k }}{N_1} & \leq 	\frac{M}{N_1}  (1-\omega^*_1) \paren{ \sqrt{\tau} \rnrisk_1 +   2 \rnrisk_1 \paren{\ratiobs_1N_1}^{\frac{1}{2(1+2\beta)}} \paren{\frac{\cteW}{\sqrt{\deamm_1}}}^{\frac{\beta}{1+2\beta}} } \\
	& \leq \nrisk_1 \paren{ (1-\omega_1^*)\sqrt{\frac{\tau\ratiobs_1}{N_1}}+2 \ratiobs_1^{\frac{1+\beta}{1+2\beta}} \paren{\frac{\cteW}{N_1\sqrt{\deamm_1}}}^{\frac{\beta}{1+2\beta}}}\,.
\end{align*}
As $N_1 \geq (\deamm_1)^{\beta}$ and $(1-\omega^*_1)\sqrt{\tau} \leq 1$ (by definition of $\omega_1^*$),
we get:
\begin{equation}\label{eq:aux2}
	\sum_{k=2}^{B} \omega^*_k \frac{M\paren{ \|\mu_k-\mu_1\| + \rnrisk_k }}{N_1} \leq 2\nrisk_1 \paren{\frac{\sqrt{\ratiobs_1}}{(\deamm_1)^{\beta/2}} +	\frac{\ratiobs_1^{\frac{1+\beta}{1+2\beta}} \cteW^{\frac{\beta}{1+2\beta}}}{(\deamm_1)^{\beta/2}}}.
\end{equation}
Injecting the bounds \eqref{eq:aux1} and \eqref{eq:aux2} into \eqref{al:eqaux1} leads to:
\begin{align*}
	\frac{R_1(\wh{\omvect})}{\nrisk_1} &\leq \min_{\tau>0,\cteW>0} \paren{ \cB\paren{\tau,\nu(V_{\tau,\cteW})} + C \cteW\sqrt{\frac{u_0}{\deff_1}} + Cu_0\frac{\ratiobs_1^{\frac{1+\beta}{1+2\beta}} \cteW^{\frac{\beta}{1+2\beta}}}{(\deamm_1)^{\beta/2}} } \\
	&+Cu_0\frac{\sqrt{\ratiobs_1}}{(\deamm_1)^{\beta/2}} + C\sqrt{\frac{u_0}{\deff_1}} + C \frac{u_0}{\sqrt{\deamm_1}} + C\frac{u_0 \sqrt{\ratiobs_1}}{\sqrt{N_1}} + C\frac{\ratiobs_1u^2}{N_1} \\
	& \leq \min_{\tau>0,\cteW>0} \paren{ \cB\paren{\tau,\nu(V_{\tau,\cteW})} + C\cteW \sqrt{\frac{u_0}{\deff_1}} + Cu_0\frac{\ratiobs_1^{\frac{1+\beta}{1+2\beta}} \cteW^{\frac{\beta}{1+2\beta}}}{(\deamm_1)^{\beta/2}} } + C\sqrt{\frac{u_0}{\deff_1}} + C \frac{u_0\ratiobs_1}{(\deamm_1)^{\beta/2}}.
\end{align*}
As $\ratiobs_1^{\frac{1+\beta}{1+2\beta}} \cteW^{\frac{\beta}{1+2\beta}} \leq \max(\ratiobs_1,\cteW) \leq \ratiobs_1 +\cteW $, we obtain:
\begin{equation*}
	\frac{R_1(\wh{\omvect})}{\nrisk_1} \leq \min_{\tau>0,\cteW>0} \paren{ \cB\paren{\tau,\nu(V_{\tau,\cteW})} + C\cteW \max\paren{\sqrt{\frac{u_0}{\deff_1}}, \frac{u_0}{(\deamm_1)^{\beta/2}}}} + C\sqrt{\frac{u_0}{\deff_1}} + C \frac{u_0\ratiobs_1}{(\deamm_1)^{\beta/2}}.
\end{equation*}\qed

\subsection{Concentration inequalities}
\subsubsection{Concentration for $\wh{q}$.}
Consider first the Gaussian setting (\GS).
\begin{proposition}\label{prop:concwhqgauss}
Let $X_1,\ldots, X_N$ i.i.d. Gaussian random vectors of distribution $\cN(\mu_1,\Sigma_1)$ and $\nu \in \mbr^d$. Let $\wh{q} = \frac{1}{N-1} \sum_{k=1}^{N} \inner{\muNE_1 - \nu, X_k - \muNE_1}^2$, then for all $x \geq 0$:
\begin{equation}\label{eq:concwhqup}
  \prob{\sqrt{\wh{q}} \geq \paren{ 1+ \sqrt{\frac{2x}{N-1}} }\paren{\sqrt{\norm{\mu_1-\nu}_{ \Sigma_1}^2 + \frac{\tr \Sigma_1^2}{N} } + \|\Sigma_1\|_{\infty} \sqrt{\frac{2x}{N}}}  } \leq 2e^{-x}\,,
\end{equation}
and
\begin{equation}\label{eq:concwhqlow}
  \prob{\sqrt{\wh{q}} \leq e^{-1/2-x/(N-1)}\paren{\sqrt{\norm{\mu_1-\nu}_{ \Sigma_1}^2+ \frac{\tr \Sigma_1^2}{N} } - 2\|\Sigma_1\|_{\infty} \sqrt{\frac{2x}{N}}}} \leq 2e^{-x}\,,
\end{equation}
where $\|\mu_1-\nu\|_{\Sigma_1}^2 = (\mu_1-\nu)^T\Sigma_1(\mu_1-\nu)$.
\end{proposition}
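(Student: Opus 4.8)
The plan is to analyze $\wh q$ by conditioning on the first-bag empirical mean $\muNE_1$ and exploiting Gaussianity. Write $X_k - \muNE_1 = (X_k - \mu_1) - (\muNE_1 - \mu_1)$ and set $v := \mu_1 - \nu$, so that $\sqrt{\wh q}$ is, up to the factor $(N-1)^{-1/2}$, the Euclidean norm of the vector $Y := \bigl(\inner{\muNE_1 - \nu, X_k - \muNE_1}\bigr)_{k=1}^N \in \mbr^N$. The first key observation is that conditionally on $\muNE_1$, each coordinate $\inner{\muNE_1 - \nu, X_k - \muNE_1}$ is Gaussian, and $Y$ is a centred Gaussian vector in $\mbr^N$ (centred because $\sum_k (X_k - \muNE_1) = 0$ and, more relevantly, because $\e{X_k - \muNE_1} = 0$ after further conditioning is not quite right — instead one should directly compute the conditional law). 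The cleanest route is: conditionally on $\muNE_1$, the residuals $(X_k - \muNE_1)_{k=1}^N$ form a centred Gaussian vector, so $Y$ is centred Gaussian in $\mbr^N$ with covariance matrix whose trace is $(N-1)\,\| \muNE_1 - \nu\|_{\Sigma_1}^2 \cdot\frac{N-1}{N}$-type expression. I would compute $\tr(\cov(Y\mid\muNE_1))$, $\tr(\cov(Y\mid\muNE_1)^2)$ and $\|\cov(Y\mid\muNE_1)\|_\infty$ explicitly; these are all controlled by $\| \muNE_1 - \nu\|_{\Sigma_1}^2$ and $\|\Sigma_1\|_\infty$, with an extra term because $\muNE_1$ itself fluctuates.

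Next I would apply the Gaussian quadratic-form concentration (Lemma~\ref{lem:concnorm2gauss}, invoked elsewhere in the paper for exactly this purpose) to $\|Y\|_2^2$ conditionally on $\muNE_1$: this gives, with probability $\geq 1 - 2e^{-x}$, two-sided control of $(N-1)\wh q$ around $\tr(\cov(Y\mid\muNE_1))$ by deviations of order $\sqrt{\tr(\cov^2)\,x}$ and $\|\cov\|_\infty x$. Taking square roots and using $\sqrt{a+b} \le \sqrt a + \sqrt b$ (and the reverse inequality $\sqrt{a-b}\ge\sqrt a-\sqrt b$ for the lower bound), one converts the bound on $\wh q$ into a bound on $\sqrt{\wh q}$. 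The multiplicative factor $\bigl(1 + \sqrt{2x/(N-1)}\bigr)$ in~\eqref{eq:concwhqup}, and the peculiar $e^{-1/2 - x/(N-1)}$ in~\eqref{eq:concwhqlow}, strongly suggest that the second, separate source of randomness — the fluctuation of $\| \muNE_1 - \nu\|_{\Sigma_1}^2$ around $\|\mu_1 - \nu\|_{\Sigma_1}^2 + \tr\Sigma_1/N$-type centre — is being handled by a multiplicative (rather than additive) deviation bound, i.e. a $\chi^2$-type ratio bound of the form $\| \muNE_1 - \nu\|_{\Sigma_1}^2 \le (1 + \sqrt{2x/(N-1)})\,\bigl(\text{its mean}\bigr)$ valid for Gaussian quadratic forms. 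The $e^{-1/2}$ factor looks like it comes from bounding $1 - t \ge e^{-1/2 - \cdots}$ or from a Laplace-transform argument à la $\e{e^{-sW}}$ for $W$ a noncentral $\chi^2$.

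Concretely the steps, in order, are: (1) fix $\muNE_1$, identify the conditional law of $Y$ as centred Gaussian in $\mbr^N$ and compute its first two Schatten norms and operator norm in terms of $\| \muNE_1 - \nu\|_{\Sigma_1}^2$ and $\|\Sigma_1\|_\infty$; (2) apply Gaussian quadratic concentration conditionally to get $\sqrt{\wh q} \lessgtr \sqrt{\frac{N}{N-1}}\,\| \muNE_1 - \nu\|_{\Sigma_1} \pm \|\Sigma_1\|_\infty\sqrt{2x/N}$ roughly; (3) separately, bound $\| \muNE_1 - \nu\|_{\Sigma_1}^2$ above and below: its conditional mean given nothing is $\|\mu_1-\nu\|_{\Sigma_1}^2 + \frac{1}{N}\tr\Sigma_1^2$ (here $\muNE_1 - \nu = (\mu_1-\nu) + (\muNE_1-\mu_1)$ and $\muNE_1-\mu_1 \sim \cN(0,\Sigma_1/N)$, so $\e{\| \muNE_1 - \nu\|_{\Sigma_1}^2} = \|\mu_1-\nu\|_{\Sigma_1}^2 + \frac1N\tr(\Sigma_1^2)$), and use a multiplicative Gaussian deviation to get the $(1+\sqrt{2x/(N-1)})$ resp. $e^{-1/2-x/(N-1)}$ factors; (4) combine via a union bound over the two events, absorbing constants. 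The main obstacle I expect is step (3): getting the precise multiplicative form with the exact constant $e^{-1/2}$ and the exponent $x/(N-1)$ requires a careful Laplace-transform / Chernoff computation for the noncentral $\chi^2$-type variable $\| \muNE_1 - \nu\|_{\Sigma_1}^2$ rather than a generic Hanson–Wright or Bernstein bound — and keeping the "$+\tr\Sigma_1^2/N$" inside the square root (so that the bound is clean even when $\nu$ is very close to $\mu_1$) means one cannot simply separate the centred and deterministic parts naively.
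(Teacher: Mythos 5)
Your decomposition matches the paper's: condition on $\muNE_1$, observe that $Z=(\inner{\muNE_1-\nu,\,X_k-\muNE_1})_{k\leq N}$ is conditionally a centred Gaussian vector of $\mbr^N$, treat the fluctuation of $\norm{\muNE_1-\nu}_{\Sigma_1}^2$ (whose mean you correctly identify as $\norm{\mu_1-\nu}_{\Sigma_1}^2+\tr\Sigma_1^2/N$) as a second, separate source of randomness, and finish by a union bound. However, you have swapped which source produces which type of deviation, and with that swap both of your concrete intermediate claims fail. Conditionally on $\muNE_1$, the covariance of $Z$ is exactly $e(\muNE_1)\,\Gamma$ with $e(\muNE_1)=(\muNE_1-\nu)^T\Sigma_1(\muNE_1-\nu)$ and $\Gamma=I_N-\mathbf{1}_N\mathbf{1}_N^T/N$, so that $(N-1)\,\wh{q}/e(\muNE_1)$ is exactly $\chi^2(N-1)$: its deviations are purely multiplicative in $\sqrt{e(\muNE_1)}$, and $\norm{\Sigma_1}_\infty$ does not enter at this stage at all. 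Your step (2), which asks the conditional concentration to deliver an additive error $\pm\norm{\Sigma_1}_\infty\sqrt{2x/N}$, cannot hold: no additive bound independent of $e(\muNE_1)$ is possible, since the conditional scale is $e(\muNE_1)$, which is unbounded over the conditioning. It is this conditional $\chi^2(N-1)$ step that produces the factor $\paren[1]{1+\sqrt{2x/(N-1)}}$ for the upper tail and the factor $e^{-1/2-x/(N-1)}$ for the lower tail, the latter via the dedicated Chernoff lower bound for the $\chi^2$ distribution (Lemma~\ref{lem:conckhi2}); your Laplace-transform guess is right in spirit but applies to this \emph{central} $\chi^2(N-1)$, not to the noncentral quadratic form.

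Conversely, your step (3) seeks a multiplicative bound with rate $x/(N-1)$ for $\norm{\muNE_1-\nu}_{\Sigma_1}^2$. Writing $e(\muNE_1)=\norm{g}^2$ with $g=\Sigma_1^{1/2}(\muNE_1-\nu)\sim\cN\paren[1]{\Sigma_1^{1/2}(\mu_1-\nu),\,\Sigma_1^2/N}$, the available control is the \emph{additive} one of Lemma~\ref{lem:concnorm2gauss}: $\sqrt{e(\muNE_1)}$ lies within roughly $\pm\norm{\Sigma_1}_\infty\sqrt{2x/N}$ of $\sqrt{\norm{\mu_1-\nu}_{\Sigma_1}^2+\tr\Sigma_1^2/N}$, the covariance here being $\Sigma_1^2/N$, so the $N-1$ degrees of freedom (and hence a factor of the form $1+\sqrt{2x/(N-1)}$ or $e^{-x/(N-1)}$) cannot arise from this part. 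The "main obstacle" you anticipate in step (3) is precisely an artefact of this mis-assignment, not an intrinsic difficulty. Once the roles are exchanged --- multiplicative factors from the conditional $\chi^2(N-1)$, additive $\norm{\Sigma_1}_\infty\sqrt{2x/N}$ terms (with $\tr\Sigma_1^2/N$ kept inside the square root) from the Gaussian quadratic form $\norm{g}^2$ --- multiplying the two bounds on the intersection of the corresponding events yields exactly \eqref{eq:concwhqup} and \eqref{eq:concwhqlow}, which is the paper's proof.
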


\begin{proof}
Let us consider the random vector $Z \in \mbr^N$ with $Z_k = \inner{ \muNE_1-\nu, X_k - \muNE_1}$, then $\wh{q}= \norm{Z}^2_N/(N-1)$, where $\|\cdot \|_N$ is the Euclidian norm in $\mbr^N$. Conditionally to $\muNE_1$, $Z$ is a Gaussian vector of distribution $\cN(0, \ealpha \Gamma)$, where $\ealpha = (\muNE_1-\nu)^T \Sigma_1 (\muNE_1 - \nu)$ and $\Gamma = I_N - \mathbf{1}_N\mathbf{1}_N^T/N$ with $\mathbf{1}_N = (1, \ldots ,1) \in \mbr^N$. The eigenvalues of $\Gamma$ are $1$ with multiplicity $N-1$ and $0$. So $\|Z\|^2/\ealpha$ has a $\chi^2(N-1)$ distribution. Then conditionally to $\muNE_1$:
\begin{equation*}
  \widehat{q} = \frac{\|Z\|^2}{N-1} \sim \frac{\ealpha}{N-1} \chi^2(N-1)\,.
\end{equation*}
Then according to Lemma~\ref{lem:concnorm2gauss} and Lemma~\ref{lem:conckhi2}, for all $x \geq 0$:
\begin{equation*}
  \prob{ \sqrt{\frac{\wh{q}}{\ealpha}} \geq 1+ \sqrt{\frac{2x}{N-1}} \Big|\muNE_1 } \leq e^{-x}\,,\quad  \prob{ \sqrt{\frac{\wh{q}}{\ealpha}} \leq e^{-1/2}e^{-x/(N-1)} \Big| \muNE_1 } \leq e^{-x}.
\end{equation*}
Let $g = \Sigma_1^{1/2}(\muNE -\nu) \sim \cN( \Sigma_1^{1/2} (\mu_1 - \nu), \Sigma_1^2/N )$, as $\|g\|^2 = \ealpha$, from Lemma~\ref{lem:concnorm2gauss} with $ \Sigma_1^{1/2} (\mu_1 - \nu) \to \mu$ and $ \Sigma_1^2/N \to \Sigma$, we get that for all $x\geq 0$:
\begin{gather*}
  \prob{ \sqrt{\ealpha} \geq \sqrt{(\mu_1-\nu)^T \Sigma_1 (\mu_1- \nu) + \frac{\tr \Sigma_1^2}{N} } + \|\Sigma_1\|_{\infty} \sqrt{\frac{2x}{N}}} \leq e^{-x}\,, \\
  \prob{ \sqrt{\ealpha} \leq \sqrt{(\mu_1-\nu)^T \Sigma_1 (\mu_1- \nu) + \frac{\tr \Sigma_1^2}{N} } - 2\|\Sigma_1\|_{\infty} \sqrt{\frac{2x}{N}}} \leq e^{-x}\,.
\end{gather*}
We have used that for all $\mu \in \mbr^d$, $\Sigma \in \mbr^{d\times d}$ and $x\geq 0$:
\begin{align*}
	\paren{ \sqrt{\|\mu\|^2+ \tr \Sigma}+ \sqrt{2\|\Sigma\|_{\infty}x}}^2\geq \paren{\norm{\mu}^2 + \tr \Sigma} + 2 \sqrt{\paren{ \tr \Sigma^2 +2 \mu^T \Sigma \mu}x} + 2 \norm{\Sigma}_{\infty}x  \,, \\
	\paren{ \sqrt{\|\mu\|^2+ \tr \Sigma}- 2\sqrt{2\|\Sigma\|_{\infty}x}}_+^2
	\leq \paren{\paren{\norm{\mu}^2 + \tr \Sigma} - 2 \sqrt{\paren{ \tr \Sigma^2 +2 \mu^T \Sigma \mu}x}}_+\,,
\end{align*}
as $(a-b)_+^2 \leq (a^2-ab)_+$ for $a,b>0$. \\
Equations~\eqref{eq:concwhqup} and \eqref{eq:concwhqlow} are obtained  by combining these concentration inequalities.
\end{proof}
In the bounded setting (\BS),
Proposition~\ref{prop:concwhqbnd} gives a concentration bound for $\wh{q}'$, which is a slightly different statistic from $\wh{q}$ because we consider $\mu_1-\nu$ known for $\wh{q}'$.
\begin{proposition}\label{prop:concwhqbnd}
Assume (\BS), let $\nu \in \mbr^d$ and $\wh{q}' = \frac{1}{N-1} \sum_{k=1}^{N} \inner{\mu_1 - \nu, X_k - \muNE_1}^2$. Then for all $u \geq 1$:
\begin{equation*}
\prob{ 2\sqrt{\wh{q}'}\leq \sqrt{(\mu_1-\nu)\Sigma_1(\mu_1-\nu)} - 711\frac{\|\mu_1-\nu\|M}{\sqrt{N-1}}u} \leq e^{-u}.
\end{equation*}
\end{proposition}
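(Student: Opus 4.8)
The plan is to reduce the estimate to a one-dimensional sample-variance bound for bounded i.i.d.\ variables. I would set $w := \mu_1 - \nu$, $\sigma^2 := w^\top\Sigma_1 w$, and $Y_k := \inner{w, X_k - \mu_1}$ for $k \in \intr{N}$. Under (\BS) one has $\norm{\mu_1} \le \e{\norm{X_1}} \le M$, hence $\abs{Y_k} \le \norm{w}\norm{X_k - \mu_1} \le 2\norm{w}M =: M'$ almost surely, and the $Y_k$ are i.i.d., centred, with $\e{Y_k^2} = \sigma^2$. Since $\inner{\mu_1-\nu,\, X_k - \muNE_1} = Y_k - \bar Y$ with $\bar Y := N^{-1}\sum_j Y_j$, the statistic $\wh{q}' = (N-1)^{-1}\sum_k (Y_k - \bar Y)^2$ is precisely the unbiased sample variance of $Y_1,\dots,Y_N$ (so $\e{\wh{q}'} = \sigma^2$), and the claim becomes a lower-tail deviation bound for $\sqrt{\wh{q}'}$ around $\sigma$.

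First I would bound $\wh{q}' \ge S - \bar Y^2$ with $S := N^{-1}\sum_k Y_k^2$, using $(N-1)\wh{q}' = \sum_k Y_k^2 - N\bar Y^2 \ge 0$ together with $(N-1)^{-1} \ge N^{-1}$. Then I would apply Bernstein's inequality (Lemma~\ref{lem:bernstein}) to $S$ --- with $\var{Y_1^2} \le \e{Y_1^4} \le M'^2\sigma^2$ and $\abs{Y_1^2 - \sigma^2}\le M'^2$ --- and Hoeffding's inequality to $\bar Y$, so that on an event of probability at least $1 - e^{-u}$ (after replacing $u$ by $u + \log 4$ beforehand to absorb the two-sided and union-bound losses, which inflates $u$ by a factor at most $3$ when $u \ge 1$) one gets $S \ge \sigma^2 - M'\sigma\sqrt{cu/N} - cM'^2u/N$ and $\bar Y^2 \le cM'^2u/N$ for an absolute constant $c$. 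The $\sigma$-dependent cross term $M'\sigma\sqrt{cu/N}$ would then be split by the elementary bound $ab \le \tfrac18 a^2 + 2b^2$, absorbing it into $\tfrac18\sigma^2$ plus a multiple of $M'^2u/N$; this yields $\wh{q}' \ge \tfrac78\sigma^2 - C_1 M'^2 u/N$ on the good event, $C_1$ explicit.

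Finally I would take square roots via the elementary inequality $\sqrt{(a^2-b)_+} \ge a - \sqrt{b}$ (valid for $a,b \ge 0$), applied with $a = \sqrt{7/8}\,\sigma$ and using $\wh{q}' \ge 0$, to get $\sqrt{\wh{q}'} \ge \sqrt{7/8}\,\sigma - \sqrt{C_1}\,M'\sqrt{u/N}$, hence $2\sqrt{\wh{q}'} \ge \sigma - 2\sqrt{C_1}\,M'\sqrt{u/N}$ since $2\sqrt{7/8} > 1$. Substituting $M' = 2\norm{w}M$ and using $u \ge 1$, $N \ge 2$ (so $\sqrt{u/N} \le \sqrt{u}/\sqrt{N} \le \sqrt{2}\,u/\sqrt{N-1}$) gives the claimed inequality with an absolute constant in place of $711$, which one checks is comfortably below $711$.

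I do not expect a genuine obstacle here: the only delicate points are the probabilistic bookkeeping (shifting $u$ so that the intersection of the Bernstein and Hoeffding events has probability at least $1 - e^{-u}$ rather than a multiple of it) and the fact that Bernstein's variance proxy $M'^2\sigma^2$ forces a cross term proportional to $\sigma^2$ --- which is exactly why one must split it by AM--GM rather than use a plain triangle inequality, the required slack being furnished by the factor $2$ in front of $\sqrt{\wh{q}'}$ in the statement. The numerical constant $711$ is not optimized.
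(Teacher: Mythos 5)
Your proof is correct, but it takes a genuinely different route from the paper. The paper represents $\sqrt{\wh{q}'}$ as a supremum of a bounded empirical process over the unit sphere of $\mbr^N$, namely $\sqrt{\wh{q}'}=\sup_{\norm{v}_N=1}(N-1)^{-1/2}\sum_k \inner{\mu_1-\nu, X_k-\mu_1}\paren{v_k-N^{-1}\sum_q v_q}$, and applies Talagrand's inequality twice: once for the lower deviation around $\e{\sqrt{\wh{q}'}}$, and once (upper deviation) to show $(1+\eps)\e{\sqrt{\wh{q}'}}\geq \sqrt{\e{\wh{q}'}}-C(\eps)\norm{\mu_1-\nu}M/\sqrt{N-1}$, choosing $\eps=1/3$ so that the multiplicative loss is absorbed by the factor $2$ in the statement. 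You instead observe that the statistic only depends on the one-dimensional projections $Y_k=\inner{\mu_1-\nu,X_k-\mu_1}$, bounded by $2\norm{\mu_1-\nu}M$, and that $\wh{q}'$ is exactly their unbiased sample variance; a one-sided Bernstein bound for $N^{-1}\sum_k Y_k^2$, a Hoeffding bound for $\bar Y$, and the AM--GM split of the cross term $\sigma M'\sqrt{u/N}\leq \tfrac18\sigma^2+2M'^2u/N$ then give $\wh q'\geq \tfrac78\sigma^2-C_1M'^2u/N$, and $2\sqrt{7/8}>1$ supplies the same multiplicative slack that the paper gets from Talagrand's $(1-\eps)/(1+\eps)$ factor. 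All the individual steps you cite check out (the identity $\inner{\mu_1-\nu,X_k-\muNE_1}=Y_k-\bar Y$, the bound $\hat q'\geq N^{-1}\sum_kY_k^2-\bar Y^2$, $\var{Y_1^2}\leq M'^2\sigma^2$, and $\sqrt{(a^2-b)_+}\geq a-\sqrt b$), and a rough constant count lands well below $711$. What your route buys is elementarity and transparency: no empirical-process machinery and no mean-to-second-moment comparison, at the price of a $\sigma$-dependent Bernstein cross term that must be absorbed into a fraction of $\sigma^2$ — both arguments degrade in the same way if the multiplicative slack in front of $\sqrt{\wh q'}$ were required to approach $1$. The paper's Talagrand argument is the more general template (it would survive a genuinely multi-directional supremum), but for this particular statement your reduction is sufficient and arguably cleaner.
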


\begin{proof}
  Let us first denote $\delta := \mu_1 - \nu$ and $Z':= \sqrt{\wh{q}'}$. We are going to use Talagrand's inequality (Theorem~\ref{thm:talagrand}). So let us first rewrite $Z'$:
\begin{align*}
  Z'& = \sup_{\|v\|_N=1} \frac{1}{\sqrt{N-1}} \sum_{k=1}^{N} v_k \inner{\delta ,X_k - \muNE_1 } \\
  &= \sup_{\|v\|_N=1} \frac{1}{\sqrt{N-1}} \sum_{k=1}^{N} \inner{\delta,X_k - \mu_1} \paren{ v_k- \frac{1}{N}\sum_{q=1}^{N}v_q }.
\end{align*}

Let $T = \{ v \in \mbr^N, \|v\|_N=1 \}$ (or a countable dense subset) and define for $v \in T$:
\begin{equation*}
X_k^v:= \frac{1}{\sqrt{N-1}}\inner{\delta ,X_k - \mu_1} \paren{ v_k- \frac{1}{N}\sum_{q=1}^{N}v_q },
\end{equation*}
then:
\begin{align*}
  |X_k^v| \leq \frac{2\|\delta\|M}{\sqrt{N-1}} \,, \quad \sup_{v\in T} \sum_{k=1}^{N}\e{ (X_k^v)^2}  \leq \frac{\delta^T\Sigma\delta}{N-1}\leq\frac{4\|\delta\|^2M^2}{N-1} \,.
\end{align*}
Using Theorem~\ref{thm:talagrand}, with probability greater than $1-e^{-u}$, $u\geq 1$:
\begin{equation*}
  Z' \geq \e{Z'}(1- \varepsilon) - C(\varepsilon) \frac{\|\delta\|M}{\sqrt{N-1}}u\,,
\end{equation*}
where $C(\varepsilon) = 8(2+\varepsilon^{-1})$ for some $\varepsilon >0$. We just need to lower bound $\e{Z'}$ by $\sqrt{\e{(Z')^2}} = \sqrt{\delta^T\Sigma_1\delta}$. For that, using again Talagrand's inequality, it exists an exponential random variable $\xi \sim \cE(1)$ such that:
\begin{equation*}
  Z' \leq  \e{ Z' }(1+ \varepsilon) + C(\varepsilon) \frac{\|\delta\|M}{\sqrt{N-1}} \xi
\end{equation*}
Then:
\begin{align*}
  \e{(Z')^2} &\leq \e{ \paren{\e{ Z' }(1+ \varepsilon) + C(\varepsilon) \frac{\|\delta\|M}{\sqrt{N-1}} \xi}^2} \\
   &\leq \paren{\e{ Z'}(1+ \varepsilon) + \sqrt{2}C(\varepsilon) \frac{\|\delta\|M}{\sqrt{N-1}} }^2,
\end{align*}
and we get that $(1+\varepsilon)\e{ Z'} \geq \sqrt{ \e{(Z')^2}} - \sqrt{2}C(\varepsilon) \frac{\|\delta\|M}{\sqrt{N-1}} $. Putting together the two bounds, we get a first lower bound for $Z'$: for $u\geq 1$ and probability greater than $1-e^{-u}$:
\begin{equation}
  Z' \geq \sqrt{\delta^T\Sigma_1\delta}\frac{1- \varepsilon}{1+\varepsilon} - C(\varepsilon) \paren{ \sqrt{2}\frac{1-\varepsilon}{1+\varepsilon}+1} \frac{\|\delta\|M}{\sqrt{N-1}}u\,.
\end{equation}
Let us choose $\varepsilon = 1/3$ to conclude.
\end{proof}

\subsubsection{Classical concentration inequalities}
\paragraph{{\bf Concentration inequalities for Gaussian random variables.}}

\begin{lemma}\label{lem:concnormal}
Let $X \sim \cN(m, \sigma^2)$, then for all $x\geq 0$:
\begin{equation*}
  \prob{ \abs{X- m} \geq \sqrt{2\sigma^2x} }\leq 2e^{-x}
\end{equation*}
\end{lemma}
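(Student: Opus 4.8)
The plan is to reduce to a standard normal variable and invoke the classical sub-Gaussian tail estimate. First I would set $Z := (X-m)/\sigma \sim \cN(0,1)$, so that the event $\set{\abs{X-m} \geq \sqrt{2\sigma^2 x}}$ coincides with $\set{\abs{Z} \geq \sqrt{2x}}$. By the symmetry of the standard Gaussian distribution together with a union bound over the two tails, $\prob{\abs{Z} \geq t} = 2\prob{Z \geq t}$ for every $t \geq 0$, so it suffices to control the one-sided tail.

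Next I would bound $\prob{Z \geq t}$ by a Chernoff argument: for any $\lambda \geq 0$, Markov's inequality applied to the nonnegative variable $e^{\lambda Z}$ gives $\prob{Z \geq t} \leq e^{-\lambda t}\,\e{e^{\lambda Z}} = e^{-\lambda t + \lambda^2/2}$, where I used the explicit Gaussian moment generating function $\e{e^{\lambda Z}} = e^{\lambda^2/2}$. Minimizing the exponent over $\lambda$ (the optimum being $\lambda = t$) yields the standard bound $\prob{Z \geq t} \leq e^{-t^2/2}$.

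Finally I would substitute $t = \sqrt{2x}$, obtaining $\prob{Z \geq \sqrt{2x}} \leq e^{-x}$ and hence $\prob{\abs{X-m} \geq \sqrt{2\sigma^2 x}} \leq 2e^{-x}$, which is the claim; the degenerate case $x = 0$ holds trivially since the right-hand side equals $2$ while the left-hand side is a probability, hence at most $1$. There is essentially no obstacle here — the statement is a textbook sub-Gaussian concentration bound, and the only points requiring a moment's care are the factor $2$ coming from the two-sided estimate and the choice of the optimal $\lambda$, both of which are routine.
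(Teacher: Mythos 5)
Your argument is correct and is exactly the route the paper takes: the paper's proof simply cites the Chernoff bound, and your standardization, exponential Markov/Chernoff step with optimal $\lambda=t$, and the two-sided factor $2$ spell out precisely that argument.
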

\begin{proof}
  It is a direct consequence of the Chernoff bound \citep{Che52}.
\end{proof}

\begin{lemma}\label{lem:concnorm2gauss}[Concentration of Gaussian vectors]
Let $X \sim \cN(\mu,\Sigma)$, then for all $x \geq 0$:
\begin{gather*}
  \prob{ \norm{X}^2 \geq \paren{\norm{\mu}^2 + \tr \Sigma} + 2 \sqrt{\paren{ \tr \Sigma^2 +2 \mu^T \Sigma \mu}x} + 2 \norm{\Sigma}_{\infty}x } \leq e^{-x}\,, \\
    \prob{ \norm{X}^2 \leq \paren{\norm{\mu}^2 + \tr \Sigma} - 2 \sqrt{\paren{ \tr \Sigma^2 +2 \mu^T \Sigma \mu}x}} \leq e^{-x}\,, \\
\end{gather*}
\end{lemma}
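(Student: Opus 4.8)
The plan is the classical Chernoff / Laplace-transform argument behind the Laurent--Massart chi-square bounds, adapted to a Gaussian vector that is neither isotropic nor centred (alternatively, one may simply invoke the quadratic-form tail inequality of \citet{hsu2012tail}). First I would diagonalise $\Sigma = \sum_{i=1}^d \lambda_i v_i v_i^T$ with $(v_i)$ an orthonormal eigenbasis and $\lambda_i \geq 0$, set $m_i := v_i^T \mu$ (so that $\sum_i m_i^2 = \norm{\mu}^2$ and $\sum_i \lambda_i m_i^2 = \mu^T\Sigma\mu$), and write $X = \mu + \sum_i \sqrt{\lambda_i}\,\gamma_i v_i$ with $(\gamma_i)$ i.i.d.\ $\cN(0,1)$, so that $\norm{X}^2 = \sum_i (m_i + \sqrt{\lambda_i}\gamma_i)^2$ is a sum of independent squared univariate Gaussians.

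Using the elementary identity $\e{e^{tY^2}} = (1-2t\sigma^2)^{-1/2}\exp\!\paren{t m^2/(1-2t\sigma^2)}$ for $Y \sim \cN(m,\sigma^2)$, $2t\sigma^2 < 1$ (with a zero eigenvalue contributing the deterministic factor $e^{t m_i^2}$, which disposes of a non-invertible $\Sigma$), this yields, for $0 \le t < (2\norm{\Sigma}_\infty)^{-1}$,
\[
  \log \e{ e^{t \norm{X}^2}} = -\tfrac12 \sum_i \log(1-2t\lambda_i) + \sum_i \frac{t m_i^2}{1-2t\lambda_i}.
\]
Centring with $S := \norm{X}^2 - \norm{\mu}^2 - \tr\Sigma$, I would rewrite $\log\e{e^{tS}} = \sum_i\paren[1]{-\tfrac12\log(1-2t\lambda_i)-t\lambda_i} + \sum_i t m_i^2\,\frac{2t\lambda_i}{1-2t\lambda_i}$, bound the first bracket by $g(2t\lambda_i)$ where $g(u) = \tfrac12\sum_{k\ge2}u^k/k \le u^2/(4(1-u))$, and use $1-2t\lambda_i \ge 1-2t\norm{\Sigma}_\infty$ in both sums, obtaining
\[
  \log\e{e^{tS}} \le \frac{t^2\paren{\tr\Sigma^2 + 2\,\mu^T\Sigma\mu}}{1-2t\norm{\Sigma}_\infty}\,;
\]
thus $S$ is sub-gamma with variance proxy $2(\tr\Sigma^2 + 2\mu^T\Sigma\mu)$ and scale $2\norm{\Sigma}_\infty$, and the standard sub-gamma tail bound (Chernoff optimisation over $t$) gives precisely the first stated inequality.

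For the lower tail I would run the computation at a negative parameter: for every $t \ge 0$ the same formula holds with $1+2t\lambda_i$ replacing $1-2t\lambda_i$ and no range restriction, and $\log\e{e^{-tS}} = \sum_i h(2t\lambda_i) + \sum_i t m_i^2 \frac{2t\lambda_i}{1+2t\lambda_i}$ with $h(u) = -\tfrac12\log(1+u)+u/2 \le u^2/4$ on $\mbr_+$ (since $h''\le \tfrac12$ there) and $\frac{2t\lambda_i}{1+2t\lambda_i} \le 2t\lambda_i$, so $\log\e{e^{-tS}} \le t^2(\tr\Sigma^2 + 2\mu^T\Sigma\mu)$ for all $t \ge 0$; hence $-S$ is sub-Gaussian with variance proxy $2(\tr\Sigma^2+2\mu^T\Sigma\mu)$, and the Chernoff bound yields the second stated inequality.

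The main point requiring care --- and where I expect to spend effort --- is matching the stated numerical constants: one must use the sharp estimate $g(u)\le u^2/(4(1-u))$ rather than the looser $u^2/(2(1-u))$ for the log-determinant part, and extract $(1-2t\lambda_i)^{-1}$ as $(1-2t\norm{\Sigma}_\infty)^{-1}$ rather than squaring it, so that the effective scale in the sub-gamma inequality comes out as $2\norm{\Sigma}_\infty$ and not $4\norm{\Sigma}_\infty$; the remainder is a routine Laplace-transform computation together with the bookkeeping for the degenerate-covariance case.
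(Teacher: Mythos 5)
Your proof is correct: the diagonalisation, the Laplace-transform computation, the bounds $g(u)\le u^2/(4(1-u))$ and $h(u)\le u^2/4$, and the sub-gamma/sub-Gaussian Chernoff steps all check out and yield exactly the stated constants, including in the degenerate-covariance case. The paper does not reproduce a proof but simply cites Lemma~2 of \citet{laurent2012testing} (i.e.\ the Laurent--Massart/Birg\'e argument for weighted noncentral chi-squares), and your argument is precisely a self-contained version of that same approach.
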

The above is a reformulation of Lemma~2 in \citet{laurent2012testing} and can be seen as a consequence of combining
the arguments of  Lemma~1 of \citet{LauMas00} and Lemma~8.1 of \citet{Bir01}.
%

\begin{lemma}\label{lem:conckhi2}[Lower bound for $\chi^2$]
Let $Z \sim \chi^2(n)$, then for all $x \geq 0$:
\begin{equation*}
  \prob{ Z \leq ne^{-(1+2x/n)} } \leq e^{-x}\,.
\end{equation*}
\end{lemma}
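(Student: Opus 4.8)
The statement is a lower-tail (Chernoff) bound for a $\chi^2$ distribution, so the plan is the classical exponential-Markov argument applied to $-Z$. Write $Z=\sum_{i=1}^n g_i^2$ with $g_1,\dots,g_n$ i.i.d.\ standard Gaussian, and recall the elementary moment generating function identity $\e{e^{-tg_i^2}}=(1+2t)^{-1/2}$ for every $t>0$ (complete the square in the Gaussian integral), whence $\e{e^{-tZ}}=(1+2t)^{-n/2}$. Setting $a:=n e^{-(1+2x/n)}$, Markov's inequality gives, for all $t>0$,
\[
\prob{Z\le a}=\prob{e^{-tZ}\ge e^{-ta}}\le e^{ta}\,(1+2t)^{-n/2}.
\]

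The only choice to make is the tilting parameter $t$. Since $x\ge 0$ we have $a<n$, so $t:=\tfrac12(n/a-1)$ is strictly positive, and for this choice $1+2t=n/a$, hence $ta=(n-a)/2$ and $(1+2t)^{-n/2}=(a/n)^{n/2}$. Plugging in $a/n=e^{-(1+2x/n)}$ yields
\[
\prob{Z\le a}\le e^{(n-a)/2}\,(a/n)^{n/2}=e^{(n-a)/2}\,e^{-n/2-x}=e^{-a/2}\,e^{-x}\le e^{-x},
\]
which is exactly the claimed inequality (with a harmless extra factor $e^{-a/2}\le 1$ to spare). This also covers $x=0$, where $a=n/e>0$ and the bound reads $\prob{Z\le n/e}\le e^{-n/(2e)}\le 1$.

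\textbf{Main obstacle.} There is essentially none: this is a textbook Laurent--Massart-type computation, and the sole point requiring a moment's attention is the optimisation over $t$, which we bypass by simply imposing $1+2t=n/a$ instead of formally differentiating $t\mapsto ta-\tfrac{n}{2}\log(1+2t)$. Everything else (the Gaussian MGF, Markov's inequality, monotonicity of the exponential) is routine.
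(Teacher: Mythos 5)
Your proof is correct and is essentially the paper's own argument: both apply Markov's inequality to $e^{-tZ}$ using the $\chi^2$ moment generating function $(1+2t)^{-n/2}$ and pick the same tilting parameter ($t=\tfrac12(n/a-1)$, i.e.\ $\lambda=(\delta^{-1}-1)/2$ with $\delta=a/n$ in the paper's notation), then discard a harmless nonnegative term to reach $e^{-x}$. The only cosmetic difference is that you derive the MGF from the Gaussian representation of $Z$, while the paper invokes it directly.
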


\begin{proof}
  Let $\delta \in (0,1)$, $\lambda \in \mbr^+$:
  \begin{align*}
    \prob{ Z \leq n \delta} = \prob{ e^{-\lambda Z} \geq e^{-n\lambda\delta} } \leq \e{e^{-\lambda Z}}e^{n\lambda \delta} = \exp\paren{ -\frac{n}{2}\paren{ \log(1+2\lambda) -2\lambda \delta}}
  \end{align*}
  where the inequality is due to Markov. Fix $\lambda = (-1+ \delta^{-1})/2>0$, then:
  \[
  \prob{ Z \leq n \delta}  \leq \exp\paren{- \frac{n}{2}\paren{ - \log(\delta)+\delta -1 }} \leq \exp\paren{- \frac{n}{2}\paren{ - \log(\delta) -1 }}
  \]
  Let us choose $\delta = \exp\paren{-1-2x/n}$  to conclude the proof.
\end{proof}

\paragraph{{\bf Concentration inequalities for bounded random variables.}}

\begin{lemma}\label{lem:bernstein}[Bernstein's concentration inequality]
Let $X_1,\ldots,X_N$ i.i.d. real centred random variables bounded by $M$ such that $ \e{X_1^2} \leq \sigma^2$, then for all $x\geq 0$:
\begin{equation*}
  \prob{ \sum_{i=1}^{N}X_i \geq \sqrt{2N\sigma^2 x} + \frac{2Mx}{3} } \leq e^{-x}
\end{equation*}
\end{lemma}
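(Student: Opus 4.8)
The plan is the classical Cramér--Chernoff argument for Bernstein's inequality. First I would control the moment generating function of a single summand: for a centred random variable $X$ with $|X|\leq M$ and $\e{X^2}\leq\sigma^2$, expand $\e{e^{\lambda X}} = 1 + \sum_{k\geq 2}\lambda^k\e{X^k}/k!$, bound $|\e{X^k}|\leq M^{k-2}\e{X^2}\leq M^{k-2}\sigma^2$ for $k\geq 2$, and use $k!\geq 2\cdot 3^{k-2}$ to sum the geometric-type series. This yields, for $0<\lambda<3/M$,
\[
  \e{e^{\lambda X}} \leq 1 + \frac{\sigma^2}{M^2}\sum_{k\geq 2}\frac{(\lambda M)^k}{k!} \leq 1 + \frac{\lambda^2\sigma^2/2}{1-\lambda M/3} \leq \exp\paren{\frac{\lambda^2\sigma^2/2}{1-\lambda M/3}}.
\]

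Next, by independence, $\e{\exp(\lambda\sum_{i=1}^N X_i)}\leq \exp\paren{\frac{N\lambda^2\sigma^2/2}{1-\lambda M/3}}$, so Markov's inequality gives, for any $t>0$ and $0<\lambda<3/M$,
\[
  \prob{\sum_{i=1}^N X_i \geq t} \leq \exp\paren{-\lambda t + \frac{N\lambda^2\sigma^2/2}{1-\lambda M/3}}.
\]
Writing $v:=N\sigma^2$ and $b:=M/3$ and choosing $\lambda := t/(v+bt)$ (which lies in $(0,3/M)$ since $\lambda b = bt/(v+bt)<1$), the exponent collapses to $-t^2/(2(v+bt))$, so $\prob{\sum_i X_i\geq t}\leq \exp\paren{-t^2/(2(v+bt))}$.

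Finally I would verify that the stated threshold makes this at least $x$: with $t=\sqrt{2vx}+2bx$ it suffices to check $t^2-2bxt-2vx\geq 0$, and the positive root $u_+ = bx+\sqrt{b^2x^2+2vx}$ of $u^2-2bxu-2vx$ satisfies $u_+\leq 2bx+\sqrt{2vx}=t$ by subadditivity of the square root; hence $t^2-2bxt-2vx\geq 0$ and $\prob{\sum_i X_i\geq t}\leq e^{-x}$. Substituting $v=N\sigma^2$, $2bx=2Mx/3$, $\sqrt{2vx}=\sqrt{2N\sigma^2x}$ gives the claim. There is essentially no obstacle here: the only slightly delicate step is the factorial bound $k!\geq 2\cdot 3^{k-2}$ used to get the $1-\lambda M/3$ denominator with exactly the right constant $2/3$ in the statement, and one should double-check that constant rather than settling for a weaker one. \qed
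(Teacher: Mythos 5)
Your argument is correct and complete: the MGF bound via $|\e{X^k}|\leq M^{k-2}\sigma^2$ and $k!\geq 2\cdot 3^{k-2}$, the Chernoff step with $\lambda=t/(v+bt)$ collapsing the exponent to $-t^2/(2(v+bt))$, and the final root comparison showing $t=\sqrt{2N\sigma^2x}+2Mx/3$ forces the exponent below $-x$ are all sound, including the constant $2M/3$. The paper does not prove this lemma at all — it simply cites \citet{Ver19}, Exercise 2.8.5 — and your proof is precisely the standard Cramér--Chernoff derivation underlying that reference, so there is nothing further to reconcile.
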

\begin{proof}
  See for instance \citet{Ver19}, Exercise 2.8.5.
\end{proof}

\begin{theorem}\label{thm:talagrand}[Talagrand's inequality]
   Let $X^t_1,...,X^t_n$ independant random variables indexed by $t\in T$ ($T$ countable) in $\mbr$ and $L> 0$ such that for all $t \in T $, $i\leq n$,
\begin{align}\label{hypothesis_tal}
  \e{ X_i^t } = 0\,,\quad  |X_i^t | \leq L
\end{align}
Let
\begin{equation*}
Z := \underset{t \in  T}{\sup} \sum_{i=1}^{n} X_i^t\,, \quad  \sigma^2  = \sup_{t \in T} \sum_{i=1}^{n} \e{ (X_i^t)^2 }
\end{equation*}
then for all $x \geq 0$ and $\varepsilon \in (0,1)$:
  \begin{align}\label{talgrand_inequalities_propre2_eps}
  \prob{ Z \geq \e{Z}(1+ \varepsilon) + 2\sqrt{2\sigma^2x} + 2Lx (1 + 8 \varepsilon^{-1}) } \leq e^{-x}\\
    \prob{ Z \leq  \e{Z}(1 - \varepsilon) - 2\sqrt{4\sigma^2x} - 4Lx (1 + 8 \varepsilon^{-1}) } \leq e^{-x}
\end{align}
\end{theorem}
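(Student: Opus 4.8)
The plan is to reduce both displayed inequalities to the sharp Bennett-type forms of Talagrand's concentration inequality for the supremum of an empirical process, and then perform elementary algebra to recast them in the stated $\varepsilon$-parametrized form. Writing $v := \sigma^2 + 2L\e{Z}$, the cleanest starting points are Bousquet's upper-tail bound, of the form $\prob{Z \ge \e{Z} + \sqrt{2vx} + Lx/3} \le e^{-x}$, and the Klein--Rio lower-tail bound, $\prob{Z \le \e{Z} - \sqrt{2vx} - cLx} \le e^{-x}$ for a numerical constant $c$; both hold for all $x \ge 0$ under exactly the hypotheses \eqref{hypothesis_tal} together with the countability of $T$, which guarantees measurability and reduces the problem to finite suprema by monotone limits. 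If one prefers a self-contained argument, these two bounds are themselves obtained by the entropy (Herbst) method: one controls the log-Laplace transform $\lambda \mapsto \log\e{e^{\pm\lambda(Z-\e{Z})}}$ through tensorization of entropy and a modified logarithmic Sobolev inequality for product measures, and then inverts via Cram\'er--Chernoff duality.

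The decisive step is to disentangle $\sigma^2$ and $\e{Z}$ inside $v$. Using subadditivity of the square root, $\sqrt{2vx} = \sqrt{2\sigma^2 x + 4L\e{Z}x} \le \sqrt{2\sigma^2 x} + 2\sqrt{L\e{Z}x}$, and then Young's inequality $2\sqrt{ab} \le \varepsilon a + \varepsilon^{-1}b$ applied with $a = \e{Z}$ and $b = Lx$ gives $2\sqrt{L\e{Z}x} \le \varepsilon\e{Z} + \varepsilon^{-1}Lx$. Substituting into the upper-tail bound, the term $\varepsilon\e{Z}$ is absorbed into a multiplicative factor to produce $\e{Z}(1+\varepsilon)$, while the remaining deterministic terms $\sqrt{2\sigma^2 x}$, $\varepsilon^{-1}Lx$ and $Lx/3$ are collected and crudely majorized by $2\sqrt{2\sigma^2 x} + 2Lx(1 + 8\varepsilon^{-1})$. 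Here the deliberately loose numerical constants make the target inequality hold a fortiori, which is all that is required since the constants are not claimed to be sharp.

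The lower tail is treated identically, starting from the Klein--Rio bound: the same splitting yields the factor $\e{Z}(1-\varepsilon)$ together with a linear-in-$x$ remainder, and the coarser constants in the second display (the $2\sqrt{4\sigma^2 x}$ and the leading $4$) merely reflect the weaker numerical constant $c$ available for the lower deviation compared with Bousquet's $1/3$. I expect the main difficulty to be bookkeeping rather than conceptual: one must verify that the chosen sharp starting bounds indeed hold with $v = \sigma^2 + 2L\e{Z}$ under the present scaling (the increments lie in $[-L,L]$ rather than $[-1,1]$, so a rescaling by $L$ is needed), and then confirm that after the Young-inequality absorption every coefficient is dominated by the displayed ones. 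A secondary point is that the sharp inequalities are stated for countable $T$; since $T$ is assumed countable this is immediate, but in a fully self-contained entropy-based derivation one would instead establish the differential inequality for finite index sets and pass to the supremum by a monotone limit.
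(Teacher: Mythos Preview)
The paper does not actually prove this statement: its entire ``proof'' is the single line ``See for instance \citet{massart2000constants}.'' Your proposal therefore goes well beyond what the paper offers, supplying a genuine derivation sketch where the paper gives only a citation.

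Your route---start from the sharper Bousquet upper-tail and Klein--Rio lower-tail bounds with variance proxy $v=\sigma^2+2L\e{Z}$, split $\sqrt{2vx}\le\sqrt{2\sigma^2x}+2\sqrt{L\e{Z}x}$, and absorb the cross term via Young's inequality $2\sqrt{L\e{Z}x}\le\varepsilon\e{Z}+\varepsilon^{-1}Lx$---is the standard way to pass from the Bennett-type form to the $\varepsilon$-parametrized form, and the bookkeeping you outline does yield constants dominated by the (deliberately loose) ones in the display. Two minor caveats worth recording: (i) the Young step tacitly uses $\e{Z}\ge 0$, which holds in all the applications of the paper (where $T$ is a sphere, hence symmetric) but is not assumed in the bare statement; (ii) historically Massart's 2000 paper obtains this $\varepsilon$-form directly by the entropy method, \emph{before} Bousquet and Klein--Rio, so deriving it from those later sharper bounds is logically fine but somewhat anachronistic---if you want the argument to be self-contained in the spirit of the cited reference, you would run the Herbst argument yourself rather than invoke Bousquet/Klein--Rio as black boxes.
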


\begin{proof}
  See for instance \citet{massart2000constants}.
\end{proof}

\section{Proofs for Section~\ref{se:minimax} }

\subsection{Proof of Theorem~\ref{prop:lowbnd_1}}
This proof follows the same scheme as the Pinsker's bound  (\citealp{Pin80} or see \citealp{Tsy08} for a recent version).

The proof is provided for $V=B$ but can be directly adapted for $V<B$ by assuming $\mu_k$ independent of $\mu_1$ for $k>V$ when constructing the distribution $\mbq$ \eqref{eq:def_m}.

 Let us first restrict ourselves to the case where $\mu_1$ is in a ball around $0$:
\begin{align*}
  \inf_{\wh{\mu}_1} \sup_{ \mu_i \in B(\mu_1,\sqrt{\tau}\rnrisk_1) } R_1(\wh{\mu}_1) & \geq   \inf_{\wh{\mu}_1} \sup_{\substack{ \mu_1 \in B(0,\sqrt{\beta}\rnrisk_1) \\ \mu_i \in B(\mu_1,\sqrt{\tau}\rnrisk_1)} } R_1(\wh{\mu}_1).
\end{align*}
Then the infimum over the estimators is now attained for an estimator $\wh{\mu}_1$ bounded by $2\sqrt{\beta} \rnrisk_1$. Indeed, any estimator $\wh{\mu}$ further perform less well than the deterministic estimator $\wh{\mu}=0$. If $\|\wh{\mu}\| > 2\sqrt{\beta} \rnrisk_1$:
\begin{equation}\label{eq:Tbest}
  \| \wh{\mu} - \mu_1 \| \geq \|\wh{\mu}\| - \|\mu_1\| >\sqrt{\beta} \rnrisk_1 > \|0 - \mu_1\|\,.
\end{equation}
We introduce now the probability measure $\mbq$: 
\begin{equation}\label{eq:def_m}
  \mu_1 \overset{\mbq}{\sim} \cN(0, \alpha\beta \nrisk_1 \Sigma)\,, \quad \mu_2= \ldots = \mu_B = \mu_\circ \overset{\mbq}{\sim} \cN(\mu_1, \alpha\tau\nrisk_1 \Sigma)\,,
\end{equation}
where $\beta >0 $ and $\alpha \in (0,1)$. Let $A$ be the event $\{ \|\mu_1 \|^2 \leq \beta \rnrisk_1^2, \|\mu_\circ - \mu_1 \|^2 \leq \tau \rnrisk_1^2 \}$ and $\mbe_\mbq$ denote the expectation over the distribution $\mbq$, then:
\begin{align}
  \inf_{\wh{\mu}_1} \sup_{ \mu_i \in B(\mu_1,\tau\rnrisk_1)} R_1(\wh{\mu}_1)
  & \geq  \inf_{\wh{\mu}_1 : \|\wh{\mu}_1\| \leq 2\sqrt{\beta} \rnrisk_1} \sup_{ \substack{ \mu_1 \in B(0,\sqrt{\beta}\rnrisk_1) \\ \mu_i \in B(\mu_1,\sqrt{\tau}\rnrisk_1)}}  R_1(\wh{\mu}_1) \label{al:bnd1}\\
   & \geq  \inf_{\wh{\mu}_1 : \|\wh{\mu}_1\| \leq 2\sqrt{\beta} \rnrisk_1} \frac{1}{\mbq(A)} \int_{A}R_1(\wh{\mu}_1) d\mbq(\nu, \mu_1,\ldots ,\mu_B) \nonumber \\
  & \geq \inf_{\wh{\mu}_1}  \ee{\mbq}{R_1(\wh{\mu}_1)}  - \sup_{\wh{\mu}_1 : \|\wh{\mu}_1\| \leq 2\sqrt{\beta} \rnrisk_1} \ee{\mbq}{R_1(\wh{\mu}_1) \bm{1}_{A^c}} \nonumber\\
   &=: I - r \nonumber\,,
\end{align}
Let us now bound $I$ and $r$.

\paragraph{\textbf{Lower bound for $I$ :}}  The first infimum (term $I$) is attained for $\wh{\mu}_1 = \e{ \mu_1 | X_\bullet^{(1)},\ldots, X_\bullet^{(B)}}$. Let us calculate $\wh{\mu}_1$.
\begin{gather*}
  \e{ \mu_1 | \mu_\circ,X_\bullet^{(1)},\ldots, X_\bullet^{(B)}} = \e{ \mu_1 | \mu_\circ ,X_\bullet^{(1)}} =  \paren{ (\alpha\beta)^{-1}+ 1+ (\alpha\tau)^{-1}}^{-1}\paren{\muNE_1 + \frac{1}{\alpha\tau}\mu_\circ} \,, \\
  \e{ \mu_\circ |\mu_1, X_\bullet^{(1)},\ldots, X_\bullet^{(B)} } = \paren{ (\alpha\tau)^{-1} + \|\rho\|^2}^{-1}\paren{ \frac{1}{\alpha\tau} \mu_1+ \sum_{k=2}^{B} \rho_k^2 \muNE_k }
\end{gather*}
where $\rho = (\rnrisk_1/\rnrisk_k)_{k\neq 1}$ and $\|\rho\|^2 = \sum_{k=2}^{B}\rho_k^2$. Combining these two expressions we get:
\begin{multline*}
	 \e{ \mu_1 | X_\bullet^{(1)},\ldots, X_\bullet^{(B)}} =\paren{ (\alpha\beta)^{-1}+1 + (\alpha\tau)^{-1}}^{-1}  \\
	 \times\paren{\muNE_1 + \frac{1}{\alpha\tau}\paren{ (\alpha\tau)^{-1} + \|\rho\|^2}^{-1}\paren{ \frac{1}{\alpha\tau}  \e{ \mu_1 | X_\bullet^{(1)},\ldots, X_\bullet^{(B)}}+ \sum_{k=2}^{B} \rho_k^2 \muNE_k } }\,,
\end{multline*}
and then:
\begin{equation*}
    \e{ \mu_1 | X_\bullet^{(1)},\ldots, X_\bullet^{(B)}} =  \paren{ (\alpha\beta)^{-1}+1 + \frac{\|\rho\|^2}{1+ \alpha \tau \|\rho\|^2}}^{-1}\paren{\muNE_1 + \frac{1}{1+\alpha\tau \|\rho\|^2} \sum_{k=2}^{B} \rho_k^2\muNE_k}\,,
\end{equation*}
 Let us first notice that:
 \begin{multline*}
  \e{ \mu_1 | X_\bullet^{(\cdot)}} - \mu_1 = \paren{ (\alpha\beta)^{-1}+1 + \frac{\|\rho\|^2}{1+ \alpha \tau \|\rho\|^2}}^{-1}  \\
  \times\brac{ (\muNE_1-\mu_1) +  \frac{1}{1+\alpha\tau \|\rho\|^2} \sum_{k=2}^{B} \rho_k^2(\muNE_k-\mu_\circ)  + \frac{\|\rho\|^2}{1+\alpha\tau \|\rho\|^2} (\mu_\circ - \mu_1) - \frac{1}{\alpha\beta}\mu_1}
 \end{multline*}
 Using that $\muNE_1 - \mu_1$, $\muNE_k -\mu_\circ$ (for $k\neq 1$), $\mu_\circ -\mu_1$ and $\mu_1$ are pairwise independent we get that:
 \begin{align*}
   \frac{\e{ \|\wh{\mu}_1 - \mu_1\|^2}}{\rnrisk_1^2} =& \paren{ (\alpha\beta)^{-1}+1 + \frac{\|\rho\|^2}{1+ \alpha \tau \|\rho\|^2}}^{-2} \\
    &\times\brac{ 1 +  \frac{1}{(1+\alpha\tau \|\rho\|^2)^2} \sum_{k=2}^{B} \rho_k^4\rho_k^{-2} + \frac{\alpha\tau\|\rho\|^4}{(1+\alpha\tau \|\rho\|^2)^{2}} + \frac{1}{\alpha\beta}}
 \end{align*}
After simplification:
\begin{equation}\label{eq:boundI}
  I = \nrisk_1\paren{ (\alpha\beta)^{-1}+1 + \frac{\|\rho\|^2}{1+ \alpha \tau \|\rho\|^2}}^{-1}
\end{equation}

\paragraph{\textbf{Upper bound for $r$:}} Using the triangle and Cauchy-Schwartz inequalities we have:
 \begin{align}
   r& = \sup_{\wh{\mu}_1 : \|\wh{\mu}_1\| \leq 2\sqrt{\beta}\rnrisk_1} \e{ \| \wh{\mu}_1(X_\bullet^{(k)}, k\in \intr{B})  - \mu_1 \|^2 \bm{1}_{A^c}} \label{al:bndr1}\\
 & \leq \e{ 2\paren{ 4\beta\rnrisk_1^2 + \|\mu_1\|^2 }  \bm{1}_{A^c}} \nonumber \\
 & \leq 8 \beta \rnrisk_1^2 \prob{A^c} + 2\sqrt{\e{ \|\mu_1\|^4} \prob{A^c} } \nonumber \\
 & \leq 2\rnrisk_1^2 \paren{ 4\beta +\sqrt{3}\alpha\beta  } \sqrt{\prob{A^c}} \leq 20 \beta \rnrisk_1^2 \sqrt{\prob{A^c}} \nonumber
 \end{align}
  It stays to show the exponential decrease of $\prob{A^c}$. Let $\xi \sim \cN(0,\Sigma)$:
\begin{multline*}
  \prob{ \|\mu_1\|^2 \geq \beta \rnrisk_1^2  } = \prob{ \|\mu_\circ-\mu_1\|^2 \geq \tau \rnrisk_1^2 } = \prob{ \|\xi\|^2\geq \alpha^{-1}  } \\
  \leq \exp\paren{ -\frac{\deff_1}{2} \paren{ \sqrt{\frac{2}{\alpha}-1}-1  } }\,.
\end{multline*}
This follows from the concentration of the norm of Gaussian vectors (Lemma~\ref{lem:concnorm2gauss}). By union bound we get that:
\begin{align*}
  r\leq 30 \rnrisk_1^2 \beta\exp\paren{ -\frac{\deff_1}{4} \paren{ \sqrt{\frac{2}{\alpha}-1}-1  } } \,.
\end{align*}

\paragraph{\textbf{Conclusion :}}

The lower bound finally obtained is :
\begin{multline*}
  \inf_{\wh{\mu}_1} \sup_{ \mu_i \in B(\mu_1,\tau\rnrisk_1) } \frac{R_1(\wh{\mu}_1)}{\rnrisk_1^2} \geq \paren{ (\alpha\beta)^{-1}+1 + \frac{\|\rho\|^2}{1+ \alpha \tau \|\rho\|^2}}^{-1} \\
  -30 \beta\exp\paren{ -\frac{\deff_1}{4} \paren{ \sqrt{\frac{2}{\alpha}-1}-1  } },
\end{multline*}
where $\alpha \in (0,1)$ and $\beta \in \mbr_+$ are two free parameters. 
We can choose $\beta = \deff_1/\log \deff_1$ and $\alpha = \frac{2}{1+ (1+8 \beta^{-1})^2}$, then:
\begin{gather*}
	\beta\exp\paren{ -\frac{\deff_1}{4} \paren{ \sqrt{\frac{2}{\alpha}-1}-1  } } = \beta \exp\paren{ -\frac{2\deff_1}{\beta}}=  \frac{1}{\deff_1 \log\deff_1} \\
	\paren{ (\alpha\beta)^{-1}+1 + \frac{\|\rho\|^2}{1+ \alpha \tau \|\rho\|^2}}^{-1} - \paren{ 1 + \frac{\|\rho\|^2}{1+ \alpha \tau \|\rho\|^2}}^{-1} \geq -(\alpha\beta)^{-1}\geq -41\frac{\log \deff_1}{\deff_1}.
\end{gather*}
and
\begin{align*}
	\paren{ 1 + \frac{\|\rho\|^2}{1+ \alpha \tau \|\rho\|^2}}^{-1} - \paren{ 1 + \frac{\|\rho\|^2}{1+ \tau \|\rho\|^2}}^{-1} &= -(1-\alpha) \frac{\tau \|\rho\|^2}{1+\tau \|\rho\|^2} \frac{\frac{\|\rho\|^2}{1+\alpha\tau \|\rho\|^2}}{1+ \frac{\|\rho\|^2}{1+\alpha\tau \|\rho\|^2}} \frac{1}{1+ \frac{\|\rho\|^2}{1+ \tau \|\rho\|^2}} \\
	& \geq - (1-\alpha) \geq -40 \frac{\log \deff_1}{\deff_1}
\end{align*}
where we recall $\|\rho\|^2 = \sum_{i=1}^{B} \frac{\rnrisk_1^2}{\rnrisk_i^2}-1 = \paren{ \nu(V_\tau)}^{-1}-1$. Hence:
\begin{equation*}
	\paren{ 1 + \frac{\|\rho\|^2}{1+ \tau \|\rho\|^2}}^{-1} = \cB\paren{\tau, \nu(V_\tau) }
\end{equation*}
 By combining these three inequalities, we get that:
\begin{equation*}
	 \inf_{\wh{\mu}_1} \sup_{ \mu_i \in B(\mu_1,\tau\rnrisk_1) } \frac{R_1(\wh{\mu}_1)}{\rnrisk_1^2} \geq \cB\paren{\tau, \nu(V_\tau) } -111 \frac{\log \deff_1}{\deff_1}
\end{equation*}

\subsection{Proof of Corollary~\ref{prop:uppbnd_all}}
 Let $\Cvect$ be a fixed $J$-partition of the means $(\mu_k)_{k\in \intr{B}}$ and denote $\zetavect = \diam(\Cvect)$.
  Let us focus first on a specific group $j \in \intr{J}$ and task $k \in \cC_j$. Denote $\tau_{j,k} = \zeta^2_j/\nrisk_k$ and
  $\nu_{j,k} = \nrisk(\cC_j)/\nrisk_k$. Consider the vector of oracle weights $\omvect_k^*$ given by~\eqref{eq:STBoptweights},
  wherein the target task 1 is replaced by $k$ everywhere, and the subset of neighbouring tasks is taken as $\cC_j \subseteq V_{\tau_{j,k}}$.
  Lemma~\ref{lem:oraclebound} then states $R_k(\omvect_k^*)/\nrisk_k \leq \cB(\tau_{j,k}, \nu_{j,k})$. As a consequence,
  according to Theorem~\ref{prop:qaggreg_gauss}, it holds
  \[
    \frac{R_k(\wh{\omvect}_k)}{\nrisk_k} \leq (1 + CB e^{-u_0})\paren{ \cB(\tau_{j,k}, \nu_{j,k}) + C\sqrt{u_0}
        \frac{Q_k(\omvect^*_k)}{\nrisk_k}} + C \frac{u_0}{\sqrt{\deamm_1}}.
  \]
The rest of the proof is dedicated to bounding the terms
$Q_k(\omvect^*_k) \rnrisk^{-1}_k$ (and their sum over $k \in \cC_j$).
Denote $\om^*_{k,\ell}$ the $\ell$-th component of $\omvect^*_k$.
It holds
	\begin{align}
          \frac{Q_k(\omvect_k^*)}{\nrisk_k}
          &= \rnrisk_k^{-2}\sum_{\ell \in \cC_j\backslash\{k\}} \ \omega_{k,\ell}^* \sqrt{\frac{(\mu_\ell - \mu_k)^T\Sigma_k(\mu_\ell - \mu_k)}{N_k}+ \frac{\tr \Sigma_\ell \Sigma_k}{N_\ell N_k}} \notag \\
          & \leq \rnrisk_k^{-2}\sum_{\ell \in \cC_j\backslash\{k\}} \omega_{k,\ell}^* \frac{\|\Sigma_k\|_{\infty}^{1/2}}{\sqrt{N_k}} \sqrt{\zeta_j^2+ \nrisk_\ell} \notag\\
          & \leq \frac{1}{\sqrt{\deff_k}} \paren[3]{(1 -\omega_{k,k}^* )\sqrt{\tau_{j,k}} +  \frac{\nu_{j,k}\rnrisk_k}{1+\tau_{j,k}(1-\nu_{j,k})}\sum_{\ell \in \cC_j\backslash\{k\}} \rnrisk^{-1}_\ell} \notag\\
          & \leq \frac{1}{\sqrt{\deff_k}} \paren[3]{(1 -\omega_{k,k}^* )\sqrt{\tau_{j,k}} +  \nu_{j,k}\rnrisk_k
            \sum_{\ell \in \cC_j} \rnrisk^{-1}_\ell}, \label{al:aux_cpb_1}
	\end{align}
	where we have used: $\|\mu_\ell - \mu_k\|\leq \zeta_j$ as tasks $k$ and $\ell$ are in the group $\cC_j$; $(\norm{\Sigma_k}_{\infty}/N_k)^{1/2}=\rnrisk_k/\sqrt{\deff_k}$; and
        the explicit expression~\eqref{eq:STBoptweights} for the oracle weights $\om^*_{k,\ell}$
        for group $\cC_j$.
        For the first term of~\eqref{al:aux_cpb_1}, for all $k\in \cC_j$ we have:
	\begin{equation*}
		(1 -\omega_{k,k}^* )\sqrt{\tau_{j,k}} = \frac{1- \nu_{j,k}}{1+ \tau_{j,k}(1-\nu_{j,k})}\sqrt{\tau_{j,k}} \leq \frac{\sqrt{\tau_{j,k}}}{1+ \tau_{j,k}} \leq 1\,.
              \end{equation*}
              For the second term of~\eqref{al:aux_cpb_1}, introduce
        the vector $\rhovect:= (\rnrisk_\ell^{-1})_{\ell \in \cC_j}$ and
        observe that $\nu_{j,k} = \rho_k^2 / \norm{\rhovect}_2^2$, thus,
        when summing over $k \in \cC_j$:
        \[
          \sum_{k\in \cC_j} \paren[3]{\nu_{j,k}\rnrisk_k
          \sum_{\ell \in \cC_j} \rnrisk^{-1}_\ell} =
        \sum_{k\in \cC_j} \rho_k \frac{\norm{\rhovect}_1}{\norm{\rhovect}_2^2}
        = \frac{\norm{\rhovect}^2_1}{\norm{\rhovect}_2^2} \leq \abs{\cC_j}.
    \]
      We deduce from the above estimates:
\begin{equation*}
  \sum_{k\in \cC_j} \frac{Q_k(\omvect^*_k)}{\nrisk_k} 
  \leq  \frac{2 |\cC|_j}{\min_k (\deff_k)^{\nicefrac{1}{2}}}\,,
\end{equation*}
implying
\begin{equation*}
	\frac{1}{B} \sum_{k =1}^B \frac{Q(\omvect^*_k)}{\nrisk_k} \leq \frac{2}{\min_k (\deff_k)^{\nicefrac{1}{2}}}.
\end{equation*}
	Therefore for any $J$-partition $\bm{\cC}$, since $\deamm_k \geq \deff_k$:
	\begin{equation*}
		\frac{1}{B} \sum_{k=1}^B \frac{R_k(\wh{\omvect}_k)}{\nrisk_k} \leq \paren[3]{1+CBe^{-u_0}} \paren[3]{ \frac{1}{B} \sum_{j=1}^J \sum_{j\in \cC_j}\cB(\tau_{j,k},\nu_{j,k}) + C' \frac{u_0}{\min_{k\in \intr{B}} (\deff_k)^{\nicefrac{1}{2}}}}.
	\end{equation*}\qed

\subsection{Proof of Theorem~\ref{prop:cpdlowbnd}}
The proof follows the same steps as the proof of Theorem~\ref{prop:lowbnd_1}. Let $\cC$ a $J$-partition of $\intr{B}$, $\bm{\zeta} \in \mbr_+^J$ and $\Sigma$ a definite positive matrix in $\mbr^{d\times d}$. W.l.g. we can assume that $\tr \Sigma =1$. In a first time, we are going to lower bound the minimax risk for the estimation of $\mu_1$ that we can assume to be in the cluster $1$ ($1 \in \cC_1)$.

If for $j\in \intr{J}$ the means of $\cC_j$ are in a ball of radius $\zeta_j/2$, then two means are at a distance at most $\zeta_j$:
\begin{equation*}
  \inf_{\wh{\mu}_1} \sup_{\mbp \in \multimodel( \cC,\bm{\zeta},\Sigma, \bm{\nrisk} )} R_1(\wh{\mu}_1) \geq \inf_{\wh{\mu}_1} \sup_{\substack{\exists \nu_1,\ldots,\nu_J \in \mbr^d \\ \mu_k \in B(\nu_j, \zeta_j/2), \forall k \in \cC_j }} R_1(\wh{\mu}_1).
\end{equation*}
For simplicity, the supremum over the vectors means $\mu_k$ is used to denote the supremum over the Gaussian distributions $\mbp_k = \cN(\mu_k,\nrisk_k\Sigma)$.\\
 We can restrict ourself in the case where the centres $\nu_j$ are in a ball around $0$ of radius $\sqrt{\beta}$:
 \begin{equation*}
   \inf_{\wh{\mu}_1} \sup_{\substack{\exists \nu_1, \ldots \nu_J \in \mbr^d \\ \mu_k \in B(\nu_j, \zeta_j/2), \forall k \in \cC_j }} R_1(\wh{\mu}_1) \geq  \inf_{\wh{\mu}_1} \sup_{\substack{\exists \nu_1,\ldots \nu_J\in B(0,\sqrt{\beta}) \\ \mu_k \in B(\nu_j, \zeta_j/2), \forall k \in \cC_j }} R_1(\wh{\mu}_1)
\end{equation*}
 Let $\alpha\in (0,1),\beta >0$, we introduce now the probability measure $\mbq= \mbq(\alpha,\beta)$ on $(\mbr^d)^{B+J}$ such that a random vector $( \nu_1,\ldots, \nu_J, \mu_1, \ldots , \mu_B) \in (\mbr^d)^{B+J}$ follows the distribution $\mbq$ if:
\begin{equation*}
  \nu_j \overset{\mbq}{\sim} \cN(0, \alpha\beta \Sigma) \text{ for } k\in \intr{\cN}, \qquad \mu_k \overset{\mbq}{\sim} \cN(\nu_j, \alpha \frac{\zeta_j^2}{4} \Sigma ) \text{ for } k \in \cC_j.
\end{equation*}
 Hence, considering the events $H_j := \{ \|\nu_j\|^2 \leq \beta, \|\mu_k - \nu_j\|^2 \leq \zeta_j^2/4 , k \in \cC_j \}$, $H := \cap_{j=1}^J H_j$, as in the equations~\eqref{al:bnd1}:
\begin{align*}
	 \inf_{\wh{\mu}_1} \sup_{\mbp \in \multimodel( \cC,\bm{\zeta},\Sigma, \bm{\nrisk} )} R_1(\wh{\mu}_1) \geq \inf_{\wh{\mu}_1} \ee{\mbq}{R_1(\wh{\mu}_1) |H}.
\end{align*}
The distribution $\mbq$ can be decomposed into a product of $J$ probability measure: $\mbq = \bigotimes_{j=1}^J \mbq_j$ where $\mbq_j$ is the distribution of $(\nu_j,(\mu_k)_{k\in \cC_j} )$. By independence, the Bayes estimator of $\mu_1$ only consider the means of $\cC_1$ and following equations~\eqref{al:bnd1} we get:
\begin{align*}
	\inf_{\wh{\mu}_1} \sup_{\mbp \in \multimodel( \cC,\bm{\zeta},\Sigma, \bm{\nrisk} )} R_1(\wh{\mu}_1) \geq \inf_{\wh{\mu}_1} \ee{\mbq_1}{R_1(\wh{\mu}_1) |H_1} \geq \frac{1}{\mbq(H_1)}\paren{ I_1 - r_1}\,,
\end{align*}
where
\begin{equation}\label{eq:defI1r1}
  I_1 :=  \inf_{\wh{\mu}_1}\ee{\mbq_1}{R_1(\wh{\mu}_1)}\,, \qquad  r_1 := \sup_{\wh{\mu}_1 : \|\wh{\mu}_1\| \leq 2\sqrt{\beta} +  \zeta_1} \ee{\mbq}{R_1(\wh{\mu}_1) \bm{1}_{H_1^c}}
\end{equation}
We have used that the infimum is attained for an estimator $\wh{\mu}_1$ bounded by $2\sqrt{\beta} + \zeta_1$, because the estimator $\wh{\mu} = 0$ beats the estimators outside that ball (as in \eqref{eq:Tbest}).

\paragraph{\textbf{Lower bound for $I_1$ :}}  The infimum is attained for $\wh{\mu}_1 = \e{ \mu_1 | X_\bullet^{(k)} \, \, k \in \cC_1}$. Let us calculate $\wh{\mu}_1$. We will denote in the rest of the proof $\tilde{\zeta}_j := \zeta_j/2$:
\begin{gather*}
  \e{ \mu_1 | \nu_1, X_\bullet^{(k)} \, \, k \in \cC_1} = \e{ \mu_1 | \nu_1 ,X_\bullet^{(1)}} =  \frac{ \alpha\wt{\zeta}_1^2}{\rnrisk_1^2+\alpha\wt{\zeta}^2_1}\muNE_1 + \frac{\rnrisk_1^2}{\rnrisk_1^2+ \alpha\wt{\zeta}_1^2}\nu_1\,, \\
  \e{ \nu_1 |  X_\bullet^{(k)} \, \, k \in \cC_1 } = \paren{(\alpha\beta)^{-1} + \sum_{i\in \cC_1} \paren{ \alpha \wt{\zeta}_1^2 + \rnrisk_k^2}^{-1}}^{-1} \sum_{k\in \cC_1} \frac{1}{\alpha\wt{\zeta}_1^2 + \rnrisk_k^2} \muNE_k
\end{gather*}
Combining these two expressions:
\begin{multline*}
    \e{ \mu_1 | X_\bullet^{(k)} \, \, k \in \cC_1} =  \\ \frac{ \alpha\wt{\zeta}_1^2}{\rnrisk_1^2+\alpha\wt{\zeta}^2_1}\muNE_1
    + \frac{\rnrisk_1^2}{\rnrisk_1^2+ \alpha\wt{\zeta}_1^2}\paren{(\alpha\beta)^{-1} + \sum_{k\in \cC_1} \paren{ \alpha \wt{\zeta}_1^2 + \rnrisk_k^2}^{-1}}^{-1} \sum_{k \in \cC_1} \frac{1}{\alpha\wt{\zeta}_1^2 + \rnrisk_k^2} \muNE_k
\end{multline*}
Let $\kappa_1 :=\paren{(\alpha\beta)^{-1} + \sum_{k\in \cC_1} \paren{ \alpha \wt{\zeta}_1^2 + \rnrisk_k^2}^{-1}}^{-1} $, we can first notice that:
 \begin{align*}
  \e{ \mu_1 | X_\bullet^{(\cdot)}} - \mu_1 =& \brac{ \frac{\alpha\wt{\zeta}_1^2}{\rnrisk_1^2+ \alpha \wt{\zeta}_1^2} + \frac{\kappa_1 \rnrisk_1^2}{(\rnrisk_1^2+ \alpha\wt{\zeta}_1)^2}  } (\muNE_1 -\mu_1) \\
  &+ \frac{\kappa_1 \rnrisk_1^2}{\rnrisk_1^2+ \alpha \wt{\zeta}_1^2} \sum_{k \in \cC_1 \backslash \{1\}} \frac{1}{\alpha \wt{\zeta}_1^2+ \rnrisk_k^2}(\muNE_k - \nu_1) \\
 & - \frac{\rnrisk_1^2}{\rnrisk_1^2+ \alpha \wt{\zeta}_1^2} \paren{1-\frac{\kappa_1\rnrisk_1^2}{\rnrisk_1^2+ \alpha \wt{\zeta}_1^2}}( \mu_1 - \nu_1) - \frac{\kappa_1\rnrisk_1^2}{\rnrisk_1^2+ \alpha \wt{\zeta}_1^2} \frac{1}{\alpha\beta}\nu_1\,.
 \end{align*}
 Using that $\muNE_k -\nu_1$ for $k\in \cC_1\backslash\{1\}$, $\muNE_1 -\mu_1$, $\mu_1 - \nu_1$ and $\nu_1$  are pairwise independent we get that:
 \begin{align*}
   \e{ \|\wh{\mu}_1 - \mu_1\|^2} &= \brac{ \frac{\alpha\wt{\zeta}_1^2}{\rnrisk_1^2+ \alpha \wt{\zeta}_1^2} + \frac{\kappa_1 \rnrisk_1^2}{(\rnrisk_1^2+ \alpha\wt{\zeta}_1)^2}  }^2 \rnrisk_1^2 + \frac{\kappa_1^2 \rnrisk_1^4}{(\rnrisk_1^2+ \alpha \wt{\zeta}_1^2)^2} \sum_{k \in \cC_1\backslash\{1\}} \frac{1}{\alpha \wt{\zeta}_1^2+ \rnrisk_k^2} \\
 & + \frac{\rnrisk_1^4}{(\rnrisk_1^2+ \alpha \wt{\zeta}_1^2)^2} \paren{1-\frac{\kappa_1\rnrisk_1^2}{\rnrisk_1^2+ \alpha \wt{\zeta}_1^2}}^2 \alpha \wt{\zeta}_1^2 + \frac{\kappa_1^2\rnrisk_1^4}{(\rnrisk_1^2+ \alpha \wt{\zeta}_1^2)^2} \frac{1}{\alpha\beta}\,.
 \end{align*}
After simplification:
\begin{equation}\label{eq:boundI_cpd}
  \frac{I_1}{\rnrisk_1^2} = \frac{ \alpha\wt{\zeta}_1^2}{\rnrisk_1^2+\alpha\wt{\zeta}^2_1} + \frac{\kappa_1 \rnrisk_1^2}{(\rnrisk_1^2+\alpha\wt{\zeta}^2_1)^2}
\end{equation}
 \paragraph{\textbf{Upper bound for $r_1$ :}}
By the same arguments of equations \eqref{al:bndr1}:
 \begin{align*}
   \sup_{\wh{\mu}_1 : \|\wh{\mu}_1\| \leq 2\sqrt{\beta}+\zeta_1} &\e{ \| \wh{\mu}_1(X_\bullet^{(k)}, k\in \cC_1)  - \mu_1 \|^2 \bm{1}_{H_1^c}} \leq 20 (\beta+\zeta_1^2) \sqrt{\prob{H_1^c}}
 \end{align*}
From Lemma~\ref{lem:concnorm2gauss}, for all $k\in \cC_1$:
\begin{equation*}
  \prob{ \|\nu_1\|^2 \geq \beta   } = \prob{ \|\mu_k-\nu_1\|^2 \geq \zeta_1^2/2 } \leq \exp\paren{ -\frac{\deff}{2} \paren{ \sqrt{\frac{2}{\alpha}-1}-1  } }\,,
\end{equation*}
and by union bound we get that  :
\begin{align*}
  r_1\leq 20  (\beta+\zeta_1^2)  \sqrt{|\cC_1|+1}\exp\paren{ -\frac{\deff}{4} \paren{ \sqrt{\frac{2}{\alpha}-1}-1  } } \,.
\end{align*}
where $\deff = \tr \Sigma/ \|\Sigma\|_{\infty}$.

\paragraph{\textbf{Compound bound}}
We recall that $\mbq~=~\bigotimes_{j=1}^J~\mbq_j$ where $\mbq_j$ is the distribution of $(\nu_j, \mu_k \text{ for } k\in \cC_j)$. Then let $\wh{\muvect} = (\wh{\mu}_k)_{k\in\intr{B}} \in (\mbr^d)^B$ be an estimator of the vectors $(\muvect_k)_{k\in\intr{B}}$:
\begin{align*}
  \inf_{\wh{\muvect}} \sup_{\mbp \in \multimodel( \cC,\bm{\zeta},\Sigma, \bm{\nrisk} )} \frac{1}{B} \sum_{k=1}^{B} \frac{R_k(\wh{\mu}_k)}{\rnrisk_k^2} &\geq \inf_{\wh{\muvect}} \frac{1}{\mbq(H)} \int_{H}\frac{1}{B} \sum_{k=1}^{B} \frac{R_k(\wh{\mu}_k)}{\rnrisk_k^2} d\mbq(\nu_1, \ldots , \nu_\cN, \mu_1, \ldots ,\mu_B) \\
  &= \inf_{\wh{\muvect}} \frac{1}{B}\sum_{j=1}^{J } \sum_{k\in \cC_j}  \frac{\mbq(H_{-j})}{\mbq(H)}\int_{H_j}\frac{R_k(\wh{\mu}_k)}{\rnrisk_k^2} d\mbq_j(\nu_j,(\mu_\ell)_{\ell\in \cC_j})
\end{align*}
where we recall $H_j =\{ \|\nu_j\|^2 \leq \beta , \|\mu_k - \nu_j\|^2 \leq \wt{\zeta}_j^2, \forall k \in \cC_j \}$, $H = \bigcap_{j=1}^J H_j$ and $H_{-j} = \bigcap_{\ell\neq j} H_\ell$. Using that $\mbq(H_{-j})/\mbq(H) = \mbq_j(H_j)^{-1} \geq 1$ and that the infimum over estimators $\wh{\muvect}$ of the sum is the sum of the infimum over estimators $\wh{\mu}_k$, we get that:
 \begin{align}
    &\inf_{\wh{\muvect}}  \sup_{\mbp \in \multimodel( \cC,\bm{\zeta},\Sigma, \bm{\nrisk} )} \frac{1}{B} \sum_{k=1}^{B} \frac{R_k(\wh{\mu}_k)}{\rnrisk_k^2} \geq \frac{1}{B}\sum_{j=1}^{J } \sum_{k\in \cC_j } (I_k - r_k) \nonumber\\
    &\geq \frac{1}{B} \sum_{j=1}^{J} \sum_{k\in \cC_j} \frac{ \alpha\wt{\zeta}_j^2}{\rnrisk_k^2+\alpha\wt{\zeta}^2_j} + \frac{\kappa_j \rnrisk_k^2}{(\rnrisk_k^2+\alpha\wt{\zeta}^2_j)^2}- \frac{20}{B}\paren{ \sum_{j=1}^{J} |\cC_j|^{3/2}\frac{\beta + \wt{\zeta}_j^2}{\rnrisk^2(\cC_j)} }\exp\paren{ - \deff c(\alpha)} \label{al:bndalphabeta}
 \end{align}
 where $\kappa_j = \paren{(\alpha\beta)^{-1} + \sum_{k\in \cC_j} \paren{ \alpha \wt{\zeta}_j^2 + \rnrisk_k^2}^{-1}}^{-1} $ and $c(\alpha) =  \paren{ \sqrt{\frac{2}{\alpha}-1}-1  }/4 $\,.

\paragraph{\textbf{Conclusion :}}

Let $\deff \to \infty$ in \eqref{al:bndalphabeta}, then:
\begin{align*}
  \lim_{\deff\to \infty}    \inf_{\wh{\muvect}}  \sup_{\mbp \in \multimodel( \cC,\bm{\zeta},\Sigma, \bm{\nrisk} )} \frac{1}{B} \sum_{k=1}^{B} \frac{R_k(\wh{\mu}_k)}{\rnrisk_k^2}\geq  \frac{1}{B} \sum_{j=1}^{J} \sum_{k\in \cC_j} \frac{ \alpha\wt{\zeta}_j^2}{\rnrisk_k^2+\alpha\wt{\zeta}^2_j} + \frac{\kappa_j \rnrisk_k^2}{(\rnrisk_k^2+\alpha\wt{\zeta}^2_j)^2}
\end{align*}
Let $\alpha \to 1$ and $\beta \to \infty$, then:
\begin{equation}\label{eq:true_cpd_low}
   \lim_{\deff\to \infty}    \inf_{\wh{\muvect}}  \sup_{\mbp \in \multimodel( \cC,\bm{\zeta},\Sigma, \bm{\nrisk} )} \frac{1}{B} \sum_{k=1}^{B} \frac{R_k(\wh{\mu}_k)}{\rnrisk_k^2}\geq   \frac{1}{B} \sum_{j=1}^{J} \sum_{k\in \cC_j} \frac{ \wt{\zeta}_j^2}{\rnrisk_k^2+\wt{\zeta}^2_j} + \frac{ \rnrisk_k^2}{\rnrisk_k^2+\wt{\zeta}^2_j} \frac{1}{\sum_{\ell\in C_j} \frac{\rnrisk_k^2+ \tilde{\zeta}_j^2}{\rnrisk_\ell^2+ \tilde{\zeta_j}^2}}
\end{equation}
We conclude by remarking that for all $j\in \intr{J}$:
\begin{align*}
	\sum_{\ell\in \cC_j} \frac{\rnrisk_k^2+ \wt{\zeta}_j^2}{\rnrisk_\ell^2+\wt{\zeta}_j^2} =1+ \sum_{\ell \in \cC_j\backslash\{k\}} \frac{\rnrisk_k^2+ \wt{\zeta}_j^2}{\rnrisk_\ell^2+\wt{\zeta}_j^2} \leq 1+ \sum_{\ell \in \cC_j\backslash\{k\}} \frac{\rnrisk_k^2+ \wt{\zeta}_j^2}{\rnrisk_\ell^2}
\end{align*}
Then:
\begin{multline*}
	\frac{1}{B} \sum_{j=1}^{J} \sum_{k\in \cC_j} \frac{ \wt{\zeta}_j^2}{\rnrisk_k^2+\wt{\zeta}^2_j} + \frac{ \rnrisk_k^2}{\rnrisk_k^2+\wt{\zeta}^2_j} \frac{1}{\sum_{\ell\in C_j} \frac{\rnrisk_k^2+ \tilde{\zeta}_j^2}{\rnrisk_\ell^2+ \tilde{\zeta_j}^2}} \\
	\geq \frac{1}{B} \sum_{j=1}^{J} \sum_{k\in \cC_j} \frac{ \wt{\zeta}_j^2}{\rnrisk_k^2+\wt{\zeta}^2_j} + \frac{ \rnrisk_k^2}{\rnrisk_k^2+\wt{\zeta}^2_j} \frac{1}{1+ \sum_{\ell \in \cC_j\backslash\{k\}} \frac{\rnrisk_k^2+ \wt{\zeta}_j^2}{\rnrisk_\ell^2}} = \cL^*\paren{\bm{\rnrisk}, \Cvect, \zetavect/2}.
\end{multline*}
\qed
\subsection{Proof of Lemma~\ref{prop:cpd_upbnd}}
We start with the following elementary bounds on the function $\cB$ (for $\tau \geq 0, \nu \in [0,1]$)
  \begin{equation}
    \label{eq:boundB}
          \cB(\tau,\nu)     \leq \frac{\tau + \nu}{1 + \tau} \leq \max\paren{1,\frac{\tau}{1+\tau} + \nu}.
  \end{equation}
Now consider the quantity $A_j := \abs{\cC_j}^{-1}\sum_{k\in \cC_j} \cB(\tau_{j,k},\nu_{j,k} )$.
Observe that $\sum_{k\in \cC_j} \nu_{j,k}=1$ and
$\tau_{j,k} = \nu_{j,k} B_j$, where $B_j := \zeta^2_j / \nrisk(\cC_j)$. Thus
\[
  A_j := \abs{\cC_j}^{-1}\sum_{k\in \cC_j} \cB(B_j \nu_{j,k},\nu_{j,k}) \leq (B_j +1) \abs{\cC_j}^{-1}
  \sum_{k \in \cC_j} \frac{\nu_{j,k}}{1+ B_j \nu_{j,k}}.
\]
where we have used the first inequality in~\eqref{eq:boundB}.
By concavity of $t \mapsto t/(1+t)$ we conclude to
\[
  A_j \leq \frac{B_j |\cC_j|^{-1} + |\cC_j|^{-1}}{1+ B_j |\cC_j|^{-1}}
  = \frac{\bar{\tau}_j+|\cC_j|^{-1}}{1+\bar{\tau}_j},
\]
and thus to~\eqref{eq:cpd_upbnd_gen} by summation over $j\in \intr{J}$.
Now using the second inequality in~\eqref{eq:boundB}, we obtain
\[
  \sum_{j \in \intr{J}} \frac{\abs{\cC_j}}{B} \frac{\bar{\tau}_j+|\cC_j|^{-1}}{1+\bar{\tau}_j}
  \leq \sum_{j \in \intr{J}} \frac{\abs{\cC_j}}{B} \min\paren{1,\frac{\bar{\tau}_j}{1+\bar{\tau}_j}
    +|\cC_j|^{-1}} \leq \min\paren{1,\frac{\taubb  }{1+\taubb} + \frac{J}{B}},
\]
where we have used the second inequality in~\eqref{eq:boundB} and the biconcave character
of the function $(x,y) \mapsto \min(1,y + x/(1+x))$; thus establishing~\eqref{eq:estlglob}.
Assume now that all risks and the diameters are equal, i.e. $\nrisk_k = \nrisk$ and $\zeta_j = \zeta$ for $k\in \intr{B}$ and $j \in \intr{J}$. 
  Then for all $j\in \intr{J}$ and $k \in \intr{B}$, $\ol{\rnrisk}^2(\cC_j) = \nrisk$, $\ol{\tau}_{j,k} = \zeta^2/\nrisk = \ol{\tau}$ and $\nu_{j,k} = |\cC_j|^{-1}$. Using the elementary bound
  \[
    \cB(\tau,\nu) \geq \frac{\tau}{1+\tau} + \frac{\nu}{(1+\tau)^2},
  \]
  we thus have in this case
\begin{align}
	 \cL^*\paren{\bm{\rnrisk}, \Cvect,\bm{\zeta}} &= \frac{1}{B} \sum_{j =1}^{J} \sum_{k \in C_j}\cB\paren{ \tau_{j,k}, \nu_{j,k}}= \frac{1}{B} \sum_{j =1}^{J} \sum_{k \in C_j}\cB\paren{ \ol{\tau}, |\cC_j|^{-1}} \notag \\
           & \geq \frac{1}{B} \sum_{j =1}^{J} \sum_{k \in C_j} \paren{\frac{\ol{\tau}}{1+\ol{\tau}} + \frac{\abs{\cC_j}^{-1}}{(1+\ol{\tau})^2}} \notag\\
             & = \frac{\ol{\tau}}{1+\ol{\tau}} + \frac{J}{B}\frac{1}{(1+\ol{\tau})^2}, \label{eq:lowerboundL}
\end{align}

Finally, since for $\tau \geq 0,\nu \in[0,1]$:
\begin{align*}
  \frac{\tau}{1+\tau} + \frac{\nu}{(1+\tau)^2}
& \geq \max\paren{\frac{\tau}{1+\tau}, \frac{1}{(1+\tau)^2} \paren{\frac{\tau}{1+\tau} +\nu}}\\
  & \geq \max\paren{\frac{\tau}{1+\tau},\frac{1}{(1+\tau)^2}}\min\paren{1,\frac{\tau}{1+\tau} + \nu}\\
  & \geq 0.38 \min\paren{1,\frac{\tau}{1+\tau} + \nu},
\end{align*}
we conclude that in the case of equal risks and diameters the upper bound~\eqref{eq:estlglob} and
the lower bound~\eqref{eq:lowerboundL} differ by a factor at most $1/0.38 \leq 2.7$.

\qed

\section{About the constant $\ratiobs$ in the translation-invariant kernel setting}\label{ap:notes_gamma}

In this section, we investigate the distribution-dependent constant $\ratiobs = M^2/(\tr \Sigma)$
in the (\BS) setting (i.e., for data bounded in norm by the constant $M$).
This constant comes into play in the risk bounds for our methods, in relation to sufficient sample
sizes, see e.g. Theorems~\ref{prop:fullemp_bnd}, and \eqref{prop:qaggreg_bounded}.
Rewriting $\tr \Sigma = \e{\norm{X-\e{X}}^2}$ yields a direct interpretation of $\ratiobs$, namely
it is the ratio between the known bound on $\norm{X}$ and the ``variance'' of $X$; in other
words, $\ratiobs$ is all the bigger as the variable $X$ is more concentrated in relation to the size of its support.

We are interested in an understanding more detailed than this simple observation
in the situation of kernel mean embedding (KME), which was our primary motivation for investigating
the (\BS) setting. Namely, in that situation the user might choose between different kernels and their
associated Hilbert space mappings, in particular choosing or tuning the ``kernel bandwidth''.
Even if kernels under consideration are all bounded by the same constant, different kernels may give
rise to different constants $\ratiobs$ for the same underlying data distribution.

We look into this issue under the following general conditions:
\begin{itemize}
\item[(K1)] the original data takes values in $\cZ =\mbr^\ell$, and
  the data whose means we wish to estimate have been obtained via a Hilbert space mapping
  $X = \Phi_{\kappa}(Z)$, $\Phi_{\kappa}: \mbr^\ell \rightarrow \cH$, associated to the kernel $\kappa(z,z') = \inner{\Phi_{\kappa}(z),\Phi_{\kappa}(z')}$.
\item[(K2)] $\kappa$ is a translation-invariant kernel on $\mbr^\ell$, of the form $\kappa(z,z') = F(z-z')$,
where $F: \mbr^\ell \rightarrow \mbr$,
  with $M^2:=F(0)$.
\item[(K3)] For any $u\in \mbr^\ell$, the function $\lambda \mapsto F(\lambda u)$ is nonincreasing on $\mbr_+$.
  Furthermore, there exist constants $h>0, c\leq 1$ such that
\begin{equation}
  \label{eq:kerbandw}
  F(u) \leq
    M^2\paren{1 - c\frac{\norm{u}^2}{h^2}},  \text{ for all  } u\in \mbr^\ell \text{ s.t. } 0 \leq \norm{u}\leq h.
\end{equation}
\end{itemize}
Observe that (K1)-(K2) imply that the mapped data $X$ satisfies (\BS); as for (K3), it means that the
kernel is locally upper bounded by a strongly concave function
in a neighbourhood of 0 of size $h$. The latter quantity can therefore
interpreted as a proxy bandwidth for the kernel; and if $F_1$ satisfies~\eqref{eq:kerbandw} for $h=1$ then
the rescaled kernel function $F_h(u) := F_1(u/h)$ satisfies~\eqref{eq:kerbandw} for the bandwidth parameter $h>0$.
The classical Gaussian, exponential, and Matérn kernels, for example, satisfy such conditions.

\begin{proposition}
  \label{prop:boundgammakernel}
  Assume (K1)-(K2)-(K3) hold, and that the distribution $P$ of the original data $Z$ in $\mbr^\ell$ satisfies the following norm moment condition
  for some $p \geq 1, C>0$:
  \begin{equation}
    \label{eq:smallball}
    \frac{\e{\xi^{2p}}}{\e{\xi^p}^2}\leq C,\qquad \text{ where } \xi := \norm{Z -\e{Z}}.
  \end{equation}
  Then it holds
  \[
    \ratiobs=\frac{M^2}{\e{\norm{X-\e{X}}^2}}
    \leq \frac{4.2^{\frac{2}{p} + 2p} C}{c} \max\paren{1,\frac{h}{2\e{\norm{Z-\e{Z}}^p}^{\frac{1}{p}}}}^2.
  \]
\end{proposition}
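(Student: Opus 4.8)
The plan is to reduce the statement to an exact formula for $\tr\Sigma$ and then to a small-ball estimate for the symmetrized variable $W:=Z-Z'$, where $Z'$ is an independent copy of $Z$; the moment hypothesis \eqref{eq:smallball} will enter through a Paley--Zygmund argument. First I would record that, writing $X=\Phi_\kappa(Z)$ and using the reproducing property, $\e{\norm{X}^2}=\e{\kappa(Z,Z)}=F(0)=M^2$ and, by independence, $\norm{\e{X}}^2=\e{\kappa(Z,Z')}=\e{F(W)}$, so that
\[ \tr\Sigma=\e{\norm{X-\e{X}}^2}=\e{M^2-F(W)}. \]
(All expectations are finite since $\norm{X}\equiv M$.) Hence $\ratiobs=M^2/\tr\Sigma$ is controlled as soon as $\e{M^2-F(W)}$ is bounded below.

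Next I would derive from (K3) the pointwise bound $M^2-F(u)\ge M^2 c\min(1,\norm{u}^2/h^2)$ for every $u\in\mbr^\ell$: for $\norm{u}\le h$ this is exactly \eqref{eq:kerbandw}, while for $\norm{u}>h$ radial monotonicity along the ray $\mbr_+u$ gives $F(u)\le F(hu/\norm{u})\le M^2(1-c)$. Taking expectations yields $\tr\Sigma\ge M^2c\,\e{\min(1,\norm{W}^2/h^2)}$, so the whole problem reduces to lower-bounding $\e{\min(1,\norm{W}^2/h^2)}$, i.e.\ showing that the hypothesis prevents $W$ from concentrating near $0$ relative to the bandwidth $h$.

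For the small-ball estimate, set $\zeta:=Z-\e{Z}$, so $\xi=\norm{\zeta}$, $m_p:=\e{\xi^p}^{1/p}$, and $W=\zeta-\zeta'$ with $\zeta'$ an i.i.d.\ copy. Conditioning on $\zeta$ and applying Jensen to the convex map $v\mapsto\norm{v}^p$, together with $\e{\zeta'}=0$, gives $\e{\norm{W}^p}\ge\e{\norm{\zeta}^p}=m_p^p$; the triangle inequality and convexity of $t\mapsto t^{2p}$ give $\e{\norm{W}^{2p}}\le 2^{2p}\e{\xi^{2p}}$. Combined with \eqref{eq:smallball} this produces $\e{\norm{W}^{2p}}\big/\e{\norm{W}^p}^2\le 2^{2p}C$, and Paley--Zygmund applied to the nonnegative variable $\norm{W}^p$ gives, for every $\lambda\in(0,1)$, $\prob{\norm{W}\ge\lambda m_p}\ge(1-\lambda^p)^2/(2^{2p}C)$. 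Since $\min(1,\norm{W}^2/h^2)\ge\min(1,\lambda^2m_p^2/h^2)\,\mathbf 1[\norm{W}\ge\lambda m_p]$, this bounds $\e{\min(1,\norm{W}^2/h^2)}$ below by $\min(1,\lambda^2m_p^2/h^2)(1-\lambda^p)^2/(2^{2p}C)$; choosing $\lambda=2^{-1/p}$ and distinguishing the cases $h\le 2m_p$ and $h>2m_p$, then simplifying with $\min(1,a)^{-1}=\max(1,a^{-1})$ and elementary comparisons of the form $\max(1,Kx)\le K'\max(1,x)$, leads to $\ratiobs\le\frac{4\cdot 2^{2/p+2p}C}{c}\max(1,h/(2m_p))^2$.

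The hard part will be the last step: choosing the truncation level and tracking the constants through the Paley--Zygmund bound so that the factor $\max(1,h/(2m_p))^2$ and the precise constant $4\cdot 2^{2/p+2p}$ come out exactly. A single fixed $\lambda$ seems to overshoot the target by a bounded factor in the intermediate range $m_p\lesssim h\lesssim m_p$, so I expect to need $\lambda$ (or the threshold) to depend mildly on $p$ and on $h/m_p$, optimizing $\lambda(1-\lambda^p)$ against $\min(1,\lambda^2m_p^2/h^2)$. The remaining subtleties---that $p$ need not be an integer, handled by convexity in place of binomial expansions, and that all the relevant moments are finite---are routine.
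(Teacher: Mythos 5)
Your proposal is correct and follows essentially the same route as the paper's proof: both rewrite $\e{\norm{X-\e{X}}^2}=M^2-\e{F(Z-Z')}$, lower-bound $M^2-F$ via (K2)--(K3), and apply Paley--Zygmund to $\norm{Z-Z'}^p$ together with the conditional-Jensen bound $\e{\norm{Z-Z'}^p}\geq \e{\xi^p}$ and the triangle/convexity bound $\e{\norm{Z-Z'}^{2p}}\leq 2^{2p}\e{\xi^{2p}}$. The constant-tracking you single out as the hard part is not a genuine obstacle: the paper's own argument uses the same crude devices (capping the Paley--Zygmund level at $1/2$, which matches your choice $\lambda=2^{-1/p}$, and then replacing $\norm{Z-Z'}_{L^p}$ by $\e{\xi^p}^{1/p}$ at the end), so the bounded-factor slack you describe is exactly what the paper's proof incurs as well, and no further idea is needed.
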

Assume $p=2$ to simplify (we allowed for other values of $p$ in the moment condition~\eqref{eq:smallball}
mainly with the possible value $p=1$ in mind, which makes the condition weaker; the discussion below
can be readily adapted to other values of $p$).
This result shows that, provided  the bandwidth parameter $h$ is chosen of the order of
$\sigma_Z := \e{\norm{Z-\e{Z}}^2}^{\frac{1}{2}}$ or smaller, the constant $\ratiobs$ for the mapped data
is bounded independently of $h$. The bound depends on (1) the
strong concavity parameter $c$ of the upper bound on the (unit scaled) kernel function in a neighbourhood of the origin, and (2) the norm moment ratio~\eqref{eq:smallball}
of the original data distribution. Since
$\e{\xi^{4}} \leq \e{\xi^2} \norm{\xi}^2_{L^\infty}$,
in the case where the original data is itself bounded in norm by a constant $R$, \eqref{eq:smallball} holds with
$C=(R/\sigma_Z)^2$. Thus, if the original $X$ data is bounded, the distribution of the mapped data $Z$ under the above conditions ``inherits''
the constant $\ratiobs$ from that of the original data, up to factors.
However, the norm moment condition is much milder than a boundedness condition and can also accommodate
unbounded distributions with heavy tails of the original data.

{\bf Proof of Proposition~\ref{prop:boundgammakernel}.}

For $Z,Z'\sim\mbp$ independent, denote  $D:=\norm{Z-Z'},$  $\theta:=\min\paren{\frac{h^p}{\e{D^p}},
  \frac{1}{2}},$ and $t^p:= \theta \e{D^p} = \min\paren{h^p,\frac{\e{D^p}}{2}}$,
it holds
\begin{align*}
  \norm{\e{\Phi_{\kappa}(Z)}}^2/M^2
  & = M^{-2} \e{\inner{\Phi_{\kappa}(Z),\Phi_{\kappa}(Z')}}\\
  & = M^{-2} \e{F(Z-Z')}\\
  & \leq 1- c\frac{t^2}{h^2} \prob{ D^p > t^p}\\
  & \leq 1- c\frac{\e{D^p}^{\frac{2}{p}}}{h^2} \theta^{\frac{2}{p}} \paren{1- \theta}^2 \frac{\e{D^p}^2}{\e{D^{2p}}}\\
  & \leq \frac{\e{\norm{\Phi_{\kappa}(Z)}^2}}{M^2} - \frac{c}{4} \min\paren{1,\frac{\e{D^p}^{\frac{2}{p}}}{2^{\frac{2}{p}}h^{2}}} \frac{\e{D^{p}}^2}{\e{D^{2p}}},
\end{align*}
where the first inequality stems from (K3); the second comes from the Paley-Zygmund inequality;
and we used $\theta \leq \frac{1}{2}$ for the third, as well as the fact that $\norm{\Phi_\kappa(Z)}=F(0)=M$ by (K2).
Since $\e{\norm{\Phi_{\kappa}(Z)}^2}-\norm{\e{\Phi_{\kappa}(Z)}}^2 = \e{\norm{X}^2} - \norm{\e{X}}^2 = \e{\norm{X-\e{X}}^2}$,
we deduce
\[
  \frac{M^2}{\e{\norm{X-\e{X}}^2}} \leq \frac{4.2^{\frac{2}{p}}}{c} \max\paren{1,\frac{h}{\norm{D}_{L^p(P)}}}^2 
  \paren{\frac{\e{D^{2p}}}{\e{D^p}^2}}.
\]
Finally, note that
\[
  \e{D^{2p}} = \e{\norm{Z-Z'}^{2p}} \leq \e{ (\norm{Z-\e{Z}} + \norm{Z'-\e{Z'}})^{2p}}
  \leq 2^{2p}  \e{ \norm{Z-\e{Z}}^{2p}},
  \]
  and by Jensen's inequality
  \[
    \e{\norm{Z-\e{Z}}^p} = \e{\norm{Z-\e{Z'}}^p}  \leq \e{\norm{Z-Z'}^p} = \e{D^p}.
  \]
  (Observe that the equality $\e{\norm{Z-Z'}^2} = 2\e{\norm{Z-\e{Z}}^2}$ holds, so the constants
  in the first, resp. second inequality
  above can be improved for the special cases $p=1$, resp. $p=2$.)
\qed

\section{Illustrative example: denoising of {MNIST} images}\label{sec:apx_mnist}
This section compares the testing and $Q$-aggregation approaches for estimating multiple real means using an illustrative toy data set. 
The MNIST dataset, consisting of $28 \times 28$ grayscale images scaled to $[0,1]$ of handwritten digits ($0 - 9$), is used. 
Figure~\ref{fig:mnist_examples} provides some examples.
\begin{figure}[h]
\includegraphics[width=\textwidth]{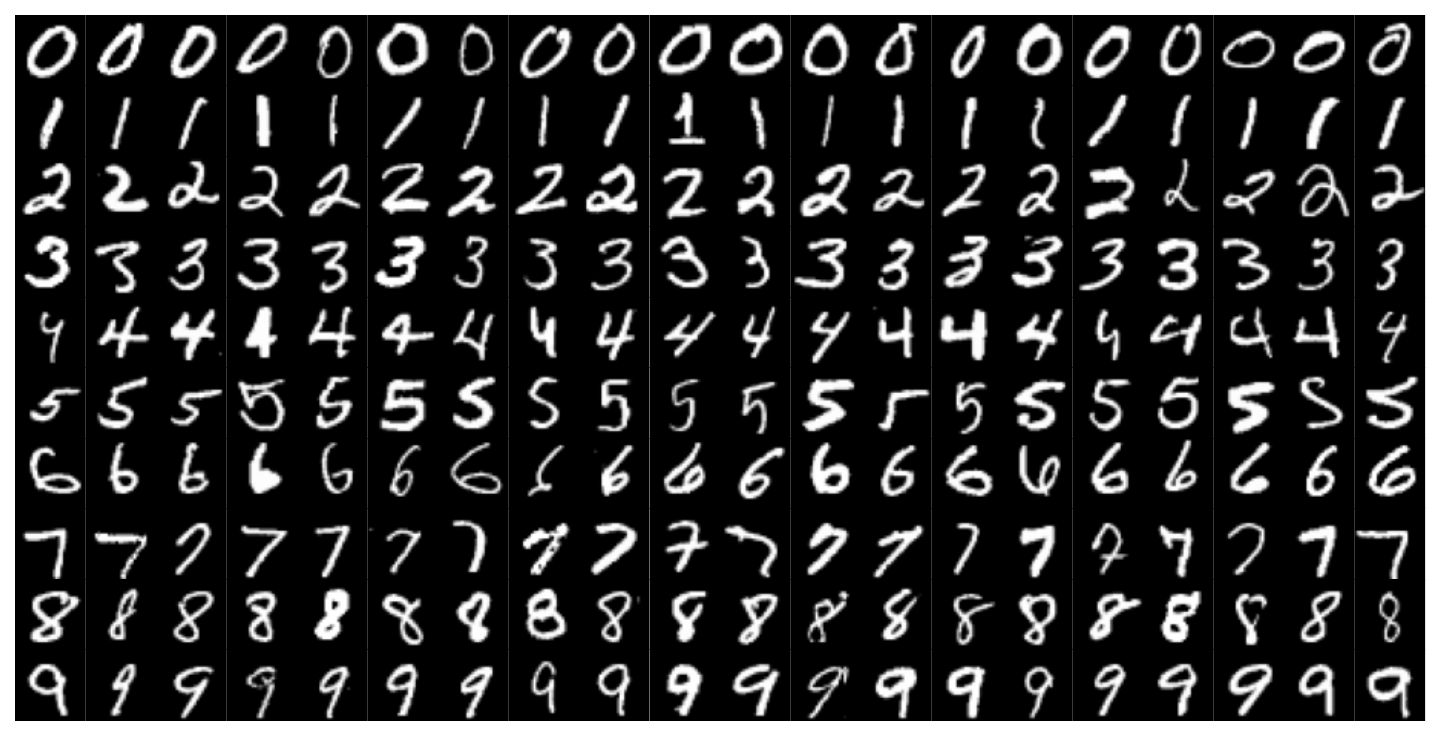}
\caption{
Example images of handwritten digits from the MNIST data set.}\label{fig:mnist_examples}
\end{figure}
$100$ images per digit are selected and degraded with additive standard normal noise, each creating $20$ samples, resulting in $1000$ bags of bag size $20$. 
Independent collections of bags are used for model parameter selection and testing, with experiments repeated $50$ times.

This data set allows us to inspect the quality of the estimation, even though the means are high-dimensional ($28 \times 28 = 784$), and to showcase the validity of the neighbouring test as we can compare our understanding of similarly appearing digits with the outcome of the neighbouring test. 
Figure \ref{fig:mnist_st} shows $V_{\tau,\cteW}$ for different $\tau$, revealing a block structure corresponding to digit-specific bags. 
The block diagonal confirms similarity within the same digit. 
Digit $0$ is distinct due to its circular shape but empty interior, while $1$ is similar to most digits due to its simple vertical line. 
Digits $4$, $7$, and $9$ are judged as similar, aligning with their visual resemblance. 
While smaller $\tau$ values yield more conservative neighbour sets, we recall that the neighbouring test only functions as a safeguard for \STBegd{} which can, thus, resort to a higher $\tau$. 
It primarily excludes outliers like $0$ and identifies further similar pairs, such as $3$ and $5$ or $3$ and $8$.

\begin{figure}[ht]
\includegraphics[width=\textwidth]{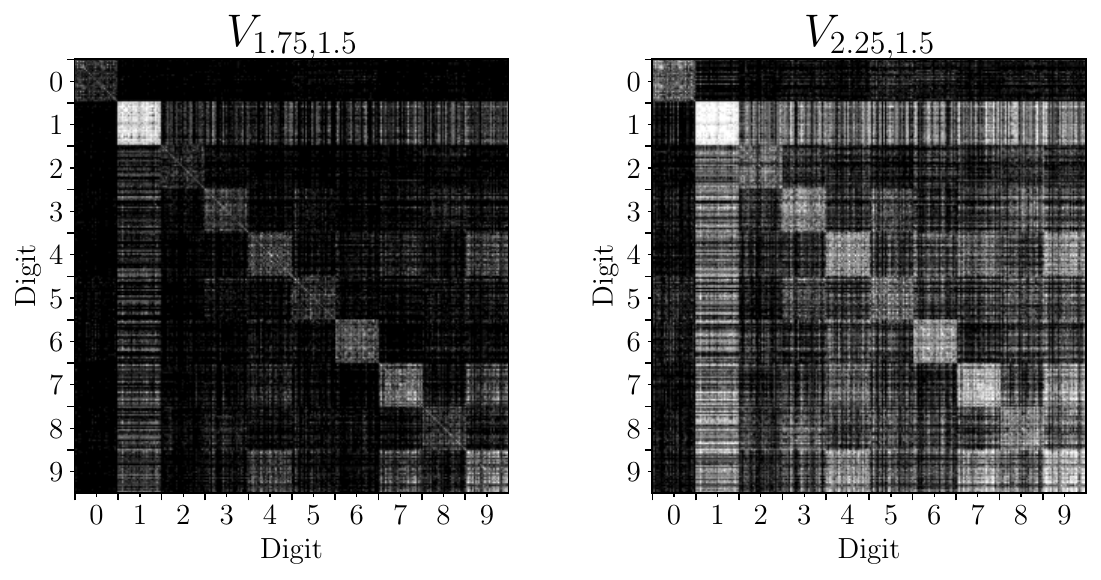}
\caption{$V_{\tau,\cteW}$ for the optimal values of $\tau$ and $\cteW$ for \STBopt{} (left) and \STBegd{} (right) on the MNIST data set. $V_{\tau,\cteW}$ is not necessarily symmetric, and each row corresponds to the outcome of the neighbouring test for one mean, with white indicating `neighbours' and black `no neighbours'.}
\label{fig:mnist_st}
\end{figure}

\begin{figure}[ht]
\includegraphics[width=\textwidth]{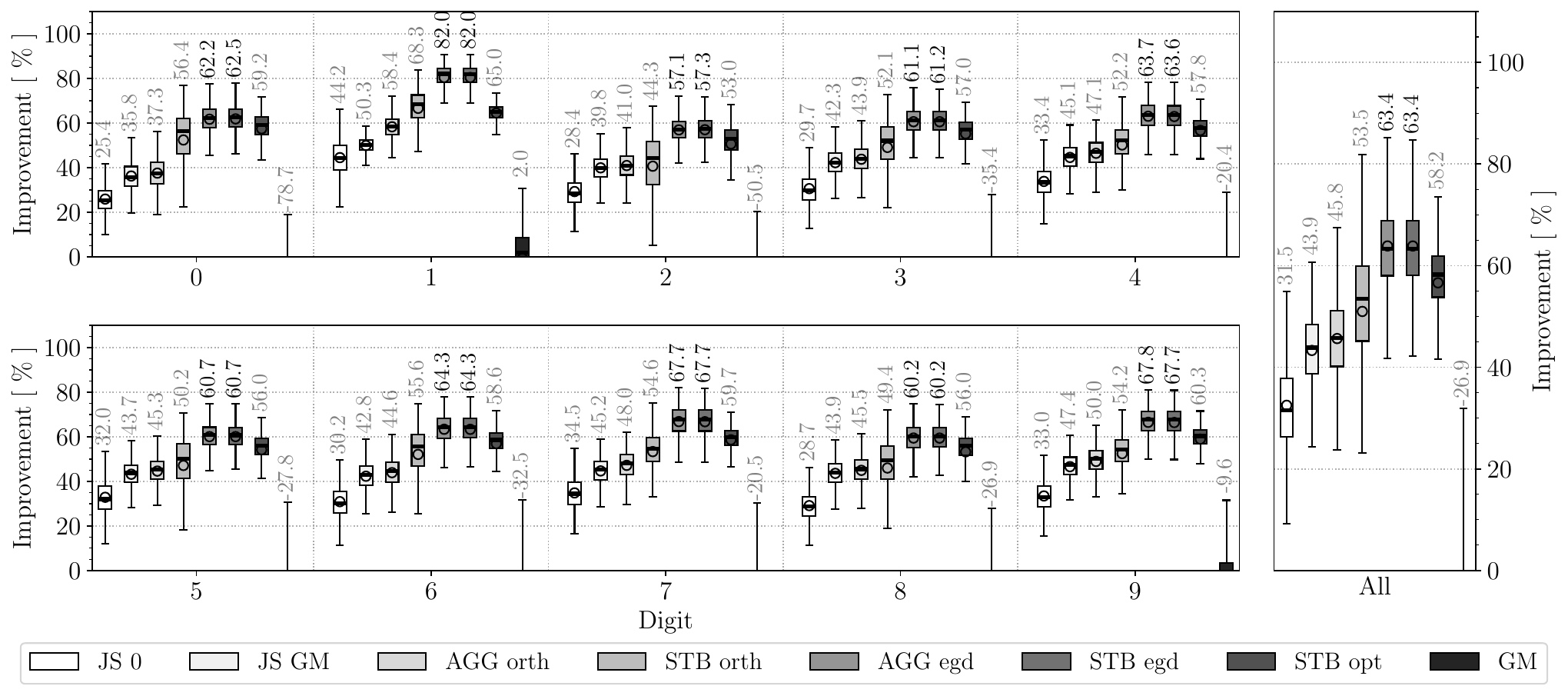}
\caption{
Decrease in estimation error compared to \NE{} in percent on the MNIST data set. Higher is better. The number next to the boxplot quantifies the median, which is also depicted as a line. The mean is visualised as a circle. Results are shown on each digit individually and on `All' jointly.}
\label{fig:mnist_errors}
\end{figure}

Figure~\ref{fig:mnist_errors} shows the improvement over the naive estimation achieved by the proposed methods. As comparison we also denote the performances of the James-Stein estimator with a shrinkage towards zero (\texttt{JS 0}) or to the grand mean (\texttt{JS GM}). 
The true instead of the empirically estimated (co-) variance is provided to the \texttt{JS} estimator, giving it an advantage over our fully empirical methods. 
Nevertheless, the testing approach (\STBopt) performs significantly better than the \texttt{JS} estimator. In contrast to the experiments on the estimation of the kernel mean embeddings, performing the aggregation methods on a preselected set of neighbouring means only provides a small benefit for \AGGo.
The $Q$-aggregation approach, \AGGe{} and \STBegd, surpasses all remaining methods. 

\begin{figure}[ht]
\includegraphics[width=0.4\textwidth]{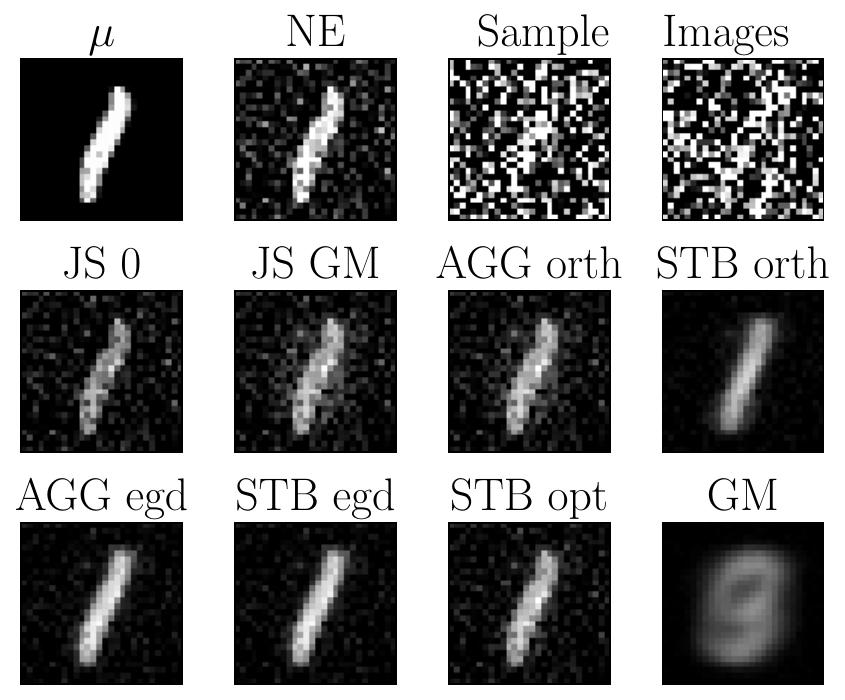}
\hspace{0.02\textwidth}
\includegraphics[width=0.4\textwidth]{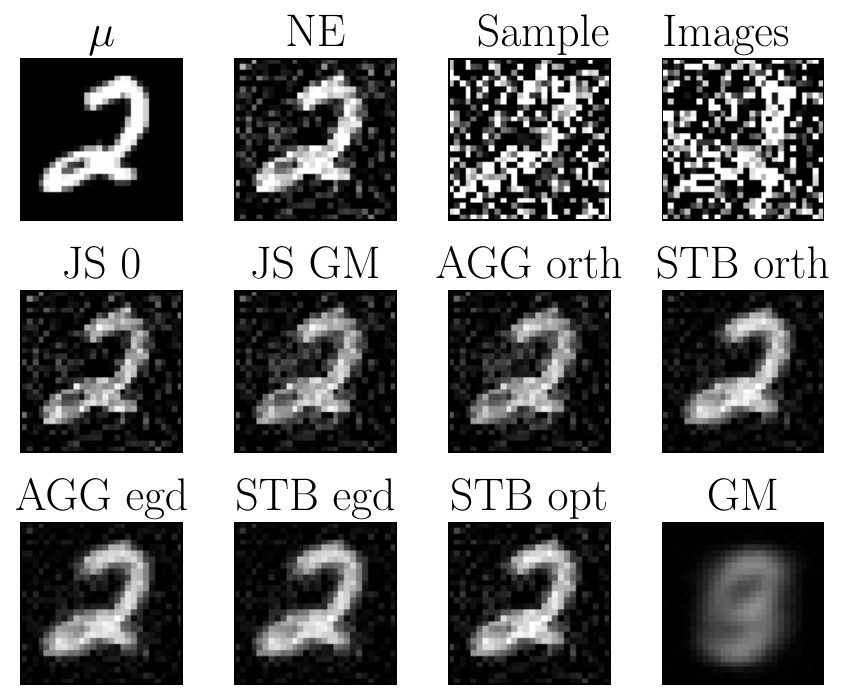}
\caption{
The true mean, some samples and the estimations of the mean of a bag of digit $1$ (left) and $2$ (right).}
\label{fig:mnist_estimations}
\end{figure}

The denoising quality and the sharpness of the estimated images are depicted in Figure \Ref{fig:mnist_estimations}. 
Because the grand mean (\texttt{GM}) averages all available images, it has the best denoising ability for background pixels but lacks to reproduce the desired digit. 
Compared to the James-Stein estimations, our approaches provide an even background and a bright digit. 
However, due to the convex combination of multiple images, transitions between background and digits are gradual (cf. black hole in curl of $2$), yet the overall image quality is best.

\section{Details on the tested methods}\label{sec:apx_methods}
\subsection{Description of the methods}\label{sec:apx_desc_methods}
We consider convex combinations of naive empirical means (\NE) which results in estimators of the form
\begin{equation*}
\estmu_k^{\texttt{m}} := \sum_{\ell \in \intr{B}} \om_{k_\ell}^{\texttt{m}} \muNE_\ell \qquad \text{ s.t. } \qquad \forall k,\ell \in \intr{B}: \sum_{\ell \in \intr{B}} \om_{k_\ell}^{\texttt{m}} = 1 \,\,\, \text{and} \,\,\, \om_{k_\ell}^{\texttt{m}} \geq 0,
\end{equation*}
where the definition of the weighting $\omega^{\texttt{m}}$ depends on the applied method \texttt{m}.
The details for all tested methods are provided next, both for the estimation of real means and kernel mean embeddings.
For a description of the James-Stein estimator, see Supplemental~\ref{se:JS}.
We start with a listing of empirical estimators required by the methods that were already introduced in the theoretical results. Some of them are rewritten or modified to be computable for large datasets and in the kernel framework.
We emphasize that \emph{none} of the proposed methods perform sample splitting in practice. Instead all estimations are performed on the same samples and function fully empirically without any prior knowledge.

\subsubsection{Empirical estimators}
\begin{itemize}
	
\item $\wh{\Sigma}_k$: In finite dimension, some of the following estimators employ the empirical covariance matrix,
\begin{equation}\label{eq:emp_covariance}
	\wh{\Sigma}_k = \frac{1}{N_k -1} \sum_{i=1}^{N_k} \paren{X_i^{(k)} - \muNE_k}\paren{X_i^{(k)} - \muNE_k}^T\,.
\end{equation}
\item $\hat{s}^2$: 
The computation of the naive risk requires the trace of the covariance matrix $\tr{\Sigma}$ which can either be estimated as the trace of the empirical covariance matrix $\tr{\hat{\Sigma}}$, if $\hat{\Sigma}$ as in \eqref{eq:emp_covariance} is available, or using \eqref{eq:esttrace} by
\begin{equation*}
\tr{\hat{\Sigma}_k} = {\hat{Z}_k}^{(1)} 
= \frac{1}{N_k - 1}\sum_{i=1}^{N_k} 
\norm{{X}_i^{(k)} - \muNE_k}^2.
\end{equation*} 
Proposition~\ref{prop:estnrisk} presents an estimator for the naive risk on an independent sample which in practice is calculated on $X$ instead
\begin{equation}\label{eq:shatrme}
\hat{s}_k^2 = \frac{\tr{\hat{\Sigma}_k}}{N_k} 
= \frac{1}{N_k(N_k-1)}\sum_{i=1}^{N_k} \norm{X^{(k)}_i - \muNE_k}^2.
\end{equation}
For data in a RKHS, we propose the following estimator
\begin{equation}\label{eq:shatkme}
\hat{s}_k^2 = \frac{1}{2 N_k^2 (N_k - 1)} \sum_{i \neq j}^{N_k} \kappa( Z_{i}^{(k)},Z_{i}^{(k)} ) - 2\kappa( Z_{i}^{(k)},Z_{j}^{(k)} ) + \kappa( Z_{j}^{(k)},Z_{j}^{(k)} ).
\end{equation}

\item $\widehat{\tr{(\Sigma^2)}}$:
An unbiased estimator for $\tr{(\Sigma^2)}$ is given by \eqref{eq:def_t_trsigma2} which is equivalent to
\begin{align}\label{eq:trE2_rme}
\hat{Z}_k^{(2)^2} = &\frac{(N_k-1)^2}{N_k(N_k-3)}\tr{(\hat{\Sigma}_k^2)}
- \frac{1}{(N_k-2)(N_k-3)} \sum_{i=1}^{N_k} \norm{{X}_i^{(k)} - \muNE_k}^2 \nonumber \\
&+ \frac{N_k-1}{N_k(N_k-2)(N_k-3)}\tr{(\hat{\Sigma}_k)}^2.
\end{align}
In a RKHS, \eqref{eq:def_t_trsigma2} can be translated to 
\begin{align}\label{eq:trE2_kme}
&\frac{1}{N_k (N_k - 1)} \sum_{{n_1} \neq {n_2}}^{N_k} {\kappa(Z_{n_1}^{(k)}, Z_{n_2}^{(k)})}^2 \nonumber \\
&- \frac{2}{N_k (N_k - 1) (N_k - 2)} \sum_{{n_1} \neq {n_2} \neq {n_3}}^{N_k} \kappa(Z_{n_1}^{(k)}, Z_{n_2}^{(k)}) \kappa(Z_{n_1}^{(k)}, Z_{n_3}^{(k)}) \\
&+ \frac{1}{N_k (N_k - 1) (N_k - 2) (N_k - 3)} \sum_{{n_1} \neq {n_2} \neq {n_3} \neq {n_4}}^{N_k} \kappa(Z_{n_1}^{(k)}, Z_{n_2}^{(k)}) \kappa(Z_{n_3}^{(k)}, Z_{n_4}^{(k)}), \nonumber
\end{align}
for $N_k \geq 4$. 
However this estimator has computational complexity $\mathcal{O}(N_k^4)$ and is infeasible in practice for large $N_k$.
Instead, we propose in Algorithm~\ref{alg:approx_trE} a subsampling strategy that gives a less accurate estimate but operates in $\mathcal{O}(N_k)$.
\begin{algorithm}[ht]
	\caption{Subsampling estimate of $\widehat{\tr{\Sigma^2}}$ for the kernel setting}\label{alg:approx_trE}
	\begin{algorithmic}[1]
		\Require data $Z^{(k)}_{\bullet}$, bag size $N_k$, number of repetitions $r$
		\State \algorithmiccomment{ initialise }
		\State t$_1 \leftarrow 0$
		\State t$_2 \leftarrow 0$
		\State t$_3 \leftarrow 0$
		\State \algorithmiccomment{ first term can be calculated directly in linear time}
		\State t$_1 \leftarrow \sum_{i,j}^{N_k} \kappa(Z_i^{(k)},Z_{j}^{(k)})^2 - \sum_{i}^{N_k} \kappa(Z_i^{(k)},Z_i^{(k)})^2$
		\State \algorithmiccomment{ other terms are approximated in $r$ iterations}
		\For{$1$ to $r$}
			\State \algorithmiccomment{ select four distinct samples }
			\State $n_1, n_2, n_3, n_4 \leftarrow \text{randint}(1,N_k,4)$
			\State \algorithmiccomment{ approximate second and third term }
			\State t$_2 \leftarrow \text{t}_2 + \kappa(Z_{n_1}^{(k)},Z_{n_2}^{(k)}) \cdot \kappa(Z_{n_1}^{(k)},Z_{n_3}^{(k)})$
		    \State t$_3 \leftarrow \text{t}_3 + \kappa(Z_{n_1}^{(k)},Z_{n_2}^{(k)}) \cdot \kappa(Z_{n_3}^{(k)},Z_{n_4}^{(k)})$
 		\EndFor
		\State \algorithmiccomment{ normalise and add}
		\State $\widehat{\tr{\Sigma^2}} \leftarrow \text{t}_1 / {(N_k (N_k - 1))} - 2 \text{t}_2 / r + \text{t}_3 / r$
	\State \Return $\widehat{\tr{\Sigma^2}}$
	\end{algorithmic}
\end{algorithm} 

\item $U$: 
An unbiased estimator for $\norm{\Delta_{\ell}^{(k)}}^2 = \norm{\mu_{\ell} - \mu_k}^2$ on an independent sample is provided in \eqref{eq:def_testU} with $\norm{\mu_{\ell} - \mu_k}^2$ corresponding to $U_{k, \ell}$ which becomes
\begin{align}\label{eq:apxU}
U_{k,\ell} &= \sum_{b \in \lbrace k, \ell \rbrace}\left(
\sum_{i \neq j}^{N_b} \frac{\ip{X_i^{(b)}}{X_j^{(b)}}}{N_b (N_b - 1)} \right)
- 2 \sum_{i=1}^{N_k} \sum_{j=1}^{N_{\ell}} \frac{\ip{X_i^{(k)}}{X_j^{(\ell)}}}{N_k N_{\ell}} 
\nonumber \\
&= \sum_{b \in \lbrace k, \ell \rbrace} \left( \frac{N_b^2 \ip{\muNE_b}{\muNE_b} - \sum_{i=1}^{N_b} \ip{X_i^{(b)}}{X_i^{(b)}}}{N_b (N_b - 1)} \right) - 2 \ip{\muNE_k}{\muNE_{\ell}}.
\end{align}
For the kernel setting, the maximum mean discrepancy can be employed for an unbiased estimation of $\norm{\mu_{\ell} - \mu_k}^2_{\mathcal{H}}$,
\begin{align}\label{eq:unmmd}
\hat{\text{MMD}}^2(\mu_k, \mu_\ell) =& \sum_{b \in \lbrace k, \ell \rbrace}\left(
\sum_{i \neq j}^{N_b} \frac{\kappa(Z_i^{(b)}, Z_j^{(b)})}{N_b (N_b - 1)} \right)
- 2 \sum_{i=1}^{N_k} \sum_{j=1}^{N_{\ell}} \frac{\kappa(Z_i^{(k)}, Z_j^{(\ell)})}{N_k N_{\ell}}.
\end{align}

\item $\hat{q}$: The $Q$-aggregation approach requires an estimation for $q$ as provided in \eqref{eq:defwhq},
\begin{equation}\label{eq:apx_q_rme}
\hat{q}^{(k)}_{\ell} = \frac{1}{N_k - 1} \sum_{i=1}^{N_k}
\ip{\muNE_k - \muNE_{\ell}}{X_i^{(k)} - \muNE_k}^2.
\end{equation}
For data in a RKHS, we propose a biased estimate 
\begin{align}\label{eq:apx_dedbiased}
\hat{q}^{(k)}_{\ell} =&\frac{1}{N_k-1} \sum_{i=1}^{N_k} {\left( \frac{1}{N_\ell} \sum_{j=1}^{N_\ell} \kappa(Z_{i}^{(k)},Z_{j}^{(\ell)}) -
\frac{1}{N_k} \sum_{j=1}^{N_k} \kappa(Z_{i}^{(k)},Z_{j}^{(k)}) \right)}^2 \nonumber\\
&- \frac{N_k}{N_k - 1} {\left(
\frac{1}{N_k N_\ell} \sum_{i=1}^{N_k} \sum_{j=1}^{N_\ell}\kappa(Z_{i}^{(k)},Z_{j}^{(\ell)})
- \frac{1}{N_k N_k} \sum_{i=1}^{N_k} \sum_{j=1}^{N_k}\kappa(Z_{i}^{(k)},Z_{j}^{(k)})
\right)}^2.
\end{align}
For translation invariant kernels we use a "less biased" estimate
\begin{align*}
\hat{q}^{(k)}_{\ell} =&\frac{1}{N_k-1} \sum_{i=1}^{N_k} {\left( \frac{1}{N_\ell} \sum_{j=1}^{N_\ell} \kappa(Z_{i}^{(k)},Z_{j}^{(\ell)}) -
\frac{1}{N_k-2} \sum_{j=1}^{N_k} \kappa(Z_{i}^{(k)},Z_{j}^{(k)}) \right)}^2 \nonumber\\
&- \frac{N_k}{N_k - 1} {\left(
\frac{1}{N_k N_\ell} \sum_{i=1}^{N_k} \sum_{j=1}^{N_\ell}\kappa(Z_{i}^{(k)},Z_{j}^{(\ell)})
- \frac{1}{N_k (N_k-2)} \sum_{i=1}^{N_k} \sum_{j=1}^{N_k}\kappa(Z_{i}^{(k)},Z_{j}^{(k)})
\right)}^2.
\end{align*}
Its computational complexity is in $\mathcal{O}(N_k^2)$.

\item $\hat{\Lambda}_{\ell,\ell^\prime}^{(k)} := \ip{\muNE_\ell - \muNE_k}{\muNE_{\ell^\prime} - \muNE_k}$: The inner product of the distances between the empirical means might be understood as a biased estimation of the Gram matrix ${(\ip{\Delta_\ell^{(k)}}{\Delta_{\ell^\prime}^{(k)}})}_{\ell,\ell^\prime \in \intr{B}}$. For the kernel setting this translates to
\begin{equation}\label{eq:apx_lambdabiased}
\hat{\Lambda}_{\ell,\ell^\prime}^{(k)} = \begin{cases}
0
, &\text{for   } k = \ell \text{, or } k = \ell^\prime \text{, or } k = \ell = \ell^\prime\\

\begin{subarray}{1}
\frac{1}{N_\ell N_{\ell^\prime}}\sum_{j}^{N_\ell} \sum_{j^\prime}^{N_{\ell^\prime}} \kappa(Z_j^{(\ell)}, Z_{j^\prime}^{(\ell^\prime)}) \\
\,\,\,\,\,\, - \frac{1}{N_\ell N_k}\sum_{j}^{N_\ell} \sum_{i}^{N_k} \kappa(Z_j^{(\ell)}, Z_{i}^{(k)}) \\
\,\,\,\,\,\, - \frac{1}{N_k N_{\ell^\prime}}\sum_{i}^{N_{k}} \sum_{j^\prime}^{N_{\ell^\prime}} \kappa(Z_i^{(k)}, Z_{j^\prime}^{(\ell^\prime)}) \\
\,\,\,\,\,\, + \frac{1}{N_k N_k}\sum_{i}^{N_{k}} \sum_{i^\prime}^{N_{k}} \kappa(Z_i^{(k)}, Z_{i^\prime}^{(k)})
\end{subarray}
, &\text{otherwise.}
\end{cases}
\end{equation}

\item $\hat{W}_{\cteW}, \hat{V}_{\tau, \cteW}$: The (pre-) selection of neighbours is performed on the same sample as used for the estimation and not, as required by the theoretical results, on an independent sample (cf. \eqref{eq:setWandtkdtilde}, \eqref{eq:setVdtilde}). We obtain
\begin{gather}
\hat{W}_{\cteW}^{(k)} = \left\lbrace \ell \in \intr{B} : \frac{\hat{Z}^{(2)}_\ell}{N_\ell} \leq \cteW \frac{\hat{Z}^{(2)}_k}{N_k}\right\rbrace
\nonumber \\
\hat{T}^{(\tau)}_{k,\ell} := \ind{ U_{k,\ell} \leq \tau \hat{\rnrisk}^2_k}, \quad
\hat{V}_{\tau,\cteW}^{(k)} := \set[2]{ \ell \in \hat{W}_{(\cteW)}: \hat{T}^{(\tau)}_{k,\ell}=1},
\label{eq:apx_rstb}
\end{gather}
with $\hat{Z}^{(2)}_k$ as in \eqref{eq:trE2_rme}, $U_{k,\ell}$ in \eqref{eq:apxU}, and $\hat{\rnrisk}^2_k$ in \eqref{eq:shatrme}.
For the kernel setting, the notation translates to
\begin{gather}
\hat{W}_{\cteW}^{(k)} = \left\lbrace \ell \in \intr{B} : \frac{\sqrt{\widehat{\tr{(\Sigma^2_\ell)}}}}{N_\ell} \leq \cteW \frac{\sqrt{\widehat{\tr{(\Sigma^2_k)}}}}{N_k}\right\rbrace
\nonumber \\
\hat{T}^{(\tau)}_{k,\ell} := \ind{ \hat{\text{MMD}}^2(\mu_k, \mu_\ell) \leq \tau \hat{\rnrisk}^2_k}, \quad
\hat{V}_{\tau,\cteW}^{(k)} := \set[2]{ \ell \in \hat{W}_{(\cteW)}: \hat{T}^{(\tau)}_{k,\ell}=1},
\label{eq:apx_kstb}
\end{gather}
with $\widehat{\tr{(\Sigma^2_\ell)}}$ as in \eqref{eq:trE2_kme}, $\hat{\text{MMD}}^2(\mu_k, \mu_\ell)$ in \eqref{eq:unmmd}, and $\hat{\rnrisk}^2_k$ in \eqref{eq:shatkme}.
\end{itemize}
These estimations can be plugged in the computation of the convex combination weights $\omega^{\texttt{m}}$ of method \texttt{m} as presented next.
\subsubsection{State-of-the-art approaches}\label{sec:apx_desc_sota}
\begin{enumerate}[label=(\roman*)]
\item \NE{} considers each bag individually.
\begin{equation*}
\omega^{\NE}_{k_\ell} = \begin{cases}
1, &\text{for   } k = \ell \\
0, &\text{otherwise.}
\end{cases}
\end{equation*}

\item \RKMSE{} \citep{muandet2016stein} estimates each KME individually but shrinks it towards $\mathbf{0}$.
It corresponds to a James-Stein estimator in a RKHS (cf. Supplemental~\ref{se:JS} for a presentation of the original JS-estimator in $\mathbb{R}^d$).
The amount of shrinkage is data dependent
\begin{equation*}
\omega^{\RKMSE}_{k_\ell} = \begin{cases}
1-\frac{\lambda_k}{1+\lambda_k}, &\text{for   } k = \ell \\
0, &\text{otherwise}
\end{cases}
\end{equation*}
where
\begin{equation*}
\lambda_k = \frac{\varrho_k - \rho_k}{(\nicefrac{1}{N_k} - 1)\varrho_k + (N_k - 1)\rho_k}
\end{equation*}
with $\varrho_k = \nicefrac{1}{N_k} \sum_{i=1}^{N_k} \kappa(Z_i^{(k)},Z_i^{(k)})$ and
$\rho_k = \nicefrac{1}{N_k^2} \sum_{i,j=1}^{N_k} \kappa(Z_i^{(k)},Z_{j}^{(k)})$.

\item \MTAconst{} \citep{feldman2014revisiting} was initially proposed for the estimation for multiple real means.
We adapted the approach such that it can be applied to the estimation of multiple kernel means
\begin{equation*}
\omega^{\MTAconst}_{k_\ell} = \left({\left(I_B + \frac{\gamma}{B} \hat{S}^2 \cdot \mathcal{L}(A)\right)}^{-1}\right)_{k_\ell}.
\end{equation*}
Here, $I_B$ denotes the $B$-dimensional identity, $\hat{S}^2 = \text{diag}\left( (\hat{s}_\ell^2)_{\ell \in \intr{B}}\right)$ with $\hat{s}_\ell^2$ as in \eqref{eq:shatkme}, 
and $\mathcal{L}(A)$ denotes the graph Laplacian of task-similarity matrix $A$.
For \MTAconst{} the similarity is assumed to be constant, i.e.,
$A = a \cdot (\mathbf{1} \mathbf{1}^T)$
with $a = \frac{1}{B (B-1)} \sum_{k,\ell \in \intr{B}} \norm{\estmu_k^{\NE} -\estmu_\ell^{\NE}}^2_\mathcal{H}$.
The optimal value for model parameter $\gamma$ may be found using model optimization.
\end{enumerate}

\subsubsection{\texttt{AGG} approaches}\label{sec:apx_desc_agg}
The aggregation approaches form a convex combination of possibly all bags whose weights are found directly by minimization of quantities related to the squared risk.
\begin{enumerate}[label=(\roman*)]
\setcounter{enumi}{3}
\item \AGGo{} is based on the constrained optimization problem
\begin{equation*}
\pmb{\omega}_{k} = \underset{\mathbf{w}_{k}}{\text{argmin}} \left\lbrace \mathbb{E} \norm{\sum_{\ell \in \intr{B}} w_{k_\ell} \estmu_\ell^{\NE} - \mu_k}_\mathcal{H}^2 \right\rbrace \text{ s.t. } \forall k,\ell \in \intr{B} : \sum_{\ell \in \intr{B}} \omega_{k_\ell} = 1 \, , \,\,  \omega_{k_\ell} \geq 0.
\end{equation*}
Using Lagrangian multipliers the optimal solution can be derived as
\begin{equation}\label{eq:nT_optproblem}
\pmb{\omega}_{k} \simeq {\left( S^2 + \Lambda^{(k)} \right)}^{(-1)} \mathbf{1}
\end{equation}
where $S^2 = \text{diag}\left( ({s}_\ell^2)_{\ell \in \intr{B}}\right)$, 
Gram matrix $\Lambda_{\ell,\ell^\prime}^{(k)} = \ip{\Delta_\ell^{(k)}}{\Delta_{\ell^\prime}^{(k)}}$, and $\mathbf{1}$ denotes a $B$-dimensional one vector.
Simplifying assumption of \AGGo{} is $\Lambda_{\ell,\ell^\prime}^{(k)} = 0$ for all $\ell \neq \ell^\prime$ such that $\Lambda^{(k)}$ becomes a diagonal matrix with \[\Lambda^{(k)} = \text{diag}\left({\left({\norm{\mu_\ell - \mu_k}}^2 \right)}_{\ell \in \intr{B}} \right).\]
This (purely heuristic) assumption states that all means $\ell, \ell^\prime$ spread pairwise orthogonal around mean $k$ in their centre, i.e., their differences are orthogonal.
Because this has to hold for all distinct $k, \ell, \ell^\prime \in \intr{B}$, it is unlikely to hold yet it simplifies the optimization problem.
All quantities are replaced by their empirical estimates, and \eqref{eq:nT_optproblem} reduces to
\begin{equation*}
\omega_{k_\ell}^{\AGGo} = \frac{1}{\hat{s}_\ell^2 + \gamma U_{k,\ell}}, \quad \text{or} \quad \omega_{k_\ell}^{\AGGo} = \frac{1}{\hat{s}_\ell^2 + \gamma \hat{\text{MMD}}^2(\mu_k, \mu_\ell)},
\end{equation*}
with $\hat{s}_\ell^2$ as in \eqref{eq:shatrme} or \eqref{eq:shatkme}, and $U_{k,l}$ in \eqref{eq:apxU} or $\hat{\text{MMD}}^2(\mu_k, \mu_\ell)$ in \eqref{eq:unmmd}.
We add a multiplicative constant (model parameter) $\gamma$ for more flexibility.
If the distances between bags is inhomogeneous, e.g., the data set contains close but also far distant unrelated bags, higher values of $\gamma$ might be advisable.
Finally the weights are normalised such that they sum to one.

\item \AGGe{} implements the $Q$-aggregation approach presented in Section~\ref{se:Qaggreg},
\begin{equation*}
\pmb{\omega}_{k}^{\AGGe} = \underset{\mathbf{w}_{k}}{\text{argmin}} \left\lbrace \hat{L}_k(\mathbf{w}_{k})+ c_q \hat{Q}_k(\mathbf{w}_{k})+ c_{bs} \hat{Q}^{\BS}_k(\mathbf{w}_{k})\right\rbrace .
\end{equation*} 
As there is no closed form analytic solution, the optimal solution is found by exponentiated gradient descent (egd) \citep{kivinen1997exponentiated} with gradient
\begin{equation*}
\nabla \pmb{\omega}_{k} = 2 \ip{\hat{\Lambda}^{(k)}}{\pmb{\omega}_{k}} + 2 \hat{S}_{k,\cdot}^2 + c_q \sqrt{\frac{\hat{\mathbf{q}}^{(k)}}{N_k}} + c_{bs}\frac{M}{N_k}\sqrt{\text{diag}(\hat{\Lambda}^{(k)})},
\end{equation*} 
where $\hat{S}_{k,\cdot}^2$ denotes the $k$-th column of the diagonal matrix $\hat{S} = \text{diag}\left( (\hat{s}_\ell^2)_{\ell \in \intr{B}}\right)$ with $\hat{s}_\ell^2$ as in \eqref{eq:shatrme}, while $\text{diag}(\hat{\Lambda}^{(k)})$ retrieves the diagonal of matrix $(\hat{\Lambda}^{(k)})_{\ell, \ell^\prime} = \ip{\muNE_\ell - \muNE_k}{\muNE_{\ell^\prime} - \muNE_k}$.
Vector $\hat{\mathbf{q}}^{(k)}$ is defined in \eqref{eq:apx_q_rme}.
Model parameters $c_q$ and $c_{bs}$ control the impact of $\hat{Q}$ or $\hat{Q}^{\textbf{BS}}$ respectively. 
Exponentiated gradient descent updates the weights iteratively and for each bag separately. Using the gradient, an update step in iteration $t$ is performed as
\begin{align*}
\pmb{\omega}_{k}^{(t+1)} &= \pmb{\omega}_{k}^{(t)} \cdot \exp{\lbrace -\eta^{(t)} \cdot \nabla \pmb{\omega}_{k}^{(t)}\rbrace} \\
\pmb{\omega}_{k}^{(t+1)} &= \frac{\pmb{\omega}_{k}^{(t+1)}}{\ip{\mathbf{1}}{\pmb{\omega}_{k}^{(t+1)}}}.
\end{align*}
$\eta^{(t)}$ denotes the learning rate and decreases over time, e.g., $\eta^{(t)} = \nicefrac{\eta}{(1+(t/B))}$, and $\mathbf{1}$ is a $B$-dimensional one vector. 

In the kernel setting, we introduce a modified version 
 \begin{gather*}
\pmb{\omega}_{k}^{\AGGe} = \underset{\mathbf{w}_{k}}{\text{argmin}}
\set[3]{
\hat{L}_k(\mathbf{w}_{k})+ c_q \hat{Q}_k(\mathbf{w}_{k})
+ c_1 \sum_{\ell \in \intr{B}} w_{k_\ell} \vartheta_\ell
+ c_2 \sum_{\ell \in \intr{B}} w_{k_\ell}^2 \vartheta_\ell
},\\
\hat{L}_k(\mathbf{w}_{k}) = \norm{\sum_{\ell \in \intr{B}} w_{k_\ell} \left( \estmu_\ell^{\NE} - \estmu_k^{\NE} \right)}_\mathcal{H}^2 + \hat{s}_k^2 \left( 2 w_{k_k} - 1 \right)\,, \\
\hat{Q}_k(\mathbf{w}_{k}) = \sum_{\ell \in \intr{B}} w_{k_\ell} \sqrt{\frac{\hat{q}^{(k)}_{\ell}}{N_k}}, \qquad
\vartheta_\ell = \frac{\sqrt{\widehat{\tr{(\Sigma^2_\ell)}}}}{N_\ell}.
 \end{gather*}
Two regularisation terms are added.
The $c_1$ term favours sparse results whereas the $c_2$ regularisation leads to diffuse, small weights.
Their effect can be compared to that of 
elastic net regularisation. These regularisation terms are, relative to the naive risk, of order $O\paren{ (\deamm)^{-1/2}}$. From a theoretical point of view, their effect on the relative risk is not significant, as the bounds obtained in Section~\ref{se:Qaggreg} already contain an error of this order. In practice, we observed that they 
improve the overall results.

For the estimation of kernel means, \AGGe{} has no instantiation of the regularisation term $\wh{Q}^\BS$ \eqref{eq:def_whQBS},
  which in the theory was introduced for the (BS) setting.
  Instead, we introduce \STBegd{} which applies an additional ``safeguard'' pre-testing step serving the same purpose,
  namely excluding far distant tasks (see description of \STBegd{} below for more details.)

The optimization over the probability simplex is done by exponentiated gradient descent with gradient
\begin{equation*}
\nabla \pmb{\omega}_{k} = 2 \left( {\hat{\Lambda}}^{(k)} + c_2 \, \text{diag}(\pmb{\vartheta}) \right) \pmb{\omega}_{k} + 2 \hat{S}_{k, \cdot}^2 + c_q \sqrt{\nicefrac{\hat{\mathbf{q}}^{(k)}}{N_k}} + c_1 \pmb{\vartheta},
\end{equation*}
where $\hat{S}_{k \cdot}$ denotes the $k$-th column of the diagonal matrix $\hat{S} = \text{diag} ( {(\hat{s}_\ell^2)}_{\ell \in \intr{B}} )$ with $\hat{s}_\ell^2$ as in \eqref{eq:shatkme}. 
$\text{diag}(\pmb{\vartheta})$ denotes a diagonal matrix with vector ${(\vartheta)}_{\ell \in \intr{B}} = \nicefrac{\sqrt{\widehat{\tr{(\Sigma^2_\ell)}}}}{N_\ell}$ on the main diagonal, where $\widehat{\tr{(\Sigma^2_\ell)}}$ can be estimated as in \eqref{eq:trE2_kme} or by Algorithm~\ref{alg:approx_trE}.
Vector $\hat{\mathbf{q}}^{(k)}$ is defined in \eqref{eq:apx_dedbiased}.
The final procedure of \AGGe{} is shown in Algorithm~\ref{alg:agge}.
\begin{algorithm}[t]
	\caption{\AGGe{} for the $k$-th KME}\label{alg:agge}
	\begin{algorithmic}[1]
		\Require $\hat{\Lambda}^{(k)} \in \mathbb{R}^{B \times B}$, $\pmb{\vartheta}\in \mathbb{R}^B$, $\hat{\mathbf{q}}^{(k)} \in \mathbb{R}^B$, $\hat{\mathbf{s}}^2 \in \mathbb{R}^B$, model parameters $c_q, c_1, c_2$, learning rate $\eta$, maximum nr. of iterations $t_{\max}$
		\State \algorithmiccomment{ initialise }
		\State $\omega_{i \cdot}^{(0)} \leftarrow \mathbf{1}/B$
		\State \algorithmiccomment{ until maximum nr$.$ of iterations or convergence}
		\While{$t \leq t_{\max}$ and $\left(\omega_i^{(t - 1)} - \omega_i^{(t)}\right)^2 > 10^{-8}$}
			\State \algorithmiccomment{ compute gradient }
			\State $\nabla \pmb{\omega}_{k}^{(t-1)} \leftarrow 2 \left( {\hat{\Lambda}}^{(k)} + c_2 \, \text{diag}(\pmb{\vartheta}) \right) \pmb{\omega}_{k}^{(t-1)} + 2 \hat{S}_{k, \cdot}^2 + c_q \sqrt{\nicefrac{\hat{\mathbf{q}}^{(k)}}{N_k}} + c_1 \pmb{\vartheta},$
			\State \algorithmiccomment{ perform exponentiated gradient descent  }
			\State $\pmb{\omega}_{k}^{(t)} \leftarrow \pmb{\omega}_{k}^{(t-1)} \cdot \exp{\lbrace -\eta^{(t)} \cdot \nabla \pmb{\omega}_{k}^{(t-1)} \rbrace} $
			\State \algorithmiccomment{ normalise }
			\State $\pmb{\omega}_{k}^{(t)} \leftarrow \frac{\pmb{\omega}_{k}^{(t)}}{\ip{\mathbf{1}}{\pmb{\omega}_{k}^{(t)}}}$
		\EndWhile
	\State \algorithmiccomment{ estimated optimal weight vector for bag $k$ }
	\State \Return $\pmb{\omega}_{k}^{(t)}$
	\end{algorithmic}
\end{algorithm}
\end{enumerate}

\subsubsection{\texttt{STB} approaches}\label{sec:apx_desc_stb}
The similarity test based approaches shrink the estimation only towards neighbouring means $\hat{V}_{\tau, \cteW}^{(k)}$, estimated by  \eqref{eq:apx_rstb} or \eqref{eq:apx_kstb}, such that
\begin{equation*}
\estmu_k^{\texttt{STB}} := \sum_{\ell \in \hat{V}_{\tau, \cteW}^{(k)}} \om_{k_\ell}^{\texttt{STB}} \muNE_\ell \qquad \text{ s.t. } \qquad \forall k,\ell \in \intr{B}: \sum_{\ell \in \intr{B}} \om_{k_\ell}^{\texttt{STB}} = 1 \,\,\, \text{and} \,\,\, \om_{k_\ell}^{\texttt{STB}} \geq 0.
\end{equation*}

\begin{enumerate}[label=(\roman*)]
\setcounter{enumi}{5}
\item \STBweight{} \citep{marienwald2021high} assigns a uniform weight to all neighbours except for $\omega_{k_k}$ which is higher
\begin{equation*}
\omega_{k_\ell}^{\STBweight} = \begin{cases}
\gamma + \frac{1-\gamma}{\vert \hat{V}_{\tau, \cteW}^{(k)} \vert}, &\text{for   } k = \ell \\
\frac{1-\gamma}{\vert \hat{V}_{\tau, \cteW}^{(k)} \vert}, &\text{for   } k \neq \ell,  \ell \in \hat{V}_{\tau, \cteW}^{(k)} \\
0, &\text{otherwise.}
\end{cases}
\end{equation*}
\STBweight{} was proposed for balanced bags and under independence of test and data, for which it has strong theoretical results (cf. Theorem 3.1 of \citet{marienwald2021high}).
Larger values of $\tau$ allow higher distances between $\muNE_k$ and its neighbours, thus, potentially increase the number of neighbours and the bias of the estimation.
Higher $\gamma$ values put emphasis on $\muNE_k$, i.e., $\omega_{k_k} > \omega_{k_\ell}$ for $k \neq \ell$, and the solution reduces to \NE{} for $\gamma = 1$.

\item \STBopt{} corresponds to the optimal weights derived in Lemma~\ref{lem:oraclebound} that minimise an upper bound on the risk
\begin{equation*}
\pmb{\omega}_{k}^{\STBopt} = \underset{\mathbf{w}_{k}}{\text{argmin}}
\left\lbrace
\tau s_k^2 {(1-w_{k_k})}^2 + \sum_{\ell \in V_{\tau}^{(k)}} w_{k_\ell}^2 s_\ell^2
\right\rbrace.
\end{equation*}
The optimal solution is derived in \eqref{eq:STBoptweights} using Lagrangian multipliers. 
As a fully empirical procedure we propose
\begin{equation*}
\omega_{k_\ell}^{\STBopt} = \begin{cases}
\lambda_k \cdot \hat{\nu}(k, \hat{V}_{\tau, \cteW}^{(k)}) + (1-\lambda_k), &\text{for   } k = \ell \\
\lambda_k \cdot \hat{\nu}(\ell, \hat{V}_{\tau, \cteW}^{(k)}),                 &\text{for   } k \neq \ell,  \ell \in \hat{V}_{\tau, \cteW}^{(k)} \\
0, &\text{otherwise.}
\end{cases}
\end{equation*}
with $\hat{\nu}(\ell, \hat{V}_{\tau, \cteW}^{(k)}) := \hat{s}_\ell^{-2}/{(\sum_{\ell^\prime \in \hat{V}_{\tau, \cteW}^{(k)}} \hat{s}_{\ell^\prime}^{-2})}$, the aggregated variance relative to task $\ell$ (cf.~\eqref{eq:def_s2v}) with $\hat{s}_k^{2}$ as in \eqref{eq:shatrme} or \eqref{eq:shatkme},  and $\lambda_k := {(1+ \gamma \tau (1-\hat{\nu}(k, \hat{V}_{\tau, \cteW}^{(k)})))}^{(-1)}$.
The additional multiplicative constant $\gamma$ allows for more flexibility and tends to put emphasis on $\omega_{k_k}$.

\item \STBorth{} performs the similarity test and applies \AGGo{} on neighbouring means
\begin{equation*}
\omega_{k_\ell}^{\STBorth} = \begin{cases}
\omega_{k_\ell}^{\AGGo},                 &\text{for   } \ell \in \hat{V}_{\tau, \cteW}^{(k)} \\
0, &\text{otherwise.}
\end{cases}
\end{equation*}
The similarity test merely functions as a safeguard here and excludes largely distant means and does not play such a central role as for the other \texttt{STB} methods.
Therefore, $\tau$ can be fixed to a large value, e.g., $\tau := 5$.
Even though $\omega_{k_\ell}^{\AGGo}$ is reduced when ${\norm{\muNE_k - \muNE_\ell}}^2$ is high, \AGGo{} does not perform well when there are many largely distant means.
Their weights accumulate and reduce the weights of important bags because of the normalization step.
The similarity test alleviates this problem.

Compared to \STBweight{} and \STBopt{}, $\tau$ can be chosen larger.
Because of this safeguard, $\gamma$, which penalises large distances, can be reduced.

\item \STBegd{} performs the similarity test and applies \AGGe{} on neighbouring means
\begin{equation*}
\omega_{k_\ell}^{\STBegd} = \begin{cases}
\omega_{k_\ell}^{\AGGe},                 &\text{for   } \ell \in \hat{V}_{\tau, \cteW}^{(k)} \\
0, &\text{otherwise.}
\end{cases}
\end{equation*}
Analogous to the discussion of \STBorth{} the similarity test functions as a safeguard to exclude largely distant means.
It can also be seen as an instrument to replace the missing $\wh{Q}^\BS$ in the implementation of \AGGe{} for kernel means (see also discussion of \AGGe).
This preselection of neighbouring means constitutes a stricter selection than the (implicit) one performed by the $\wh{Q}^\BS$-term. 
\STBegd{} relies on several model parameters.
Compared to the model parameter values of \AGGe{}, diffuse weights should be favoured whereas regularization based on the distances ($c_q$) or sparse weights ($c_1$) become less important because of the preselection of neighbouring means.
\end{enumerate}

\subsection{Experimental results of additional methods}\label{sec:apx_expresults}
The results of \RKMSE{} \citep{muandet2016stein}, \STBweight{} \citep{marienwald2021high}, \AGGo, and \AGGe{} can be found in Figure~\ref{fig:apx_exp_art_results} and Figure~\ref{fig:apx_exp_cyto_results}.
\begin{figure}[ht]
\includegraphics[width=\textwidth]{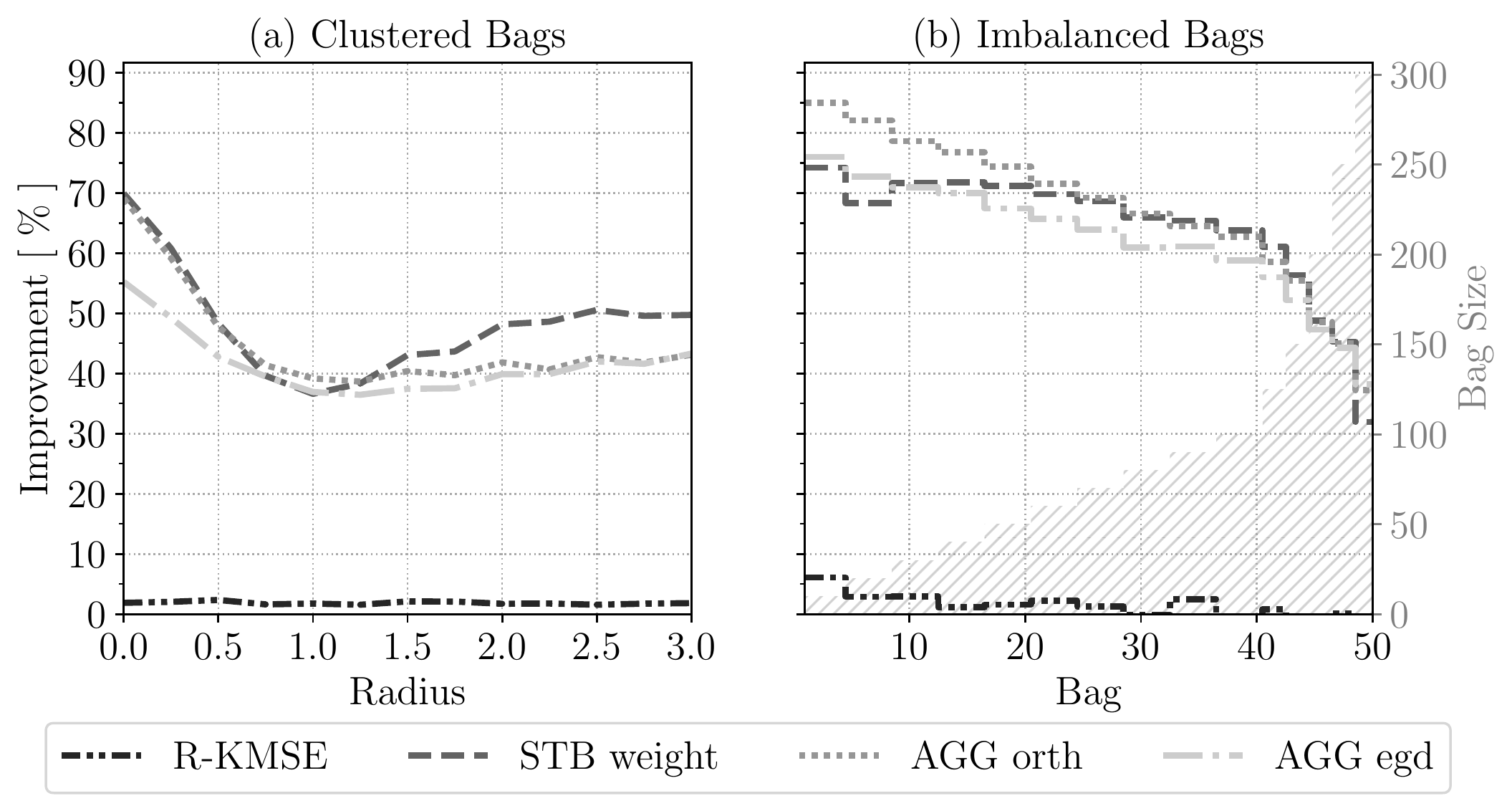}
\caption{
Decrease in estimation error compared to \NE{} in percent on Gaussian data settings (a) and (b) resp.
Higher is better.
The bars (right axis) in (b) show the bag sizes for the bags $1$ to $50$ which vary between $10$ and $300$.
Compare with Fig.~\ref{fig:exp_art_results}}
\label{fig:apx_exp_art_results}
\end{figure}
\begin{figure}[ht]
\includegraphics[width=\textwidth]{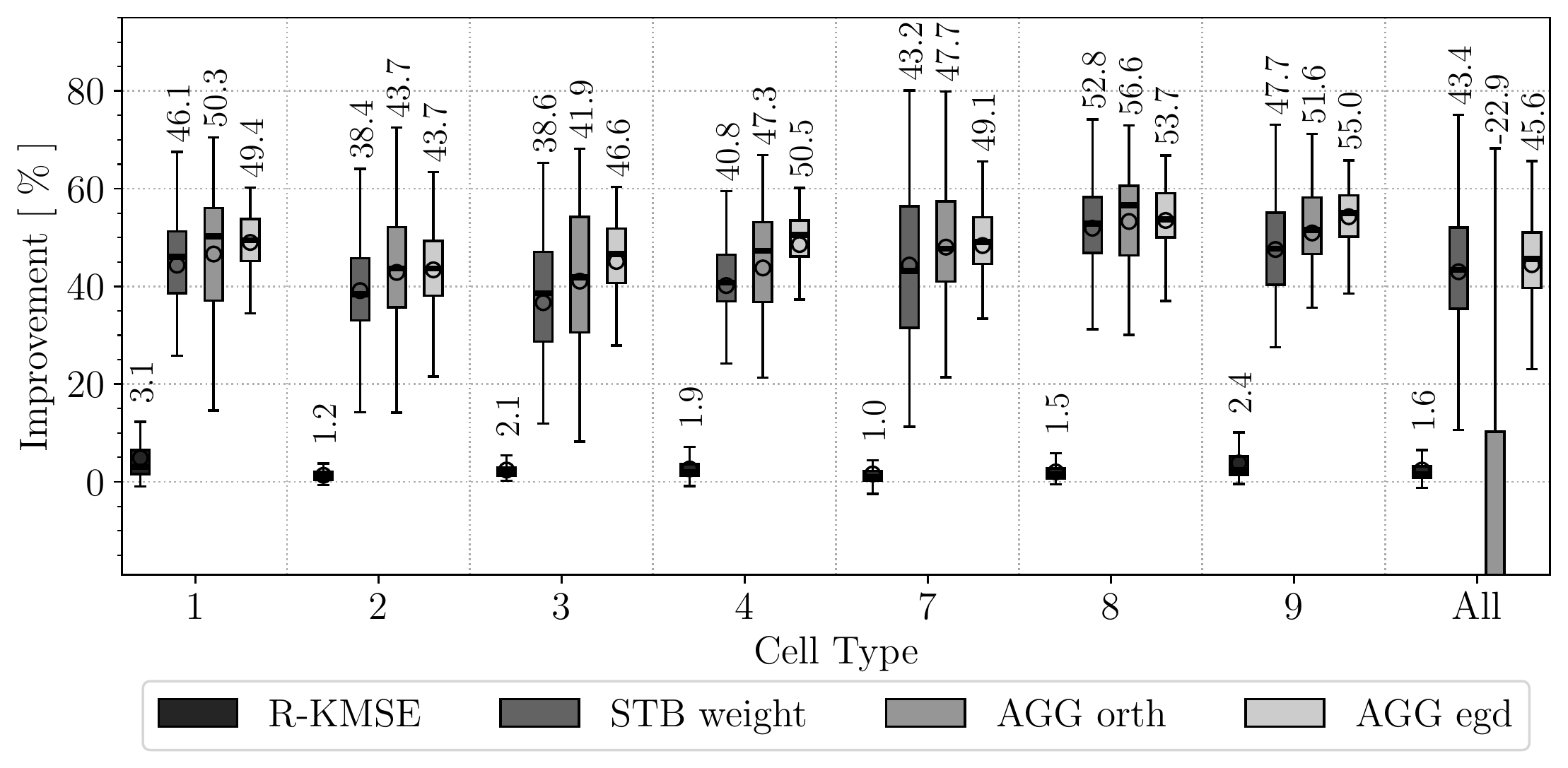}
\caption{
Decrease in estimation error compared to \NE{} in percent on the flow cytometry data.
Higher is better.
The number next to the boxplot quantifies the median which is also depicted as a line.
The mean is visualised as a circle.
From left to right: results on individual cell types $1,2,3,4,7,8,9$, and 'All' cell types jointly.
The performance of \AGGo{} is partly occluded which allows a more detailed display of the remaining results.
Its mean is $\approx -30$, $Q_1 \approx -60$, and its lower whisker ends at $\approx -140$.
Compare with Fig.~\ref{fig:exp_cyto_results}}
\label{fig:apx_exp_cyto_results}
\end{figure}
\RKMSE{} estimates each KME individually and only provides marginal improvement over \NE.
The performance of \STBweight{} is comparable to that of \STBopt, however, it gives less accurate estimations of large bags in setting (b).
\STBweight{} assumes equal variances of the estimations, thus, can not handle imbalanced bags.
Its weights estimation of large bags might be corrupted by small bags with high variance.
Comparing the performances of the aggregation methods \AGGo{} and \AGGe{} with \STBorth{} and \STBegd{} resp. shows that the similarity test is beneficial and functions as a safeguard to discard distant bags.

The same observations can be made on the cytometry data.
Especially on 'All' cell types, the importance of the similarity test can be noted.
\AGGo{} considers the distance between the bags in the definition of the weights.
However, due to the normalization $\sum_\ell \omega_{k_\ell}^{\AGGo} = 1$ small weights $\omega_{k_\ell}^{\AGGo}$ of many distant bags $\ell$ can accumulate and thereby reduce the impact of important weights.
The similarity test preselects only close means and eliminates this problem.

\subsection{Default model parameter values}\label{apx:sec_defaultParams}
The presented models have up to four data dependent model parameters and three hyperparameters.
Parameter tuning is possible, whenever the user wishes to estimate the (kernel) means of bags of size $N$ but also has bags of much larger size $N’ \gg N$.
For the optimization, e.g., in form of cross-validation, subsets of size $N$ are sampled from the $N’$ bags.
A method-specific parameter combination is then tested on all bags of size $N$, while the test error is (only) computed wrt. the $N’$ bags (again, a proxy true mean can be estimated using the complete $N’$ samples).

In most practical applications this scheme is not possible.
For this reason we propose default values for the estimation of kernel means that we observed to perform well in various settings.
Table~\ref{tab:apx_default} shows an overview.
\begin{table}[hb]\label{tab:apx_default}
\caption{Summary of the default parameter values of each method for the estimation of kernel means.}
\begin{tabular}{lllllllll} \toprule
\phantom{0} & Method   & \multicolumn{2}{l}{Parameter(s)}      & & \phantom{0} & Method    & \multicolumn{2}{l}{Parameter(s)} \\ \midrule
\multicolumn{4}{l}{SOTA}                           &   & \multicolumn{4}{l}{STB}                             \\
   & \NE       & $\emptyset$  &                     &   &    & \STBweight & $\tau$       & $= 2.2$             \\
   & \RKMSE    & $\emptyset$  &                     &   &    &           & $\gamma$     & $= 0.2$             \\
   & \MTAconst & $\gamma$     & $= 1.$              &   &    & \STBopt    & $\tau$       & $= 2.2$             \\
   &          &              &                     &   &    &           & $\gamma$     & $= 0.2$             \\
\multicolumn{4}{l}{AGG}                            &   &    & \STBorth   & $\tau$       & $:= 5.$             \\
   & \AGGo     & $\gamma$     & $= 13.$             &   &    &           & $\gamma$     & $= 3.$             \\
   & \AGGe     & $c_q$        & $= 1.4$              &   &    & \STBegd    & $\tau$       & $:= 5.$             \\
   &          & $c_1$        & $:= 1.$             &   &    &           & $c_q$        & $:= 1.$              \\
   &          & $c_2$        & $= 4.$              &   &    &           & $c_1$        & $:= 1.$              \\
   &          & $r$          & $:= 100$             &   &    &           & $c_2$        & $= 5.$              \\
   &          & $t_{\max}$   & $:= 500$             &   &    &           & $r$          & $:= 100$             \\
   &          & $\eta^{(t)}$ & $:= 50 / (1+ (t/B))$ &   &    &           & $t_{\max}$   & $:= 500$             \\
   &          &              &                     &   &    &           & $\eta^{(t)}$ & $:= 50 / (1+ (t/B))$ \\ \bottomrule
\end{tabular}
\end{table}
To determine these values we considered the Clustered setting (a).
We generated additional independent training samples on which we ran cross-validation to determine for each radius an optimal set of parameter values (25 repetitions).
Most default values correspond to their optimal choices for radius$=1.5$.
We chose a radius of $1.5$ because it presents a balance between overlapping and yet distinct clusters.
It also corresponds to a setting which is close to practice and for which multi-task averaging approaches generally promise an improvement over the naive estimation.

The methods based on egd form a peculiarity as the selection procedure did not uncover suitable parameter values.
For \STBegd{} we fixed $c_q = c_1 = 1$ because the values selected by CV were too low ($c_q$) or even zero ($c_1$) at radius $1.5$.
Because the similarity test selects only close neighbours, the distance controlled by parameter $c_q$, i.e., $\mathbf{q}^{(k)}$, is small.
Still, we found the term to be important in practice and, hence, fix $c_q = 1$.
Regularization by the $c_1$-term leads to sparse weights.
For the artificial data, the found neighbours are relatively homogeneous, therefore, no sparse solution must be acquired.
However, we found $c_1 = 1$ to be better in more general settings.
The same argument holds for \AGGe{} for which we had to fix $c_1 = 1$ as well.

We fixed $\tau := 5$ for \STBorth{} and \STBegd , because their similarity test merely functions as a precaution to discard high-distant bags.
As shown by the theoretical discussion, the \texttt{AGG} methods are equivalent to the \texttt{STB} methods with optimally selected $\tau$.

We did not optimise for the hyperparameters $r, t_{\max}$ and $\eta^{(t)}$.
For larger values of $r$ the approximation in Alg.~\ref{alg:approx_trE} becomes more accurate.
It also leads to a higher computational complexity.
We found $r=100$ to be a good trade-off.
The same trade-off can be observed for $t_{\max}$.
However, we found that for the artificial data egd converged usually before $t_{\max} = 500$ was reached.
For the selection of $\eta^{(t)}$ we follow the recommendation given in \citet{collins2008exponentiated, lecun1998gradient}.
\begin{figure}[ht]
\includegraphics[width=\textwidth]{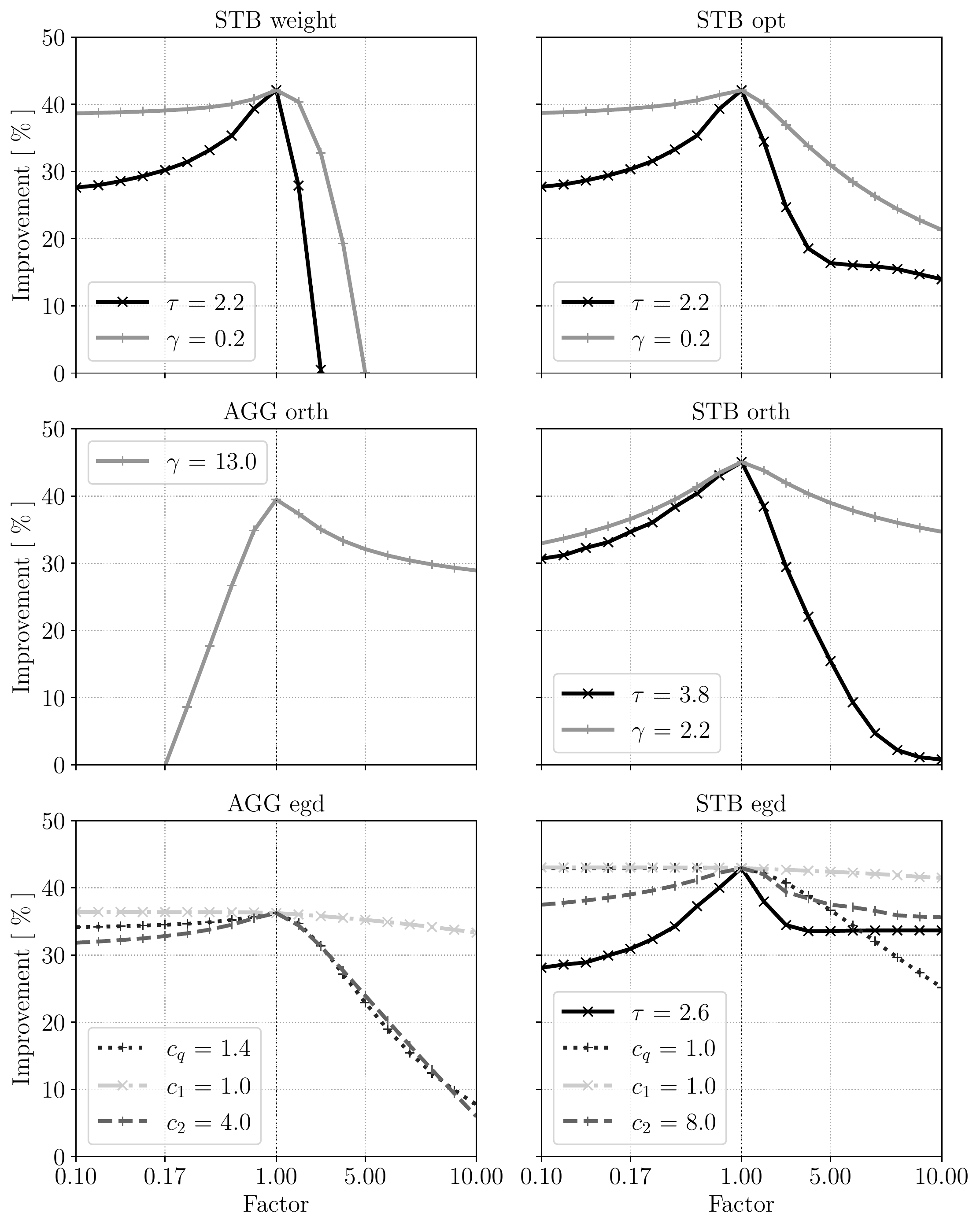}
\caption{
See Figure~\ref{fig:apx_modelParams2}.}
\label{fig:apx_modelParams}
\end{figure}
\begin{figure}[ht]
  \begin{minipage}[t]{0.5\textwidth}
  \vspace{0pt}
    \includegraphics[width=\textwidth]{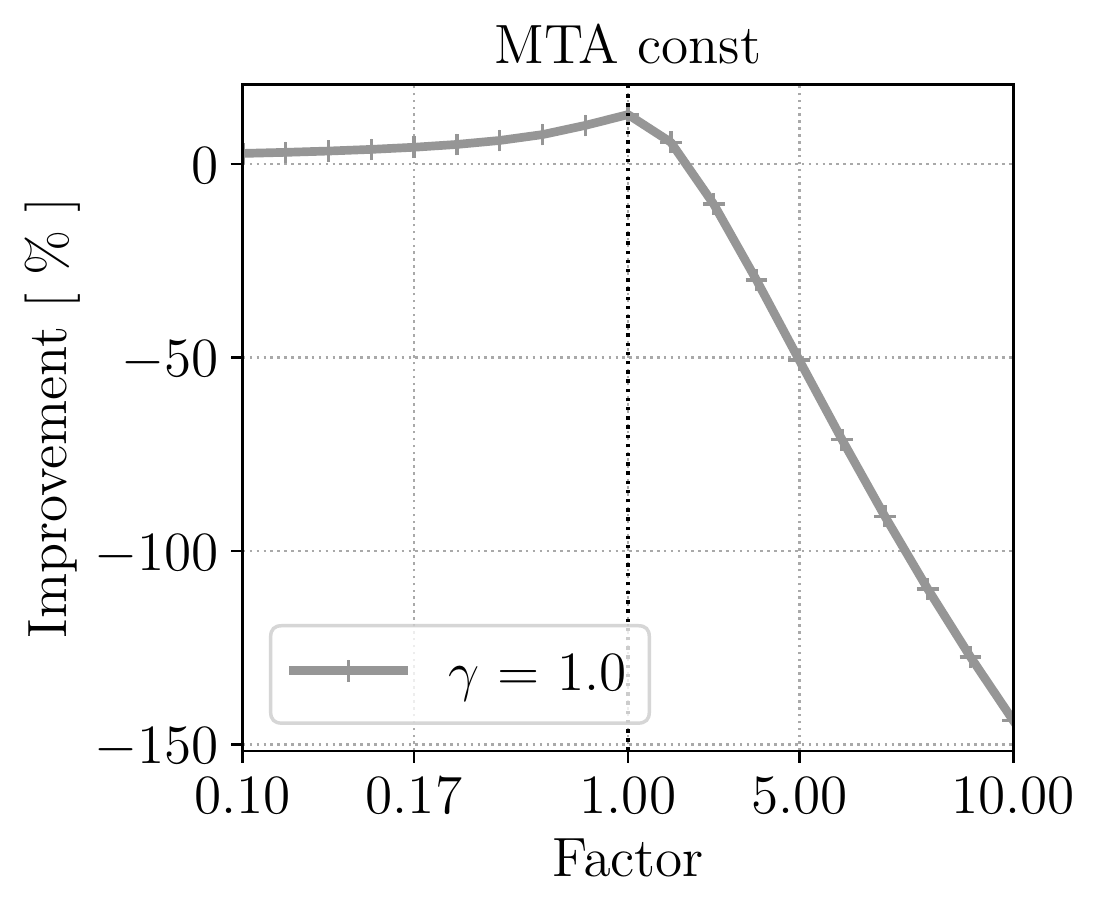}
  \end{minipage}\hspace{-0.1\textwidth}\hfill
  \begin{minipage}[t]{0.6\textwidth}
  \vspace{0pt}
    \caption{
	Decrease in estimation error compared to \NE{} on Clustered, radius $=1.5$, for different model parameter values.
	Each parameter is considered individually and changed by factors $\times0.1 - \times10$.
	The remaining parameters are kept at their optimal value.
	(Note that the y-axis is different for \MTAconst. Some settings lead to worse results than \NE .)
    } \label{fig:apx_modelParams2}
  \end{minipage}
\end{figure}

Figures~\ref{fig:apx_modelParams}-\ref{fig:apx_modelParams2} show the effect of the model parameter choice on the performance for each method.
\STBweight{} is highly affected by improper parameter choices, whereas the new \texttt{STB} methods are more stable.
Specifically, the effect of $\tau$ plateaus.
Even for small $\tau$, an improvement over \NE{} can be observed.
When the \texttt{AGG} methods are complemented with a preselection of bags (similarity test as safeguard), the methods become more robust.
Compared to the $c_2$- and $c_q$-terms of \AGGe{} and \STBegd , the $c_1$-term has only little effect.
This lets us conclude that $\ell_1$-regularization is not as important as $\ell_2$-regularization of the weights.
Furthermore, it seems that an improper value of one parameter can be alleviated if the remaining parameters are chosen correctly.

For completeness, we also report the optimised (again by cross-validation on iid training data) model parameter values for the Imbalanced setting in Table~\ref{tab:apx_optimbalanced}.
\begin{table}[hb]\label{tab:apx_optimbalanced}
\caption{Summary of the optimised parameter values of each method for the imbalanced setting.}
\begin{tabular}{lllllllll} \toprule
\phantom{0} & Method   & \multicolumn{2}{l}{Parameter(s)}      & & \phantom{0} & Method    & \multicolumn{2}{l}{Parameter(s)} \\ \midrule
\multicolumn{4}{l}{SOTA}                           &   & \multicolumn{4}{l}{STB}                             \\
   & \NE       & $\emptyset$  &                     &   &    & \STBweight & $\tau$       & $= 1.4$             \\
   & \RKMSE    & $\emptyset$  &                     &   &    &           & $\gamma$     & $= 0.05$             \\
   & \MTAconst & $\gamma$     & $= 1.1$              &   &    & \STBopt    & $\tau$       & $= 1.4$             \\
   &          &              &                     &   &    &           & $\gamma$     & $= 0.1$             \\
\multicolumn{4}{l}{AGG}                            &   &    & \STBorth   & $\tau$       & $= 3.9$             \\
   & \AGGo     & $\gamma$     & $= 5.$             &   &    &           & $\gamma$     & $= 2.$             \\
   & \AGGe     & $c_q$        & $= 0.$              &   &    & \STBegd    & $\tau$       & $= 2.7$             \\
   &          & $c_1$        & $= 3.5$             &   &    &           & $c_q$        & $= 0.$              \\
   &          & $c_2$        & $= 14.$              &   &    &           & $c_1$        & $= 2.9$              \\
   &          & $r$          & $:= 100$             &   &    &           & $c_2$        & $= 34.$              \\
   &          & $t_{\max}$   & $:= 500$             &   &    &           & $r$          & $:= 100$             \\
   &          & $\eta^{(t)}$ & $:= 50 / (1+ (t/B))$ &   &    &           & $t_{\max}$   & $:= 500$             \\
   &          &              &                     &   &    &           & $\eta^{(t)}$ & $:= 50 / (1+ (t/B))$ \\ \bottomrule
\end{tabular}
\end{table}
\begin{figure}[ht]
\includegraphics[width=\textwidth]{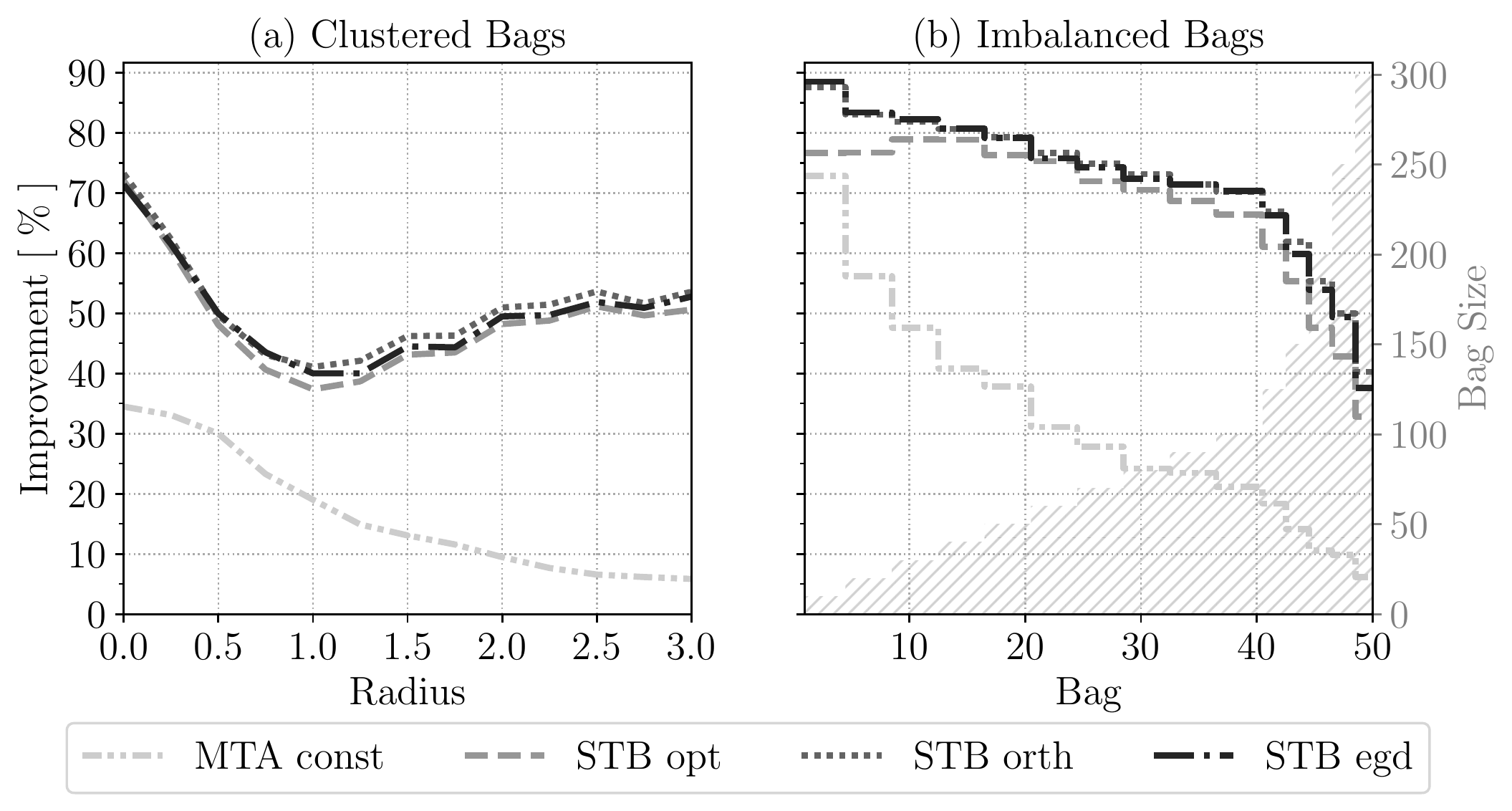}
\includegraphics[width=\textwidth]{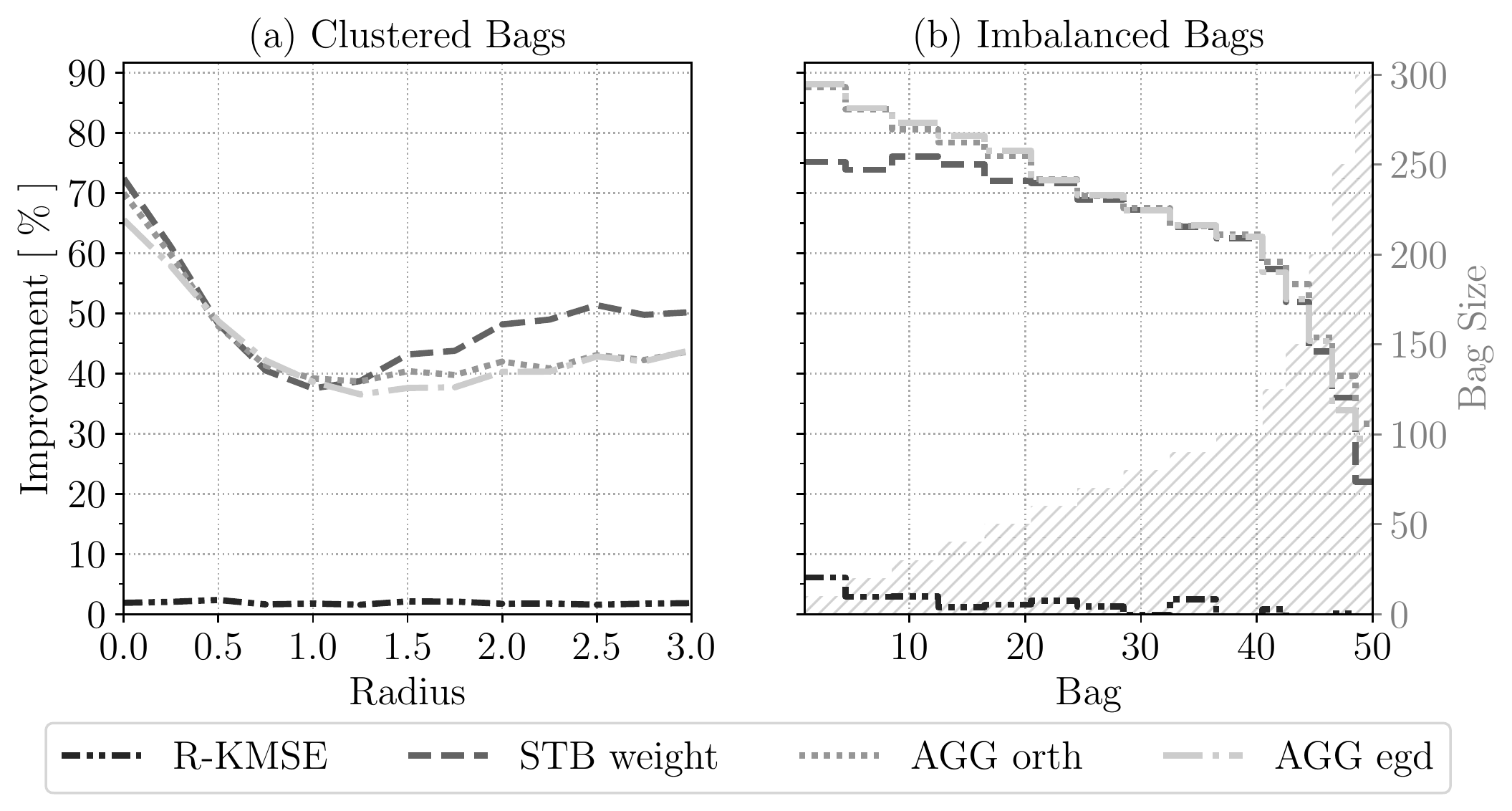}
\caption{
Decrease in estimation error compared to \NE{} in percent on Gaussian data settings (a) and (b) resp.
Higher is better.
The model parameters of the methods were optimised on iid training samples using cross-validation.
The bars (right axis) in (b) show the bag sizes for the bags $1$ to $50$ which vary between $10$ and $300$.
Compare with Fig.~\ref{fig:exp_art_results}, \ref{fig:apx_exp_art_results}.}
\label{fig:apx_exp_trueopt}
\end{figure}

Finally, Figure~\ref{fig:apx_exp_trueopt} shows the results on the artificial Gaussian data with optimised model parameters which can be compared to the results with the default parameters shown in Figures~\ref{fig:exp_art_results} and \ref{fig:apx_exp_art_results}.
The performance of \MTAconst{} is unchanged because the optimised and default values of $\gamma$ are similar.
For \STBweight, \STBopt{} and \STBorth{} their optimised model parameters differ from their default counterparts for small radii, so that we observe a performance difference respectively.
Even though the cross-validation values of \STBorth{} vary with the radius, the performance difference is only small which suggests that \STBorth{} is stable for its parameter choice.
The range of suitable values of \AGGo{} is large as can be seen in the marginal performance difference even though the value of $\gamma$ fluctuates between $7$ and $14$.
However, when we also consider the poor performance on the cytometry data, it can be seen that it is sensitive to improper choices of $\gamma$ (very larger values for $\gamma$, e.g., $\gamma = 85$, improve the performance on cytometry).
Unsurprisingly, the performance difference is most noticeably for \AGGe{} and \STBegd{} for which we manually forced the values of $c_q$ or $c_1$.
Note, however, that $c_1$ is of clear importance for the Imbalanced data (optimal values are non-zero), and we observe an improvement on the cytometry data if $c_q$ and $c_1$ are non-zero.
This justifies the decision to choose non-zero values for $c_q$ and $c_1$ in more general settings.

\subsection{Computational complexity}\label{sec:apx_complexity}
Based on Section~\ref{sec:apx_desc_methods}, we can give estimations of the computational complexity of each method.
We first identify the complexities of prerequisites on which the methods are based on (items (a)-(g)).
Then we analyse each approach individually, where the stated complexity relates to the calculation of the weighting matrix, i.e., all pairwise weights (items (i)-(ix)).
Table \ref{tab:apx_complexity} summarises the total complexities as the combination of all required operations.
\subsubsection{Prerequisites}

\begin{enumerate}[label=(\alph*)]
\item Intra-task kernel matrices $\kappa(Z^{(k)}_{\bullet}, Z^{(k)}_{\bullet})$ and inter-task kernel matrices $\kappa(Z^{(k)}_{\bullet}, Z^{(\ell)}_{\bullet})$: \\
Requires: - \\
Complexity: $\mathcal{O}(B^2 N^2 D)$\\
Before we analyse the computations of the weights, we note that the kernel mean embedding is not computed explicitly.
Instead, it occurs only in terms of inner products with other KMEs (kernel trick).
A kernelised multiple instance problem is usually also not completed with the computation of the KMEs.
The computations are required as intermediate step as part of, e.g., statistical testing or distributional learning.
We conclude that the computation of the intra-bag kernel matrices $\kappa(Z^{(k)}_{\bullet}, Z^{(k)}_{\bullet})$ and that of the inter-bag kernel matrices $\kappa(Z^{(k)}_{\bullet}, Z^{(\ell)}_{\bullet})$ for all pairs $k,\ell \in \intr{B}$ is done anyway, and not just required for the computation of the weights of a multi-task averaging approach.

For simplicity we assume that $N = \max {(N_k)}_{k \in \intr{B}}$.
The complexity of the computation of all kernel matrices depends on the kernel choice.
If it is mostly defined by the complexity of matrix multiplications of the data matrices, e.g., linear, Gaussian kernel, etc., it can be assumed to be in $\mathcal{O}(B^2 N^2 D)$.

Except for the estimation of $\text{Tr}(\Sigma^2)$ (Algorithm~\ref{alg:approx_trE}) and the estimations of the naive variance $s^2$ \eqref{eq:shatkme} and $\mathbf{q}^{(k)}$ (Eq~\eqref{eq:apx_dedbiased}), the methods rely on the kernel matrices only in term of their sums, $\sum_{i=1}^{N_k} \sum_{j=1}^{N_\ell}\kappa(Z_{i}^{(k)},Z_{j}^{(\ell)})$ and $\sum_{i \neq j}^{N_k} \kappa(Z_{i}^{(k)},Z_{j}^{(k)})$.
This reduces the memory consumption from $B^2 N^2$ to $B^2+B$ because the kernel matrices do not have to be stored but only their sums.

\item Distance matrix $\hat{\text{MMD}}^2(\mu_k, \mu_\ell)$, Eq.~\eqref{eq:unmmd}: \\
Requires: (a)\\
Complexity: $\mathcal{O}(B^2)$\\
Once (a) is precomputed, the distances $\hat{\text{MMD}}^2(\mu_k, \mu_\ell)$, Eq.~\eqref{eq:unmmd}, can be computed in linear time for all pairs $k,\ell \in \intr{B}$.
The complexity is then in $\mathcal{O}(B^2)$.

\item Naive risk $\hat{s}_k^2$, Eq.~\eqref{eq:shatkme}: \\
Requires: (a) intra-task kernel matrices \\
Complexity: $\mathcal{O}(B N^2)$\\
Performs linear operations, e.g., sum and elementwise multiplication of $N \times N$ kernel matrices for every bag individually.

\item $\widehat{\tr{\Sigma^2}}$, Algorithm~\ref{alg:approx_trE}: \\
Requires: (a) intra-task kernel matrices \\
Complexity: $\mathcal{O}(B N^2 r)$ \\
As seen in Algorithm~\ref{alg:approx_trE}, the first term can be calculated explicitly (line 6) in $\mathcal{O}(N^2)$ and the other terms must be approximated in $r$ iterations.

\item $\mathbf{q}^{(k)}$, Eq.~\eqref{eq:apx_dedbiased}: \\
Requires: (a)\\
Complexity: $\mathcal{O}(B^2 N^2)$\\
The first sum of~\eqref{eq:apx_dedbiased} is computed explicitly on $N\times N$ kernel matrices for every pair $k,\ell \in \intr{B}$ individually, $\mathcal{O}(B^2 N^2)$. The second sum can be computed more efficiently, as it operates only on the sums of the kernels, which requires $\mathcal{O}(B^2)$.
\end{enumerate}

\subsubsection{State-of-the-art approaches}
\begin{enumerate}[label=(\roman*)]
\item \NE: \\
Requires: - \\
Complexity: - \\

\item \RKMSE: \\
Requires: (a) intra-task kernel matrices \\
Complexity: $\mathcal{O}(B)$\\
Performs linear operations on the precomputed sums of the intra-task kernel matrices.

\item \MTAconst: \\
Requires: (b), (c)\\
Complexity: $\mathcal{O}(B^3)$\\
The similarity matrix is constant but data dependent and computes the sum of the naive distance matrix, i.e., is in $\mathcal{O}(B^2)$.
The calculation of the Laplacian also operates linearly on $B \times B$ matrices.
Finally, the computation of the weighting matrix performs a matrix multiplication and a matrix inversion which both require $\mathcal{O}(B^3)$.
\end{enumerate}

\subsubsection{\texttt{AGG} approaches}
\begin{enumerate}[label=(\roman*)]
\setcounter{enumi}{3}
\item \AGGo: \\
Requires: (b), (c) \\
Complexity: $\mathcal{O}(B^2)$ \\
Only elementwise multiplications and sums are required for the computation of the weights.

\item \AGGe: \\
Requires: (a), (c), (d), (e) \\
Complexity: $\mathcal{O}(B^4 t_{\max})$ \\
Algorithm~\ref{alg:agge} is performed for each bag individually.
The computation of $\hat{\Lambda}^{(k)}$, Eq.~\eqref{eq:apx_lambdabiased}, can be computed in $\mathcal{O}(B^2)$.
The weights are then iteratively updated at most $t_{\max}$ times.
In each iteration the gradient is computed in $\mathcal{O}(B^2)$ (matrix-vector multiplication), and egd and normalization are performed in $\mathcal{O}(B)$.
In total, the computational complexity of the weighting matrix is then in $\mathcal{O} \left( B \cdot \left( B^2 + t_{\max} \cdot \left( B^2+B \right) \right) \right) = \mathcal{O}(B^3 t_{\max})$.
\end{enumerate}

\subsubsection{\texttt{STB} approaches}
\begin{enumerate}[label=(\alph*)]
\setcounter{enumi}{5}
\item \texttt{STB} safeguard $\hat{W}_{\cteW}^{(k)}$, Eq.~\eqref{eq:apx_kstb}: \\
Requires: (d)\\
Complexity: $\mathcal{O}(B^2)$\\
Before neighbours are found, bags with a very large variance are excluded as a safeguard. Because the inequality is checked for every pair of bags, the complexity is in $\mathcal{O}(B^2)$.

\item \texttt{STB} neighbours $\hat{V}_{\tau,\cteW}^{(k)}$, Eq.~\eqref{eq:apx_kstb}: \\
Requires: (b), (c) \\
Complexity: $\mathcal{O}(B^2)$ \\
The similarity test is performed for every pair of bags such that the complexity is in $\mathcal{O}(B^2)$.
\end{enumerate}

\begin{enumerate}[label=(\roman*)]
\setcounter{enumi}{5}
\item \STBweight: \\
Requires: (f), (g) \\
Complexity: $\mathcal{O}(B^2)$ \\
The computation of the weighting matrix only requires elementwise operations, i.e., sums and multiplications.

\item \STBopt: \\
Requires: (c), (f), (g)\\
Complexity: $\mathcal{O}(B^2)$\\
The computation of the weighting matrix only requires elementwise operations, i.e., sums and multiplications.

\item \STBorth:\\
Requires: (f), (g), (iv)\\
Complexity: $\mathcal{O}(B^2)$\\
\STBorth{} combines the similarity test with \AGGo.
Its computational complexity is the sum of both approaches accordingly.

\item \STBegd:\\
Requires: (f), (g), (v)\\
Complexity: $\mathcal{O}(B^3  t_{\max})$\\
The computational complexity is composed of finding the neighbours and of \AGGe.
\end{enumerate}
\begin{table}[hb] \label{tab:apx_complexity}
\caption{Summary of the individual and total computational complexities of the methods and their prerequisites in the kernel setting.
The total complexity is the sum of the complexities of all required computations.
Task (a) does not affect the total complexity because its computation is required not only for the estimation of the KMEs.}
\begin{tabular}{lllll} \toprule
 & Task & Individual        & Required Computations   & Total                                 \\ \midrule
\multicolumn{5}{l}{PREREQUISITES}                                                             \\ \midrule
 & \color{gray}{(a)}  & \textcolor{gray}{$\mathcal{O}(B^2 N^2 D)$}    & \color{gray}{(a)}                     & \textcolor{gray}{$\mathcal{O}(B^2 N^2 D)$}                        \\
 & (b)  & $\mathcal{O}(B^2)$          & \textcolor{gray}{(a)}, (b)                & $\mathcal{O}(B^2)$                              \\
 & (c)  & $\mathcal{O}(B N^2)$        & \textcolor{gray}{(a)}, (c)                & $\mathcal{O}(B N^2)$                            \\
 & (d)  & $\mathcal{O}(B N^2 r)$      & \textcolor{gray}{(a)}, (d)                & $\mathcal{O}(B N^2 r)$                          \\
 & (e)  & $\mathcal{O}(B^2 N^2)$      & \textcolor{gray}{(a)}, (e)                & $\mathcal{O}(B^2 N^2)$                          \\ \midrule
\multicolumn{5}{l}{SOTA}                                                                      \\ \midrule
 & (i)  & -                 & (i)                     & -                                     \\
 & (ii)  & $\mathcal{O}(B)$            & \textcolor{gray}{(a)}, (ii)                & $\mathcal{O}(B)$                                \\
 & (iii)  & $\mathcal{O}(B^3)$          & (b), (c), (iii)           & $\mathcal{O}(B N^2 + B^3)$                      \\ \midrule
\multicolumn{5}{l}{AGG}                                                                       \\ \midrule
 & (iv)  & $\mathcal{O}(B^2)$          & (b), (c), (iv)           & $\mathcal{O}(B N^2 + B^2)$                      \\
 & (v)  & $\mathcal{O}(B^3 t_{\max})$ & \textcolor{gray}{(a)}, (c), (d), (e), (v) & $\mathcal{O}(B N^2 r + B^2 N^2 + B^3 t_{\max})$ \\ \midrule
\multicolumn{5}{l}{STB}                                                                       \\ \midrule
 & (f)  & $\mathcal{O}(B^2)$          & (d), (f)                & $\mathcal{O}(B N^2 r + B^2)$                    \\
 & (g)  & $\mathcal{O}(B^2)$          & (b), (c), (g)           & $\mathcal{O}(B N^2 + B^2)$                      \\
 & (vi)  & $\mathcal{O}(B^2)$          & (f), (g), (vi)           & $\mathcal{O}(B N^2 r + B^2)$                    \\
 & (vii)  & $\mathcal{O}(B^2)$          & (c), (f), (g), (vii)      & $\mathcal{O}(B N^2 r + B^2)$                    \\
 & (viii)  & $\mathcal{O}(B^2)$          & (f), (g), (iv), (viii)      & $\mathcal{O}(B N^2 r + B^2)$                    \\
 & (ix)  & $\mathcal{O}(B^3 t_{\max})$ & (f), (g), (v), (ix)      & $\mathcal{O}(B N^2 r + B^2 N^2 + B^3 t_{\max})$ \\ \bottomrule
\end{tabular}
\end{table}
\clearpage     
\end{document}